\pgfplotsset{compat=newest} 
\pgfplotsset{
    groupplot xlabel/.initial={},
    every groupplot x label/.style={
        at={($({\pgfplots@group@name\space c1r\pgfplots@group@rows.west}|-{\pgfplots@group@name\space c1r\pgfplots@group@rows.outer south})!0.5!({\pgfplots@group@name\space c\pgfplots@group@columns r\pgfplots@group@rows.east}|-{\pgfplots@group@name\space c\pgfplots@group@columns r\pgfplots@group@rows.outer south})$)},
        anchor=north,
    },
    groupplot ylabel/.initial={},
    every groupplot y label/.style={
            rotate=90,
        at={($({\pgfplots@group@name\space c1r1.north}-|{\pgfplots@group@name\space c1r1.outer
west})!0.5!({\pgfplots@group@name\space c1r\pgfplots@group@rows.south}-|{\pgfplots@group@name\space c1r\pgfplots@group@rows.outer west})$)},
        anchor=south
    },
    execute at end groupplot/.code={%
      \node [/pgfplots/every groupplot x label]
{\pgfkeysvalueof{/pgfplots/groupplot xlabel}};  
      \node [/pgfplots/every groupplot y label] 
{\pgfkeysvalueof{/pgfplots/groupplot ylabel}};  
    }
}
\def\endpgfplots@environment@groupplot{%
    \endpgfplots@environment@opt%
    \pgfkeys{/pgfplots/execute at end groupplot}%
    \endgroup%
}
\newcommand{\algoname}[1]{\textnormal{\textsc{#1}}}
\theoremstyle{plain}
\newtheorem{theorem}{Theorem}[section]
\newtheorem{proposition}[theorem]{Proposition}
\newtheorem{lemma}[theorem]{Lemma}
\newtheorem{corollary}[theorem]{Corollary}
\theoremstyle{definition}
\newtheorem{definition}[theorem]{Definition}
\theoremstyle{remark}
\newtheorem{remark}[theorem]{Remark}
\newcommand{\norm}[1]{\| #1 \|}
\newcommand{\abs}[1]{| #1 |}
\newcommand{\Norm}[1]{\left\| #1 \right\|}
\newcommand{\Abs}[1]{\left| #1 \right|}
\newcommand{\eps}{\epsilon}
\newcommand{\R}{\mathbb{R}}
\newcommand{\cO}{\mathcal{O}}
\newcommand{\inv}[1]{{\mathring{#1}}}
\newcommand{\NULL}{{\bot}}
\newcommand{\last}{\text{last}}
\newcommand{\opt}{\text{opt}}
\newcommand{\OL}{\textrm{OL}}
\DeclareMathOperator*{\E}{\mathbb{E}}
\DeclareMathOperator{\nnz}{nnz}
\DeclareMathOperator{\poly}{poly}
\DeclareMathOperator{\diag}{diag}
\title{Online Active Regression\footnote{A preliminary version appeared in the Proceedings of the 39th International Conference on Machine Learning (ICML 2022), PMLR 162, pp 3320--3335, 2022.}}
\author{Cheng Chen\qquad\quad Yi Li\qquad\quad Yiming Sun\\
School of Physical and Mathematical Sciences\\
Nanyang Technological University\\
\texttt{\{cheng.chen, yili\}@ntu.edu.sg, yiming005@e.ntu.edu.sg}
}
\date{}
\begin{document}
\maketitle

\begin{abstract}%
Active regression considers a linear regression problem where the learner receives a large number of data points but can only observe a small number of labels. Since online algorithms can deal with incremental training data and take advantage of low computational cost, we consider an online extension of the active regression problem: the learner receives data points one by one and immediately decides whether it should collect the corresponding labels. The goal is to efficiently maintain the regression of received data points with a small budget of label queries. We propose novel algorithms for this problem under $\ell_p$ loss where $p\in[1,2]$. To achieve a $(1+\epsilon)$-approximate solution, our proposed algorithms only require $\tilde{\mathcal{O}}(\epsilon^{-1} d \log(n\kappa))$ queries of labels, where $n$ is the number of data points and $\kappa$ is a quantity, called the condition number, of the data points. The numerical results verify our theoretical results and show that our methods have comparable performance with offline active regression algorithms.
\end{abstract}

\section{Introduction}

Linear regression is a simple method to model the relationship between the data points in a Euclidean space and their scalar labels. A typical formulation is to solve the minimization problem $\min_x \norm{Ax-b}_p$ for $A\in\R^{n\times d}$ and $b\in\R^n$, where each row $A_i$ is a data point in $\R^d$ and $b_i$ is its corresponding scalar label. When $p=2$, the linear regression is precisely the least-squares regression, which admits a closed-form solution and is thus a classical choice due to its computational simplicity. When $p\in [1,2)$, it is more robust than least-squares as the solution is less sensitive to outliers. A popular choice is $p=1$ because the regression can be cast as a linear programme though other values of $p$ are recommended depending on the distribution of the noise  in the labels. Interested readers may refer to Section 1.3 of~\cite{GM89} for some discussion.

One harder variant of linear regression is \emph{active regression}~\cite{SM14}, in which the data points are easy to obtain but the labels are costly. Here one can query the label of any chosen data point and the task is to minimize the number of queries while still being able to solve the linear regression problem approximately. Specifically, one constructs an index set $S\subset [n]$ as small as possible, queries $b_S$ (the restriction of $b$ on $S$) and computes a solution $\tilde{x}$ based on $A$, $S$ and $b_S$ such that
\begin{equation}\label{eqn:approx_guarantee}
\norm{A\tilde{x}-b}_p^p \leq (1 + \eps)\min_x \norm{Ax-b}_p^p.
\end{equation}
For $p=2$, the classical approach is to sample the rows of $A$ according to the leverage scores. This can achieve ~\eqref{eqn:approx_guarantee} with large constant probability using $|S| = O(d\log d + d/\eps)$ queries. \cite{CP19} reduced the query complexity to the optimal $O(d/\eps)$, based on graph sparsifiers. When $p=1$, \cite{CD21} and \cite{PPP21} showed that $O((d\log d)/\eps^2)$ queries suffice with large constant probability, based on sampling according to Lewis weights. More recently, \cite{MMWY} solved the problem for all values of $p$ with query complexity $\tilde{O}(d/\poly(\eps))$ for $p<2$ and $\tilde{O}(d^{p/2}/\poly(\eps))$ for $p>2$, where the dependence on $d$ is optimal up to logarithmic factors.

Another common setting of linear regression is the \emph{online setting}, which considers memory restrictions that prohibit storing the inputs $A$ and $b$ in their entirety. In such a case, each pair of data points and their labels (i.e. each row of $[A\ b]$) arrives one by one, and the goal is to use as little space as possible to solve the linear regression problem. Again, the case of $p=2$ has the richest research history, with the state-of-the-art results due to \cite{CMP} and \cite{JPW}, which retain only $O(\eps^{-1} d\log d\log(\eps\norm{A}_2^2))$ rows of $A$ (where $\norm{A}_2$ denotes the operator norm of $A$). The idea of the algorithms is to sample according to the online leverage scores, which was first employed by~\cite{Kapralov2017}. The online leverage score of a row is simply the leverage score of the row in the submatrix of $A$ consisting of all the revealed rows so far. The algorithm of \cite{JPW} is based on that of~\cite{CMP} with further optimized runtime. The case of $p = 1$ was solved by~\cite{sliding_window}, who generalized the notion of online leverage score to online Lewis weights and sampled the rows of $A$ according to the online Lewis weights.

In this paper, we consider the problem of \emph{online active regression}, a combination of the two variants above. In a similar vein to~\cite{CMP} and~\cite{JPW}, the rows of $A$ arrive one by one, and upon receiving a row, one must decide whether it should be kept or discarded and whether to query the corresponding label, without ever retracting these decisions. 
The problem was considered by \cite{RJZ17}, who assumed an underlying distribution of the data points together with a noise model of the labels and only considered $\ell_2$-regression. Here we do not make such assumptions and need to handle arbitrary input data. 
To the best of our knowledge, our work is the first to consider the online active regression in the general $\ell_p$-norm. 
Our approach is largely based upon the existing techniques for online regression and active regression. A technical contribution of our work is to show that one can compress a fraction of rows in a matrix by sampling these rows according to their Lewis weights while preserving the Lewis weights of the uncompressed rows (see Lemma~\ref{lem:compression-online-lewis-weights} for the precise statement), which may be of independent interest.

\paragraph{Our Results.} We show that the online active regression problem can be solved, attaining the error guarantee~\eqref{eqn:approx_guarantee} with constant probability, using $m=\cO (\eps^{-2}d \log (d/\eps) \log n \log \kappa^{\OL}(A))$ queries for $p=1$, $m = \cO( \eps^{-1} d \poly(\log (d/\eps)) \cdot \log(n \kappa^{\OL}))$ queries for $p\in (1,2)$ (where $\kappa^{\OL}$ is the online condition number of $A$) and $m = \cO(\eps^{-1}\allowbreak d \poly(\log (d/\eps))\log(n \norm{A}_2/\sigma))$ queries for $p = 2$ (where $\norm{A}_2$ is the operator norm of $A$ and $\sigma$ the smallest singular value of the first $d$ rows of $A$). Our algorithms are sublinear in space complexity, using $\cO(md)$ words.

The query complexity for $p\in (1,2)$ is only worse by a factor of $\log n \kappa^{\OL}$ than that of its offline counterpart~\cite{MMWY}, which is not unexpected, given that the same factor appears in the sketch size for the $\ell_1$-subspace embedding under the sliding window model~\cite{sliding_window}. 

We also demonstrate empirically the superior accuracy of our algorithm to online uniform sampling on both synthetic and real-world data. We vary the allotted number of queries and compare the relative error in the objective function of the regression (with respect to the minimum error, namely $\min_x \norm{Ax-b}_p$). For active $\ell_1$-regression, our algorithm achieves almost the same relative error as the offline active regression algorithm on both the synthetic and real-world data. For active $\ell_2$-regression, our algorithm significantly outperforms the online uniform sampling algorithm on both synthetic and real-world data and is comparable with the offline active regression algorithm on the synthetic data.
\section{Preliminaries}
\paragraph{Notation.} We use $[n]$ to denote the integer set $\{1,\dots,n\}$. For a matrix $A$, we denote by $A^\dagger$ its Moore–Penrose inverse.

For two matrices $A$ and $B$ of the same number of columns, we denote by $A\circ B$ the vertical concatenation of $A$ and $B$.

Suppose that $A\in \R^{n\times d}$. We define the operator norm of $A$, denoted by $\norm{A}_2$, to be $\max_{\norm{x}_2=1}\norm{Ax}_2$. We also define an \emph{online condition number} $\kappa^{\OL}(A) = \norm{A}_2\max_{S\subseteq [n],S\neq\emptyset} \norm{A_S^\dagger}_2$, where $A_S$ is the submatrix of $A$ consisting of the rows with indices in $S$.

Suppose that $A\in \R^{n\times d}$, $b\in \R^d$ and $p\geq 1$. We define $\Call{Reg}{A,b,p}$ to be an $x\in \R^d$ that minimizes $\norm{Ax-b}_p$. We remark that when $p>1$, the minimizer is unique.

\paragraph{Lewis weights.} A central technique to solve $\min_x \norm{Ax-b}_p$ is to solve a compressed version $\min_x \norm{SAx-Sb}_p$, where $S$ is a sampling matrix. This sampling is based on Lewis weights~\cite{CP15}, which are defined below.
\begin{definition}[Lewis weights]\label{def:lewis}
Suppose that $A\in\R^{n\times d}$ and $p\geq 1$. The $\ell_p$ Lewis weights of $A$, denoted by $w_1(A),\dots,w_n(A)$, are the unique real numbers such that 
\[
w_i(A) =(a_i^\top(A^\top W^{1-2/p}A)^{\dagger}a_i)^{p/2},
\] 
where $W$ is the diagonal matrix with diagonal elements $w_1(A),\dots,w_n(A)$ and $a_i\in \R^d$ is the $i$-th row of $A$ (viewed as a column vector).
\end{definition}

All Lewis weights $w_i(A)\in [0,1]$. For notational convenience, when $A$ has $n$ rows, we also write $w_n(A)$ as $w_{\last}(A)$.
The $\ell_2$ Lewis weight is also called the leverage score. 

A useful result of Lewis weights is their monotonicity when $p\leq 2$~\cite[Lemma 5.5]{CP15}, which we state formally below.
\begin{lemma}[Monotonicity of Lewis weights]\label{lem:monotonicity}
Suppose that $p\in [1,2]$, $A\in \R^{n\times d}$ and $r\in \R^d$. It holds for all $i=1,\dots,n$ that $w_i(A)\geq w_i(A\circ r^\top)$.
\end{lemma}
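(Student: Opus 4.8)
The plan is to reduce the statement to its fixed-point content and to exploit a monotonicity property of the Lewis-weight map that is special to $p\le 2$. Write $\alpha=1-2/p$, so that $\alpha\in[-1,0]$ for $p\in[1,2]$, and for any matrix $M$ with rows $m_1,\dots,m_k$ define the map $\Phi_M$ on positive weight vectors by
\[
\Phi_M(u)_j=\bigl(m_j^\top (M^\top \diag(u)^{\alpha} M)^\dagger m_j\bigr)^{p/2}.
\]
By \Cref{def:lewis}, the Lewis weights $w(M)$ are exactly the fixed point $\Phi_M(w(M))=w(M)$, and by hypothesis this fixed point is unique. It therefore suffices to compare the fixed points of $\Phi_A$ and $\Phi_{A'}$, where $A'=A\circ r^\top$.

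First I would establish that, because $\alpha\le 0$, the map $\Phi_M$ is coordinatewise non-decreasing: increasing a weight $u_k$ decreases $u_k^\alpha$, hence decreases $M^\top\diag(u)^\alpha M$ in the Loewner order, hence (the Gram matrix being invertible for full-rank $M$ and $u>0$) increases its inverse and every quadratic form $m_j^\top(\cdots)^{-1}m_j$; raising to the power $p/2>0$ preserves the order. The same Loewner/anti-monotonicity argument shows that enlarging the matrix inside the inverse can only decrease the weights: since $s^\alpha rr^\top\succeq 0$, we have $A^\top W^\alpha A+s^\alpha rr^\top\succeq A^\top W^\alpha A$ for every $s>0$, where $W=\diag(w(A))$.

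Next I would build a supersolution for $\Phi_{A'}$ out of $w(A)$. Let $c=(r^\top(A^\top W^\alpha A)^\dagger r)^{p/2}$, fix $s\ge\max(c,1)$, and set $\hat w=(w(A),s)\in\R^{n+1}$. For a row index $i\le n$ the Loewner inequality above gives
\[
\Phi_{A'}(\hat w)_i=\bigl(a_i^\top(A^\top W^\alpha A+s^\alpha rr^\top)^\dagger a_i\bigr)^{p/2}\le \bigl(a_i^\top(A^\top W^\alpha A)^\dagger a_i\bigr)^{p/2}=w_i(A)=\hat w_i,
\]
using that $w(A)$ is the fixed point of $\Phi_A$; for the new row the same inequality yields $\Phi_{A'}(\hat w)_{n+1}\le c\le s=\hat w_{n+1}$. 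Hence $\Phi_{A'}(\hat w)\le\hat w$ coordinatewise. Iterating the monotone map from $\hat w$ produces a non-increasing sequence $\hat w\ge\Phi_{A'}(\hat w)\ge\Phi_{A'}^2(\hat w)\ge\cdots\ge 0$, which converges; by continuity of $\Phi_{A'}$ its limit is a fixed point, and by uniqueness this limit is $w(A')$. Therefore $w(A')\le\hat w$, and in particular $w_i(A')\le w_i(A)$ for all $i\le n$, as claimed. The case $p=2$ is immediate from the same picture, since then $\alpha=0$, $\Phi_M$ is independent of $u$, and the weights are the leverage scores $a_i^\top(A^\top A)^\dagger a_i$, which decrease when $A^\top A$ grows.

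The main obstacle I anticipate is the careful handling of the pseudoinverse and of rank. The anti-monotonicity ``$B\preceq C\Rightarrow C^\dagger\preceq B^\dagger$'' that drives the argument holds for genuine inverses but can fail for Moore--Penrose inverses when adding a PSD term changes the column span; the cleanest route is to reduce to the full-column-rank case, so that all the Gram matrices above are positive definite and the inverses are ordinary, which is harmless since appending the row $r^\top$ cannot decrease the rank, and then to recover the general statement by a continuity/perturbation argument. A secondary point to check is that the decreasing iteration stays in the domain where $\Phi_{A'}$ is continuous---i.e.\ that coordinates do not collapse to $0$ in a way that makes $\diag(u)^\alpha$ blow up---which for $p<2$ follows from the self-consistency of $\Phi_{A'}$ on small coordinates.
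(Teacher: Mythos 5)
The paper does not prove this lemma at all; it imports it verbatim from \cite{CP15} (their Lemma~5.5), so the only meaningful comparison is with that source, and your argument is of the same fixed-point flavour. Your outline is sound: the coordinatewise monotonicity of $\Phi_M$ for $\alpha=1-2/p\le 0$, the supersolution $\hat w=(w(A),s)$ with $s\ge\max(c,1)$, the resulting non-increasing orbit $\hat w\ge\Phi_{A'}(\hat w)\ge\Phi_{A'}^2(\hat w)\ge\cdots$, and the conclusion $w(A')\le\hat w$ on the first $n$ coordinates are all correct, and the $p=2$ case is indeed immediate.

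The one step you assert rather than prove is that the limit of the monotone orbit is $w(A')$, i.e.\ that no coordinate collapses to $0$ (which would make $\diag(u)^{\alpha}$ blow up and destroy continuity of $\Phi_{A'}$ at the limit); ``self-consistency on small coordinates'' is not yet an argument. The clean way to close it is the same Loewner computation you already use: if $u\le v\le Cu$ coordinatewise with $C\ge1$, then $\diag(u)^{\alpha}\succeq\diag(v)^{\alpha}\succeq C^{\alpha}\diag(u)^{\alpha}$, hence $\Phi_M(u)\le\Phi_M(v)\le C^{\,1-p/2}\,\Phi_M(u)$, so $\Phi_{A'}$ is a $(1-p/2)$-contraction in the metric $d(u,v)=\max_i\abs{\log(u_i/v_i)}$ for $p\in[1,2]$. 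Since $\hat w$ and $w(A')$ are strictly positive (after discarding zero rows, whose weights are trivially $0$ on both sides), $d(\Phi_{A'}^k(\hat w),w(A'))\to0$; this simultaneously rules out collapse and identifies the limit, and is exactly the convergence mechanism behind the iteration~\eqref{eq:lw_iter} in \cite{CP15}. Your other flagged point, the pseudoinverse anti-monotonicity, is handled as you propose by reducing to full column rank via the invariance $w_i(A)=w_i(AT)$ (as the paper's appendix does for its own lemmas); the only residual care is the case where appending $r^\top$ strictly increases the rank, which a perturbation or Schur-complement step disposes of. With those two points filled in, the proof is complete and self-contained, which is more than the paper offers for Lemma~\ref{lem:monotonicity}.
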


Next we revisit the classical result of $\ell_p$ subspace embedding using Lewis weights.
\begin{definition}
Given $p_1,\dots,p_n\in [0,1]$ and $p\geq 1$, the \emph{rescaled sampling matrix} $S$ with respect to $p_1,\dots,p_n$ is a random matrix formed by deleting all zero rows from a random $n\times n$ diagonal matrix $D$ in which $D_{i,i} = p_i^{-1/p}$ with probability $p_i$ and $D_{i,i} = 0$ with probability $1-p_i$.
\end{definition}

Associated with a rescaled sampling matrix $S$ are indicator variables $\{(\mathbbm{1}_S)_i\}_{i=1,\dots,n}$ (where $n$ is the number of columns of $S$) defined as follows. For each $i$, we define $(\mathbbm{1}_S)_i = 1$ if the $i$-th column of $S$ is nonzero, and $(\mathbbm{1}_S)_i = 0$ otherwise. 

\begin{lemma}[Lewis weight sampling \cite{CP15}]\label{lem:lewis_weights_sampling}
Let $A\in \R^{n\times d}$ and $p\geq 1$. Choose an oversampling parameter $\beta = \Theta(\log(d/\delta)/\eps^2)$ and sampling probabilities $p_1,\dots, p_n$ such that $\min\{\beta w_i(A),1\}\leq p_i\leq 1$ and let $S$ be the rescaled sampling matrix with respect to $p_1,\dots,p_n$. Then it holds with probability at least $1-\delta$ that $(1-\eps)\norm{Ax}_p\leq \norm{SAx}_p \leq (1+\eps)\norm{Ax}_p$ (i.e., $S$ is an  $\eps$-subspace embedding for $A$ in the $\ell_p$-norm) and $S$ has $O(\beta \sum_i w_i(A)) = O(\beta d)$ rows.
\end{lemma}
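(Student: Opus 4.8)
The plan is to prove the two conclusions separately, throughout restricting to the regime $p\in[1,2]$ that is relevant here (the case $p>2$ needs an analogous but technically distinct chaining argument). The whole argument hinges on one structural inequality that I would establish first: for every $x\in\R^d$ and every row $i$,
\[
|\langle a_i,x\rangle|^p \le w_i(A)\,\norm{Ax}_p^p.
\]
To prove it, write $M = A^\top W^{1-2/p}A$ and apply Cauchy--Schwarz in the $M$-inner product, giving $|\langle a_i,x\rangle| \le (a_i^\top M^\dagger a_i)^{1/2}(x^\top M x)^{1/2} = w_i(A)^{1/p}(x^\top Mx)^{1/2}$, the last equality being Definition~\ref{def:lewis}. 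It then remains to show $x^\top M x = \sum_j w_j(A)^{1-2/p}|\langle a_j,x\rangle|^2 \le \norm{Ax}_p^2$; this follows by splitting $|\langle a_j,x\rangle|^2 = |\langle a_j,x\rangle|^p\,|\langle a_j,x\rangle|^{2-p}$, bounding the second factor again by the same Cauchy--Schwarz estimate, and solving the resulting self-referential inequality for $x^\top Mx$ (this is exactly where $p\le 2$ enters, so that $2-p\ge 0$).

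For the row count I would use the identity $\sum_i w_i(A) = d$, which holds because $\tilde A = W^{1/2-1/p}A$ has $\ell_2$ leverage scores exactly equal to $w_i(A)$ (a one-line computation: the leverage score of row $i$ of $\tilde A$ equals $w_i^{1-2/p}\,a_i^\top M^\dagger a_i = w_i^{1-2/p}w_i^{2/p} = w_i$), and leverage scores sum to the rank. Taking the natural choice $p_i = \min\{\beta w_i(A),1\}\le \beta w_i(A)$, the expected number of retained rows is $\sum_i p_i \le \beta\sum_i w_i(A) = \beta d$, and a Chernoff bound upgrades this to $O(\beta d)$ with high probability.

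The subspace embedding is the crux. Fixing the normalization $\norm{Ax}_p = 1$, I would study the centered deviation $Z(x) = \norm{SAx}_p^p - \norm{Ax}_p^p = \sum_i\big(\tfrac{(\mathbbm 1_S)_i}{p_i} - 1\big)|\langle a_i,x\rangle|^p$, which has mean zero and must be controlled uniformly over the unit ball $\{x:\norm{Ax}_p\le 1\}$. The strategy is to bound $\E\sup_x |Z(x)|$ by symmetrization, introducing Rademacher signs to pass to the associated Rademacher process, and then to invoke the Ledoux--Talagrand contraction principle to strip off the nonlinearity $t\mapsto|t|^p$: the per-row inequality above guarantees $|\langle a_i,x\rangle|\le w_i(A)^{1/p}$ on the normalized set, so this map has a controlled Lipschitz constant on the relevant range. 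What remains is a linear Rademacher process whose covering numbers I would estimate via Dudley's entropy integral, using that $\tilde A$ behaves like a well-conditioned (leverage-score-isotropic) basis. The oversampling choice $\beta = \Theta(\epsilon^{-2}\log(d/\delta))$ is precisely what forces $\E\sup_x|Z(x)|\le\epsilon$; a bounded-differences/Talagrand concentration step (the per-row change in $Z$ is at most $w_i/p_i\le 1/\beta$) then turns this into a high-probability bound $|Z(x)|\le\epsilon$ for all $x$, which via $\norm{SAx}_p\in[(1-\epsilon)^{1/p},(1+\epsilon)^{1/p}]\subseteq[1-\epsilon,1+\epsilon]$ yields the stated multiplicative guarantee.

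I expect the third paragraph to be the main obstacle: the nonlinearity of $|\cdot|^p$ rules out a direct matrix-Chernoff argument (which would suffice only for $p=2$), and reconciling the clean $O(\beta d)$ row bound with the uniform deviation control requires the chaining and contraction machinery to interact correctly with the Lewis-weight sensitivities. The most delicate point will be verifying the covering-number/entropy estimate for the Rademacher process with the right polynomial dependence on $d$ and logarithmic dependence on $\delta$, so that the final failure probability matches the advertised $1-\delta$.
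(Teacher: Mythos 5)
The paper does not actually prove this lemma---it is imported wholesale from \cite{CP15}---so the comparison here is against the known proof in that reference rather than against anything in this manuscript. Your plan reconstructs that proof's skeleton faithfully, and the two pieces you carry out in full are correct: the sensitivity bound $|\langle a_i,x\rangle|^p\le w_i(A)\norm{Ax}_p^p$ via Cauchy--Schwarz in the $M$-inner product followed by the self-referential estimate for $x^\top Mx$ (the restriction $2-p\ge 0$ is used exactly where you say it is), and the identity $\sum_i w_i(A)=\mathrm{rank}(A)$ via the observation that $W^{1/2-1/p}A$ has leverage scores equal to the Lewis weights. The row-count argument is fine for the natural choice $p_i=\min\{\beta w_i,1\}$ (the lemma as stated only lower-bounds $p_i$, so the $O(\beta d)$ row bound is really a statement about that choice, but that imprecision is in the lemma, not in your write-up).

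The genuine gap is that your third paragraph defers the entire analytic content of the theorem. Symmetrization, contraction, Dudley, and Talagrand-type concentration are indeed the right tools (this is precisely the route of Section 7 of \cite{CP15}), but the step you label as ``estimate the covering numbers via Dudley's entropy integral, using that $\tilde A$ behaves like a well-conditioned basis'' \emph{is} the theorem: one needs a dual-Sudakov-type entropy bound for the Lewis ellipsoid $\{x:\norm{Ax}_p\le 1\}$ in the metric $d(x,y)=\max_i (w_i/p_i)^{-?}|\langle a_i,x-y\rangle|$ induced after contraction, and without it the chaining integral does not close. Two concrete points to watch if you execute this: (i) at $p=1$ the map $t\mapsto|t|$ is exactly $1$-Lipschitz regardless of the range, so the contraction step buys nothing beyond removing the absolute value and the whole gain must come from the entropy estimate, not from the shrinking Lipschitz constant you invoke; (ii) the naive single-pass Dudley bound over the Lewis ellipsoid yields extra polylogarithmic factors in $d$ (indeed the oversampling in \cite{CP15} for general $p<4$ carries additional $\log d$ factors beyond the clean $\eps^{-2}\log(d/\delta)$ stated here), so matching the advertised $\beta$ exactly requires either the sharper recursive/iterative analysis or accepting those factors. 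As a proof \emph{plan} it is the correct plan; as a proof it is incomplete at its only hard step.
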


In the light of the preceding lemma, one can choose an $\eps$-subspace embedding matrix $S$ for $[A\ b]$ such that $S$ has only $\tilde{O}(d/\eps^2)$ rows and $\min_x\Norm{SAx-Sb}_p = (1\pm\eps)\min \Norm{Ax-b}_p$. The remaining question is how to compute the Lewis weights of a given matrix. \cite{CP15} showed that, for a given matrix $A\in\R^{n\times d}$, the following iterations
\begin{equation} \label{eq:lw_iter}
    W_{i,i}^{(j)}\leftarrow \left(a_i^\top\left(A^\top(W^{(j-1)})^{1-2/p}A\right)^\dagger a_i \right)^{p/2},
\end{equation}
with the initial point $W^{(0)}=I_n$, will converge to some diagonal matrix $W$, whose diagonal elements are exactly $w_1(A),\dots,w_n(A)$.

\begin{definition}[Online Lewis Weights]
Let $p\in[1,2)$ and $A\in\R^{n\times d}$. The online $\ell_p$ Lewis weights, denoted by $w_1^{\textrm{OL}}(A),\dots,w_n^{\textrm{OL}}(A)$, are defined to be $w_i^{\textrm{OL}}(A) = w_i(A^{(i)})$, where $A^{(i)}$ is the submatrix consisting of the first $i$ rows of $A$.
\end{definition}

By Lemma~\ref{lem:monotonicity}, we see that $w_i^{\OL}(A)\geq w_i(A)$. Hence, if we sample the rows of $A$ by the online Lewis weights, i.e. replacing $w_i(A)$ with $w_i^{\OL}(A)$ in the construction of $S$ in Lemma~\ref{lem:lewis_weights_sampling}, the resulting $S$ remains an $\eps$-subspace embedding for $A$ in the $\ell_p$-norm.

\paragraph{Preservation of $\ell_2$-Norm.} We shall need the Johnson-Lindenstrauss matrix for the online active $\ell_2$-regression.
\begin{definition}[Johnson-Lindenstrauss Matrix] \label{def:JL matrix}
Let $X\subseteq \R^d$ be a point set. A matrix $J$ is said to be a Johnson-Lindenstrauss matrix for $X$ of distortion parameter $\eps$ (or, an $\eps$-JL matrix for $X$) if $(1-\eps)\norm{x}_2^2 \leq \Norm{Jx}_2^2 \leq (1+\eps) \norm{x}_2^2$ for all $x\in X$.
\end{definition}
It is a classical result~\cite{KN14} that when $|X| = T$, there exists a random matrix $J \in \R^{m \times d}$ with $m = \cO(\eps^{-2}\log (T/\delta))$ such that (i) $J$ is an $\eps$-JL matrix for $X$ with probability at least $1-\delta$, (ii) each column of $J$ contains $\cO(\eps^{-1}\log(T/\delta))$ nonzero entries and (iii) $J$ can be generated using $\cO(\log^2 (|T|/\delta)\log d)$ bits.

\section{Algorithms and Main Results}\label{sec:main_results}
In this section, we shall demonstrate how to solve the online active regression with $\tilde{\cO}(\eps^{-2}d\log(n\kappa^{\OL}(A)))$ queries. Then we shall improve the number of queries to $\tilde{\cO}(\eps^{-1}d\log(n\kappa^{\OL}(A)))$ in Section~\ref{sec:optimal_eps}, attaining the optimal dependence on $\eps$.

The high-level approach follows~\cite{MMWY} and we give a brief review below. We sample $A$ twice w.r.t. (online) Lewis weights but with different oversampling parameters $\beta$, getting $SA$ of $\cO(d \log d)$ rows and $S_1A$ of $\cO(d^2\poly(\eps^{-1}\log d))$ rows, respectively. We use the sketching matrix $S$ to solve $\min_{x\in\R^d}\norm{Ax-b}_p$, obtaining a constant-factor approximation solution $x_c$. The problem is then reduced to solving $\min_{x\in\R^d} \norm{Ax-z}_p$ with $z = b-Ax_c$, for which we shall solve $\min_{x\in \R^d}\norm{S_1Ax-S_1z}_p$ instead. Since $S_1A$ has $\Omega(d^2)$ rows, we repeat the idea above and further subsample $S_1A$ twice with different sampling parameters, getting $S_2S_1A$ of $\cO(d\log d)$ rows and $S_3S_1A$ of $\cO(d\poly(\eps^{-1}\log d))$ rows. The sampling matrix $S_2$ is used to obtain a constant-factor approximation solution $\hat{x}_c$ to $\min_{x\in\R^d}\norm{S_1Ax-S_1z}_p$ and $S_3$ is used to solve $\min_{x\in\R^d}\norm{S_1Ax-(S_1z-S_1Ax'_c)}_p$ with a near-optimal solution $\bar{x}'$. The near-optimal solution to $\min_{x\in\R^d} \norm{S_1Ax-S_1z}_p$ is then $\bar{x} = \hat{x}_c+\bar{x}'$. Finally, the  solution to the original problem is $\tilde{x} = x_c+\bar{x}$.

We shall maintain in parallel four independent (rescaled) row-sampled submatrices of the input matrix $A$, denoted by $\tilde{A}=SA$, $\tilde{A}_1=S_1A$, $\tilde{A}_2=S_2\tilde{A}_1$ and $\tilde{A}_3=S_3\tilde{A}_1$, where $S,S_1,S_2,S_3$ are rescaled sampling matrices. Recall that $z = b - Ax_c$, where $x_c$ is a constant factor approximation. The corresponding sampled subvectors of $b$ and $z$ are $\tilde{b} = Sb$,
$\tilde{b}_2=S_2S_1b$, $\tilde{b}_3=S_3S_1b$,
$\tilde{z}_2 = \tilde{b}_2-\tilde{A}_2x_c$ and $\tilde{z}_3 = \tilde{b}_3-\tilde{A}_3x_c$, respectively. We shall keep updating these sampled submatrices and vectors. We make the rows of $A$ global variables and $a_t\in \R^{d}$ is the $t$-th row of $A$. We denote by $A^{(t)}$ the first $t$ rows of $A$ and $b^{(t)}$ the first $t$ coordinates of $b$. Furthermore, in our presentation of the algorithm, for a variable $X$, we denote by $X^{(t)}$ its value at the $t$-th stage in the online algorithm.


\begin{algorithm}[!htp]
\caption{Online Active Regression for $p \in (1,2)$}
\label{alg:p-simple-OAR}
\textbf{Initialize:} Let $\tilde{A}^{(d)}, \tilde{A}_1^{(d)}, \tilde{A}_2^{(d)}, \tilde{A}_3^{(d)}$ be the first $d$ rows of $A$ and $\tilde{b}^{(d)}$ be the first $d$ rows of $b$. 
\begin{algorithmic}[1]
\State $\beta \gets \Theta(\log d)$
\State $\beta_1 \gets \Theta(d \log(1/\eps\delta) / \eps^{2+p} )$
\State $\beta_{2} \gets \Theta(\log d)$
\State $\beta_{3} \gets \Theta(\log^2 d \log(d/\eps) \log(1/\delta) / \eps^{2})$
\State  Retain the first $d$ rows of $A$ \label{alg:line:start_reading_stream}
\While{there is an additional row $a_{t}$}
    \State $\tilde{w}_{t} \gets w_{t}(A^{(t)})$     \label{alg:line:online_lewis}
    \State $p_t \gets \min\{\beta\tilde{w}^{(t)}, 1\}$
    \State $(\tilde{A}^{(t)}, \tilde{b}^{(t)}) \gets \Call{Sample}{a_{t}, p_{t}, \tilde{A}^{(t-1)}, \tilde{b}^{(t-1)}, p}$
    \State $\tilde{w}_{1,t} \gets w_{t}(A^{(t)})$    \label{alg:line:online_lewis1} 
    \State $p_{1,t} \gets \min\{\beta_1 \tilde{w}_{1,t}, 1\}$
    \State Sample $a_t$ with probability $p_{1,t}$
    \If{$a_t$ is sampled}
        \State $\tilde{A}_{1}^{(t)} \gets \tilde{A}_{1}^{(t-1)} \circ a_t^{\top} p_{1,t}^{-1/p}$
        \State $\tilde{w}_{2,t} \gets w_{\last}(\tilde{A}_1^{(t)})$ \label{alg:line:online_lewis2}
        \State $p_{2,t} \gets \min \{ \beta_2 \tilde{w}_{2,t}, 1 \}$
        \State $(\tilde{A}_{2}^{(t)}, \tilde{b}_{2}^{(t)}) \gets \Call{Sample}{a_t p_{1,t}^{-1/p}, p_{2,t}, \tilde{A}_{2}^{(t-1)}, \tilde{b}_{2}^{(t-1)}, p}$
        \State $\tilde{w}_{3,t} \gets w_{\last}(\tilde{A}_1^{(t)})$ \label{alg:line:online_lewis3}
        \State $p_{3,t} \gets \min \{ \beta_3 \tilde{w}_{3,t}, 1 \}$
        \State $(\tilde{A}_{3}^{(t)}, \tilde{b}_{3}^{(t)}) \gets \Call{Sample}{a_t p_{1,t}^{-1/p}, p_{3,t}, \tilde{A}_{3}^{(t-1)}, \tilde{b}_{3}^{(t-1)}, p}$
    \EndIf
\EndWhile \label{alg:line:finish_reading_stream}
\State $x_c \gets \Call{Reg}{\tilde{A}, \tilde{b}, p}$
\State $\tilde{z}_{2} \gets \tilde{b}_{2} - \tilde{A}_{2} x_c$
\State $\hat{x}_c \gets \Call{Reg}{\tilde{A}_{2}, \tilde{z}_{2}, p}$
\State $\tilde{z}_{3} \gets \tilde{b}_{3} - \tilde{A}_{3} x_c$
\State $\bar{x}' \gets \Call{Reg}{\tilde{A}_{3}, \tilde{z}_{3} - \tilde{A}_{3} \hat{x}_{c}, p}$
\State $\bar{x} \gets \hat{x}_c + \bar{x}'$
\State $\tilde{x} \gets x_{c} + \bar{x}$
\State \Return $\tilde{x}$
\end{algorithmic}
\end{algorithm}

\begin{algorithm}[!htp]
\caption{\algoname{Sample}($a_{t}, p_{t}, \tilde{A}^{(t-1)}, \tilde{b}^{(t-1)}, p$)}
\label{alg:p-simple-sample}
\begin{algorithmic}[1]
\State Sample $a_t$ with probability $p_t$
    \If{$a_t$ is sampled}
        \State \textbf{Query} $b_t$
        \State $(\tilde{A}^{(t)}, \tilde{b}^{(t)}) \gets (\tilde{A}^{(t-1)} \circ a_t^{\top} p_{t}^{-1/p}, \tilde{b}^{(t-1)} \circ b_t p_t^{-1/p})$
    \Else 
        \State $(\tilde{A}^{(t)}, \tilde{b}^{(t)}) \gets (\tilde{A}^{(t-1)}, \tilde{b}^{(t-1)})$
    \EndIf

\end{algorithmic}
\end{algorithm}

\subsection{The case \texorpdfstring{$p\in (1,2]$}{Lg}} \label{sec:p_between_1_2}

We present our main algorithm for $p\in (1,2]$ in Algorithm~\ref{alg:p-simple-OAR}. The following is the guarantee of the algorithm.

\begin{theorem}\label{thm:p-naive-OAR}
Let $A \in \R^{n\times d}$ and $b \in \R^{n}$. Algorithm~\ref{alg:p-simple-OAR} outputs a solution $\tilde{x}$ which satisfies that
\begin{equation}\label{eqn:ell_p_guarantee}
\Norm{A\tilde{x} - b}_p \leq (1+\eps) \min_{x\in\R^d}  \Norm{Ax-b}_p
\end{equation}
with probability at least $0.98-\delta$ and makes
\[
\cO \left(\frac{d}{\eps^2}\log^2 d \log^2 \frac{d}{\eps} \cdot \log \frac{n\kappa^{\OL}(A)}{\delta} \log \frac{1}{\delta} \right)
\]
queries overall in total.
\end{theorem}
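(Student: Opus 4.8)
\emph{Proof plan.} I would split the argument into correctness and query complexity, and within correctness isolate a single ``reduce-to-residual'' step that is applied at both levels of the recursion. The first task is to record, for each of the four sampling matrices, the probabilistic guarantee it provides. By \cref{lem:monotonicity} the online Lewis weights dominate the Lewis weights, so sampling with $p_i\ge\min\{\beta\,w_i^{\OL},1\}$ is legitimate oversampling and \cref{lem:lewis_weights_sampling} applies to every sketch. I would then distinguish two regimes. The coarse sketches $S$ and $S_2$ (oversampling $\beta,\beta_2=\Theta(\log d)$) are $\ell_p$ subspace embeddings for the column spaces of $A$ and of $\tilde A_1$; combined with the fact that a rescaled Lewis sampling preserves the $\ell_p$-norm of the fixed optimal residual in expectation (so that a single Markov step controls it up to a constant factor with constant probability), they yield constant-factor regression solutions $x_c$ and $\hat x_c$. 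The fine sketches carry the $(1+\eps)$ accuracy: $S_3$ (oversampling $\beta_3$) must, beyond being a subspace embedding, satisfy an $\ell_p$ approximate-matrix-product bound, which is what lets the \emph{exact} sketched minimizer $\bar x'$ of the level-$2$ residual problem be a $(1+\eps)$ solution, while $S_1$ (oversampling $\beta_1=\Theta(d\,\eps^{-(2+p)}\log\tfrac1{\eps\delta})$) must \emph{uniformly} preserve the objective $\norm{Ax-z}_p$ to relative error $\eps$ over the whole $d$-dimensional family of candidate solutions. The uniformity is essential precisely because the inner problem attached to $S_1$ is solved only approximately (it is itself handled by the level-$2$ recursion, so as to avoid querying all of $S_1b$); this is why $\beta_1$ must scale with $d$, and since the rows of $\tilde A_1$ carry no label queries, the large size of $\tilde A_1$ only affects space.

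The engine of correctness is the identity $\min_x\norm{Ax-z}_p=\OPT$ for $z=b-Ax_c$ (substitute $x\mapsto x+x_c$). I would prove once a reduction lemma: if $x_0$ is a constant-factor minimizer of $\min_x\norm{Mx-c}_p$ and $\bar\xi$ is the exact minimizer of the fine-sketched residual problem $\min_x\norm{SMx-S(c-Mx_0)}_p$, then $x_0+\bar\xi$ is a $(1+\cO(\eps))$ minimizer of $\min_x\norm{Mx-c}_p$. Applying it with $(M,c)=(\tilde A_1,S_1z)$, coarse sketch $S_2$ (giving $\hat x_c$) and fine sketch $S_3$ (giving $\bar x'$) shows that $\bar x=\hat x_c+\bar x'$ is a $(1+\cO(\eps))$ minimizer of the inner problem $\min_x\norm{\tilde A_1x-S_1z}_p$. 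For the outer problem I would not reuse the lemma verbatim, since that inner problem is solved only approximately; instead I would invoke the uniform guarantee of $S_1$: from $\norm{S_1A\bar x-S_1z}_p\le(1+\cO(\eps))\min_x\norm{S_1Ax-S_1z}_p$ together with the relative-$\eps$ preservation of $\norm{Ax-z}_p$ over all candidate $x$, it follows that $\norm{A\tilde x-b}_p=\norm{A\bar x-z}_p\le(1+\cO(\eps))\OPT$. Rescaling $\eps$ by a constant then yields exactly \eqref{eqn:ell_p_guarantee}.

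For the query count, labels are requested only when a row enters $\tilde A$, $\tilde A_2$ or $\tilde A_3$ (rows retained in $\tilde A_1$ are stored unlabeled), so the total is $|\tilde A|+|\tilde A_2|+|\tilde A_3|$, which by \cref{lem:lewis_weights_sampling} equals $\cO(\beta\sum_i w_i^{\OL}(A))$, $\cO(\beta_2\sum_i w_i^{\OL}(\tilde A_1))$ and $\cO(\beta_3\sum_i w_i^{\OL}(\tilde A_1))$. I would use the $\ell_p$ analogue of the online-leverage-score sum bound, namely that a matrix with $N$ rows has $\sum_i w_i^{\OL}=\cO(d\log(N\kappa^{\OL}))$, to obtain $\sum_i w_i^{\OL}(A)=\cO(d\log(n\kappa^{\OL}(A)))$, and, using that $\tilde A_1$ has $\poly(d/\eps)\cdot\log(n\kappa^{\OL})$ rows with online condition number $\cO(\kappa^{\OL}(A))$, also $\sum_i w_i^{\OL}(\tilde A_1)=\cO(d\log(n\kappa^{\OL}(A)/\delta))$. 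The dominant term is then $|\tilde A_3|=\cO(\beta_3\,d\log(n\kappa^{\OL}(A)/\delta))$; substituting $\beta_3=\Theta(\eps^{-2}\log^2 d\,\log(d/\eps)\log(1/\delta))$ and absorbing the extra logarithmic factor into a second power of $\log(d/\eps)$ gives the stated bound. A union bound over the constant-probability events (the two Markov steps for $S$ and $S_2$, tuned to total failure $0.02$) and the $\delta$-probability events (the $S_1$ and $S_3$ guarantees) yields success probability at least $0.98-\delta$.

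I expect the principal difficulty to lie in the interaction of the two randomized levels together with the correct accounting of logarithmic factors. The sketches $S_2$ and $S_3$ are drawn conditionally on the random matrix $\tilde A_1=S_1A$, so one must argue their guarantees hold for this data-dependent matrix and then compose the inner $(1+\eps)$ solution with the outer uniform-preservation step without letting the $\eps$-errors and the constant factors of the residual bound compound beyond $(1+\cO(\eps))$. The second, more quantitative obstacle is to show that the sum of online Lewis weights of $\tilde A_1$, and in particular its online condition number, is controlled by $\kappa^{\OL}(A)$ rather than by the much larger row count of $\tilde A_1$, so that the final bound depends on $\log(n\kappa^{\OL}(A))$ and not on $\log(\text{rows of }\tilde A_1)$; relating the online Lewis-weight geometry of a Lewis-sampled matrix to that of its parent is the crux of the query analysis.
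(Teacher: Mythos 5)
Your architecture coincides with the paper's: a constant\-/factor solve from the coarse sketches $S,S_2$ (this is \cref{lem:constant-factor-approximation}), a reduction lemma turning a constant\-/factor minimizer plus an exact fine\-/sketched residual solve into a $(1+\cO(\eps))$ minimizer (\cref{lem:Ax-z-optimal-approximation}), applied at the inner level and then transferred through $S_1$; and the query count via the sum of online Lewis weights (\cref{lem:sum_of_online_lewis_weights}) together with a bound on $\kappa^{\OL}(S_1A)$ (\cref{lem:sum-sub-online-LW}). One small correction on that last point: $\kappa^{\OL}(S_1A)$ is \emph{not} $\cO(\kappa^{\OL}(A))$ as you assert; \cref{lem:sum-sub-online-LW} only gives $\kappa^{\OL}(S_1A)\leq (n/(\beta\delta))^{1/p}\,\kappa^{\OL}(A)$, which suffices because only its logarithm enters the bound.

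The genuine gap is in the property you assign to the fine sketches. The guarantee you state for $S_1$ --- that it ``uniformly preserves the objective $\norm{Ax-z}_p$ to relative error $\eps$ over the whole family of candidate solutions'' --- is false for an arbitrary residual $z$ and cannot be the basis of the proof: if a single coordinate $z_i$ carries a constant fraction of $\norm{z}_p^p$ but sits on a row of Lewis weight $o(1/\beta_1)$, the sample misses it with probability $1-o(1)$ and the objective at, say, $x=0$ is not preserved to any relative accuracy. The paper's route (and MMWY's) is to clip the residual: define $\mathcal{B}=\{i:\abs{z_i}^p/R^p\geq w_i^{\OL}(A)/\eps^p\}$ and $\bar z$ equal to $z$ zeroed on $\mathcal{B}$, show that the clipped part contributes only an $x$\-/independent shift up to $\cO(\eps)R^p$ (\cref{lem:3.6,lem:3.7}), and prove the uniform \emph{additive} bound $\max_{\norm{Ax}_p\leq R}\abs{\norm{SAx-S\bar z}_p^p-\norm{Ax-\bar z}_p^p}\leq\eps R^p$ for the conforming residual $\bar z$ via symmetrization, a Dudley\-/integral moment bound on the Rademacher process $\sum_k\sigma_k\abs{a_k^\top x-\bar z_k}^p$, and a bootstrap through the event that $SA$ itself has near\-/uniform Lewis weights (\cref{lem:uniform-moment-bound,lem:Ax-z-preserve,lem:translation-preserve}). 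This same machinery is what underlies \emph{both} $S_1$ and $S_3$; the ``$\ell_p$ approximate matrix product'' you invoke for $S_3$ is a $p=2$ device and is not how the case $p\in(1,2)$ is handled. Without the $\mathcal{B}$/$\bar z$ decomposition and the chaining argument, neither of your two fine\-/sketch steps can be completed, so this is the missing idea rather than a presentational difference.
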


A major drawback of Algorithm~\ref{alg:p-simple-OAR} is the cost of calculating the online Lewis weights.  Recall that the online Lewis weight of $a_t$ is defined with respect to the first $t$ rows of $A$. A na\"ive implementation would require storing the entire matrix $A$, partly defying the purpose of an online algorithm. Furthermore, the iterative procedure described after Definition~\ref{def:lewis} takes $\cO(\log t)$ iterations to reach a constant-factor approximation to the Lewis weights~\cite{CP15}, where each iteration takes $\cO( td^2 + d^3 )$ time, which would become intolerable as $t$ becomes large. To address this issue, we adopt the compression idea in~\cite{sliding_window}, which maintains $O(\log n)$ rescaled row-sampled submatrices of $A$, each having a small number of rows. The `compression' algorithm is presented in Algorithm~\ref{alg:compression}.

\begin{algorithm}[!htp]
\caption{Compression algorithm for calculation of online Lewis weights}\label{alg:compression}
\textbf{Initialize:} $B_0$ contains the first $d$ rows of $A$; $B_1,\dots,B_{\log n}$ are empty matrices; $Q = \Theta(\eps^{-2}d\log^3 n)$.
\begin{algorithmic}[1]
    \State $\beta \gets \Theta(\eps^{-2} \log(n/\delta) \log^2 n)$
    \While{there is an additional row $a_t$}
        \State $B_0\gets B_0 \circ a_t$
        \If{ the size of $B_0$ exceeds $Q$ }
            \State $j \gets $ the smallest index $i$ such that $B_i$ is empty
            \State $M\gets B_{i-1}\circ B_{i-2}\circ \cdots \circ B_0$
            \State $p_i\gets \min\{\beta w_i(M),1\}$ for all $i$
            \State $S\gets $ rescaled sampling matrix with respect to probabilities $\{p_i\}_i$
            \State $B_i \gets SM$
            \State $B_0,B_1,\dots,B_{i-1}\gets$ empty matrix
        \EndIf
    \EndWhile
\end{algorithmic}
\end{algorithm}

With the compression algorithm for $A$ which maintains $B_0,\dots,B_{\log n}$, we can replace Line~\ref{alg:line:online_lewis} of Algorithm~\ref{alg:p-simple-OAR} with
\begin{equation}\label{eqn:compressed_update}
\tilde{w}_t \gets w_{\last}(B_{\log n}\circ B_{\log n - 1}\circ\cdots\circ B_0).
\end{equation}
Similarly, we run an additional compression algorithm for each of $\tilde{A}_1$ and replace Lines~\ref{alg:line:online_lewis1}, \ref{alg:line:online_lewis2} and \ref{alg:line:online_lewis3} with updates analogous to~\eqref{eqn:compressed_update}.

By the construction of the blocks $B_i$'s, each $B_i$ contains at most $R = \cO(\eta^{-2} d \log (n/\delta) \log^2 n )$ rows with probability at least $1 - \delta/\poly(n)$, sufficient for taking a union bound over all the blocks throughout the process of reading all $n$ rows of $A$. Hence we may assume that each block $B_i$ always contains at most $R$ rows. Now, $\tilde{w}_t$ is calculated to be the Lewis weight of a matrix of $R' = \cO(Q + R\log n) = \cO(R\log n)$ rows, which can be done in $\cO((R' d^2 + d^3)\log R') = \cO(\eta^{-2}d^3\poly(\log(n/\delta \eta)))$ time for a $(1\pm\eta)$-factor approximation of Lewis weights, where the dependence on $n$ is only polylogarithmic. The remaining question is correctness and the following theorem is the key to proving the correctness.

\begin{theorem}
\label{thm:compression-matrix}
Let $A\in \R^{n\times d}$. With Algorithm~\ref{alg:compression} maintaining $B_0,\dots,B_{\log n}$, let $\tilde{w}_{t}$ be as in~\eqref{eqn:compressed_update} for each $t\leq n$. Then it holds with probability at least $1-\delta/\poly(n)$ that
\[
(1 - \eta)w_{t}(A^{(t)}) \leq  \tilde{w}_{t} \leq (1 + \eta)w_{t}(A^{(t)}), \quad \forall t\leq n,
\] 
where $A^{(t)}$ is the submatrix consisting of the first $t$ rows of $A$. The weights $\tilde{w}_{t}$ can be calculated in $\cO(\frac{d^3}{\eta^2} \poly(\log\frac{n}{\delta\eta}))$ time and Algorithm~\ref{alg:compression} needs $\cO(\frac{d^2}{\eta^{2}} \poly(\log \frac{n}{\delta}))$ words of space overall in total.
\end{theorem}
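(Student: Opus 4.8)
The running-time and space bounds are essentially restatements of the discussion preceding the theorem: with high probability every block $B_i$ holds at most $R=\cO(\eta^{-2}d\log(n/\delta)\log^2 n)$ rows (Lemma~\ref{lem:lewis_weights_sampling} plus a union bound), there are $\cO(\log n)$ blocks, and one evaluation of~\eqref{eqn:compressed_update} runs the iteration~\eqref{eq:lw_iter} on a matrix with $R'=\cO(R\log n)$ rows at a cost of $\cO((R'd^2+d^3)\log R')$. I would therefore devote the proof to the correctness claim. The first, easy, observation is that at the instant $\tilde w_t$ is read off, the row $a_t$ has just been appended to $B_0$ and has not yet been compressed, so it is literally the last row of $M_t:=B_{\log n}\circ\cdots\circ B_0$; since $w_t(A^{(t)})=w_{\last}(A^{(t)})$, it suffices to show that the whole compression process preserves the Lewis weight of this single uncompressed row up to $(1\pm\eta)$.

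The crux is a single-round compression guarantee, matching the statement referenced as Lemma~\ref{lem:compression-online-lewis-weights}: if $N=N_U\circ N_C$ and $\tilde N_C$ is a rescaled Lewis-weight sample of $N_C$ whose oversampling parameter makes $\tilde N_C$ an $\ell_p$ $\eta_0$-subspace embedding of $N_C$, then with high probability $w_i(N_U\circ\tilde N_C)=(1\pm\eta_0)\,w_i(N)$ for every uncompressed row $i\in U$. I would prove this in two steps. First, Lemma~\ref{lem:lewis_weights_sampling} gives the embedding $\|\tilde N_C x\|_p=(1\pm\eta_0)\|N_C x\|_p$; and because $\ell_p$ embeddings of row-disjoint blocks add up in the $p$-th power, several disjoint chunks compressed simultaneously still yield a single $(1\pm\eta_0)$ embedding of their union, so a whole ``layer'' of compressions costs only one factor of $\eta_0$. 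Second, such an embedding preserves the Lewis quadratic form $N^\top W^{1-2/p}N$ (in the appropriate reweighting) up to $(1\pm\eta_0)$, from which the definition of Lewis weights forces the weights of the rows of $N_U$ to move by at most $(1\pm\eta_0)$. Making this second step rigorous, i.e.\ showing that the Lewis fixed point of $N_U\circ\tilde N_C$ stays close to that of $N$ under a spectral perturbation of its defining matrix, is the genuinely new and most delicate ingredient. Monotonicity (Lemma~\ref{lem:monotonicity}) is used throughout to guarantee $w_i(N_C)\ge w_i(N)$, so that sampling a chunk by its own Lewis weights is always a valid over-sample relative to the ambient matrix.

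Granting the single-round lemma, the proof of the theorem becomes a chaining argument over the hierarchical structure of Algorithm~\ref{alg:compression}. By construction the blocks form a merge-and-reduce (binary-counter) hierarchy, so $M_t$ is obtained from $A^{(t)}$ by a compression tree of depth at most $\log n$; equivalently, one can pass from $A^{(t)}$ to $M_t$ in at most $\log n$ rounds, each round compressing a collection of row-disjoint chunks. Applying the single-round lemma once per round with per-round parameter $\eta_0=\Theta(\eta/\log n)$ and composing the at most $\log n$ multiplicative errors yields $(1\pm\eta_0)^{\log n}=1\pm\cO(\eta)$, which after adjusting constants is the desired $(1\pm\eta)$. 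This is exactly why the algorithm sets $\beta=\Theta(\eta^{-2}\log^2 n\log(n/\delta))=\Theta(\eta_0^{-2}\log(n/\delta))$: the $\log^2 n$ comes from the $\eta\mapsto\eta/\log n$ subdivision and the extra $\log(n/\delta)$ from the high-probability requirement. The one wrinkle is that, along the run, each chunk is sampled using Lewis weights of the already-compressed data rather than of the original rows; I would dispatch this by an induction over the rounds, conditioning on all previous rounds having succeeded, so that by monotonicity and the inductive accuracy the probabilities actually used dominate $\Theta(\beta)$ times the true chunk Lewis weights and the single-round lemma applies verbatim.

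Finally I would take a union bound over the $\poly(n)$ compression events and over all $t\le n$; the $\log(n/\delta)$ factor carried by $\beta$ makes the total failure probability $\delta/\poly(n)$, and the same high-probability event simultaneously certifies the block-size bound $R$ used for the complexity estimates. I expect the main obstacles to be twofold: first, the single-round compression lemma itself, since converting an $\ell_p$ subspace embedding of the compressed part into an honest $(1\pm\eta_0)$ preservation of the \emph{Lewis weights} of the surviving rows requires controlling the Lewis fixed point under perturbation and does not follow formally from the embedding alone; and second, correctly collapsing the $\Omega(n)$ incremental, data-dependent compressions into only $\cO(\log n)$ layers of row-disjoint samples, so that the errors compound logarithmically rather than linearly in $n$ while the randomness coupling between layers is handled by the conditioning above.
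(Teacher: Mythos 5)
Your outer argument matches the paper's: reduce to the fact that $a_t$ is the still-uncompressed last row of $B_{\log n}\circ\cdots\circ B_0$, unwind the merge-and-reduce hierarchy into at most $\log n$ decompression rounds, apply a single-round Lewis-weight-preservation lemma with per-round accuracy $\Theta(\eta/\log n)$, compound the errors multiplicatively, and union bound; the accounting for $\beta=\Theta(\eta^{-2}\log^2 n\log(n/\delta))$ and the time/space bounds are also as in the paper. The issue is that the entire content of the theorem lives in the single-round lemma (the paper's Lemma~\ref{lem:compression-online-lewis-weights}), and your sketch of it has a genuine gap that you yourself flag: you propose to pass from an $\ell_p$ subspace embedding $\norm{\tilde N_C x}_p=(1\pm\eta_0)\norm{N_Cx}_p$ to a $(1\pm\eta_0)$ preservation of the weighted Gram matrix $N^\top W^{1-2/p}N$ and thence of the Lewis weights of the surviving rows. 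For $p<2$ an $\ell_p$ subspace embedding does not control this quadratic form, because $W$ is itself the unknown fixed point and the form is not a function of $\norm{Nx}_p$; no amount of chaining fixes this, so the step ``such an embedding preserves the Lewis quadratic form'' would fail as stated.

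The paper's route avoids the $\ell_p$ embedding entirely. It writes down \emph{explicit candidate weights} for the compressed matrix $M'$ (the old weight divided by $p_{i,j}$ for sampled rows, unchanged for the uncompressed block $B$), proves by a matrix Chernoff bound (Lemma~\ref{lem:core_matrix_approx}) that the candidate-reweighted Gram matrix $\sum_i \frac{(\mathbbm{1}_S)_i}{p_i}w_i^{1-2/p}a_ia_i^\top$ is a spectral $(1\pm\eta_0)$-approximation of $M^\top W_M^{1-2/p}M$ --- here the randomness acts on a \emph{fixed} reweighting, so Chernoff applies --- and then invokes a stability lemma (Lemma~\ref{lem:good_Lewis_initial}): weights satisfying the Lewis fixed-point equation up to factors $[\alpha,\beta]$ sandwich the true Lewis weights within the same factors. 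That last lemma is exactly the ``controlling the Lewis fixed point under perturbation'' ingredient you identified as missing; it is proved by an extremal argument on $\gamma=\sup\{c: w_i\ge c\,\overline{w_i}\}$ exploiting $1-2/p\le 0$. Your use of monotonicity to ensure the chunk-local sampling probabilities dominate $\beta$ times the ambient Lewis weights, and your induction over rounds to handle the data-dependent probabilities, are both consistent with what the paper does. So: right skeleton, but the load-bearing lemma needs the Chernoff-plus-fixed-point-stability argument rather than the subspace-embedding shortcut.
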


The proof of Theorem~\ref{thm:compression-matrix} is deferred to Section~\ref{sec:approx_online_lewis_weights_proof}. Taking $\eta$ to be a constant in Theorem~\ref{thm:compression-matrix} for constant-factor approximations to Lewis weights, we can now strengthen Theorem~\ref{thm:p-naive-OAR} as follows.

\begin{theorem}[Strengthening Theorem~\ref{thm:p-naive-OAR}]
\label{thm:p-compression-OAR}
Let $A\in \R^{n\times d}$ and $b\in \R^n$. Algorithm~\ref{alg:p-simple-OAR} outputs a solution $\tilde{x}$ which satisfies~\eqref{eqn:ell_p_guarantee} with probability at least $0.98-\delta$, making 
\[
m = \cO\left(\frac{d}{\eps^2} \log^2 d \log^2 \frac{d}{\eps} \cdot \log \frac{n\kappa^{\OL}(A)}{\delta} \log \frac{1}{\delta} \right)
\]
queries. Furthermore, when implemented using the compression technique as explained above, with probability at least $0.98 - \delta$, Algorithm~\ref{alg:p-simple-OAR} uses $\cO(md)$ words of space in total and uses $\cO(n d^3 \poly(\log(n/\delta)))$ time to process the data stream (Lines \ref{alg:line:start_reading_stream}--\ref{alg:line:finish_reading_stream}). 
\end{theorem}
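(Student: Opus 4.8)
The plan is to obtain all three conclusions by grafting the compression guarantee of Theorem~\ref{thm:compression-matrix} onto the exact-weight analysis already carried out in Theorem~\ref{thm:p-naive-OAR}, treating the latter as a black box and only verifying that constant-factor-approximate weights suffice. First I would isolate where the online Lewis weights enter Algorithm~\ref{alg:p-simple-OAR}: solely through the sampling probabilities $p_t,p_{1,t},p_{2,t},p_{3,t}$. The proof of Theorem~\ref{thm:p-naive-OAR} uses these only via a lower bound $p_i\geq\min\{\beta w_i^{\OL},1\}$ (which drives the subspace-embedding conclusion of Lemma~\ref{lem:lewis_weights_sampling}) and an upper bound $\sum_i p_i = \cO(\beta d\log(n\kappa^{\OL}(A)))$ (which bounds the sample count). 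I condition on the success event of Theorem~\ref{thm:compression-matrix}, invoked once for the compression of $A$ and once for that of $\tilde{A}_1$; for the latter I use that the compression's randomness is independent of the sampling matrices $S,S_1$, so that Theorem~\ref{thm:compression-matrix}, being valid for every fixed input, applies to $\tilde{A}_1$ after conditioning on its random realization. On this event every weight returned by~\eqref{eqn:compressed_update} is a $(1\pm\eta)$-approximation of the corresponding exact online Lewis weight for a fixed small constant $\eta$; inflating each oversampling constant $\beta,\beta_1,\beta_2,\beta_3$ by the factor $(1-\eta)^{-1}$ restores the lower bound verbatim and leaves the upper bound unchanged up to a constant. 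Hence the entire cascade of embeddings goes through, both~\eqref{eqn:ell_p_guarantee} and the query bound $m$ are preserved, and a union bound over the $\cO(1/\poly(n))$ failure probabilities of the two compressions against the $0.98-\delta$ guarantee, followed by rescaling $\delta$, gives overall success probability $0.98-\delta$.

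For the space bound the crucial observation is that $\tilde{A}_1$ need never be stored explicitly: it is used only to supply the weights $w_{\last}(\tilde{A}_1^{(t)})$ that govern the subsampling into $\tilde{A}_2$ and $\tilde{A}_3$, and these are now produced from the compression of $\tilde{A}_1$. I would then tally the surviving structures. The queried submatrices $\tilde{A},\tilde{A}_2,\tilde{A}_3$ together retain $\cO(m)$ rows of $\R^d$ (together with the matching entries of $\tilde{b},\tilde{b}_2,\tilde{b}_3$), costing $\cO(md)$ words, while the two compression data structures use $\cO(d^2\poly(\log(n/\delta)))$ words by Theorem~\ref{thm:compression-matrix} with $\eta$ a constant. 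Since the compression overhead is of lower order than $\cO(md)$ in the parameter regime of interest, it is absorbed, giving $\cO(md)$ words in total.

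For the streaming time I would charge each of the $n$ rows (Lines~\ref{alg:line:start_reading_stream}--\ref{alg:line:finish_reading_stream}) with the cost of its Lewis-weight evaluations: computing $\tilde{w}_t$ and $w_{\last}(\tilde{A}_1^{(t)})$ from the compressed blocks costs $\cO(d^3\poly(\log(n/\delta)))$ each by Theorem~\ref{thm:compression-matrix}, only $\cO(1)$ such evaluations are made per row, and the sampling and concatenation steps cost $\cO(d)$. It then remains to amortize the re-sampling inside Algorithm~\ref{alg:compression}: a single merge recomputes Lewis weights of a block of $\cO(R\log n)$ rows and hence also costs $\cO(d^3\poly(\log(n/\delta)))$, but merges occur geometrically less often at higher levels and total $\cO(n/Q)$ across all levels, so their cost is dominated by the per-row evaluations. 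Summing over the $n$ rows yields $\cO(nd^3\poly(\log(n/\delta)))$.

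I expect the main obstacle to be the correctness step: I must ensure that the constant-factor error $\eta$ in the weights does not compound as the sampling matrices are composed (first $S_1$, then $S_2$ and $S_3$ on $\tilde{A}_1$), and that invoking Theorem~\ref{thm:compression-matrix} on the \emph{random} matrix $\tilde{A}_1$ is legitimate despite its dependence on earlier randomness. Keeping $\eta$ a fixed constant that every oversampling parameter can absorb, and exploiting the independence of the compression randomness from the sampling, is what makes these two points go through.
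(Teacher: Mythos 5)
Your proposal is essentially correct for the \emph{incremental} content of the theorem, but it routes around what the paper actually proves here. In the paper, Theorem~\ref{thm:p-naive-OAR} is never given a standalone proof: the argument in Section~\ref{sec:omitted_proofs} (the chain Lemma~\ref{lem:constant-factor-approximation} $\to$ Lemmas~\ref{lem:3.6}--\ref{lem:3.7} $\to$ Lemmas~\ref{lem:uniform-moment-bound} and~\ref{lem:Ax-z-preserve} $\to$ Lemma~\ref{lem:Ax-z-optimal-approximation}, followed by the explicit computation of $N$, $\beta_3$ and $m$ using Lemmas~\ref{lem:sum_of_online_lewis_weights} and~\ref{lem:sum-sub-online-LW}) \emph{is} the paper's proof of Theorem~\ref{thm:p-compression-OAR}, with correctness and query count derived from scratch. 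By black-boxing Theorem~\ref{thm:p-naive-OAR} you outsource all of that; what remains in your write-up --- that constant-factor approximate weights suffice after inflating each $\beta$ by $(1-\eta)^{-1}$, the union bound over the two compression structures' $\delta/\poly(n)$ failure events, and the space/time tally --- coincides with the paper's own (one-sentence plus one-paragraph) treatment of the compression layer. So as a reduction your argument is sound and your resource analysis matches the paper's; it just does not reprove the part of the statement whose proof the paper places under this theorem's name.

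Two smaller points. First, your claim that the correctness analysis uses the sampling probabilities ``only via a lower bound $p_i\geq\min\{\beta w_i^{\OL},1\}$'' is not quite accurate: the proof of Lemma~\ref{lem:Ax-z-preserve} also needs the Lewis weights of the rescaled sampled rows of $SA$ to lie in $[\frac{1}{2\beta},\frac{3}{2\beta}]$ and needs $S\bar z$ to conform to $(SA,\eps,2R)$, both of which require $p_i$ to be within a constant factor of $\beta w_i^{\OL}$ from \emph{above} as well (when $p_i<1$). With $(1\pm\eta)$-approximate weights and constant $\eta$ this two-sided control still holds, so your conclusion survives, but the justification as stated is incomplete. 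Second, your absorption of the $\cO(d^2\poly(\log(n/\delta)))$ words of the compression structures into $\cO(md)$ implicitly compares $\poly(\log n)$ against the $\eps^{-2}\poly(\log(d/\eps))$ factors in $m$; this is the same regime assumption the paper makes at the start of Section~\ref{sec:omitted_proofs} ($n$ large relative to $d/\eps^2$ times polylogs), so it is acceptable, but it deserves an explicit word rather than ``lower order in the parameter regime of interest.''
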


The failure probability in Theorem~\ref{thm:p-compression-OAR} is $0.02+\delta$. One $0.01$ comes from obtaining constant-factor approximations $x_c$ and $\hat x_c$. This $0.01$ failure probability can be reduced to $\delta$ by employing the same boosting procedure in \cite{MMWY} which computes $\log(1/\delta)$ solutions, each being a constant-factor approximation with a constant probability, and then finds a good solution among them. The other $0.01$ comes from bounding $\norm{S_1 z}_p^p = \cO(\norm{z}_p^p)$ and $\norm{S_3 S_1 z}_p^p = \cO(\norm{S_1 z}_p^p)$. The failure probability of bounding $\norm{S_3 S_1 z}_p^p$ can be reduced to $\delta$ by employing the same boosting procedure in \cite[Section 4.2.2]{MMWYv1}, which uses $\log\frac{1}{\delta}$ independent copies of $S_3$, removes the largest $10\%$ of $\norm{S_3 S_1 z}_p^p$ and chooses an arbitrary remaining $\norm{S_3 S_1 z}_p^p$. 
For $\norm{S_1 z}_p^p$, we use Markov's inequality, obtaining that $\norm{S_1 z}_p^p \leq \frac{\norm{z}^p}{\delta}$ with probability at least $1 - \delta$. Rescaling $\eps = \eps \delta$ yields $\beta_1 = \frac{d}{\delta^{2+p} \eps^{2+p}} \log \frac{1}{\eps \delta}$. Hence, the algorithm's overall failure probability can be reduced to $\delta$ (after rescaling) while maintaining asymptotically the same query and space complexity.

To conclude, the theoretical guarantee of Algorithm~\ref{alg:p-simple-OAR}, with the aforesaid modification for boosting the success probability, is as follows.

\begin{theorem}[$1-\delta$ success probability]
\label{thm:p-compression-OAR-boost}
Let $A\in \R^{n\times d}$ and $b\in \R^n$. There exists an algorithm outputting a solution $\tilde{x}$ which satisfies~\eqref{eqn:ell_p_guarantee} with probability at least $1-\delta$, making 
\[
m = \cO\left(\frac{d}{\eps^2} \log^2 d \log^2 \frac{d}{\eps} \cdot \log \frac{n\kappa^{\OL}(A)}{\delta} \log^2 \frac{1}{\delta} \right)
\]
queries. Furthermore, with probability at least $1-\delta$, the algorithm uses $\cO(md)$ words of space in total and uses $\cO(n d^3 \poly(\log(n/\delta)))$ time to process the data stream. 
\end{theorem}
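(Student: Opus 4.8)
The plan is to obtain Theorem~\ref{thm:p-compression-OAR-boost} from Theorem~\ref{thm:p-compression-OAR} by pure probability amplification, keeping the skeleton of Algorithm~\ref{alg:p-simple-OAR} intact and only running independent copies of certain sketches and adjusting oversampling parameters. The first step is to isolate the $0.02$ constant term in the failure probability of Theorem~\ref{thm:p-compression-OAR} into its independent sources and drive each down to $O(\delta)$. Three events contribute: (i) the two constant-factor regression solves producing $x_c$ and $\hat x_c$ each succeed only with constant probability; (ii) the event $\norm{S_3 S_1 z}_p^p = \cO(\norm{S_1 z}_p^p)$ holds only with constant probability; and (iii) the event $\norm{S_1 z}_p^p = \cO(\norm{z}_p^p)$ likewise holds only with constant probability. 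The residual $\delta$ in the original bound already absorbs the subspace-embedding and compression guarantees (Lemma~\ref{lem:lewis_weights_sampling} and Theorem~\ref{thm:compression-matrix}), so it suffices to amplify (i)--(iii) and union-bound.

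For (i), I would run $\Theta(\log(1/\delta))$ independent instances of the constant-factor pipeline, as in~\cite{MMWY}, each succeeding with constant probability, and then select a good candidate. Since the selection only needs to compare candidate objective values, it can be carried out on the subspace-embedding sketch $\tilde A = S A$ together with $\tilde b$, which is already maintained and which preserves $\norm{Ax-b}_p$ up to a $(1\pm\eps)$ factor for every $x\in\R^d$ because $A x - b$ lies in the column space of $[A\ b]$; by independence at least one candidate is a constant-factor approximation except with probability $\delta$, and the sketch identifies one. For (ii), I would use the median-of-means amplification of~\cite[Section 4.2.2]{MMWYv1}: maintain $\Theta(\log(1/\delta))$ independent copies of $S_3$, discard the largest $10\%$ of the values $\norm{S_3 S_1 z}_p^p$, and keep any surviving copy; a Chernoff bound over the independent copies shows the retained value satisfies $\norm{S_3 S_1 z}_p^p = \cO(\norm{S_1 z}_p^p)$ except with probability $\delta$. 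Each amplification multiplies the size of its associated sketch by $\log(1/\delta)$; since the dominant contribution to the query count is the $S_3$ sketch governed by $\beta_3$, it is the amplification of (ii) that supplies the extra $\log(1/\delta)$ factor in the query bound of Theorem~\ref{thm:p-compression-OAR-boost}, while the amplification of (i) only inflates the lower-order $S$ and $S_2$ terms.

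Event (iii) I would handle by Markov's inequality rather than by repetition, since $S_1$ never draws a label query on its own. Because $S_1$ is an unbiased sampler, $\E\norm{S_1 z}_p^p = \norm{z}_p^p$, and Markov gives $\norm{S_1 z}_p^p \leq \norm{z}_p^p/\delta$ with probability $1-\delta$; this weakens the bound by a factor $1/\delta$, which I would absorb by rescaling the target accuracy from $\eps$ to $\eps\delta$ in the second-level sampling, inflating $\beta_1$ to $\Theta(d\,(\eps\delta)^{-(2+p)}\log(1/(\eps\delta)))$. Crucially, $\beta_1$ controls only how many rows of $A$ enter $\tilde A_1 = S_1 A$; the label queries are drawn solely by $S$, $S_2$ and $S_3$, whose oversampling parameters are untouched, and $\tilde A_1$ is never stored in full but compressed at $\cO(d^2\poly(\log(n/\delta)))$ words via Theorem~\ref{thm:compression-matrix}. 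The only knock-on effect is that $\tilde A_1$ now has a $\poly(1/\delta)$ factor more rows, and this enters the downstream query and space bounds only logarithmically, inside the already-present $\log(n\kappa^{\OL}(A)/\delta)$ term. Hence neither the query complexity nor the $\cO(md)$ space bound grows beyond the single extra $\log(1/\delta)$ factor. Taking a union bound over (i)--(iii) and the residual events yields overall failure $O(\delta)$, which a constant rescaling of $\delta$ reduces to $\delta$.

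The step I expect to be the main obstacle is the correctness of the selection procedure in the amplification of (i): one must argue that comparing candidate objectives on the maintained sketch reliably returns a candidate whose \emph{true} objective is a constant-factor approximation, and that this holds jointly with the $(1+\eps)$ analysis of Theorem~\ref{thm:p-compression-OAR} conditioned on the amplified events. Making the probabilistic bookkeeping rigorous --- ensuring the $\Theta(\log(1/\delta))$ copies are mutually independent, that the Chernoff and Markov bounds compose with the subspace-embedding and compression guarantees without double-counting failure, and that the rescaled $\beta_1$ indeed leaves the query count unchanged up to the stated logarithmic factor --- is the crux; everything else is the verbatim analysis behind Theorem~\ref{thm:p-compression-OAR}.
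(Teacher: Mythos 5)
Your proposal is correct and follows essentially the same route as the paper: boosting the constant-factor approximations via $\Theta(\log(1/\delta))$ independent repetitions as in \cite{MMWY}, amplifying the $\norm{S_3S_1z}_p^p$ bound with $\log(1/\delta)$ independent copies of $S_3$ and the discard-the-top-$10\%$ selection of \cite[Section 4.2.2]{MMWYv1} (which is indeed the source of the extra $\log(1/\delta)$ factor), and handling $\norm{S_1z}_p^p$ by Markov's inequality with the rescaling $\eps\to\eps\delta$ inside $\beta_1$. Your added observation that the inflated $\beta_1$ affects the query count only logarithmically because $S_1$ draws no label queries itself is exactly the justification the paper leaves implicit.
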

\subsection{The case \texorpdfstring{$p=2$}{p=2}}

\begin{algorithm}[t]
\caption{Online Active Regression for $p=2$}
\label{alg:online-active-regression}
\textbf{Initialize:} Let $\tilde{A}^{(d)},\tilde{A}_1^{(d)},\tilde{A}_2^{(d)},\tilde{A}_3^{(d)}$ be the first $d$ rows of $A$ and $\tilde{b}^{(d)},\tilde{b}_2^{(d)},\tilde{b}_3^{(d)}$ be the first $d$ rows of $b$.  Let $x_{c}^{(d)} = \Call{Reg}{\tilde{A}^{(d)}, \tilde{b}^{(d)}, 2}$, $\tilde{z}_{2}^{(d)}=\tilde{z}_{3}^{(d)}=\tilde{b}^{(d)}-\tilde{A}^{(d)}x_{c}^{(d)}$, $\hat{x}_{c}^{(d)} = \Call{Reg}{\tilde{A}_{2}^{(d)}, \tilde{z}_{2}^{(d)}, 2}$ and $\bar{x}'_{d} = \Call{Reg}{\tilde{A}_{3}^{(d)}, \tilde{z}_{3}^{(d)}-\tilde{A}_{3}^{(d)}\hat{x}_{c}^{(d)}, 2}$. 
Let $\inv{G}^{(d)} = ((\tilde{A}^{(d)})^\top \tilde{A}^{(d)})^{-1}$ and $H^{(d)}=\tilde{A}^{(d)}\inv{G}^{(d)}$. Also let $\inv{G}_i^{(d)} = ((\tilde{A}_{i}^{(d)})^\top \tilde{A}_{i}^{(d)})^{-1}$ and $H_{i}^{(d)}=\tilde{A}_i^{(d)}\inv{G}_i^{(d)}$ for $i=1,2,3$. Let $J^{(d)} \in \R^{\cO(\log\frac{n}{\delta}) \times d}$ be a constant-factor approximation JL matrix.
\begin{algorithmic}[1]
\State $\beta \gets \Theta(\log d)$
\State $\beta_1 \gets \Theta((d\log(1/\eps) + \log(1/\delta)) / \eps^4)$
\State $\beta_2 \gets \Theta(\log d)$
\State $\beta_3 \gets \Theta((\log^2 d) \log(d/\eps) \log(1/\delta)/\eps^2)$
\State retain the first $d$ rows of $A$ 
\While{there is an additional row $a_t$}
	\State $\tilde{w}_t \gets \norm{H^{(t-1)}a_t}_2^2$
	
	\State $(x_{c}^{(t)}, \tilde{A}^{(t)}, \tilde{b}^{(t)}, \inv{G}^{(t)}, H^{(t)} )\gets
	\Call{SampleQuery}{a_t, \tilde{b}^{(t-1)}, \NULL, \NULL, 
	\tilde{A}^{(t-1)},
         \beta, \tilde{w}_t, \inv{G}^{(t-1)},\!1}$

	\State $\tilde{w}_{1,t} \gets \norm{H_1^{(t)}a_t}_2^2$
	\State $p_{1,t} \gets \min\{\beta_1 \tilde{w}_{1,t},1\}$
	\State Sample $a_t$ with probability $p_{1,t}$
	\If{$a_t$ is sampled}
	    \State $\tilde{A}_{1}^{(t)} \gets \tilde{A}_{1}^{(t-1)}\circ \frac{a_t^\top}{\sqrt{p_{1, t}}}$
	    
	    \State $(\inv{G}_{1}^{(t)}, H_{1}^{(t)})  \gets \Call{Update}{\frac{a_t}{\sqrt{p_{1,t}}}, \NULL, \NULL,
	    \tilde{A}_{1}^{(t - 1)},  \inv{G}_{1}^{(t - 1)}}$
        
        \State $\tilde{w}_{2,t} \gets \norm{H_2^{(t)}\frac{a_t}{\sqrt{p_{1,t}}}}_2^2$
 
        \State $(\hat{x}_{c}^{(t)}\!, \tilde{A}_{2}^{(t)}\!, \tilde{b}_{2}^{(t)}\!, \inv{G}_{2}^{(t)}\!, H_{2}^{(t)}) \gets \Call{SampleQuery}{\frac{a_t}{\sqrt{p_{1,t}}}, \tilde{b}_{2}^{(t-1)}, x_c^{(t)}\!, \NULL, \tilde{A}_{2}^{(t-1)}\!,
            \beta_2, \tilde{w}_{2,t}, \inv{G}_{2}^{(t-1)}\!, 2}$
 
        \State $\tilde{w}_{3,t}=\norm{H_3^{(t)}\frac{a_t}{\sqrt{p_{1,t}}}}_2^2$
        \State $(\bar{x}'^{(t)}\!, \tilde{A}_{3}^{(t)}\!, \tilde{b}_{3}^{(t)}\!, \inv{G}_{3}^{(t)}\!, H_{3}^{(t)}) \gets
            \Call{SampleQuery}{\frac{a_t}{\sqrt{p_{1,t}}}, \tilde{b}_{3}^{(t-1)}\!, x_c^{(t)}\!,\hat{x}_{c}^{(t)}\!, \tilde{A}_{3}^{(t - 1)}\!, 
           \beta_3, \tilde{w}_{3,t}, \inv{G}_{3}^{(t-1)}\!, 3}$
    \EndIf
    \State$\bar{x}^{(t)}\gets \hat{x}_{c}^{(t)}+\bar{x}'^{(t)}$
	\State $\tilde{x}^{(t)}\gets \bar{x}^{(t)}+x_{c}^{(t)}$
\EndWhile
\State \Return $\tilde{x}^{(t)}$
\end{algorithmic}
\end{algorithm}

\begin{algorithm}[t]
\caption{$\algoname{SampleQuery}(a_t, \tilde{b}^{(t-1)}, x_{c}^{(t)}, \hat{x}_c^{(t)}, \tilde{A}^{(t-1)}, \beta, \tilde{w}_t, \inv{G}^{(t-1)},\chi)$ in Algorithm~\ref{alg:online-active-regression}}
\label{alg:online-sample}

\begin{algorithmic}[1]

  \State $p_{t} \gets \min\{\beta \tilde{w}_t,1\}$
  \State Sample $a_t$ with probability $p_{t}$
  \If{$a_t$ is sampled}
    \State $\tilde{A}^{(t)}\gets \tilde{A}^{(t-1)}\circ \frac{a_t^{\top}}{\sqrt{p_{t}}}$
    \State \textbf{Query} $b_t$
    \If{$\chi = 1$}
         \State $\tilde{b}^{(t)} \gets \tilde{b}^{(t-1)}\circ \frac{b_t}{\sqrt{p_{t}}}$
    \Else
        \State $b^{(t)} \gets b^{(t-1)} \circ \frac{b_t}{\sqrt{p_{1\!,\!t} p_{t}}}$
        \State $z^{(t)} \gets b^{(t)}-\tilde{A}^{(t)}x_c^{(t)}$
    \EndIf
      \State $(x^{(t)}, \inv{G}^{(t)}, H^{(t)}) \gets \Call{Update}{a_t,\tilde{b}^{(t)}, \hat{x}_{c}^{(t)}, \tilde{A}^{(t)}, \inv{G}^{(t-1)}}$
  \Else
    \State $(\tilde{A}^{(t)},\tilde{b}^{(t)}) \gets (\tilde{A}^{(t-1)},\tilde{b}^{(t-1)})$
    \State $(x^{(t)},\inv{G}^{(t)},H^{(t)}) \gets (x^{(t-1)},\inv{G}^{(t-1)},H^{(t-1)})$
  \EndIf
  \State\Return $(x^{(t)}, \tilde{A}^{(t)}, \tilde{b}^{(t)}, \inv{G}^{(t)}, H^{(t)})$
\end{algorithmic}
\end{algorithm}

\begin{algorithm}[!ht]
\caption{$\algoname{Update}(a_t, \tilde{b}^{(t)}, \hat{x}_{c}^{(t)}, \tilde{A}^{(t)}, \inv{G}^{(t-1)})$}
\label{alg:update}
\begin{algorithmic}[1]
\State $g \gets a_t^{\top} \inv{G}^{(t-1)}a_t/p_t$
\State $\inv{G}^{(t)} \gets \inv{G}^{(t-1)}-\frac{1}{1+g}\inv{G}^{(t-1)}\frac{a_t a_t^{\top}}{p_t}\inv{G}^{(t-1)}$
\State $J^{(t)} \gets $ updated JL matrix after adding a new independent column
\State $F^{(t)} \gets J^{(t)}\tilde{A}^{(t)}$
\State $H^{(t)} \gets F^{(t)}\inv{G}^{(t)}$
\If{$\tilde{b}^{(t)} = \NULL$}
\State \Return $(\inv{G}^{(t)}, H^{(t)})$
\ElsIf{$\hat{x}_{c}^{(t)} = \NULL$}
\State $x^{(t)} \gets \inv{G}^{(t)}\tilde{A}^{(t)\top}\tilde{b}^{(t)}$
\Else
\State $x^{(t)} \gets \inv{G}^{(t)}\tilde{A}^{(t)\top}(\tilde{b}^{(t)}-\tilde{A}^{(t)}\hat{x}_{c}^{(t)})$
\EndIf
\State\Return $(x^{(t)}, \inv{G}^{(t)}, H^{(t)})$
\end{algorithmic}
\end{algorithm}

As mentioned in the preceding subsection, it is computationally expensive to compute Lewis weights in general. A special case is $p=2$, where the Lewis weights are leverage scores and are much easier to compute. In this case, $w_i(A) = a_i^\top (A^\top A)^{-1} a_i$, and correspondingly, the online Lewis weights become online leverage scores, which are $w_i^{\textrm{OL}}(A) = a_i^\top ((A^{(i)})^\top A^{(i)})^{-1} a_i$. It is much easier to compute $w_i^{\textrm{OL}}(A)$ in the online setting because one can simply maintain $(A^{(i)})^\top A^{(i)}$ by adding $a_i a_i^\top$ when reading a new row $a_i$ (viewed as a column vector). A na\"ive implementation of this algorithm would require inverting a $d\times d$ matrix at each step and we can further optimize the running time by noticing that $((A^{(i)})^\top A^{(i)})^{-1}$ 
receives a rank-one update at each step. This is the approach taken by~\cite{CJN18} and~\cite{JPW} for computing the online leverage scores in the online setting. Adopting this approach, we present our fast algorithm for $p=2$ in Algorithm~\ref{alg:online-active-regression} and its guarantee below. 

\begin{theorem}
\label{thm:p=2}
Let $A\in\R^{n\times d}$ and $b\in\R^n$. Assume that the minimum singular value of the first $d$ rows of $A$ is $\sigma > 0$. With probability at least $0.98-\delta$, Algorithm~\ref{alg:online-active-regression} makes 
\[
m = \cO\left(\frac{d}{\eps^{2}} \log^2 d \log^2 \frac{d}{\eps} \cdot \log\left(n \frac{\norm{A}_2}{\sigma}\right) \log \frac{1}{\delta} \right)
\]
queries in total and maintains for each $T=d+1,\dots,n$ a solution $\tilde{x}^{(T)}$ which satisfies that
\[
\Norm{A^{(T)}\tilde{x}^{(T)} - b^{(T)}}_2 \leq (1+\eps) \min_{x\in\R^d} \Norm{A^{(T)} x - b^{(T)}}_2.
\]
With probability at least $1-\delta$, Algorithm~\ref{alg:online-active-regression} runs in a total of
\[
\cO\left(\nnz(A)\log \frac{n}{\delta} + \frac{d^3}{\eps^4} \log n \log \frac{\norm{A}_2}{\sigma} \log \frac{1}{\eps \delta} \left( \log \frac{n}{\delta}+d \right) \right)
\]
time for processing the entire matrix $A$. Furthermore, with probability at least $1-\delta$, it uses $\cO(md)$ words of space in total.
\end{theorem}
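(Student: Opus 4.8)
The plan is to treat Algorithm~\ref{alg:online-active-regression} as an efficient, constant-factor-accurate implementation of the offline two-stage active $\ell_2$-regression scheme of \cite{MMWY}, run in parallel on every prefix $A^{(T)}$, and to prove each of the four claims (correctness for all $T$, query count, time, space) around this reduction. For $p=2$ the online Lewis weights are the online leverage scores $w_t^{\OL}(A)=a_t^\top((A^{(t)})^\top A^{(t)})^{-1}a_t$, and by Lemma~\ref{lem:monotonicity} they dominate the leverage scores of $A^{(T)}$ for every $t\le T$. Since the sampling probability of row $t$ depends only on $A^{(t)}$, the restriction of $\tilde A,\tilde A_1,\tilde A_2,\tilde A_3$ to their first $T$ input rows is exactly a rescaled leverage-score sample of $A^{(T)}$ with oversampling $\beta,\beta_1,\beta_2,\beta_3$; hence Lemma~\ref{lem:lewis_weights_sampling} makes $\tilde A$ a constant-factor $\ell_2$-subspace embedding and $\tilde A_1$ a $(1\pm\eps)$-embedding of $A^{(T)}$. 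I would then import the \cite{MMWY} calculation verbatim: the embedding of $A^{(T)}$ together with the Markov bounds $\Norm{S_1 z}_2^2=\cO(\Norm{z}_2^2)$ and $\Norm{S_3S_1z}_2^2=\cO(\Norm{S_1z}_2^2)$ on the residual $z=b-Ax_c$ makes $\tilde x^{(T)}=x_c^{(T)}+\hat x_c^{(T)}+{\bar x'}^{(T)}$ a $(1+\eps)$-solution, and a union bound over the $n-d$ prefixes (charging a $\log n$ factor) delivers the guarantee for all $T$ at once. The residual $0.02$ failure probability arises, exactly as in Theorem~\ref{thm:p-compression-OAR}, from the two constant-factor solutions and these two Markov bounds.

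The new ingredient to verify is that the estimates $\tilde w_t=\Norm{H^{(t-1)}a_t}_2^2$ (and the $\tilde A_1,\tilde A_2,\tilde A_3$ analogues) are constant-factor approximations of the true online leverage scores. Writing $\inv G^{(t-1)}=((\tilde A^{(t-1)})^\top\tilde A^{(t-1)})^{-1}$, the identity $\Norm{\tilde A^{(t-1)}\inv G^{(t-1)}a_t}_2^2=a_t^\top\inv G^{(t-1)}a_t$ shows $\tilde w_t$ is precisely the leverage score of $a_t$ with respect to the running sketch $\tilde A^{(t-1)}$, read out through the JL map $J$. I would argue by induction on $t$ under the invariant that $\tilde A^{(t-1)}$ spectrally approximates $A^{(t-1)}$: then $a_t^\top\inv G^{(t-1)}a_t$ matches $w_t^{\OL}(A)$ up to a constant, and Definition~\ref{def:JL matrix} bounds the additional distortion from $J$, so the sampling probabilities stay within a constant factor of $\min\{\beta_\ast w_t^{\OL}(A),1\}$; this simultaneously preserves the embedding (maintaining the invariant) and the query count. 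I would also observe that, apart from this estimation, the maintenance is exact: the Sherman--Morrison step of Algorithm~\ref{alg:update} updates each $\inv G^{(t)}$ exactly after a rank-one addition, and the maintained solutions equal the least-squares optima because, e.g., $\hat x_c^{(t)}=\inv G_2^{(t)}(\tilde A_2^{(t)})^\top\tilde b_2^{(t)}-x_c^{(t)}$ depends only on the running quantities $(\tilde A_2^{(t)})^\top\tilde b_2^{(t)}$ and $x_c^{(t)}$.

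For the query count I would bound the total number of rows entering $\tilde A,\tilde A_2,\tilde A_3$ (the only stages that query $b$). The governing quantity is $\sum_t w_t^{\OL}(A)$, which by the matrix-determinant lemma telescopes into $\log\det((A^{(n)})^\top A^{(n)})-\log\det((A^{(d)})^\top A^{(d)})$ and is therefore $\cO(d\log(\Norm{A}_2/\sigma))$, using that every eigenvalue of $(A^{(n)})^\top A^{(n)}$ lies in $[\sigma^2,\Norm{A}_2^2]$. The expected number of samples at each stage is $\cO(\beta_\ast\sum_t w_t^{\OL}(A))$; the dominant $\beta_3=\Theta(\eps^{-2}\log^2 d\,\log(d/\eps)\log(1/\delta))$ gives the stated $m$, and a Chernoff argument turns the expectation into a high-probability bound, with the union bound over prefixes folded into the $\log(n\Norm{A}_2/\sigma)$ factor.

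For the running time, forming $H^{(t-1)}a_t$ with the $\cO(\log(n/\delta))$-row JL sketch costs $\cO(\nnz(a_t)\log(n/\delta))$ per row, summing to the $\nnz(A)\log(n/\delta)$ term; each sampled row triggers a rank-one inverse update ($\cO(d^2)$) together with updates of $F_\ast=J_\ast\tilde A_\ast$, $H_\ast=F_\ast\inv G_\ast$ and the maintained solution, at cost $\cO(d^2(\log(n/\delta)+d))$, and the number of such events is dominated by the $\cO(\beta_1\sum_t w_t^{\OL}(A))$ rows entering $\tilde A_1$, which multiplies out to the second term. The space is $\cO(md)$ because the sampled matrices are never stored in full: it suffices to keep the four inverse Gram matrices, the JL sketches $F_\ast$, and the vectors $(\tilde A_\ast)^\top\tilde b_\ast$, i.e.\ $\cO(d^2+d\log(n/\delta))\le\cO(md)$ words. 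The hardest point, I expect, is the adaptivity of the JL estimation: the vectors $\tilde A^{(t-1)}\inv G^{(t-1)}a_t$ whose norms $J$ must preserve depend on earlier sampling decisions, which themselves used $J$, so they do not form a data-independent point set and the union bound of Definition~\ref{def:JL matrix} over $\le n$ points does not apply directly. I would resolve this as in the online leverage-score analyses of \cite{CMP,JPW}, carrying the induction under the good event that the spectral invariant and all JL readouts hold and exploiting the constant-factor slack in the oversampling to keep the estimates robust; making this conditioning simultaneously valid across all $n$ steps is the delicate part.
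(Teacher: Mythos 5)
There is a genuine gap, and it sits at the exact point where the paper's proof of Theorem~\ref{thm:p=2} departs from the offline analysis. You assert that ``the sampling probability of row $t$ depends only on $A^{(t)}$'' and hence that the prefix restrictions of $\tilde A,\tilde A_1,\tilde A_2,\tilde A_3$ are \emph{exactly} rescaled leverage-score samples of $A^{(T)}$ to which Lemma~\ref{lem:lewis_weights_sampling} applies. This is false for Algorithm~\ref{alg:online-active-regression}: the estimate $\tilde w_t=\Norm{H^{(t-1)}a_t}_2^2$ is the leverage score of $a_t$ with respect to the \emph{sampled} sketch $\tilde A^{(t-1)}$ (and likewise $\tilde w_{2,t},\tilde w_{3,t}$ are computed from $\tilde A_1^{(t)}$), so the sampling probability of row $t$ is a random variable depending on all earlier sampling outcomes. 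Consequently the indicator variables $(\mathbbm{1}_S)_i$ are not independent, and you cannot ``import the \cite{MMWY} calculation verbatim'': the step that needs $\Abs{\norm{SAx-S\bar z}_2^2-\norm{Ax-\bar z}_2^2}=\cO(\eps)R^2$ for the clipped residual $\bar z$ of Lemma~\ref{lem:3.6} is proved in \cite{MMWY} by a Bernstein bound over independent rows, which is unavailable here. The paper flags precisely this ("the sampling matrices in Algorithm~\ref{alg:online-active-regression} do not have independent rows") and replaces that step by a martingale argument: Lemma~\ref{lem:modification} in Appendix~\ref{sec:p=2 appendix} controls the increments $X_i=Y_i-Y_{i-1}$ of the partial-sum process, applies Freedman's inequality with $\beta=\Omega(\eps^{-4}(d\log\frac1\eps+\log\frac1\delta))$, and finishes with a net argument over the ellipsoid $\{x:\norm{Ax}_2\le R\}$. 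Your proposal contains no substitute for this lemma; you localize the adaptivity problem entirely to the JL readout, but even with a perfect (non-sketched) computation of $a_t^\top\inv G^{(t-1)}a_t$ the dependence between rows remains and the affine-embedding concentration still fails under a Bernstein-style analysis.

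The rest of your outline is broadly consistent with the paper: the constant-factor spectral approximation of the adaptively sampled matrices is indeed handled by citing \cite{CMP,JPW} (Lemma~\ref{lem:spc-approx}, itself a martingale result), the telescoping bound $\sum_t w_t^{\OL}(A)=\cO(d\log(\norm{A}_2/\sigma))$ matches Lemma~\ref{lem:sum-approx-online-2LW}, and your accounting of the $\nnz(A)\log\frac{n}{\delta}$ and rank-one-update terms matches Lemma~\ref{lem:time-lewis}. But without an explicit replacement for the independent-rows concentration step --- i.e.\ without Lemma~\ref{lem:modification} or an equivalent Freedman-type bound on $\max_{\norm{Ax}_2\le R}\Abs{\norm{SAx-S\bar z}_2^2-\norm{Ax-\bar z}_2^2}$ under adaptive sampling --- the correctness claim for $\tilde x^{(T)}$ is not established, and this is the central technical content of the theorem rather than a routine detail.
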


\begin{remark}
The failure probability of Theorem~\ref{thm:p=2} can be reduced to $\delta$ by following the same approach as in Theorem~\ref{thm:p-compression-OAR-boost}. The query and space complexity remain asymptotically the same, but the runtime is increased to $\cO (\nnz(A)\log \frac{n}{\delta} + \frac{d^3}{\eps^4 \delta^4} \log n \log \frac{\norm{A}_2}{\sigma} \log \frac{1}{\eps \delta} ( \log \frac{n}{\delta}+d ) )$ because of independent copies of $S_1$.
\end{remark}

\begin{remark}
In comparison to~\cite{JPW}, our algorithm only requires a constant-factor approximation from the Johnson-Lindenstrauss matrix, saving a $1/\eps^2$ factor for the $\nnz(A)$ term in the runtime. In contrast, \cite{JPW} generate a new Johnson-Lindenstrauss matrix every time for robustness against adversarial attacks.
\end{remark}

In addition to the fast runtime, a major benefit of Algorithm~\ref{alg:online-active-regression} over the previous Algorithm~\ref{alg:p-simple-sample} is that we can now output a guaranteed $(1+\eps)$-approximation solution $\hat{x}^{(t)}$ efficiently in all intermediate steps.  instead of the outputting $\hat{x}^{(t)}$ only at the end. It is possible to do the same in Algorithm~\ref{alg:p-simple-OAR} for the general $p$, however, solving a general $\ell_p$ regression is computationally expensive and so we do not pursue maintaining the solution throughout the process. We also remark that the dependence on the online condition number of $A$ in Theorems~\ref{thm:p-naive-OAR} and~\ref{thm:p-compression-OAR} is improved to $\log(\norm{A}_2/\sigma)\leq \log\kappa^{\OL}(A)$.

\subsection{The case \texorpdfstring{$p=1$}{Lg}}

\begin{algorithm}[tb]
\caption{Online Active Regression for $p=1$}
\label{alg:1-online-active-regression}
\textbf{Initialize:} Let $\tilde{A}^{(d)}$ be the first $d$ rows of $A$ and $\tilde{b}^{(d)}$ be the first $d$ rows of $b$.
\begin{algorithmic}[1]
\State $\beta \gets \Theta(\log d)$
\State retain the first $d$ rows of $A$
\While{there is an additional row $a_t$}
\State $\tilde{w}_t \gets w_\last(\tilde{A}^{(t)})$
\State $p_{t} \gets \min(\beta \tilde{w}_{t} ,1)$
\State $(\tilde{A}^{(t)}, \tilde{b}^{(t)}) \gets \Call{Sample}{a_t, p_t, \tilde{A}^{(t-1)}, \tilde{b}^{(t-1)}, 1}$
\State $\tilde{x^{t}} \gets \Call{Reg}{\tilde{A}^{(t)}, \tilde{b}^{(t)}, 1}$
\EndWhile
\State \Return $\tilde{x}$
\end{algorithmic}
\end{algorithm}

The case of $p=1$ admits a simple sampling algorithm, based on~\cite{CP19} and \cite{PPP21}. We can simply sample the rows of $(A\ b)$ according to the online Lewis weights of $A$ without the multiple sampling schemes described at the beginning of the section. The algorithm is presented in Algorithm~\ref{alg:1-online-active-regression}, which can be implemented with the compression technique as described in Section~\ref{sec:p_between_1_2} for the approximations of online Lewis weights. The guarantee of the algorithm is then as follows.
\begin{theorem}\label{thm:p=1}
Let $A\in \R^{n\times d}$ and $b\in \R^{n}$. Algorithm~\ref{alg:1-online-active-regression} outputs a solution $\tilde{x}$ which satisfies that 
\[
\Norm{A\tilde{x}-b}_1\leq \min_{x\in\R^d} \Norm{Ax-b}_1
\]
with probability at least $1-\delta$ and makes 
\[
m = \cO\left(\frac{d}{\eps^{2}} \log \frac{d}{\eps \delta} \log n \log \kappa^{\OL}(A) \right)
\]
queries in total. When implemented with the compression technique, Algorithm~\ref{alg:1-online-active-regression} uses $\cO(md)$ words of space in total with probability at least $1-\delta$.
\end{theorem}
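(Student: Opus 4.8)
The plan is to view the algorithm's output as the solution of a \emph{sketched} regression and to reduce the approximation guarantee to a single ``regression preservation'' property of the sampling matrix. The online sampling carried out in Algorithm~\ref{alg:1-online-active-regression} (via \algoname{Sample}) realizes a rescaled sampling matrix $S$ with respect to the probabilities $p_t=\min\{\beta\, w_t^{\OL}(A),1\}$, where I take the oversampling parameter to be $\beta=\Theta(\eps^{-2}\log(d/(\eps\delta)))$; the retained data are exactly $\tilde A=SA$ and $\tilde b=Sb$, and $\tilde x=\algoname{Reg}(\tilde A,\tilde b,1)$ minimizes $\norm{SAx-Sb}_1$. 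The target reduces to showing that, with probability at least $1-\delta$, $(1-\eps)\norm{Ax-b}_1\le\norm{SAx-Sb}_1\le(1+\eps)\norm{Ax-b}_1$ for all $x\in\R^d$. Granting this, the sketch-and-solve argument finishes: if $x^\star$ is optimal, then $(1-\eps)\norm{A\tilde x-b}_1\le\norm{SA\tilde x-Sb}_1\le\norm{SAx^\star-Sb}_1\le(1+\eps)\norm{Ax^\star-b}_1$, so $\norm{A\tilde x-b}_1\le\frac{1+\eps}{1-\eps}\OPT$, and rescaling $\eps$ yields~\eqref{eqn:approx_guarantee} (the stated bound is to be read with the factor $(1+\eps)$).

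The heart of the proof is this regression-preservation property, and I expect it to be the main obstacle, precisely because the probabilities are computed from $A$ alone and never inspect $b$. For this I would import the offline active $\ell_1$ analysis of~\cite{CP19,PPP21}: sampling the rows of $A$ by its $\ell_1$ Lewis weights with oversampling $\beta=\Theta(\eps^{-2}\log(d/(\eps\delta)))$ already preserves $\norm{Ax-b}_1$ for every $x$ simultaneously, even though $b$ is unknown to the sampler. The delicate point is the optimal residual $\rho=b-Ax^\star$: writing $Ax-b=A(x-x^\star)-\rho$, the $A$-part is controlled by the $\ell_1$ subspace embedding of Lemma~\ref{lem:lewis_weights_sampling} (Lewis weights dominate $\ell_1$ sensitivities), while the residual is handled by the unbiasedness $\E\norm{S\rho}_1=\norm{\rho}_1$ together with the fact that the per-row sensitivities of the augmented problem exceed $w_i(A)$ only on rows carrying a bounded total residual mass $\sum_i\abs{\rho_i}/\OPT=1$; this is exactly where the extra $\log(d/\eps)$ (rather than $\log d$) of oversampling is spent. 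To transfer this online I would invoke monotonicity: by Lemma~\ref{lem:monotonicity}, $w_t^{\OL}(A)=w_t(A^{(t)})\ge w_t(A)$, so the online probabilities dominate the offline ones required by the cited result, and since Lewis-weight (sensitivity) sampling guarantees only improve under larger probabilities, the preservation guarantee is inherited verbatim.

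For the query count I would bound the number of retained rows by $\cO(\beta\sum_{t}w_t^{\OL}(A))$ and then prove the online Lewis-weight sum bound $\sum_{t=1}^n w_t^{\OL}(A)=\cO(d\log n\,\log\kappa^{\OL}(A))$. This is the online-Lewis-weight analogue of the classical online-leverage-score estimate: for $p=2$ a telescoping/determinant argument gives $\sum_t w_t^{\OL}=\cO(d\log\kappa^{\OL})$, and for $\ell_1$ Lewis weights the same strategy, as in~\cite{sliding_window}, loses one further factor of $\log n$. Multiplying, $m=\cO(\beta\cdot d\log n\log\kappa^{\OL})=\cO(\eps^{-2}\,d\log(d/(\eps\delta))\,\log n\,\log\kappa^{\OL}(A))$, matching the claimed bound.

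Finally, for space and implementability I would replace the exact weight $w_t^{\OL}(A)$ by its compressed estimate from Algorithm~\ref{alg:compression}: by Theorem~\ref{thm:compression-matrix} with $\eta$ a small constant, $\tilde w_t=(1\pm\eta)w_t^{\OL}(A)$ for all $t$ with probability $1-\delta/\poly(n)$, and a constant-factor error in the weights is absorbed into $\beta$ without changing the asymptotics. The compression structure stores $\cO(\log n)$ blocks of $\tilde\cO(d)$ rows each, while the retained sample uses $\cO(md)$ words, for a total of $\cO(md)$ words. A union bound over the regression-preservation failure ($\le\delta$) and the compression failure ($\le\delta/\poly(n)$) keeps the overall success probability at $1-\delta$, completing the argument.
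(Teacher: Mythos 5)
Your overall architecture matches the paper's: treat the retained pairs as $SA$ and $Sb$ for a rescaled sampling matrix $S$ built from the online Lewis weights, import the offline $\ell_1$ active-regression analysis of \cite{CD21,PPP21}, use the monotonicity $w_t^{\OL}(A)=w_t(A^{(t)})\geq w_t(A)$ (Lemma~\ref{lem:monotonicity}) to argue that the offline guarantee survives the larger online sampling probabilities, bound the query count by $\cO(\beta\sum_t w_t^{\OL}(A))$ via Lemma~\ref{lem:sum_of_online_lewis_weights}, and invoke Theorem~\ref{thm:compression-matrix} with constant $\eta$ for implementability and the $\cO(md)$ space bound. The paper's proof is exactly this reduction (it defers Theorem~\ref{thm:p=1} to conditions (i) and (ii) of Section~\ref{sec:main-proof} plus the cited offline analyses), and your query-count, compression, and union-bound steps are fine.

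The genuine problem is the lemma you reduce everything to: the uniform two-sided bound $(1-\eps)\norm{Ax-b}_1\leq\norm{SAx-Sb}_1\leq(1+\eps)\norm{Ax-b}_1$ for all $x\in\R^d$ is \emph{false} for Lewis-weight sampling based on $A$ alone, at any oversampling rate that is $o(n/d)$. Take $d=1$ with every row of $A$ equal to $1/n$ (so every $\ell_1$ Lewis weight equals $1/n$ and every sampling probability is $\beta/n$), and $b=e_j$ with $j>d$ placed late in the stream so that $w_j^{\OL}$ is also $\Theta(1/n)$. With probability $1-\cO(\beta/n)$ row $j$ is not sampled, and then $\norm{SA\cdot 0-Sb}_1=0$ while $\norm{A\cdot 0-b}_1=1\approx\OPT$, so your claimed lower bound fails at $x=0$. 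What \cite{CD21,PPP21} --- and this paper, via Lemmas~\ref{lem:3.6}, \ref{lem:3.7} and \ref{lem:translation-preserve} --- actually establish is weaker and additive: after zeroing the residual on the bad set $\mathcal{B}=\{i: \abs{z_i}\ge (R/\eps)\,w_i^{\OL}(A)\}$ one gets $\bigl|\norm{SAx-S\bar z}_1-\norm{Ax-\bar z}_1\bigr|\le\eps R$ uniformly over $\norm{Ax}_1\le R$, together with separate control of $\norm{S(z-\bar z)}_1$ against $\norm{z-\bar z}_1$; these additive-$\eps R$ statements suffice for the sketch-and-solve chain because both $\tilde x$ and $x^\star$ have objective value $\Theta(R)$, but they do not assemble into a multiplicative affine embedding. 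Your second paragraph in fact describes this correct clipped/decomposed mechanism, so the repair is to state the imported guarantee in that additive form (or simply as the end-to-end statement that the minimizer of the subsampled problem is $(1+\eps)$-optimal) rather than as a $(1\pm\eps)$ embedding; as written, the first step of your proof asserts something untrue and the chain of inequalities that follows from ``granting this'' is not available.
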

\section{Proofs of the Main Results}\label{sec:main-proof}
The framework of our Algorithms~\ref{alg:p-simple-OAR} and \ref{alg:online-active-regression} follows from the algorithm of~\cite{MMWY}. We first give a high-level idea of the proof in the offline case. Suppose that $z = b - Ax_c$ is the residual of a constant-factor approximation solution $x_c$ and $R = \min_x \norm{Ax-b}_p$ is the optimal error. Let $\mathcal{B}$ be an index set such that $\mathcal{B} = \{ i\in[n]: \frac{\abs{z_i}^p}{R^p} \geq \frac{w_i(A)}{\eps^p} \}$. Let $\Bar{z}$ be equal to $z$ but with all entries in $\mathcal{B}$ set to $0$. It can be shown that $\abs{\norm{S(z-\bar z)}_p^p - \norm{z-\bar z}_p^p} = \cO(\eps R^p)$ and the most arduous and difficult argument is to establish that $\abs{\norm{SAx-S\bar z}_p^p - \norm{Ax-\bar z}_p^p} = \cO(\eps R^p)$. 

In the online case, we can analogously define $\mathcal{B} = \{ i\in[n]: \frac{\abs{z_i}^p}{R^p} \geq \frac{w_i^{\OL}(A)}{\eps^p} \}$ using the online Lewis weights and we would still have $\abs{\norm{S(z-\bar z)}_p^p - \norm{z-\bar z}_p^p} = \cO(\eps R^p)$ in this case as the proof in~\cite{MMWY} still goes through. However, the key step, i.e.\ upper bounding $\abs{\norm{SAx-S\bar z}_p^p - \norm{Ax-\bar z}_p^p}$, requires considerable change, since our algorithm is a sampling algorithm in order to accommodate the online setting while Musco et al.'s algorithm is an iterative algorithm which reduces the dimension by a constant factor in each iteration and the proof of the error guarantee in each iteration cannot be easily ``flattened'' to fit a one-shot sampling algorithm. Therefore, we adopt the framework in~\cite{CP15} with  intermediate technical results in~\cite{MMWY} and prove the following error guarantee.

\begin{lemma}[Main lemma, informal] \label{lem:translation-preserve-informal}
Let $S\in \R^{r\times n}$ be the rescaled sampling matrix with respect to $\{ p_{i} \}_{(i)}$ such that $p_{i}=\min \{\beta\tilde{w}^{\OL}_{i}(A), 1\}$ for $\beta = \Omega(\frac{1}{\eps^2}\log^2 d \log n \log \frac{1}{\delta})$, it holds that
\[
\Pr \left \{ \max_{\norm{Ax}_p\leq R} \Abs{\norm{SA x-S \bar{z}}_p^p - \norm{Ax - \bar{z}}_p^p } \geq \eps R^p \right \} \leq \delta.
\]
\end{lemma}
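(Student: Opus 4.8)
The plan is to view $\norm{SAx-S\bar{z}}_p^p$ as a sum of independent contributions, prove concentration about its mean $\norm{Ax-\bar z}_p^p$ for a fixed $x$, and then lift the bound to a supremum over the $d$-dimensional body $\{x:\norm{Ax}_p\le R\}$ by a chaining argument rather than a crude net (note that the target $\beta$ carries only polylogarithmic factors of $d$, so an $e^{\Omega(d)}$ net union bound is too wasteful). Writing $v=v(x)=Ax-\bar z$ and letting $\xi_i\in\{0,1\}$ indicate whether row $i$ survives $S$, we have $\norm{Sv}_p^p=\sum_i\frac{\xi_i}{p_i}\abs{v_i}^p$ with $\E\norm{Sv}_p^p=\norm{v}_p^p$; only coordinates with $p_i<1$ carry randomness, so I restrict to those throughout.

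The first key step is a per-coordinate magnitude bound that makes the summands bounded, which is exactly what the oversampling $\beta\gg\eps^{-2}$ is designed to exploit. The $\ell_p$ Lewis-weight sensitivity inequality $\abs{(Ax)_i}^p\le w_i(A)\norm{Ax}_p^p$ (provable for $p\in[1,2]$ by bootstrapping the Cauchy–Schwarz bound $(a_i^\top x)^2\le w_i^{2/p}\,x^\top(A^\top W^{1-2/p}A)^{\dagger\,-1}x$), combined with the monotonicity $\tilde w^{\OL}_i(A)=w_i(A^{(i)})\ge w_i(A)$ from Lemma~\ref{lem:monotonicity} and $p_i\ge\beta\tilde w^{\OL}_i(A)$, yields $\abs{(Ax)_i}^p/p_i\le R^p/\beta$ on the constraint set. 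For the shift, the defining property of $\cB$ guarantees that every surviving coordinate $i\notin\cB$ satisfies $\abs{\bar z_i}^p=\abs{z_i}^p<\frac{w^{\OL}_i(A)}{\eps^p}R^p$, whence $\abs{\bar z_i}^p/p_i\le R^p/(\eps^p\beta)$, while coordinates in $\cB$ have $\bar z_i=0$ and are inert. By convexity of $t\mapsto\abs{t}^p$, each summand $\frac{1}{p_i}\abs{v_i}^p$ is then $\cO(R^p/(\eps^p\beta))$.

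With bounded summands I will obtain the pointwise estimate via a Bernstein/second-moment argument refined by a dyadic decomposition of the coordinates of $v$ by magnitude: the few large coordinates are handled by controlling how many of them are sampled, and the many small ones by their aggregate variance (a naive single Bernstein step is too lossy because $\norm{v}_p^p$ is $\Theta(R^p)$). To lift this to $\sup_{\norm{Ax}_p\le R}\bigl|\norm{Sv}_p^p-\norm{v}_p^p\bigr|$, I will symmetrize and bound $\E\sup_{\norm{Ax}_p\le R}\Abs{\sum_i\zeta_i\frac{\xi_i}{p_i}\abs{v_i}^p}$ (with Rademacher signs $\zeta_i$) by a Dudley entropy integral in the Lewis metric, following the framework of~\cite{CP15}; the dyadic decomposition over coordinate magnitudes produces the $\log^2 d$ and $\log n$ factors in $\beta$, the $\log n$ emerging from the entropy integral together with the online (rather than offline) Lewis weights. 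The increment bounds for the nonlinear map $t\mapsto\abs{t}^p$ and the estimate for the large-magnitude block will be imported from the intermediate results of~\cite{MMWY}, which is precisely where the fixed shift $\bar z$ is accommodated.

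The main obstacle is exactly this uniform control of the shifted process. In the homogeneous case $\norm{SAx}_p^p\approx\norm{Ax}_p^p$ the chaining increments are governed by $\norm{A(x-x')}_p$, but here $\abs{(Ax)_i-\bar z_i}^p-\abs{\bar z_i}^p$ is neither homogeneous in $x$ nor Lipschitz for $p>1$, so the chaining must cover the subspace direction while absorbing the fixed shift. Linearizing these increments and separating the regimes where $\abs{(Ax)_i}$ dominates $\abs{\bar z_i}$ and vice versa, and then verifying that the resulting variance proxy and entropy integral close with only the stated polylogarithmic $\beta$ rather than a $d$-dependent one, is where the bulk of the technical effort will lie.
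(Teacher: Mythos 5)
Your overall architecture is essentially the paper's: restrict to coordinates with $p_i<1$, use the Lewis-weight sensitivity bound $\abs{a_i^\top x}^p\le w_i(A)\norm{Ax}_p^p$ together with $w_i^{\OL}(A)\ge w_i(A)$ and the conformity of $\bar z$ to flatten the summands to $\cO(R^p/(\eps^p\beta))$, symmetrize, and control the resulting shifted Rademacher process by the Dudley-integral machinery of \cite{CP15} and \cite{MMWY}. The paper packages this as Lemma~\ref{lem:Ax-z-preserve} (the reduction, \`a la Lemma 7.4 of \cite{CP15}: condition on $S$ being a $1/2$-subspace embedding whose rescaled rows have flat Lewis weights in $[\frac{1}{2\beta},\frac{3}{2\beta}]$ by Lemma~\ref{lem:compression_lewis_weights_appendix}, and verify $S\bar z$ conforms to $(SA,\eps,2R)$) plus Lemma~\ref{lem:uniform-moment-bound} (the chaining bound for flat-weight matrices, imported from \cite[Lemma 3.8]{MMWY}); your ``Dudley integral in the Lewis metric over the rescaled surviving coordinates'' is the same reduction in different words.

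The one concrete gap is the extraction of the failure probability. As written, your plan bounds $\E_{S,\sigma}\sup_x\abs{\sum_i\zeta_i\frac{\xi_i}{p_i}\abs{v_i}^p}$ and says nothing about how a tail at level $\delta$ follows. Markov on the first moment would force $\E\sup\le\delta\eps R^p$ and hence $\beta=\Omega(1/(\eps\delta)^2)$, destroying the claimed $\log\frac1\delta$ dependence; moreover, a high-probability bound on the \emph{symmetrized} process does not transfer back to the original process through the symmetrization inequality, which only compares $\E f(\sup)$ for convex $f$. The paper resolves both issues at once by working with the $\ell$-th moment for $\ell=\log\frac1\delta$: the subgaussian tail from the tail version of Dudley's inequality is converted to $\E\Lambda^\ell\le(\eps R^p)^\ell\delta$ via Proposition~\ref{prop:subgaussian}, the symmetrization is applied to the convex function $t\mapsto t^\ell$, and Markov's inequality on $M^\ell$ then yields failure probability $\delta$ with $\beta$ carrying only a $\log\frac1\delta$ factor. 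You should make this moment-method step (or an equivalent deviation-symmetrization argument) explicit; everything else in your plan lines up with the paper's proof.
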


The detailed statement and proof of Lemma~\ref{lem:translation-preserve-informal} will be given in Section~\ref{sec:omitted_proofs}. We note that for the case $p=2$, Theorem~\ref{thm:p=2} requires a different version of Lemma~\ref{lem:translation-preserve-informal} because the sampling matrices in Algorithm~\ref{alg:online-active-regression} do not have independent rows. The details are further postponed in Appendix~\ref{sec:p=2 appendix}.

In the next two subsections, we shall establish two basic results regarding online Lewis weights and their approximations, namely,
\begin{enumerate}[label=(\roman*)]
    \item \label{cod:1} the online $\ell_p$ Lewis weights calculated in  Algorithms~\ref{alg:p-simple-OAR} and~\ref{alg:online-active-regression} are within an absolute constant factor of the corresponding true $\ell_p$ online Lewis weights, and
    \item \label{cod:2} the sum of approximate $\ell_p$ online Lewis weights are bounded.
\end{enumerate}

When $p=1$, the framework of our Algorithm~\ref{alg:1-online-active-regression} simply adapts the $\ell_1$ Lewis weight sampling scheme in~\cite{CD21} and~\cite{PPP21} to online Lewis weight sampling. In order to prove Theorem~\ref{thm:p=1}, it suffices to show conditions~\ref{cod:1} and~\ref{cod:2} hold for $p=1$.

We remark that our proof relies on the fact that $w_i^{OL}(A)\geq w_i(A)$, which is guaranteed by the monotonicity of Lewis weights (Lemma~\ref{lem:monotonicity}) when $p\leq 2$. The monotonicity property does not always hold when $p>2$; thus, we only consider the case of $p\leq 2$ in this paper and leave the case of $p>2$ to future work.

\subsection{Sum of Online Lewis Weights}

Suppose that (i) holds, (ii) would follow from that the sum of true $\ell_p$ online Lewis weights are bounded, which are exactly the following two lemmas, for $p\in [1,2)$ and $p=2$, respectively.
\begin{lemma}\label{lem:sum_of_online_lewis_weights}
Let $p\in [1,2)$. It holds that $\sum_{i=1}^n w_i^{\textrm{OL}}(A) = \mathcal{O}(d\log n \cdot \log \kappa^{\OL}(A))$.
\end{lemma}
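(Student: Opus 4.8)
The plan is to run a determinant-telescoping argument, generalising the classical bound on the sum of online leverage scores to the reweighted Gram matrices induced by the Lewis weights. For each $i\ge d$ let $W^{(i)}=\diag(w_1(A^{(i)}),\dots,w_i(A^{(i)}))$ be the Lewis-weight matrix of the prefix $A^{(i)}$, and set
\[
G^{(i)}=(A^{(i)})^\top (W^{(i)})^{1-2/p}A^{(i)}=\sum_{j\le i}w_j(A^{(i)})^{1-2/p}a_ja_j^\top .
\]
The defining fixed-point equation (Definition~\ref{def:lewis}) applied to the last row of $A^{(i)}$ reads $w_i^{\OL}(A)^{2/p}=w_i(A^{(i)})^{2/p}=a_i^\top (G^{(i)})^{-1}a_i$, which is the bridge between the weight I want to sum and the matrix $G^{(i)}$. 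I assume, as is implicit whenever $\kappa^{\OL}(A)$ is finite, that $A^{(d)}$ has full column rank, so every $G^{(i)}$ with $i\ge d$ is invertible; the first $d$ weights contribute at most $d$ to the sum and can be discarded.

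First I would establish the Loewner monotonicity $G^{(i)}\succeq G^{(i-1)}$. Monotonicity of Lewis weights (Lemma~\ref{lem:monotonicity}) gives $w_j(A^{(i)})\le w_j(A^{(i-1)})$ for every $j\le i-1$; since $1-2/p<0$ for $p\in[1,2)$, the map $t\mapsto t^{1-2/p}$ is decreasing, so each coefficient satisfies $w_j(A^{(i)})^{1-2/p}\ge w_j(A^{(i-1)})^{1-2/p}$. Hence $\sum_{j\le i-1}w_j(A^{(i)})^{1-2/p}a_ja_j^\top\succeq G^{(i-1)}$, and adding the nonnegative new term yields $G^{(i)}\succeq G^{(i-1)}+w_i(A^{(i)})^{1-2/p}a_ia_i^\top\succeq G^{(i-1)}$. (At $p=2$ the exponent vanishes and this reduces exactly to the online-leverage-score Gram chain $G^{(i)}=(A^{(i)})^\top A^{(i)}$.)

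Next comes the telescoping. By determinant monotonicity and the matrix-determinant lemma,
\[
\frac{\det G^{(i)}}{\det G^{(i-1)}}\ge 1+w_i(A^{(i)})^{1-2/p}\,a_i^\top (G^{(i-1)})^{-1}a_i .
\]
Because $G^{(i)}\succeq G^{(i-1)}$ implies $(G^{(i-1)})^{-1}\succeq (G^{(i)})^{-1}$, we have $a_i^\top (G^{(i-1)})^{-1}a_i\ge a_i^\top (G^{(i)})^{-1}a_i=w_i(A^{(i)})^{2/p}$, and the two exponents combine neatly to $\det G^{(i)}/\det G^{(i-1)}\ge 1+w_i(A^{(i)})$. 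Using $x\le (\ln 2)^{-1}\ln(1+x)$ for $x\in[0,1]$ and summing the telescoping logarithm,
\[
\sum_{i=d+1}^n w_i^{\OL}(A)\le \frac{1}{\ln 2}\sum_{i=d+1}^n\ln\frac{\det G^{(i)}}{\det G^{(i-1)}}=\frac{1}{\ln 2}\,\ln\frac{\det G^{(n)}}{\det G^{(d)}}.
\]

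It remains to bound the determinant ratio, and this is the step that brings in the condition number and which I expect to be the main obstacle. I would bound $\ln(\det G^{(n)}/\det G^{(d)})\le d\,\ln\!\big(\lambda_{\max}(G^{(n)})/\lambda_{\min}(G^{(d)})\big)$ and control the two extreme eigenvalues separately. Since $(W^{(d)})^{1-2/p}\succeq I$, we get $\lambda_{\min}(G^{(d)})\ge \sigma_{\min}(A^{(d)})^2\ge \|A\|_2^2/\kappa^{\OL}(A)^2$; and since all Lewis weights lie in $[0,1]$ while $1-2/p<0$, writing $w_{\min}=\min_j w_j(A)$ gives $(W^{(n)})^{1-2/p}\preceq w_{\min}^{1-2/p}I$ and hence $\lambda_{\max}(G^{(n)})\le w_{\min}^{1-2/p}\|A\|_2^2$. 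The crux is then a lower bound on the smallest Lewis weight: combining $\|a_i\|_2\ge \|A\|_2/\kappa^{\OL}(A)$ (from the singleton $S=\{i\}$ in the definition of $\kappa^{\OL}$) with $w_i(A)^{2/p}=a_i^\top (G^{(n)})^{-1}a_i\ge \|a_i\|_2^2/\lambda_{\max}(G^{(n)})$ and evaluating at the minimizing index produces the self-referential inequality $w_{\min}^{2/p}\ge w_{\min}^{2/p-1}\kappa^{\OL}(A)^{-2}$, which closes to $w_{\min}\ge \kappa^{\OL}(A)^{-2}$. Substituting yields $\lambda_{\max}(G^{(n)})/\lambda_{\min}(G^{(d)})\le \kappa^{\OL}(A)^{4/p}$, whence $\sum_i w_i^{\OL}(A)=\cO(d\log\kappa^{\OL}(A))$, already stronger than the claimed bound and \emph{a fortiori} implying $\cO(d\log n\cdot\log\kappa^{\OL}(A))$. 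The delicate points are the sign bookkeeping for the exponent $1-2/p$ throughout, and verifying that the $w_{\min}$ inequality genuinely closes (it does precisely because the same $w_{\min}$ occurs on both sides and cancels to a single clean power).
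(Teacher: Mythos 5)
Your argument is correct, but it is genuinely different from the paper's proof and in fact proves a stronger statement. The paper follows the sliding-window approach: it augments $A$ with $n$ copies of $\lambda I_d$ to form $X=B\circ A$, shows $w_i^{\OL}(A)\leq 2w_{nd+i}^{\OL}(X)$ for $\lambda$ small enough, and then bounds $\sum_i w_i^{\OL}(X)$ by $\cO(T\log n)$ where $T=\cO(d\log\kappa^{\OL}(A))$ is a bound on the sum of online \emph{ridge leverage scores}; the $\log n$ factor is an artifact of this reduction. You instead run a determinant potential directly on the Lewis-reweighted Gram matrices $G^{(i)}$, and the two ingredients that make it work are exactly right: monotonicity of Lewis weights together with the negativity of the exponent $1-2/p$ gives the Loewner chain $G^{(i)}\succeq G^{(i-1)}+w_i(A^{(i)})^{1-2/p}a_ia_i^\top$, and the fixed-point identity $a_i^\top(G^{(i)})^{-1}a_i=w_i(A^{(i)})^{2/p}$ makes the exponents cancel to give $\det G^{(i)}/\det G^{(i-1)}\geq 1+w_i^{\OL}(A)$. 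Your closing of the self-referential inequality $w_{\min}^{2/p}\geq w_{\min}^{2/p-1}\kappa^{\OL}(A)^{-2}$ is also valid (it needs $w_{\min}>0$, i.e.\ no zero rows, which you may discard). The net result is $\sum_i w_i^{\OL}(A)=\cO(d\log\kappa^{\OL}(A))$, which is stronger than the stated $\cO(d\log n\cdot\log\kappa^{\OL}(A))$ and, if adopted, would shave a $\log n$ factor off the query bounds for $p\in(1,2)$, matching the $p=2$ case. Two caveats, both consistent with the paper's own conventions but worth stating explicitly: you need $A^{(d)}$ (not just $A$) to have full column rank so that $G^{(d)}$ is invertible and the telescoping can start at $i=d$ (the paper makes the same assumption in its $p=2$ analogue, Lemma~\ref{lem:sum-approx-online-2LW}; without it one must restart the determinant argument at each rank increase), and the bound should be read as $\cO(d(1+\log\kappa^{\OL}(A)))$ to absorb the additive $d$ from the first $d$ rows.
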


\begin{lemma}[{\cite[Lemma 2.2]{CMP}}]
\label{lem:sum-approx-online-2LW}
Let $p=2$. Suppose that the first $d$ rows of $A$ has the smallest singular value $\sigma>0$. It holds that
$\sum_{i=1}^n w_i^{\textrm{OL}}(A) = \cO(d\log(\norm{A}_2/\sigma))$.
\end{lemma}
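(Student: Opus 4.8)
The plan is to reduce the sum of $\ell_p$ online Lewis weights to a sum of $\ell_2$ online leverage scores of a suitably reweighted matrix, and then invoke the known $p=2$ bound (Lemma~\ref{lem:sum-approx-online-2LW}). Write $W^{(i)} = \diag(w_1(A^{(i)}),\dots,w_i(A^{(i)}))$ and $V_i = (A^{(i)})^\top (W^{(i)})^{1-2/p} A^{(i)}$, so that Definition~\ref{def:lewis} reads $w_i^{\OL}(A) = w_i(A^{(i)}) = (a_i^\top V_i^\dagger a_i)^{p/2}$. Let $B$ be the fixed matrix whose $j$-th row is $b_j = (w_j^{\OL}(A))^{1/2-1/p} a_j$; I will also use repeatedly the leverage-score characterization $w_k(M) = \tau_k\big((W_M)^{1/2-1/p}M\big)$, which follows directly from Definition~\ref{def:lewis}. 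The first step is to prove the clean pointwise inequality
\[
w_i^{\OL}(A) \le \tau_i^{\OL}(B), \qquad i=1,\dots,n,
\]
where $\tau_i^{\OL}(B) = b_i^\top(B^{(i)\top}B^{(i)})^\dagger b_i$ is the $\ell_2$ online leverage score of $B$. Summing this and applying Lemma~\ref{lem:sum-approx-online-2LW} to $B$ gives $\sum_i w_i^{\OL}(A)\le\sum_i\tau_i^{\OL}(B) = \cO(d\log(\norm{B}_2/\sigma_{\min}(B^{(d)})))$, after which only the online condition number of $B$ remains to be controlled.

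For the pointwise inequality the key tool is the monotonicity of Lewis weights (Lemma~\ref{lem:monotonicity}): for $j\le i$, adding rows $j+1,\dots,i$ gives $w_j(A^{(i)})\le w_j(A^{(j)}) = w_j^{\OL}(A)$. Since $1-2/p\le 0$, the map $t\mapsto t^{1-2/p}$ is nonincreasing on $(0,1]$, whence $(W^{(i)})^{1-2/p}\succeq\diag((w_1^{\OL})^{1-2/p},\dots,(w_i^{\OL})^{1-2/p})$ and therefore $V_i\succeq B^{(i)\top}B^{(i)}$. Both matrices have the same range, namely that of $(A^{(i)})^\top$, which contains $a_i$; so taking pseudoinverses reverses the order and gives $a_i^\top V_i^\dagger a_i\le a_i^\top(B^{(i)\top}B^{(i)})^\dagger a_i$. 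Substituting $a_i = (w_i^{\OL})^{1/p-1/2}b_i$ on the right turns this into $a_i^\top V_i^\dagger a_i\le (w_i^{\OL})^{2/p-1}\tau_i^{\OL}(B)$, and raising to the power $p/2$ while using $w_i^{\OL} = (a_i^\top V_i^\dagger a_i)^{p/2}$ collapses the powers of $w_i^{\OL}$ to yield exactly $w_i^{\OL}(A)\le\tau_i^{\OL}(B)$.

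It then remains to control $\norm{B}_2/\sigma_{\min}(B^{(d)})$. Because every reweighting factor $(w_j^{\OL})^{1-2/p}\ge 1$, one has $B_S^\top B_S\succeq A_S^\top A_S$ for every $S$, so $\sigma_{\min}(B^{(d)})\ge\sigma_{\min}(A^{(d)})$; and, with $w_{\min}^{\OL} = \min_j w_j^{\OL}(A)$, each factor is at most $(w_{\min}^{\OL})^{1-2/p}$, so $B^\top B\preceq (w_{\min}^{\OL})^{1-2/p}A^\top A$ and $\norm{B}_2\le(w_{\min}^{\OL})^{1/2-1/p}\norm{A}_2$. Since $\sigma_{\min}(A^{(d)})\ge\norm{A}_2/\kappa^{\OL}(A)$, the ratio is at most $(w_{\min}^{\OL})^{1/2-1/p}\kappa^{\OL}(A)$, and the crux becomes a lower bound on the minimum online Lewis weight. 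I would establish this self-referentially: writing $w_k(A^{(i)}) = \tau_k((W^{(i)})^{1/2-1/p}A^{(i)})$ for each $k\le i$, using the elementary leverage bound $\tau_k(N)\ge\norm{n_k}_2^2/\norm{N}_2^2$, the fact that all Lewis weights lie in $(0,1]$ (so $\norm{n_k}_2\ge\norm{a_k}_2$), and $\norm{N}_2^2\le\mu_i^{1-2/p}\norm{A^{(i)}}_2^2$ with $\mu_i:=\min_{k\le i}w_k(A^{(i)})$, one is led to $\mu_i\ge(\min_k\norm{a_k}_2/\norm{A}_2)^2$. Finally, taking $S=\{k\}$ in the definition of $\kappa^{\OL}$ gives $\norm{a_k}_2\ge\norm{A}_2/\kappa^{\OL}(A)$, so $\mu_i\ge(\kappa^{\OL}(A))^{-2}$ and hence $w_{\min}^{\OL}\ge(\kappa^{\OL}(A))^{-2}$.

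Combining the pieces, $\log(\norm{B}_2/\sigma_{\min}(B^{(d)}))\le (1/p-1/2)\log(1/w_{\min}^{\OL}) + \log\kappa^{\OL}(A) = \cO(\log\kappa^{\OL}(A))$, so the plan yields $\sum_i w_i^{\OL}(A) = \cO(d\log\kappa^{\OL}(A))$. As $\kappa^{\OL}(A)\ge\sqrt{n/d}$ for any matrix with $n$ nonzero rows (because $\norm{A}_2\ge\norm{A}_F/\sqrt d$ and $\max_S\norm{A_S^\dagger}_2\ge 1/\min_k\norm{a_k}_2$), this is at least as strong as the claimed $\cO(d\log n\cdot\log\kappa^{\OL}(A))$. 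The step I expect to be the main obstacle is the self-referential minimum-weight bound, where the negative exponent $1-2/p$ couples the unknown weights on both sides and the resulting inequality must be solved with care; reassuringly, even a cruder row-norm estimate giving $\log(1/w_{\min}^{\OL}) = \cO(\tfrac{1}{p-1}\log(n\kappa^{\OL}(A)))$ already suffices for the stated bound. A secondary point to verify is that the first $d$ rows of $A$, and hence of $B$, are full rank, which is automatic since $\sigma_{\min}(A^{(d)})\ge\norm{A}_2/\kappa^{\OL}(A)>0$, so that Lemma~\ref{lem:sum-approx-online-2LW} indeed applies to $B$.
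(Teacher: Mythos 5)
Your proposal does not prove the statement in question; it is circular. The statement under review \emph{is} Lemma~\ref{lem:sum-approx-online-2LW}, the $p=2$ bound $\sum_{i=1}^n w_i^{\OL}(A) = \cO(d\log(\norm{A}_2/\sigma))$, which the paper imports from \cite{CMP} rather than re-deriving. The very first step of your plan is to ``invoke the known $p=2$ bound (Lemma~\ref{lem:sum-approx-online-2LW})'' on the reweighted matrix $B$; but for $p=2$ the reweighting exponent $1/2-1/p$ vanishes, so $B=A$, $\tau_i^{\OL}(B)=w_i^{\OL}(A)$, and your reduction collapses to deducing the lemma from itself. What you have actually sketched is an argument for the \emph{general-$p$} Lemma~\ref{lem:sum_of_online_lewis_weights} assuming the $p=2$ case --- indeed your closing comparison is against the $\cO(d\log n\cdot\log\kappa^{\OL}(A))$ bound of that other lemma, not against $\cO(d\log(\norm{A}_2/\sigma))$. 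A genuine proof of the present statement is the determinant-telescoping argument behind \cite{CMP}: with $M_i=(A^{(i)})^\top A^{(i)}$ and $\tau_i=a_i^\top M_{i-1}^{-1}a_i$, Sherman--Morrison gives $w_i^{\OL}(A)=a_i^\top M_i^{-1}a_i=\tau_i/(1+\tau_i)$, the matrix determinant lemma gives $\det M_i=(1+\tau_i)\det M_{i-1}$, and hence $\sum_{i>d}\log\frac{1}{1-w_i^{\OL}(A)}=\log\frac{\det M_n}{\det M_d}\leq d\log\frac{\norm{A}_2^2}{\sigma^2}$; combining $w\leq\log\frac{1}{1-w}$ with the trivial bound of $d$ for the first $d$ rows yields $\sum_i w_i^{\OL}(A)\leq d+d\log(\norm{A}_2^2/\sigma^2)$. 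Note this is exactly where the hypothesis $\sigma>0$ on the first $d$ rows enters (via $\det M_d\geq\sigma^{2d}$ and invertibility of $M_{i-1}$ for $i>d$); your proposal never uses it in any essential way.

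Two secondary problems would remain even if your reduction were repurposed as an alternative proof of Lemma~\ref{lem:sum_of_online_lewis_weights} (where the paper instead pads $A$ with copies of $\lambda I_d$ and goes through online ridge leverage scores, Lemma~\ref{lem:sum-online-ridge-leverage-score}). First, your self-referential minimum-weight bound is mis-solved: the inequality $\mu_i\geq\mu_i^{2/p-1}c^2$ with $c=\min_k\norm{a_k}_2/\norm{A}_2$ gives $\mu_i^{2-2/p}\geq c^2$, i.e.\ $\mu_i\geq c^{p/(p-1)}$, not $c^2$; the exponent $p/(p-1)$ blows up as $p\to1$, and at $p=1$ (a value Lemma~\ref{lem:sum_of_online_lewis_weights} must cover) the reweighting factors $(w_j^{\OL})^{1-2/p}=(w_j^{\OL})^{-1}$ leave $\norm{B}_2/\sigma_{\min}(B^{(d)})$ uncontrolled by this route, so the claimed $\cO(d\log\kappa^{\OL}(A))$ conclusion does not follow uniformly in $p$. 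Second, finiteness of $\kappa^{\OL}(A)$ does not imply that $A^{(d)}$ has full rank: $\norm{A_S^\dagger}_2$ is the reciprocal of the smallest \emph{nonzero} singular value, so $\sigma_{\min}(A^{(d)})\geq\norm{A}_2/\kappa^{\OL}(A)$ fails for rank-deficient $A^{(d)}$, and the full-rank hypothesis needed to apply the $p=2$ lemma to $B$ must be assumed or handled separately.
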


The case $p=1$ of Lemma~\ref{lem:sum_of_online_lewis_weights} appeared in~\cite{sliding_window}. We  generalize the result to $p\in (1,2)$, following their approach. The proof can be found in Appendix~\ref{sec:online_lewis_weights}.

In the analysis of Algorithm~\ref{alg:p-simple-OAR}, we shall apply Lemma~\ref{lem:sum_of_online_lewis_weights} to $\tilde{A}_{1} = S_1 A$, where $S_1$ is a rescaled sampling matrix w.r.t.\ the online Lewis weights of $A$. To upper bound $\kappa^{\OL}(S_1A)$, we shall need the following auxiliary lemma, whose proof is postponed to Appendix~\ref{sec:proof-sum-sub-online-LW}.

\begin{lemma}
\label{lem:sum-sub-online-LW}
Let $p \in [1,2)$ and $S$ is a rescaled sampling matrix w.r.t.\ the online Lewis weights of $A$ and the oversampling parameter $\beta$. With probability at least $1-\delta$, it holds that $\log \kappa^{\OL}(SA) = O(\log(n\kappa^{\OL}(A) /(\beta\delta)))$.
\end{lemma}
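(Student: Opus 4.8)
The plan is to bound the two factors of $\kappa^{\OL}(SA) = \norm{SA}_2 \cdot \max_{T}\norm{(SA)_T^\dagger}_2$ separately (with $T$ ranging over the nonempty subsets of the sampled rows) and then take logarithms; it suffices to show $\kappa^{\OL}(SA) = \poly(n, \kappa^{\OL}(A), \beta^{-1}, \delta^{-1})$, since then $\log\kappa^{\OL}(SA) = \cO(\log(n\kappa^{\OL}(A)/(\beta\delta)))$.

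For the pseudoinverse factor I would argue deterministically. Every row of $SA$ is a rescaled input row $a_i^\top p_i^{-1/p}$ with $p_i = \min\{\beta w_i^{\OL}(A), 1\}\le 1$, so the rescaling factors satisfy $p_i^{-1/p}\ge 1$. Thus any nonempty row-subset $T$ of $SA$ equals $D A_{T'}$ for the corresponding index set $T'\subseteq[n]$ and an invertible diagonal $D\succeq I$. Because $D$ is invertible, $A_{T'}^\top D^2 A_{T'}$ and $A_{T'}^\top A_{T'}$ share the same kernel, while $A_{T'}^\top D^2 A_{T'}\succeq A_{T'}^\top A_{T'}$; hence the smallest nonzero singular value can only grow, giving $\norm{(SA)_T^\dagger}_2 \le \norm{A_{T'}^\dagger}_2$. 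Maximizing over $T$ yields $\max_T\norm{(SA)_T^\dagger}_2 \le \max_{\emptyset\neq T'\subseteq[n]}\norm{A_{T'}^\dagger}_2 = \kappa^{\OL}(A)/\norm{A}_2$.

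For $\norm{SA}_2$ I would use $\norm{SA}_2 \le \norm{SA}_F \le \sqrt{r}\,\max_i \norm{a_i}_2\, p_{\min}^{-1/p}$, where $r$ is the number of sampled rows and $p_{\min}$ the smallest sampling probability used. Since $\norm{a_i}_2\le\norm{A}_2$, it remains to lower bound $p_{\min}$ and control $r$. The crucial sublemma is that every positive online Lewis weight satisfies $w_i^{\OL}(A)\ge 1/\kappa^{\OL}(A)^2$, whence $p_{\min}\ge \min\{\beta/\kappa^{\OL}(A)^2,1\}$ and $p_{\min}^{-1/p}\le \max\{1,\kappa^{\OL}(A)^2/\beta\}$. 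For $r$ one may use the trivial bound $r\le n$ (which already suffices), or, to surface the $\delta$ dependence, combine Lemma~\ref{lem:sum_of_online_lewis_weights} with Markov's inequality: $\E r = \sum_i p_i \le \beta\sum_i w_i^{\OL}(A) = \cO(\beta d\log n\log\kappa^{\OL}(A))$, so $r = \cO(\beta d\log n\log\kappa^{\OL}(A)/\delta)$ with probability $1-\delta$. Multiplying the two factors gives $\kappa^{\OL}(SA) = \cO(\sqrt r\,\kappa^{\OL}(A)\max\{1,\kappa^{\OL}(A)^2/\beta\})$, which is $\poly(n,\kappa^{\OL}(A),\beta^{-1},\delta^{-1})$, as required.

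The main obstacle is the sublemma lower-bounding the positive online Lewis weights, where the factor $W^{1-2/p}$ (with $1-2/p<0$ for $p<2$) makes the estimate delicate and self-referential. I would prove it through the standard identity that the $\ell_p$ Lewis weights of a matrix $M$ equal the leverage scores of $\tilde M = W^{1/2-1/p}M$ with $W=\diag(w_i(M))$, so that for each row $i$ with $w_i>0$ one has $w_i \ge (\norm{m_i}_2/\norm{\tilde M}_2)^p$. Bounding $\tilde M^\top\tilde M = M^\top W^{1-2/p}M \preceq w_{\min}^{1-2/p}M^\top M$, where $w_{\min}$ is the least positive weight of $M$, gives $\norm{\tilde M}_2 \le w_{\min}^{1/2-1/p}\norm{M}_2$; applied to the minimizing row, this self-referential inequality solves to $w_{\min}\ge \norm{m_{i^*}}_2^2/\norm{M}_2^2$, and feeding it back bounds every positive weight below by $\gamma_M^2/\norm{M}_2^2$, where $\gamma_M$ is the smallest nonzero row norm of $M$. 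Specializing $M=A^{(i)}$ and using $\gamma_{A^{(i)}}\ge\min_{k:a_k\neq 0}\norm{a_k}_2 \ge \norm{A}_2/\kappa^{\OL}(A)$ — the last step because $\kappa^{\OL}(A)\ge\norm{A}_2\norm{A_{\{k\}}^\dagger}_2 = \norm{A}_2/\norm{a_k}_2$ for each single row — together with $\norm{A^{(i)}}_2\le\norm{A}_2$ yields the desired $w_i^{\OL}(A)\ge 1/\kappa^{\OL}(A)^2$.
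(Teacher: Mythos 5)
Your proposal is correct, but it reaches the bound by a genuinely different route than the paper. The paper's proof is a three-line multiplicative argument: it writes $\kappa^{\OL}(SA) \leq \frac{\sigma_{\max}(S)}{\sigma_{\min}(S)}\kappa^{\OL}(A)$ using submultiplicativity of the operator norm and of smallest singular values, notes $\sigma_{\min}(S)\geq 1$, and then bounds $\sigma_{\max}(S) = (\min_{i\text{ sampled}} p_i)^{-1/p}$ \emph{probabilistically}: with probability $1-\delta$ no row with $w_i^{\OL}(A)\leq\delta/n$ is ever sampled, so every sampled row has $p_i\geq\beta\delta/n$ and $\sigma_{\max}(S)\leq(n/(\beta\delta))^{1/p}$. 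You instead bound the two factors of $\kappa^{\OL}(SA)$ separately: your treatment of the pseudoinverse factor (rescaling by a diagonal $D\succeq I$ preserves kernels and can only increase the smallest nonzero singular value) is clean and deterministic, and your key sublemma --- every positive online Lewis weight is at least $1/\kappa^{\OL}(A)^2$, proved via the identification of $\ell_p$ Lewis weights with the leverage scores of $W^{1/2-1/p}M$ and the self-referential bound $w_{\min}\geq\gamma_M^2/\norm{M}_2^2$ --- replaces the paper's probabilistic exclusion of tiny-weight rows with a deterministic lower bound on all sampling probabilities. With the trivial bound $r\leq n$ your argument is in fact fully deterministic (the Markov step is optional), which is slightly stronger than the stated lemma, at the price of a longer and more delicate argument; the sublemma on the minimum positive Lewis weight is a reusable fact in its own right, whereas the paper gets away with only elementary singular-value inequalities plus a union bound. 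One small point worth making explicit if you write this up: nonzero rows have strictly positive Lewis weights (zero-weight rows are zero rows and are never sampled), which you implicitly use when restricting to positive weights in the sublemma.
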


\subsection{Approximating Online Lewis Weights}\label{sec:approx_online_lewis_weights_proof}

Now, it remains to prove (i) in order to prove the guarantee of $\tilde{x}$ in Theorems~\ref{thm:p=2} and \ref{thm:p-compression-OAR}.

First, the guarantee of approximate $\ell_2$ online Lewis weights follows from the works of \cite{CMP} and \cite{JPW}, which we cite below.

\begin{lemma}[{\cite[Theorem 2.3]{CMP}}, {\cite[Lemma 3.4]{JPW}}]\label{lem:spc-approx}
Let $\{\tilde{w}_i\}_i$ be the approximate Lewis weights in Algorithm \ref{alg:online-active-regression} and $\beta= \Theta(\eps^{-2}\log(d/\delta) )$. Let $S$ be the rescaled sampling matrix with respect to $\{\tilde{w}_i\}_i$. It holds with probability at least $1-\delta$ that 
\[
(1-\eps)(A^{(t)})^{\!\top}\! A^{(t)}\preceq(SA^{(t)})^{\!\top}(SA^{(t)})\preceq (1+\eps)(A^{(t)})^{\!\top} \! A^{(t)}
\]
for all $t\in \{d+1,\dots,n\}$ and $S$ has $\cO(\beta \sum_{i=1}^n\tilde{w}_i )$ rows.
\end{lemma}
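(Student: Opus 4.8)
The plan is to recast the two Loewner inequalities as a matrix concentration statement and invoke a matrix martingale (Freedman-type) bound whose maximal form is what ultimately covers all prefixes $t$. Write $M_t = (A^{(t)})^\top A^{(t)} = \sum_{i\le t} a_i a_i^\top$ and $\tilde M_t = (SA^{(t)})^\top(SA^{(t)}) = \sum_{i\le t} X_i\, a_i a_i^\top$, where $X_i = \mathbbm{1}[\text{row } i \text{ is sampled}]/p_i$ and $p_i = \min\{\beta\tilde w_i,1\}$. Then $\E X_i = 1$, and — crucially for the online setting — the decision to sample $a_i$ and the estimate $\tilde w_i$ depend only on $a_1,\dots,a_i$, so $\{(X_i-1)a_ia_i^\top\}_i$ is a martingale-difference sequence with respect to the natural filtration $\mathcal F_i=\sigma(X_1,\dots,X_i)$. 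The goal reduces to showing $\Norm{M_t^{-1/2}(\tilde M_t - M_t)M_t^{-1/2}}_2\le\eps$ for every $t\in\{d+1,\dots,n\}$.

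The structural fact that makes online sampling correct is a monotonicity estimate, the $p=2$ instance of Lemma~\ref{lem:monotonicity}: for $i\le t$ we have $M_i\preceq M_t$, hence $M_t^{-1}\preceq M_i^{-1}$, and therefore $a_i^\top M_t^{-1}a_i \le a_i^\top M_i^{-1}a_i = \tau_i$, the true online leverage score of row $i$. Provided the computed weights obey $\tilde w_i \gtrsim \tau_i$ — which is exactly condition~\ref{cod:1}, established for $p=2$ by the bootstrapping argument of~\cite{CMP,JPW} (inductively $\tilde M_{i-1}\approx M_{i-1}$, so the algorithm's estimate $\tilde w_i = \norm{H^{(i-1)}a_i}_2^2 \approx a_i^\top M_{i-1}^{-1}a_i$ stays within a constant factor of $\tau_i$) — the probabilities $p_i$ oversample every prefix $A^{(t)}$ at rate $\beta$ relative to that prefix's own leverage scores.

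I would then fix $t$, set $W=M_t^{-1/2}$, and apply matrix Freedman to $Y_s = \sum_{i\le s}(X_i-1)Wa_ia_i^\top W$ for $s\le t$. When $p_i<1$ the increment has norm at most $p_i^{-1}a_i^\top M_t^{-1}a_i\le \tau_i/(\beta\tilde w_i)\le 1/\beta$, while $p_i=1$ forces $X_i=1$ and a zero increment; the predictable quadratic variation is $\sum_i \E[\Delta_i^2\mid\mathcal F_{i-1}] = \sum_i \tfrac{1-p_i}{p_i}(a_i^\top M_t^{-1}a_i)\,Wa_ia_i^\top W \preceq \tfrac1\beta\sum_i Wa_ia_i^\top W = \tfrac1\beta W M_t W = \tfrac1\beta I$, so both the uniform bound and the variance proxy are $\cO(1/\beta)$. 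Freedman's inequality gives $\Pr[\Norm{Y_t}_2\ge\eps]\le 2d\exp(-\Omega(\eps^2\beta))$, so choosing the constant in $\beta=\Theta(\eps^{-2}\log(d/\delta))$ makes this at most $\delta$. For the row count, $\E[\#\text{rows of }S]=\sum_i p_i\le \beta\sum_i\tilde w_i$, and a scalar Chernoff bound concentrates it to $\cO(\beta\sum_i\tilde w_i)$ rows, which via condition~\ref{cod:2} and Lemma~\ref{lem:sum-approx-online-2LW} is $\cO(\beta d\log(\norm{A}_2/\sigma))$.

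I expect the main obstacle to be the time-varying whitening. The error we must control is against $M_t$, which changes with $t$, whereas a single martingale run in one fixed basis (say $W=M_n^{-1/2}$) only certifies $-\eps M_n\preceq \tilde M_s-M_s\preceq \eps M_n$ for all $s$ — a strictly weaker statement than the per-prefix bound $(1\pm\eps)M_t$, since $M_t\preceq M_n$. The monotonicity estimate of the second paragraph is precisely what rescues the situation, as it makes the current-basis increment uniformly bounded by $1/\beta$ no matter which prefix's basis one whitens by; the delicate point, which I would argue following~\cite{CMP,JPW}, is to combine this with the maximal (all-prefixes-at-once) form of the martingale inequality so as to obtain the conclusion simultaneously for every $t$ \emph{relative to its own $M_t$}, rather than union-bounding over the $n$ time steps, which would inflate $\beta$ by an additive $\log n$ and spoil the stated oversampling rate.
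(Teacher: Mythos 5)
The paper does not actually prove this lemma --- it is imported from \cite[Theorem 2.3]{CMP} and \cite[Lemma 3.4]{JPW} --- so there is no in-paper proof to compare against; your reconstruction does follow the route of those cited proofs (a matrix martingale, Freedman's inequality, and the monotonicity $M_i\preceq M_t$ to convert the current-basis increment into an online leverage score divided by the sampling probability). For a \emph{fixed} $t$, your bounds on the increment norm and on the predictable quadratic variation are correct and give the stated failure probability, and the row count via a scalar Chernoff bound is fine.

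Two points keep this from being a complete proof. First, you feed condition~\ref{cod:1} ($\tilde w_i\gtrsim\tau_i$) in as a hypothesis, but in the paper condition~\ref{cod:1} is \emph{derived from} this lemma (see the paragraph immediately following it), so as organized the argument is circular. The cited proofs avoid this by establishing the spectral invariant and the weight estimates jointly, by induction over $i$, formalized with a stopped martingale: the process is frozen at the first step where $\tilde M_{i-1}\not\approx M_{i-1}$, which keeps the increments bounded unconditionally, and Freedman's inequality then bounds the probability that the stopping time is ever reached. You gesture at this bootstrapping in a parenthetical, but it must be the skeleton of the proof rather than an imported fact. Second, your diagnosis of the all-$t$ uniformity problem is correct --- the whitening basis changes with $t$, so one martingale in one basis only certifies the weaker bound against $M_n$ --- but your proposed remedy (``the maximal form of the martingale inequality \ldots relative to its own $M_t$'') is not a resolution: a maximal inequality controls $\max_s\norm{Y_s}_2$ for a single fixed whitening, not $n$ differently-whitened quantities at once. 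The standard fixes are a union bound over $t$ (replacing $\delta$ by $\delta/n$ in $\beta$, which the paper's final complexity for $p=2$ absorbs since it already carries $\log n$ factors) or an epoch argument over the $\cO(d\log(\norm{A}_2/\sigma))$ scales at which $M_t$ grows, within each of which the endpoint guarantee propagates backwards up to constants. As written, the step you yourself flag as ``delicate'' is exactly the step that is missing.
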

As a consequence, $\tilde{w_t}\geq \frac{1}{1+\eps}\cdot a_i^\top((A^{(t)})^\top A^{(t)})^{-1} a_i \geq (1-\eps) w_t^{\textrm{OL}}(A)$ for all $t\in \{d+1,\dots,n\}$. This establishes (i) when $p=2$.

The case of general $p$ follows from Theorem~\ref{thm:compression-matrix}.
The following lemma is the key to the proof. 

\begin{lemma}\label{lem:compression-online-lewis-weights}
Let $A_i\in \R^{n_i\times d}$ $(i=1,\dots,r)$, $B\in \R^{k\times d}$ and $M = A_1 \circ A_2 \circ \cdots \circ A_r \circ B$. For each $i\in [r]$, let $S_i\in \R^{m_i\times n_i}$ be the rescaled sampling matrix with respect to $p_{i,1},\dots,p_{i,n_i}$ with $\min\{\beta w_j(A_i),1\} \leq p_{i,j}\leq 1$ for each $j\in [n_i]$, where $\beta = \Theta(\eta^{-2}\log (d/\delta))$. Let $M' = S_1 A_1 \circ \cdots \circ S_r A_r \circ B$. Then, with probability at least $1-\delta$, it holds for all $j=1,\dots,k$ that
\[
(1-\eta)w_{n_1+\cdots+n_r+j}(M) \leq w_{m_1+\cdots+m_r+j}(M') 
\leq (1+\eta) w_{n_1+\cdots+n_r+j}(M).
\]
\end{lemma}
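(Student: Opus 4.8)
The plan is to reduce the statement to a spectral approximation of the ``Lewis quadratic form'' of $M$ and then to invoke the contraction property of the Lewis-weight iteration to pass from that spectral approximation to entrywise control of the weights on the block $B$. Write $W^\ast$ for the diagonal matrix of $\ell_p$ Lewis weights of $M$ and set $G = M^\top (W^\ast)^{1-2/p} M$, so that by Definition~\ref{def:lewis} the Lewis weight of any row $\rho$ of $M$ equals $(\rho^\top G^\dagger \rho)^{p/2}$; in particular $w_{n_1+\cdots+n_r+j}(M) = (b_j^\top G^\dagger b_j)^{p/2}$. The first step is to attach to each row of $M'$ a \emph{candidate} weight: to each retained (and rescaled) row $\rho = p_{i,\ell}^{-1/p} a$ of the $i$-th block assign $\hat w_\rho = p_{i,\ell}^{-1} w^\ast_{(\mathrm{row})}$, and to each row $b_j$ of $B$ assign $\hat w_{b_j} = w^\ast_{(b_j)}$. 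A direct computation of exponents shows $\hat w_\rho = (\rho^\top G^\dagger \rho)^{p/2}$ for \emph{every} row $\rho$ of $M'$, i.e.\ $\hat W$ is exactly what one step of the Lewis map for $M'$ produces when fed the quadratic form $G$.

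The second step is to show $\hat G := (M')^\top \hat W^{1-2/p} M' \approx G$ spectrally. Since the rescaling exponents cancel, the contribution of a retained row to $\hat G$ is $p_{i,\ell}^{-1}(w^\ast)^{1-2/p} a a^\top$, whose expectation over the independent samplings is exactly the block's contribution $A_i^\top (W^\ast_i)^{1-2/p} A_i$ to $G$ (the deterministically kept rows and all of $B$ match $G$ term by term), so $\E \hat G = G$. For concentration I would apply a matrix Chernoff bound in the geometry of $G$: working on $\mathrm{range}(G) = \mathrm{rowspace}(M)$, each random rank-one summand $p_{i,\ell}^{-1}(w^\ast)^{1-2/p}\, G^{-1/2} a a^\top G^{-1/2}$ has operator norm $p_{i,\ell}^{-1} w^\ast_{(\mathrm{row})}$ (using $a^\top G^\dagger a = (w^\ast)^{2/p}$). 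The crucial estimate is that this is at most $1/\beta$: by the monotonicity of Lewis weights (Lemma~\ref{lem:monotonicity}, applied after permuting $A_i$'s rows to the front) $w^\ast_{(\mathrm{row})} = w(M) \le w(A_i)$, while the sampling rule gives $p_{i,\ell} \ge \beta w(A_i)$ whenever $p_{i,\ell}<1$. With summands bounded by $1/\beta$ and total expectation equal to the identity on $\mathrm{range}(G)$, the matrix Chernoff inequality yields $(1-\eta)G \preceq \hat G \preceq (1+\eta)G$ with probability $1-\delta$ exactly for $\beta = \Theta(\eta^{-2}\log(d/\delta))$; this event also forces $\mathrm{range}(\hat G)=\mathrm{range}(G)$, so the pseudoinverse quadratic forms satisfy $\rho^\top \hat G^\dagger \rho = (1\pm \cO(\eta))\,\rho^\top G^\dagger \rho$ for every row $\rho$ of $M'$.

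The third step turns the spectral bound into entrywise control via a fixed-point stability argument, which I expect to be the main obstacle. Let $\Phi$ be the Lewis map of $M'$, $\Phi(V)_\rho = (\rho^\top((M')^\top V^{1-2/p}M')^\dagger \rho)^{p/2}$, whose unique fixed point is $W'$. Combining $\hat w_\rho = (\rho^\top G^\dagger \rho)^{p/2}$ with the previous paragraph gives $\Phi(\hat W)_\rho = (\rho^\top \hat G^\dagger \rho)^{p/2} = (1\pm\cO(\eta))\hat w_\rho$, i.e.\ $\hat W$ is an $\cO(\eta)$-approximate fixed point of $\Phi$ in the metric $d(U,V) = \norm{\log U - \log V}_\infty$. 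The key analytic input is that $\Phi$ is a contraction in this metric: the map from log-weights to log-leverage-scores is $1$-Lipschitz (each row of its Jacobian sums in absolute value to exactly $1$), and composing with the exponents $W\mapsto W^{1-2/p}$ and $(\cdot)^{p/2}$ multiplies the Lipschitz constant by $|1-2/p|\cdot \tfrac p2 = 1-\tfrac p2 \le \tfrac12$ for $p\in[1,2]$; this contraction underlies the convergence of the iteration~\eqref{eq:lw_iter} established in~\cite{CP15}. A strict contraction with factor $c = 1-\tfrac p2$ together with an $\cO(\eta)$-approximate fixed point gives $d(\hat W, W') \le \cO(\eta)/(1-c) = \cO(\eta)$ uniformly in $p$. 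Restricting to the rows of $B$, where $\hat w_{b_j} = w^\ast_{(b_j)} = w_{n_1+\cdots+n_r+j}(M)$, this reads $w_{m_1+\cdots+m_r+j}(M') = (1\pm\cO(\eta))\,w_{n_1+\cdots+n_r+j}(M)$, and rescaling $\eta$ by an absolute constant yields the claimed bound. The delicate points are the uniform handling of the endpoints $p=1,2$ (covered since $1-c \ge \tfrac12$ throughout) and the range/pseudoinverse bookkeeping, both of which are controlled above.
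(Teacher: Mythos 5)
Your proposal is correct, and its first two steps coincide with the paper's own proof (Lemma~\ref{lem:compression_lewis_weights_appendix}): the paper assigns exactly the same candidate weights $w'_j = w_{\nu_{i-1}+s_i(j)}(M)/p_{i,s_i(j)}$ to retained rows and $w_{n_1+\cdots+n_r+j}(M)$ to the rows of $B$, and proves the spectral approximation $(M')^\top (W')^{1-2/p}M' = (1\pm\eta)\,M^\top W_M^{1-2/p}M$ by the same matrix Chernoff argument (Lemma~\ref{lem:core_matrix_approx}), with the summand bound $p_{i,\ell}^{-1}w(M)\leq 1/\beta$ coming from the same use of monotonicity, $w_{\nu_{i-1}+\ell}(M)\leq w_\ell(A_i)$. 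Where you genuinely diverge is the last step, converting the approximate consistency $(\hat w_\rho)^{2/p}=(1\pm\cO(\eta))\,\rho^\top \hat G^\dagger \rho$ into entrywise control of the true Lewis weights of $M'$. The paper does this with Lemma~\ref{lem:good_Lewis_initial}, a one-shot scaling argument: setting $\gamma=\sup\{c>0: \hat w_\rho\geq c\,w_\rho(M')\ \forall\rho\}$ and feeding the monotone dependence of the quadratic form on the weights back into the fixed-point equation gives $\gamma\geq (1+\cO(\eta))^{-1}$ directly, with no constant loss. You instead invoke the contraction of the Lewis map in the metric $\norm{\log U-\log V}_\infty$ with factor $1-p/2$ together with the standard approximate-fixed-point estimate $d(\hat W,W')\leq \cO(\eta)/(p/2)$; this is the route underlying the Cohen--Peng convergence analysis, is uniformly valid on $p\in[1,2]$ since $p/2\geq 1/2$, and only costs a harmless factor $2/p\leq 2$ absorbed by rescaling $\eta$. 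Both stability arguments exploit the same homogeneity (scaling the weights by $c$ scales the iterate by $c^{1-p/2}$), so the overall architecture is the same; the paper's version is just a slightly sharper, self-contained form of the fact you import from the contraction. One cosmetic remark: the map whose log-Jacobian rows sum to $-1$ is $v\mapsto a_i^\top(\sum_j v_j a_j a_j^\top)^{-1}a_i$, the quadratic form appearing in the iteration~\eqref{eq:lw_iter}, rather than the leverage score $v_i\,a_i^\top(\cdot)^{-1}a_i$; your computation is the correct one, only the name is off.
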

A full version of the preceding lemma and its proof are postponed to Lemma~\ref{lem:compression_lewis_weights_appendix}.
Now we turn to prove Theorem~\ref{thm:compression-matrix}.
\begin{proof}\textbf{of Theorem~\ref{thm:compression-matrix} }
Observe that each block $B_i$ is the compressed version of $2^i$ smaller matrices, say, $A_1,\dots,A_{2^i}$, and each smaller matrix is compressed at most $i$ times. The compression scheme inside $B_i$ can be represented by a tree $T_i$, which satisfies that the root of $T_i$ has $i$ children  $T_{i-1},T_{i-2},\dots,T_0$. Every internal node of the tree represents a compression operation, which subsamples (with rescaling) the vertical concatenation of its children. An illustration of $T_i$ is shown in Figure~\ref{fig:tree}.

\begin{figure}
\centering
\begin{tikzpicture}[scale=0.8]
\node[circle,draw,inner sep =0.05em](A1) at (0,0) {$A_{1}$};
\node[circle,draw,inner sep =0.05em](A2) at (1,1) {$A_{2}$};
\node[circle,draw,inner sep =0.05em](A3) at (2,1) {$A_{3}$};
\node[circle,draw,inner sep =0.05em](A4) at (3,2) {$A_{4}$};
\node[circle,draw,inner sep =0.5em](S1)at (0,1) {};
\node[circle,draw,inner sep=0.5em](S2) at (0.5,2) {};
\node[circle,draw,inner sep=0.5em](S3) at (2,2) {};
\node[circle,draw,inner sep=0.5em](S4) at (1.5,3) {};
\node[circle,draw,inner sep=0.5em](S5) at (1.5,4) {};
\node[](T1) at (3,0) {$T_{i-1}$};

\draw[-] (A1)--(S1);
\draw[-] (S1)--(S2);
\draw[-] (A2)--(S2);
\draw[-] (S3)--(A3);
\draw[-] (S4)--(S2);
\draw[-] (S4)--(S3);
\draw[-] (S4)--(A4);
\draw[dotted,thick] (S5)--(S4);
\draw[dashed] (-0.7,-0.5) to [bend left =40] (S5);
\draw[dashed] (3.7,-0.5) to [bend right =40] (S5);
\draw[dashed] (-0.7,-0.5)--(3.7,-0.5);
\node[circle,draw,inner sep=0.05em](A5) at (4.6,1.6) {$A_{5}$};
\node[circle,draw,inner sep=0.5em](S6) at (4.6,2.6) {};
\node[circle,draw,inner sep=0.05em](A6) at (5.6,2.6) {$A_{6}$};
\node[circle,draw,inner sep=0.5em](S7) at (5.1,3.6) {};
\node[circle,draw,inner sep=0.5em](S8) at (5.1,4.6) {};
\node[](T2) at (5.7,1.4) {$T_{i-2}$};

\draw[-] (A5)--(S6);
\draw[-] (S6)--(S7);
\draw[-] (A6)--(S7);
\draw[dotted,thick] (S7)--(S8);
\draw[dashed] (4.2,1.1) to [bend left =25] (S8);
\draw[dashed] (6.2,1.1) to [bend right =25] (S8);
\draw[dashed] (4.2,1.1) to (6.2,1.1);

\node[circle,draw,inner sep=0.02em,font=\footnotesize](A7) at (7.5,3.5) {$A_{\scriptscriptstyle{2^i\!-\!1}}$}; 
\node[circle,draw, inner sep=0.5em](S9) at (7.5,4.5) {};
\node[](T3) at (7.8,2.6) {$T_1$};

\draw[-] (A7)--(S9);
\draw[dashed] (6.8,2.9) to (8.2,2.9);
\draw[dashed] (6.8,2.9) to [bend left =20] (S9.west);
\draw[dashed] (8.2,2.9) to [bend right=20] (S9.east);

\node[circle,draw,inner sep=0.04em](A9) at (8.7,4.5) {$A_{\scriptscriptstyle{2^i}}$};
\node[](T0) at (8.75,3.8) {$T_{0}$};

\node[circle,draw,inner sep=0.5em](Bi) at (4,6.1) {};
\draw[-] (S5)--(Bi);
\draw[-] (S8)--(Bi);
\draw[-] (S9)--(Bi);
\draw[-] (A9)--(Bi);

\node[](D) at (6.3,4.5) {$\cdots$};

\end{tikzpicture}
\caption{Tree structure of $T_i$ for block $B_i$}\label{fig:tree}
\end{figure}
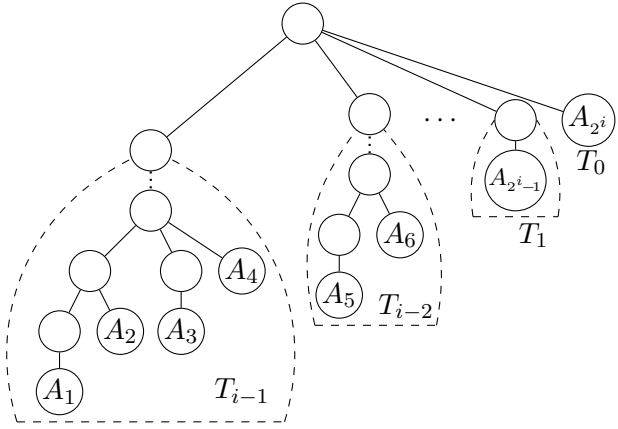

Consider a decompression process which begins at the root and goes down the tree level by level. When going down a level, we decompress each internal node on that level into the vertical concatenation of its children. When the decompression process is completed, we will have a vertical concatenation of the leaves, namely, $A_1\circ A_2\circ \cdots \circ A_{2^i}$, which is a submatrix of $A^{(t)}$.

Let $i^\ast$ be the largest $i$ such that $B_i$ is nonempty. Consider the decompression process of all blocks $B_{\log n}\circ \cdots \circ B_0$. This process will terminate in $i^\ast$ steps,
\[
A^{(t,i^\ast)} \to A^{(t,i^\ast-1)} \to \cdots \to A^{(t,0)},
\]
where $A^{(t,i^\ast)} = B_{\log n}\circ \cdots \circ B_0$ and $A^{(t,0)} = A^{(t)}$. Let $\tilde{w}_{t,j} = w_{\last}(A^{(t,j)})$. Note that $\tilde{w}_{t,0} = w_t(A^{(t)})$. By Lemma~\ref{lem:compression-online-lewis-weights} and our choices of parameters, it holds that
\[
 \left(1-\frac{\eta}{2\log n}\right)\tilde{w}_{t,j} \leq \tilde{w}_{t,j+1}\leq \left(1+\frac{\eta}{2\log n}\right)\tilde{w}_{t,j}
\]
with probability at least $1-\delta/\poly(n)$.  Iterating yields that
\[
 \left(\!1\!-\!\frac{\eta}{2\log n}\!\right)^{\!i\ast} \! w_{t}(A^{(t)}) \leq \tilde{w}_{t,i^\ast} 
 \leq \left(\!1\!+\!\frac{\eta}{2\log n}\!\right)^{\!i^\ast} \! w_{t}(A^{(t)}). 
\]
Note that $\tilde{w}_{t,i^\ast} = \tilde{w}_t$ per~\eqref{eqn:compressed_update}. Since $i^\ast\leq \log n$, we have
\[
(1-\eta)w_{t}(A^{(t)}) \leq \tilde{w}_{t,i^\ast} \leq (1+\eta)w_{t}(A^{(t)}).
\]
Taking a union bound over all $t$ gives the claimed result.
\end{proof}

\subsection{Proof of Theorem~\ref{thm:p-compression-OAR}} \label{sec:omitted_proofs}
We may assume that $n > \frac{d}{\eps^2} (\poly(\log d \log n \log \kappa^{\OL}) \log \frac{1}{\delta}$, otherwise it will not be necessary to sample for solving the regression problem.

The main lemma we shall prove is Lemma~\ref{lem:translation-preserve}. Before proving it, we state a series of lemmas, namely Lemmas~\ref{lem:constant-factor-approximation}, \ref{lem:3.6} and \ref{lem:3.7}, which, together with Lemma~\ref{lem:translation-preserve}, will prove Theorem~\ref{thm:p-compression-OAR}. We do not repeat the proof of  Lemmas~\ref{lem:constant-factor-approximation}, \ref{lem:3.6} and \ref{lem:3.7} because they are almost identical to those in~\cite{MMWY}, except that Lewis weights are replaced with online Lewis weights, which does not affect all these proofs since it is always true that $w_i^{\OL}(A)\geq w_i(A)$.

\begin{lemma}[Theorem 3.2 in {\cite{MMWY}}]
\label{lem:constant-factor-approximation}
Let $A\in\R^{n\times d}$, $b\in\R^n$, $p \in [1,2]$. If we sample $A$ and obtain $x_c$ by Algorithm~\ref{alg:p-simple-OAR} or Algorithm~\ref{alg:online-active-regression} with $\beta=\Theta(\log (d/\delta))$ then with probability at least $1-\delta$, 
\[
\Norm{Ax_c-b}_p \leq 2^{1+\frac{1}{p}} 3/\delta ^{\frac{1}{p}} R.
\]
\end{lemma}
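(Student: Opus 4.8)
The plan is to show that the rescaled sampling matrix $S$ that produces $\tilde A = SA$ and $\tilde b = Sb$ (the first-level sketch, common to both Algorithm~\ref{alg:p-simple-OAR} and Algorithm~\ref{alg:online-active-regression}) enjoys two properties that together force $x_c$ to be a constant-factor approximation: it is a constant-distortion $\ell_p$ subspace embedding for the column space of $A$, and it is unbiased on $p$-th powers of coordinate norms, so it cannot inflate the norm of the \emph{fixed} optimal residual by more than a $\delta^{-1/p}$ factor. The conceptual crux, and the reason the argument is not simply ``$S$ embeds $[A\ b]$'', is that $S$ is built from the online Lewis weights of $A$ alone and need not embed the label direction at all; the label/residual contribution must instead be controlled separately by a Markov bound on a single fixed vector.

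Concretely, let $x^\ast$ be a minimizer of $\norm{Ax-b}_p$ and $R=\norm{Ax^\ast-b}_p$. First I would invoke the unbiasedness built into the definition of the rescaled sampling matrix: each retained coordinate is reweighted by $p_i^{-1/p}$ with probability $p_i$, so $\E\norm{S(Ax^\ast-b)}_p^p = \norm{Ax^\ast-b}_p^p = R^p$ for the fixed vector $Ax^\ast-b$. Markov's inequality then gives $\norm{S(Ax^\ast-b)}_p \le (2/\delta)^{1/p}R = 2^{1/p}\delta^{-1/p}R$ with probability at least $1-\delta/2$.

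Second, since $w_i^{\OL}(A)\ge w_i(A)$ by Lemma~\ref{lem:monotonicity}, the sampling probabilities dominate $\min\{\beta w_i(A),1\}$ (constant-factor approximate weights only change $\beta$ by a constant), so Lemma~\ref{lem:lewis_weights_sampling} with a fixed constant distortion $\eps_0$, enabled by $\beta=\Theta(\log(d/\delta))$, yields $(1-\eps_0)\norm{Ay}_p \le \norm{SAy}_p$ for all $y\in\R^d$ with probability at least $1-\delta/2$. I would then combine optimality of $x_c$ in the sketched problem, namely $\norm{SAx_c-Sb}_p \le \norm{SAx^\ast-Sb}_p = \norm{S(Ax^\ast-b)}_p$, with the lower embedding bound and two triangle inequalities:
\begin{align*}
\norm{Ax_c-b}_p
&\le \norm{A(x_c-x^\ast)}_p + R
\le (1-\eps_0)^{-1}\norm{SA(x_c-x^\ast)}_p + R\\
&\le (1-\eps_0)^{-1}\bigl(\norm{SAx_c-Sb}_p + \norm{S(Ax^\ast-b)}_p\bigr) + R
\le 2(1-\eps_0)^{-1}\norm{S(Ax^\ast-b)}_p + R.
\end{align*}
Plugging in the Markov bound gives $\norm{Ax_c-b}_p \le 2(1-\eps_0)^{-1}2^{1/p}\delta^{-1/p}R + R$, and taking $\eps_0$ small together with the crude bound $R\le 2\cdot 2^{1+1/p}\delta^{-1/p}R$ (valid since $\delta\le 1$) yields the stated $2^{1+1/p}\cdot 3\,\delta^{-1/p}R$; a union bound over the two events of probability $1-\delta/2$ gives overall success probability $1-\delta$.

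The main obstacle is conceptual rather than computational: one must recognise that an embedding for $A$ alone suffices once the label direction is tamed by unbiasedness, so no embedding guarantee for $b$ is ever invoked. This is precisely why the offline argument of~\cite{MMWY} transfers to the online setting without change: replacing $w_i(A)$ by $w_i^{\OL}(A)$ only raises the sampling probabilities, which simultaneously preserves the subspace-embedding lower bound (Lemma~\ref{lem:lewis_weights_sampling} still applies) and leaves unbiasedness intact (it holds for any rescaled sampling matrix). The only care needed is to ensure the \emph{same} $S$ realises both high-probability events, which the split failure budget handles.
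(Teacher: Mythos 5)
Your argument is correct and is essentially the proof the paper relies on: the paper does not reprove this lemma but cites Theorem 3.2 of \cite{MMWY}, whose proof is exactly this combination of a constant-distortion Lewis-weight subspace embedding for $A$ alone with an unbiasedness-plus-Markov bound on the fixed optimal residual, and the paper's only added observation is the one you make, namely that $w_i^{\OL}(A)\geq w_i(A)$ so the online sampling probabilities still dominate $\min\{\beta w_i(A),1\}$. The one small caveat is that for Algorithm~\ref{alg:online-active-regression} the sampling probabilities are computed adaptively from previously \emph{sampled} rows, so the rows of $S$ are not independent and the constant-distortion embedding should be taken from Lemma~\ref{lem:spc-approx} (the martingale-based guarantee) rather than from Lemma~\ref{lem:lewis_weights_sampling}; the remainder of your argument, including the Markov step (which survives by the tower property), is unaffected.
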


In the remainder of this subsection, we define $z = b - Ax_c$ with a constant $\delta$ in Lemma~\ref{lem:constant-factor-approximation} and $R = \norm{z}_p$, then $R \leq C\min_{x\in\R^d} \norm{Ax-b}_p$ for some constant $C>0$.

\begin{lemma}[Lemma 3.6 in {\cite{MMWY}}] 
\label{lem:3.6}
Let $\mathcal{B}$ be an index set such that $\mathcal{B} = \{ i\in[n]: \frac{\abs{z_i}^p}{R^p} \geq \frac{w^{\OL}_i(A)}{\eps^p} \}$. Let $\Bar{z}$ be equal to $z$ but with all entries in $\mathcal{B}$ set to $0$. Then for all $x\in\R^d$ with $\norm{Ax}_p \leq R$,
$$\Abs{\Norm{Ax-z}_p^p- \Norm{Ax-\bar{z}}_p^p- \Norm{z-\bar{z}}_p^p}= \cO(\eps)R^p.$$
\end{lemma}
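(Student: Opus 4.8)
The plan is to reduce the claim to a coordinate-wise estimate and then exploit the fact that, on the ``heavy'' set $\mathcal{B}$, the contribution of $(Ax)_i$ is negligible compared with $z_i$. First I would split $[n]$ according to membership in $\mathcal{B}$. For $i\notin\mathcal{B}$ we have $\bar z_i = z_i$, so the $i$-th coordinate contributes $\abs{(Ax)_i - z_i}^p$ to both $\Norm{Ax-z}_p^p$ and $\Norm{Ax-\bar z}_p^p$ and contributes $0$ to $\Norm{z-\bar z}_p^p$; hence its net contribution to the difference is exactly $0$. For $i\in\mathcal{B}$ we have $\bar z_i = 0$, so the net contribution is $\abs{(Ax)_i - z_i}^p - \abs{(Ax)_i}^p - \abs{z_i}^p$. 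Thus the entire quantity equals $\sum_{i\in\mathcal{B}}\big(\abs{(Ax)_i - z_i}^p - \abs{(Ax)_i}^p - \abs{z_i}^p\big)$, and it remains to bound this sum by $\cO(\eps)R^p$.

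The key step is to show that on $\mathcal{B}$ the term $(Ax)_i$ is small relative to $z_i$. Here I would invoke the standard fact that for $p\in[1,2]$ the $\ell_p$ Lewis weights upper bound the $\ell_p$ sensitivities, i.e.\ $\abs{(Ax)_i}^p \le w_i(A)\Norm{Ax}_p^p$ for all $x$ (up to an absolute constant that is absorbed into the $\cO(\cdot)$) \cite{CP15}. By the monotonicity of Lewis weights (Lemma~\ref{lem:monotonicity}), $w_i(A)\le w_i^{\OL}(A)$, and since we assume $\Norm{Ax}_p\le R$, this gives $\abs{(Ax)_i}^p \le w_i^{\OL}(A) R^p$. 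On the other hand, the defining inequality of $\mathcal{B}$ states precisely that $w_i^{\OL}(A) R^p \le \eps^p\abs{z_i}^p$ for $i\in\mathcal{B}$. Combining the two yields $\abs{(Ax)_i} \le \cO(\eps)\abs{z_i}$ for every $i\in\mathcal{B}$. The use of the online weights is harmless here: relaxing $w_i(A)$ to the larger $w_i^{\OL}(A)$ only weakens the sensitivity bound, while $\mathcal{B}$ is itself defined through $w_i^{\OL}$, so the two sides match up.

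With the smallness $\abs{(Ax)_i}\le\cO(\eps)\abs{z_i}$ in hand, I would finish with an elementary scalar inequality: for $p\in[1,2]$ and reals $u,v$ with $\abs{u}\le\eps\abs{v}$, one has $\abs{\,\abs{u-v}^p - \abs{u}^p - \abs{v}^p\,} = \cO(\eps)\abs{v}^p$. This follows by applying the mean value theorem to $t\mapsto\abs{t}^p$ on the segment between $v$ and $v-u$: the relevant argument has absolute value at most $(1+\eps)\abs{v}\le 2\abs{v}$, so $\abs{\,\abs{u-v}^p-\abs{v}^p\,}\le p\,2^{p-1}\abs{v}^{p-1}\abs{u}\le p\,2^{p-1}\eps\abs{v}^p$, and adding $\abs{u}^p\le\eps^p\abs{v}^p$ gives the claim. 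Applying this with $u=(Ax)_i$ and $v=z_i$ for each $i\in\mathcal{B}$ and summing, the total is at most $\cO(\eps)\sum_{i\in\mathcal{B}}\abs{z_i}^p \le \cO(\eps)\Norm{z}_p^p = \cO(\eps)R^p$, as desired.

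The main obstacle I anticipate is the sensitivity bound $\abs{(Ax)_i}^p\le w_i(A)\Norm{Ax}_p^p$, which is the one genuinely nontrivial ingredient; it is a property of the Lewis weights \emph{and} the column space of $A$ (it fails as a pure inequality about the weights), so one must cite or reprove it uniformly over $p\in[1,2]$. Everything downstream is mechanical, the only minor care being the non-smoothness of $t\mapsto\abs{t}^p$ at the origin when $p=1$, which is handled directly: when $v>0$ and $\abs{u}\le\eps v$ one has $u-v<0$, so $\abs{u-v}-\abs{u}-\abs{v} = -u-\abs{u}$, whose modulus is at most $2\abs{u}\le 2\eps\abs{v}$. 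I expect no additional difficulty from the online setting itself, since the only change from~\cite{MMWY} is the replacement of $w_i(A)$ by $w_i^{\OL}(A)$, whose validity is guaranteed by monotonicity.
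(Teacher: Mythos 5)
Your proposal is correct and follows essentially the same argument as the one this paper relies on (the paper defers to Lemma 3.6 of \cite{MMWY}, whose proof is precisely this coordinate-wise splitting over $\mathcal{B}$, the sensitivity bound $\abs{(Ax)_i}^p \leq w_i(A)\norm{Ax}_p^p$ for $p\in[1,2]$, and the scalar perturbation inequality). You also correctly identify the only point where the online setting enters, namely that $w_i(A)\leq w_i^{\OL}(A)$ lets the sensitivity bound chain with the definition of $\mathcal{B}$ in terms of online Lewis weights, which is exactly the observation the paper makes when asserting the proof carries over.
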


\begin{lemma}[Lemma 3.7 in {\cite{MMWY}}] 
\label{lem:3.7}
Consider the same setting in Lemma~\ref{lem:3.6}. 
Let $S$ be a sampling matrix according to the online Lewis weights with oversampling parameter $\beta = \Omega(\log d)$. 
With probability at least $0.995$, $\norm{Sz}_p^p=\cO(R^p)$ and for any $x\in\R^d$ with $\norm{Ax}_p\leq R$, it holds that 
$$\Abs{\Norm{SA x - Sz}_p^p - \Norm{SAx - S\bar{z}}_p^p - \Norm{Sz-S\bar{z}}_p^p} = \cO(\eps)R^p.$$
\end{lemma}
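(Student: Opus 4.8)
The plan is to reduce the left-hand side to a single randomized sum over the ``heavy'' index set $\mathcal{B}$ and then bound it coordinatewise by quantities that do not depend on $x$, so that uniformity over $\{x:\norm{Ax}_p\le R\}$ comes essentially for free. Writing $\norm{Sv}_p^p=\sum_i(\mathbbm{1}_S)_i p_i^{-1}\abs{v_i}^p$ and recalling that $\bar z_i=z_i$ for $i\notin\mathcal{B}$ while $\bar z_i=0$ for $i\in\mathcal{B}$, the three sampled norms cancel coordinatewise off $\mathcal{B}$, and I would obtain
\[
\Norm{SAx-Sz}_p^p-\Norm{SAx-S\bar z}_p^p-\Norm{Sz-S\bar z}_p^p=\sum_{i\in\mathcal{B}}(\mathbbm{1}_S)_i p_i^{-1}f_i(x),
\]
where $f_i(x)=\abs{(Ax)_i-z_i}^p-\abs{(Ax)_i}^p-\abs{z_i}^p$. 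Hence it suffices to control $\sum_{i\in\mathcal{B}}(\mathbbm{1}_S)_i p_i^{-1}\abs{f_i(x)}$.

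Next I would isolate two per-coordinate facts. First, an elementary inequality valid for $p\in[1,2]$, namely $\abs{f_i(x)}\le 3\bigl(\abs{(Ax)_i}^p+\abs{(Ax)_i}\abs{z_i}^{p-1}\bigr)$, which follows from the mean-value estimate $\bigl|\abs{a-b}^p-\abs{b}^p\bigr|\le p\abs{a}(\abs a+\abs b)^{p-1}$ together with the subadditivity of $t\mapsto t^{p-1}$ on $[0,\infty)$ when $p\le 2$. Second, the Lewis-weight sensitivity bound $\abs{(Ax)_i}=\abs{a_i^\top x}\le w_i(A)^{1/p}\norm{Ax}_p\le (w_i^{\OL}(A))^{1/p}R$, where the last step uses the monotonicity $w_i(A)\le w_i^{\OL}(A)$ from Lemma~\ref{lem:monotonicity} and $\norm{Ax}_p\le R$. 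Combined with the defining threshold of $\mathcal{B}$, which gives $\abs{z_i}\ge (w_i^{\OL}(A))^{1/p}R/\eps$ for $i\in\mathcal{B}$, these yield the two crucial \emph{uniform} (in $x$) bounds $\abs{(Ax)_i}\le\eps\abs{z_i}$ and $\abs{(Ax)_i}^p\le w_i^{\OL}(A)\,R^p$ on $\mathcal{B}$; summing the latter and using $\sum_i\abs{z_i}^p=R^p$ also gives $\sum_{i\in\mathcal{B}}w_i^{\OL}(A)\le\eps^p$.

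With these in hand I would finish by bounding the two resulting sums separately. For the cross term, $\abs{(Ax)_i}\abs{z_i}^{p-1}\le\eps\abs{z_i}^p$ gives $\sum_{i\in\mathcal{B}}(\mathbbm{1}_S)_i p_i^{-1}\abs{(Ax)_i}\abs{z_i}^{p-1}\le\eps\Norm{S(z-\bar z)}_p^p\le\eps\Norm{Sz}_p^p$, and since $\E\Norm{Sz}_p^p=\norm{z}_p^p=R^p$, Markov's inequality yields $\Norm{Sz}_p^p=\cO(R^p)$ with large constant probability (this is also the first claim of the lemma). For the pure term, $\abs{(Ax)_i}^p\le w_i^{\OL}(A)R^p$ gives $\sum_{i\in\mathcal{B}}(\mathbbm{1}_S)_i p_i^{-1}\abs{(Ax)_i}^p\le R^p\sum_{i\in\mathcal{B}}(\mathbbm{1}_S)_i p_i^{-1}w_i^{\OL}(A)$, a random variable not depending on $x$ whose expectation is $\sum_{i\in\mathcal{B}}w_i^{\OL}(A)\le\eps^p$, so Markov again gives $\cO(\eps^p)R^p=\cO(\eps)R^p$ with large constant probability. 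A union bound over these two constant-probability events delivers the claim at confidence $0.995$.

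The point I want to stress is that, unlike the complementary estimate handled by the main lemma (Lemma~\ref{lem:translation-preserve-informal}), this statement needs no chaining or $\eps$-net: the sensitivity bound $\abs{(Ax)_i}\le (w_i^{\OL}(A))^{1/p}R$ decouples the $x$-dependence coordinatewise, so both tail bounds hold simultaneously for \emph{all} admissible $x$ after a single application of Markov's inequality. The only genuinely delicate points are getting the elementary $\ell_p$ inequality with the correct exponents and checking that the monotonicity inequality points the favorable way (we need an \emph{upper} bound on $\abs{(Ax)_i}$ in terms of $w_i^{\OL}$, and indeed $w_i^{\OL}(A)\ge w_i(A)$); neither presents a real obstacle, which is precisely why replacing Lewis weights by online Lewis weights leaves the argument of~\cite{MMWY} intact.
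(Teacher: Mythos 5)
Your proof is correct and follows essentially the argument the paper intends: the paper does not spell out a proof of Lemma~\ref{lem:3.7} but defers to Lemma 3.7 of \cite{MMWY}, noting only that replacing $w_i(A)$ by $w_i^{\OL}(A)$ is harmless since $w_i^{\OL}(A)\geq w_i(A)$, and your coordinatewise reduction to $\mathcal{B}$, the sensitivity bound $|a_i^\top x|\leq w_i(A)^{1/p}\norm{Ax}_p\leq (w_i^{\OL}(A))^{1/p}R$, and the two Markov bounds are exactly that argument written out. The only cosmetic remark is that your proof never needs $\beta=\Omega(\log d)$ (only unbiasedness of the rescaled sampling), which is consistent with the cited source.
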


Next, we turn to prove the main lemma, which we restate below. 

\begin{lemma}[Main lemma]
\label{lem:translation-preserve}
Consider the same setting in Lemma~\ref{lem:3.6}. Let $S\in \R^{r\times n}$ be the rescaled sampling matrix with respect to $\{ p_{i} \}_{(i)}$ such that $p_{i}=\min \{\beta\tilde{w}^{\OL}_{i}, 1 \}$ and $\beta = \Theta(\frac{1}{\eps^2} \log^2 d \log n \log \frac{1}{\delta})$, it holds that
\[
\Pr \left \{ \max_{\Norm{Ax}_p\leq R} \Abs{\Norm{SA x-S \bar{z}}_p^p - \Norm{Ax - \bar{z}}_p^p } \geq \eps R^p \right \} \leq \delta.
\]
\end{lemma}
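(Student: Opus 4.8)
The plan is to read the claim as a one-shot, uniform $\ell_p$ subspace-embedding statement with a translation, and to prove it by the symmetrization-and-chaining machinery of~\cite{CP15}, feeding the translation $\bar z$ into the argument through the coordinate bounds supplied by the definition of $\mathcal{B}$. Writing $v=Ax$ and letting $\mathbbm{1}_i$ be the indicator that row $i$ is kept by $S$ (so the rows of $S$ are independent; the non-independent $p=2$ case is handled separately in Appendix~\ref{sec:p=2 appendix}), the object to control is
\[
f(v) = \Norm{SAx-S\bar z}_p^p-\Norm{Ax-\bar z}_p^p = \sum_{i=1}^n\Big(\tfrac{\mathbbm{1}_i}{p_i}-1\Big)\abs{v_i-\bar z_i}^p,
\]
a sum of independent mean-zero terms, and the target is $\sup_{\norm{v}_p\le R}\abs{f(v)}\le\eps R^p$ with probability $1-\delta$. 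First I would reduce, by standard symmetrization, to bounding the expected supremum of the Rademacher process $\E\sup_v\Abs{\sum_i\sigma_i\tfrac{\mathbbm{1}_i}{p_i}\abs{v_i-\bar z_i}^p}$, deferring the high-probability upgrade to a final concentration step.

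The geometric input that makes the analysis go through is the flatness of Lewis weights. Passing to the change of basis under which $A$ becomes a matrix with orthonormal columns whose squared row norms equal the $w_i(A)$, one obtains that every $v=Ax$ with $\norm{v}_p\le R$ obeys $\abs{v_i}\le w_i(A)^{1/p}R\le w_i^{\OL}(A)^{1/p}R$, the last inequality being exactly the monotonicity $w_i(A)\le w_i^{\OL}(A)$ of Lemma~\ref{lem:monotonicity}; this is the \emph{only} place the online weights enter, and since $p_i\ge\min\{\beta w_i^{\OL}(A),1\}$ are only larger than their offline counterparts, every subsequent estimate can only improve relative to~\cite{CP15,MMWY}. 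Meanwhile the definition of $\mathcal{B}$ gives $\abs{\bar z_i}\le\eps^{-1}w_i^{\OL}(A)^{1/p}R$ for all $i$ (trivially on $\mathcal{B}$, where $\bar z_i=0$). Together these yield the two bounds that drive everything: a uniform per-summand bound $\tfrac1{p_i}\abs{v_i-\bar z_i}^p=\cO(\eps^{-p}R^p/\beta)$ on the coordinates that are genuinely subsampled, and, after extracting one such factor and summing the rest against $\norm{v-\bar z}_p^p=\cO(R^p)$, a variance bound $\sum_i\operatorname{Var}\big(\tfrac{\mathbbm{1}_i}{p_i}\abs{v_i-\bar z_i}^p\big)=\cO(\eps^{-p}R^{2p}/\beta)$ for each fixed $v$.

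With these in hand the real work is the uniform bound. A naive $\eta$-net of the subspace intersected with the $\ell_p$-ball has log-cardinality $\Theta(d\log(1/\eta))$, so a union bound with Bernstein would inject a spurious factor of $d$ into $\beta$; to reach the stated $\beta$, whose $d$-dependence is only polylogarithmic, I would instead run Dudley's entropy integral (a $\gamma_2$-type chaining bound) in the pseudometric induced by the reweighted $\ell_2$ structure of the Lewis change of basis. Since $t\mapsto\abs{t}^p$ is not globally Lipschitz, the chaining must be preceded by a truncation that uses the flatness bound to confine each coordinate to the range where the local Lipschitz constant of $\abs{\cdot}^p$ is controlled, after which the contraction principle reduces the chaining increments to those of a linear Rademacher process over the $d$-dimensional column space. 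Pushing the per-summand and variance estimates through the entropy integral gives $\E\sup_v\abs{f(v)}=\cO(\eps R^p)$ for $\beta=\Theta(\eps^{-2}\log^2 d\,\log n\,\log\tfrac1\delta)$, with the $\log n$ coming from the diameter of the process in the online-weight metric and the $\log^2 d$ from the chaining levels. Finally I would upgrade to the $1-\delta$ tail by a Talagrand-type (functional Bernstein / bounded-differences) inequality for the supremum of a sum of bounded independent terms, the per-summand bound supplying the bounded-differences constant and the variance bound the weak variance; this accounts for the $\log(1/\delta)$ factor in $\beta$.

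The hard part will be the chaining: controlling the non-Lipschitz $\abs{\cdot}^p$ uniformly over a continuous subspace without paying a polynomial factor in $d$. This is exactly where the Lewis-weight geometry is indispensable and where the bulk of the technical estimates of~\cite{CP15,MMWY} are imported. The truncation-then-contraction device, together with the routine but essential check that substituting $w_i^{\OL}(A)$ for $w_i(A)$ leaves each estimate intact (again via $w_i(A)\le w_i^{\OL}(A)$), is the crux of the argument.
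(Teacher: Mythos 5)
Your expectation-level analysis is essentially the paper's: symmetrize to a Rademacher process, exploit $w_i(A)\le w_i^{\OL}(A)$ together with the conformity bound $\abs{\bar z_i}^p\le \eps^{-p}w_i^{\OL}(A)R^p$, and import the Dudley/chaining estimates of \cite{CP15,MMWY} (our Lemma~\ref{lem:uniform-moment-bound}). One gap is structural: the chaining lemma you invoke requires the matrix to have \emph{uniformly flat} Lewis weights $\cO(dW/n)$, which the original $A$ does not have; the per-coordinate bound $\abs{v_i}\le w_i(A)^{1/p}R$ alone does not make the entropy integral go through. The paper therefore conditions on the realized sample, checks via Lemma~\ref{lem:compression_lewis_weights_appendix} that the rescaled sampled matrix $SA$ has Lewis weights in $[\tfrac{1}{2\beta},\tfrac{3}{2\beta}]$ and that $S\bar z$ conforms to $(SA,\eps,2R)$, and only then applies the chaining bound to $SA$ with $N$ rows. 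Your proposal never makes this conditioning and flatness verification explicit, and it is precisely the content of Lemma~\ref{lem:Ax-z-preserve}.

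The more serious gap is your last step. You bound $\E\sup\abs{f}$ and then propose to reach the $1-\delta$ tail by a Talagrand/bounded-differences inequality in the sampling randomness, attributing the $\log\frac{1}{\delta}$ in $\beta$ to that step. With your own parameters --- per-summand bound $M=\cO(\eps^{-p}R^p/\beta)$ and weak variance $\sigma^2=\cO(\eps^{-p}R^{2p}/\beta)$ --- any Bernstein-type concentration at scale $t=\eps R^p$ gives failure probability $\exp\left(-c\min\left(t^2/\sigma^2,\,t/M\right)\right)=\exp\left(-c\,\eps^{2+p}\beta\right)$, which forces $\beta=\Omega(\eps^{-(2+p)}\log\frac{1}{\delta})$, a loss of $\eps^{-p}$ against the claimed $\Theta(\eps^{-2}\log^2 d\log n\log\frac{1}{\delta})$. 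The indicators $\mathbbm{1}_i/p_i$ are genuinely sub-exponential, so this loss is not an artifact of a crude estimate. The paper avoids it by never concentrating in the $S$-randomness at all: it bounds the $\ell$-th moment of the symmetrized process with $\ell=\log\frac{1}{\delta}$, extracting the $\log\frac{1}{\delta}$ from the \emph{subgaussian} tail of the Rademacher supremum (the tail form of Dudley's bound plus Proposition~\ref{prop:subgaussian}, applied conditionally on $S\in\mathcal{E}$), and finishes with Markov's inequality on the $\ell$-th moment. To repair your argument you must move the $\log\frac{1}{\delta}$ from your concentration step into the chaining step itself.
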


For notation simplicity, given $A\in \R^{n\times d}$ and $\eps,R > 0$, we say $z\in \R^n$ conforms to $(A,\eps,R)$ if $|z_i|^p \leq (R/\eps)^p w_i^{\OL}(A)$.

The next lemma is the generalization of Lemma 3.8 in \cite{MMWY}; here we give the $\ell$-th moment bound.
\begin{lemma}[Online version of {\cite[Lemma 3.8]{MMWY}}]\label{lem:uniform-moment-bound}
Let $p\in [1,2]$ and $\eps,R > 0$. Suppose that $A \in \R^{n \times d}$ with Lewis weights bounded by $\cO(\frac{d W }{n})$, where $W$ satisfies that $n = \Omega(\frac{d}{\eps^2}W\log^2 d \log n \log\frac{1}{\delta})$, and $\bar z\in \R^n$ conforms to $(A,\eps,R)$. Let 
\[
\Lambda=\max_{\norm{Ax}_p \leq R} \Abs{\sum_{k=1}^{n} \sigma_{k} \Abs{ a_k^{\top}x - \bar{z}_k }^p},
\] 
where $\sigma_{k}$'s are independent Rademacher variables, then it holds for $\ell = \log(1/\delta)$ that
\[
\E_{\sigma} \left[ \Lambda^\ell \right] \leq (\eps R^p)^{\ell} \delta.
\]
\end{lemma}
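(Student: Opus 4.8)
The plan is to treat $\Lambda$ as the supremum of a mean-zero Rademacher process
\[
X_x=\sum_{k=1}^{n}\sigma_k f_x(k),\qquad f_x(k)=\Abs{a_k^\top x-\bar z_k}^p,
\]
indexed by the $\ell_p$-ball $T=\{x:\norm{Ax}_p\le R\}$, and to bound its $\ell$-th moment by combining a chaining estimate for $\E_\sigma\Lambda$ with a Talagrand-type moment inequality whose ``variance'' and ``sup'' inputs come from the flatness of the Lewis weights. First I would record three deterministic bounds valid for every $x\in T$. For the \emph{sum}, since $\norm{Ax}_p\le R$ and $\norm{\bar z}_p=\cO(R)$ (in the relevant regime $\bar z$ is a truncation of a residual of norm $R$), we get $\sum_k f_x(k)=\Norm{Ax-\bar z}_p^p=\cO(R^p)$. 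For the \emph{coordinate} bound, the Lewis-weight flatness inequality $\abs{a_k^\top x}^p\le w_k(A)\norm{Ax}_p^p$ together with $w_k(A)=\cO(dW/n)$ and the conforming bound $\abs{\bar z_k}^p\le (R/\eps)^p w_k^{\OL}(A)$ gives $\max_k f_x(k)=\cO(\frac{dW}{n}R^p)$. Combining the two yields the \emph{variance} bound $\E_\sigma X_x^2=\sum_k f_x(k)^2\le(\max_k f_x(k))\sum_k f_x(k)=\cO(\frac{dW}{n}R^{2p})$.

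Next I would bound the expectation $\E_\sigma\Lambda$ by an entropy/chaining argument, following the offline proof in~\cite{MMWY}. The difficulty here is that $t\mapsto\abs{t}^p$ is not globally Lipschitz, so I would first split each coordinate into a ``light'' part (where $\abs{a_k^\top x-\bar z_k}$ is below a threshold, on which $\abs{\cdot}^p$ is Lipschitz and the contraction principle applies) and a ``heavy'' part (controlled directly by the coordinate bound above). After the contraction the problem reduces to a linear Rademacher process over $T$, for which Dudley's entropy integral and the Lewis-weight entropy estimate for the $\ell_p$-ball give $\E_\sigma\Lambda=\cO(\sqrt{\frac{dW\log^2 d\,\log n}{n}}\,R^p)$. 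Under the hypothesis $n=\Omega(\frac{d}{\eps^2}W\log^2 d\,\log n\,\log\frac1\delta)$ this is at most $\frac{\eps}{\sqrt{\log(1/\delta)}}R^p$.

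I would then upgrade the expectation bound to the $\ell$-th moment by a Talagrand-type concentration (functional Bernstein) inequality for suprema of Rademacher processes, which gives
\[
(\E_\sigma\Lambda^\ell)^{1/\ell}\le \E_\sigma\Lambda + C\sqrt{\ell}\,\sup_{x\in T}(\E_\sigma X_x^2)^{1/2}+C\ell\,\sup_{x\in T,\,k}f_x(k).
\]
Plugging in the variance and coordinate bounds, the second term is $\cO(\sqrt{\frac{dW\ell}{n}}R^p)$ and the third is $\cO(\frac{dW\ell}{n}R^p)$; with $\ell=\log\frac1\delta$ and the same hypothesis on $n$, each of the three terms is at most $\frac{1}{3e}\eps R^p$ after fixing the constant in $\Omega(\cdot)$, so $(\E_\sigma\Lambda^\ell)^{1/\ell}\le \eps R^p/e=\eps R^p\,\delta^{1/\ell}$. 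Raising to the $\ell$-th power gives $\E_\sigma\Lambda^\ell\le(\eps R^p)^\ell\delta$, as claimed.

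The main obstacle is the chaining estimate for $\E_\sigma\Lambda$: correctly handling the non-Lipschitz $p$-th power through the light/heavy decomposition and then obtaining the $\log^2 d\,\log n$ entropy bound for the $\ell_p$-ball under the Lewis-weight geometry. This step is essentially inherited from~\cite{MMWY}, and the only genuinely new part---the passage to a general $\ell$-th moment---is comparatively routine once the variance and coordinate bounds above are in hand.
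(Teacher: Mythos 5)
Your route to the expectation bound $\E_\sigma\Lambda$ (chaining with a light/heavy split to tame the non-Lipschitz $p$-th power, then Dudley's integral under the Lewis-weight geometry) is the same as the paper's, which simply invokes the argument of \cite[Lemma 3.8]{MMWY} with the weight bound $\cO(d/n)$ replaced by $\cO(dW/n)$. Where you diverge is the moment upgrade, and that is where there is a genuine gap. You propose a functional-Bernstein inequality $(\E_\sigma\Lambda^\ell)^{1/\ell}\leq\E_\sigma\Lambda+C\sqrt{\ell}\,\sigma+C\ell b$ with $\sigma^2=\sup_x\sum_k f_x(k)^2$ and $b=\sup_{x,k}f_x(k)$, and you compute $b=\cO(\frac{dW}{n}R^p)$. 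But the conforming condition only gives $\abs{\bar z_k}^p\leq(R/\eps)^p w_k^{\OL}(A)$, so the correct coordinate bound is $b=\cO(\frac{dW}{n}\cdot\frac{R^p}{\eps^p})$ --- you have dropped an $\eps^{-p}$ (compare the paper's own $p=2$ computation in Lemma~\ref{lem:modification}, where exactly this factor is what forces $\beta=\Omega(\eps^{-4}(\cdots))$ there). With the correct $b$, under the stated hypothesis $n=\Omega(\frac{d}{\eps^2}W\log^2 d\log n\log\frac{1}{\delta})$ the term $C\ell b$ is of order $\frac{\eps^{2-p}}{\log^2 d\log n}R^p$ and the term $C\sqrt{\ell}\,\sigma$ (using your own estimate $\sigma^2\leq b\sum_k f_x(k)$) is of order $\frac{\eps^{1-p/2}}{\log d\sqrt{\log n}}R^p$; for $p>1$ and small $\eps$ neither is $\cO(\eps R^p)$, so the arithmetic does not close. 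A second, smaller issue: your bound $\sum_k f_x(k)=\norm{Ax-\bar z}_p^p=\cO(R^p)$ uses $\norm{\bar z}_p=\cO(R)$, which is not part of ``conforms to $(A,\eps,R)$'' as the paper defines it (it does hold for the $\bar z$ of Lemma~\ref{lem:3.6}, but it is an extra hypothesis you are consuming).

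The paper sidesteps both problems by never introducing $\sigma$ and $b$ as separate quantities: it takes the tail-bound version of Dudley's integral, $\Pr\{\Lambda\geq C\sqrt{d/n}(\log d\sqrt{W\log n}+z)R^p\}\leq 2\exp(-z^2)$, in which the subgaussian fluctuation scale $CR^p\sqrt{d/n}$ is the diameter produced by the \emph{same} chaining analysis that yields $\E_\sigma\Lambda$ (so the $\eps^{-p}$ from the conforming condition never surfaces as a standalone additive term), and then converts this subgaussian tail into an $\ell$-th moment bound via Proposition~\ref{prop:subgaussian}. If you want to keep your scheme, you would need to replace the crude bound $\sigma^2\leq(\max_k f_x(k))(\sum_k f_x(k))$ by the diameter estimate that actually comes out of the chaining --- at which point you have essentially reconstructed the paper's argument.
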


Lemma 3.8 in \cite{MMWY} proves that $\Lambda$ has a subgaussian tail. Using the following property of the subgaussian variable, we can obtain $\Lambda$'s $\ell$-th moment bound.

\begin{proposition}\label{prop:subgaussian}
Let $X$ be a subgaussian variable such that $\Pr\{ |X| > z\} \leq C\exp(-c z^2)$ for some constants $C, c > 0$ and every $z>0$. Then
$
\left( \E |X|^{\ell} \right)^{\frac{1}{\ell}} \leq K\sqrt{\ell}
$ for all $\ell \geq 1$, where $K$ is a constant that depends on $C$ and $c$ only.
\end{proposition}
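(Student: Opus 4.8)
The plan is to establish the tail-to-moment implication through the standard layer-cake identity combined with the explicit subgaussian tail. First I would express the $\ell$-th absolute moment as
\[
\E|X|^\ell = \int_0^\infty \ell z^{\ell-1}\Pr\{|X| > z\}\,dz,
\]
which holds for every $\ell \geq 1$ by Tonelli's theorem applied to the nonnegative variable $|X|^\ell$. Inserting the hypothesis $\Pr\{|X|>z\} \leq C e^{-cz^2}$ yields $\E|X|^\ell \leq C\ell \int_0^\infty z^{\ell-1} e^{-cz^2}\,dz$, reducing the whole problem to controlling a single one-dimensional Gaussian-type integral.

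Next I would evaluate this integral in closed form via the change of variables $u = cz^2$. A direct computation turns $\int_0^\infty z^{\ell-1} e^{-cz^2}\,dz$ into $\tfrac{1}{2}c^{-\ell/2}\Gamma(\ell/2)$, and then, using the functional identity $\ell\,\Gamma(\ell/2) = 2\,\Gamma(\ell/2+1)$, the prefactor $C\ell/2$ collapses to give the clean bound
\[
\E|X|^\ell \leq \frac{C}{c^{\ell/2}}\,\Gamma\!\left(\tfrac{\ell}{2}+1\right).
\]

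Finally I would take the $\ell$-th root and bound each factor uniformly in $\ell \geq 1$. The prefactor contributes $C^{1/\ell} \leq \max\{1,C\}$ together with $c^{-1/2}$, both constants depending only on $C$ and $c$. For the Gamma factor I would use the elementary inequality $\Gamma(x+1)\leq x^x$ (valid for $x\geq 1$), which for $\ell \geq 2$ gives $\Gamma(\ell/2+1)^{1/\ell} \leq (\ell/2)^{1/2} \leq \sqrt{\ell}$; for the remaining range $1\leq \ell < 2$ the argument $\ell/2+1$ lies in a bounded interval on which $\Gamma$ is at most an absolute constant, so that case is dispatched separately. Assembling these pieces yields $(\E|X|^\ell)^{1/\ell} \leq K\sqrt{\ell}$ with $K = K(C,c)$ alone. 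Since this is a classical fact about subgaussian variables, there is no genuine obstacle; the only point requiring care is the bookkeeping that makes $K$ uniform over all $\ell \geq 1$ — in particular handling the small-$\ell$ regime where the crude Stirling-type bound on $\Gamma$ does not directly apply — which I would treat as a short separate case rather than as a substantive difficulty.
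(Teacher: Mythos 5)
Your proof is correct and complete. The paper itself gives no proof of Proposition~\ref{prop:subgaussian}, treating it as a classical property of subgaussian variables; your argument---the layer-cake identity, the substitution $u=cz^2$ giving $\E|X|^\ell \leq C\,c^{-\ell/2}\,\Gamma(\ell/2+1)$, and the bound $\Gamma(x+1)\leq x^x$ for $x\geq 1$ with the range $1\leq \ell<2$ dispatched separately (where $\Gamma(\ell/2+1)\leq 1$)---is exactly the standard textbook derivation and yields a valid uniform constant, e.g.\ $K=\max\{1,C\}\,c^{-1/2}$.
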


\begin{proof}\textbf{of Lemma~\ref{lem:uniform-moment-bound}}
We will follow the approach in the proof of \cite[Lemma 3.8]{MMWY} and only highlight the changes. Their lemma assumes that the upper bound of Lewis weights are $\cO(d/n)$ and we shall modify it to $\cO(dW/n)$. This upper bound on Lewis weights was used in  \cite[Equation (5)]{MMWY}. Changing it to $\cO(dW/n)$ yields, via the same Dudley's integral as in the proof of \cite[Lemma 3.8]{MMWY}, that
\[
\E_{\sigma} \Lambda  \leq C' \left( \frac{dW}{n} \log^2 d \log n\right)^{\frac{1}{2}} R^p,
\]
where $C'>0$ is an absolute constant. The tail-bound version of Dudley's integral then implies that
\[
\Pr\left\{\Lambda\geq C\sqrt{\frac{d}{n}} \left(\log d \sqrt{W \log n} + z \right) R^p \right\} \leq 2\exp(-z^2),
\]
where $C > 0$ is an absolute constant.
Hence, let $L=\log d \sqrt{W\log n}$ and we have $\Pr\{ \frac{\Lambda}{CR^{p} \sqrt{d/n}}-L >z \} \leq 2\exp(-z^2)$. The variable $\frac{\Lambda}{CR^p \sqrt{d/n}}-L$ has a subgaussian tail. It follows from Proposition~\ref{prop:subgaussian} that
$\E_{\sigma} \left(\frac{\Lambda}{CR^p \sqrt{d/n}} \right)^{\ell} \leq (L + K\sqrt{\ell})^{\ell}$.
Therefore, we have 
\[
\E_{\sigma} \Lambda^{\ell} \leq \left( \frac{L + K \sqrt{\ell}}{\sqrt{\eps^2 n/d}} \right)^{\ell} (\eps R^p)^{\ell} \leq (\eps R^p)^{\ell} \delta,
\]
where the second inequality is from our assumption that $n > \frac{e^2 d}{\eps^2} (L + K\sqrt{\ell})^2$, provided that $\ell = \log\frac{1}{\delta}$.
\end{proof}

The next lemma is an analogous version to Lemma 7.4 in \cite{CP15}, adapted to the online active regression setting.

\begin{lemma}
\label{lem:Ax-z-preserve}
Suppose there exists $\ell\geq 1$ such that whenever a matrix $A = a_{1} \circ a_{2} \circ \dots \circ a_{n} \in \R^{n\times d}$ has Lewis weights uniformly bounded by $\cO(d W/n)$, where $W$ satisfies that $n = \Omega(\frac{d}{\eps^2} (W \log^2 d \log n \log \frac{1}{\delta} ))$,
it holds for all $R > 0$ and $\bar{z} \in \R^n$ conforming to $(A,\eps,R)$ that
\[
\E_{\sigma}\left[ \left( \max_{\norm{Ax}_p \leq R} \Abs{ \sum_{k=1}^{n} \sigma_{k} \Abs{ a_k^{\top}x - \bar{z}_k }^p } \right)^{\ell} \right] \leq (\eps R^p)^{\ell} \delta.
\] 
Then, let $A \in \R^{n \times d}$, $\bar{z} \in\R^n$ be as defined in Lemma~\ref{lem:3.6}
and $S$ be the rescaled sampling matrix with respect to the online Lewis weights of A with  oversampling parameter $\beta = \Theta(\frac{1}{\eps^{2}} \log^2 d \log n \log \frac{1}{\delta})$. 
With probability at least $1-\delta$, it holds that
\begin{enumerate}[label=(\roman*)]
\item the number of rows in $S$ is
\[
\cO\left(\frac{d}{\eps^{2}} \log^2 d \log^2 n \log \kappa^{\OL}(A) \log \frac{1}{\delta} \right)
\]
and
\item
\[
\max_{\Norm{Ax}_p\leq R} \Abs{\Norm{SA x-S \bar{z}}_p^p - \Norm{Ax - \bar{z}}_p^p } \leq \eps R^p.
\]
\end{enumerate}
\end{lemma}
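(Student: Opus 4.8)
The plan is to prove the two conclusions separately, with (i) being a short counting step and (ii) the substantive part. For (i), recall that the rescaled sampling matrix keeps row $i$ independently with probability $p_i=\min\{\beta w_i^{\OL}(A),1\}$, so the expected number of retained rows is $\sum_i p_i\le \beta\sum_i w_i^{\OL}(A)$. By Lemma~\ref{lem:sum_of_online_lewis_weights}, $\sum_i w_i^{\OL}(A)=\cO(d\log n\cdot\log\kappa^{\OL}(A))$, and substituting $\beta=\Theta(\eps^{-2}\log^2 d\log n\log\frac1\delta)$ gives the claimed order $\cO(\eps^{-2}d\log^2 d\log^2 n\log\kappa^{\OL}(A)\log\frac1\delta)$. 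Since the rows are kept independently, a Chernoff (or moment) concentration bound upgrades this expectation to a high-probability bound of the same order, which yields (i).

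For (ii), I would follow the reduction-to-uniform-Lewis-weights scheme behind \cite[Lemma 7.4]{CP15}, using the assumed $\ell$-th moment bound as the per-level building block. Write the deviation as $\Delta(x)=\Norm{SAx-S\bar z}_p^p-\Norm{Ax-\bar z}_p^p=\sum_i(\xi_i/p_i-1)\abs{a_i^\top x-\bar z_i}^p$, where $\xi_i\in\{0,1\}$ indicates that row $i$ is kept. Rows with $p_i=1$ satisfy $\xi_i/p_i-1=0$ and contribute nothing, so only rows with $w_i^{\OL}(A)<1/\beta$ matter; I partition these into dyadic levels $L_j=\{i:w_i^{\OL}(A)\in(2^{-(j+1)},2^{-j}]\}$. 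By Lemma~\ref{lem:sum_of_online_lewis_weights} each $\abs{L_j}=\cO(2^j d\log n\log\kappa^{\OL}(A))$, and there are $\cO(\log n)$ levels with $\beta 2^{-j}\abs{L_j}=\Omega(1)$; the rows of still smaller online Lewis weight form a negligible tail, controlled directly using the conformity $\abs{\bar z_i}^p\le (R/\eps)^p w_i^{\OL}(A)$ and $\Norm{Ax}_p\le R$. This gives $\Delta(x)=\sum_j\Delta_j(x)$ with $\Delta_j(x)=\sum_{i\in L_j}(\xi_i/p_i-1)\abs{a_i^\top x-\bar z_i}^p$.

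On each level I would symmetrize $\Delta_j$ and condition on the random sampled set $L_j^S$. Setting $\hat a_i=a_i p_i^{-1/p}$ and $\hat z_i=\bar z_i p_i^{-1/p}$ turns the symmetrized level sum into $\sum_{i\in L_j^S}\sigma_i\abs{\hat a_i^\top x-\hat z_i}^p$, which is precisely the form to which the moment hypothesis applies, with fed matrix $\hat A_j=(SA)_{L_j^S}$ of $m_j\approx\beta 2^{-j}\abs{L_j}$ rows. The crucial structural facts to verify are: (a) because $\hat A_j$ is a Lewis-weight subsample of $A_{L_j}$ with oversampling $\beta$, its Lewis weights are flattened to $\cO(1/\beta)=\cO(dW_j/m_j)$ with $W_j\approx 2^{-j}\abs{L_j}/d$; (b) conformity is inherited, since monotonicity (Lemma~\ref{lem:monotonicity}) applied to the online prefixes gives $w_i^{\OL}(\hat A_j)\gtrsim w_i^{\OL}(A)$, so that $\abs{\hat z_i}^p=\abs{\bar z_i}^p/p_i\le (R/\eps)^p w_i^{\OL}(A)/p_i=\cO((R/\eps)^p/\beta)$ still conforms to $(\hat A_j,\eps,R)$; and (c) the sample-size requirement $m_j=\Omega(\frac{d}{\eps^2}W_j\log^2 d\log n\log\frac1\delta)$ holds by the choice of $\beta$, indeed with equality up to constants. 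With (a)--(c) in place, the hypothesis gives $\E_\sigma[\max_x\abs{\sum_{i\in L_j^S}\sigma_i\abs{\hat a_i^\top x-\hat z_i}^p}^\ell]\le(\eps R^p)^\ell\delta$.

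Finally I would assemble the levels via $\max_x\abs{\Delta(x)}\le\sum_j\max_x\abs{\Delta_j(x)}$, bound the $\ell$-th moment with $\ell=\log\frac1\delta$, and apply Markov's inequality together with a union bound over the $\cO(\log n)$ levels; this union is exactly what the extra $\log n$ factor in $\beta$ pays for. I must also reconcile the domain, since the hypothesis controls $x$ with $\Norm{\hat A_j x}_p\le R$ whereas I want $\Norm{Ax}_p\le R$: because $\Norm{\hat A_j x}_p^p=\sum_{i\in L_j^S}p_i^{-1}\abs{a_i^\top x}^p$ has expectation at most $\Norm{Ax}_p^p$, a concentration step shows $\{x:\Norm{Ax}_p\le R\}\subseteq\{x:\Norm{\hat A_j x}_p\le \cO(R)\}$ with high probability, after which rescaling $\eps$ by a constant closes the argument. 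The main obstacle I expect is this per-level reduction: verifying simultaneously that the \emph{random} submatrix $\hat A_j$ has uniform Lewis weights (via the flattening property of Lewis-weight sampling) and preserves conformity through online-Lewis-weight monotonicity, while correctly transferring the constraint from $A$ to $\hat A_j$ and keeping the $\log n$ bookkeeping tight enough that the level union costs no more than the $\log n$ already budgeted in $\beta$.
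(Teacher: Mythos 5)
Your argument for part (i) is fine and matches the paper: bound $\E[\text{rows}]\le\beta\sum_i w_i^{\OL}(A)$ via Lemma~\ref{lem:sum_of_online_lewis_weights} and apply a Chernoff bound. The problem is in part (ii), specifically your structural claim (a), which is the load-bearing step of the whole level decomposition and which is false as stated. The flattening property of Lewis-weight sampling says that if you sample the rows of a matrix $A$ with probabilities $p_i\gtrsim\beta w_i(A)$, then the Lewis weights of the rescaled sampled rows \emph{computed within the full sampled matrix} $SA$ are $\Theta(1/\beta)$ (this is Lemma~\ref{lem:compression_lewis_weights_appendix}, and it crucially concerns the whole concatenation). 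It does \emph{not} say anything about the standalone Lewis weights of the submatrix $\hat A_j=(SA)_{L_j^S}$ obtained by restricting to one dyadic level. Those standalone weights can be as large as $1$: by monotonicity, deleting the rows outside $L_j^S$ only increases Lewis weights, and a level $L_j$ may contain as few as $\Theta(d)$ rows each with $w_i^{\OL}(A)\approx 2^{-j}$ small, in which case $\hat A_j$ has $O(d)$ rows whose standalone Lewis weights sum to $d$ and are each $\Theta(1)$. Your hypothesis requires the fed matrix to have Lewis weights $\cO(dW_j/m_j)$ with $m_j=\Omega(\eps^{-2}dW_j\log^2 d\log n\log\frac1\delta)$, i.e.\ weights $\ll 1$, so it simply cannot be invoked on such a level. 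Note also that $\hat A_j$ is not a Lewis-weight subsample \emph{of $A_{L_j}$}: the sampling probabilities $\beta w_i^{\OL}(A)$ need not dominate $\beta' w_i(A_{L_j})$ for any useful $\beta'$, since $w_i(A_{L_j})\ge w_i(A)$ can be much larger than $w_i^{\OL}(A)$.

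The paper sidesteps this entirely by never splitting into levels: after the same symmetrization you perform, it conditions on the event $\mathcal{E}$ that (a) $S$ is a $1/2$-subspace embedding, (b) the Lewis weights of \emph{all of} $SA$ lie in $[\frac{1}{2\beta},\frac{3}{2\beta}]$ (Lemma~\ref{lem:compression_lewis_weights_appendix}), and (c) $SA$ has $N=\Theta(\beta\sum_iw_i^{\OL}(A))$ rows, and then applies the moment hypothesis \emph{once} to the whole matrix $SA$ with $W=\log n\log\kappa^{\OL}(A)$, checking that $S\bar z$ conforms to $(SA,\eps,2R)$ and that the constraint transfers via $\norm{SAx}_p\le 2R$. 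All three of your verification points (a)--(c) then hold for the full $SA$, whereas only (b) and (c) survive restriction to a level. If you want to salvage a level decomposition you would need a version of the moment hypothesis stated for Rademacher sums supported on a subset of rows of a matrix with globally flat Lewis weights, which is a different (and unproved) statement; as written, the single global application is both simpler and the only one your building block supports.
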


\begin{proof}
Note that the sampling probability $p_i = \min \{ \beta w_i, 1 \}$. If $\beta w_{i}>1$, we have $p_i = 1$ and hence $S(Ax-\bar{z})_i = (Ax-\bar{z})_i$. Therefore, we only consider the case $\beta w_i \leq 1$. Let
\[
M=  \left( \max_{\norm{Ax}_p \leq R} \Abs{\norm{SAx - S\bar{z}}_p^p - \norm{Ax-\bar{z}}_p^p} \right)^\ell.
\] 
Since taking the $\ell$-th power of a maximum is convex, we have 
\begin{align*}
\E_{S} M&= \E_{S}\left[ \left( \max_{\norm{Ax}_p \leq R 
} \Abs{\norm{SAx - S\bar{z}}_p^p  - \E_{S'} \norm{S'Ax - S'\bar{z}}_p^p }  \right)^{\ell} \right] \\
&\leq 
\E_{S,S'}\left[ \left( \max_{\norm{Ax}_p \leq R  
} \Abs{ ( \norm{SAx - S\bar{z}}_p^p -  \norm{S'Ax - S'\bar{z}}_p^p } \right)^{\ell} \right],
\end{align*}
where $S'$ and $S$ are independent and identically distributed rescaled sampling matrices w.r.t.\ Lewis weights. Therefore, we have
\begin{align*}
\norm{SAx - S\bar{z}}_p^p -  \norm{S'Ax - S'\bar{z}}_p^p = \sum_{k=1}^{n} \frac{(\mathbbm{1}_{S})_{k}}{p_{k}} \Abs{a_{k}^{\top} x - \bar{z}_{k}}^p - \sum_{k=1}^{n} \frac{(\mathbbm{1}_{S'})_{k}}{p_{k}} \Abs{a_{k}^{\top} x - \bar{z}_{k}}^p
\end{align*}
Since $S'$ and $S$ are independent and have identical distributions, we have by symmetrization trick that
\[
\E_{S} M \leq 2^{\ell} \E_{S,\sigma} \left[ \left( \max_{\norm{Ax}_p \leq R} \Abs{\sum_{k=1}^{n} \frac{(\mathbbm{1}_{S})_{k}}{p_{k}} \sigma_k \Abs{a_{k}^{\top} x - \bar{z}_{k}}^p} \right)^{\ell} \right] =: 2^\ell \E_{S,\sigma} M'.
\]

It follows from Lemma~\ref{lem:lewis_weights_sampling} that $S$ is a $1/2$-subspace embedding matrix for $A$ with probability at least $1-\delta/3$. Furthermore, by Lemma~\ref{lem:compression_lewis_weights_appendix}, with probability at least $1-\delta/3$, the Lewis weights of the rows from $SA$ are within $[\frac{1}{2\beta},\frac{3}{2\beta}]$. Also by Chernoff bounds, $SA$ has $N =\Theta(\beta\sum_i w_i^{\OL}(A))$ rows with probability at least $1-\delta/3$. Let $\mathcal{E}$ denote the event on $S$ that these three conditions above hold. Then $\Pr(\mathcal{E})\geq 1-\delta$.

Next, fix $S\in\mathcal{E}$. Consider the conditional expectation
\[
\E_{\sigma} \left[ \left. M' \right\vert S\right] = \E_{\sigma}\left[ \left. \max_{\norm{Ax}_p \leq R} \Abs{\sum_{k=1}^{N} \sigma_k \Abs{a^{'\top}_{k} x - \bar{z}^{'}_{k}}^p}^\ell \right\vert S \right],
\]
where $a_k^{'\top}x$ and $\bar{z}^{'}_{k}$ are the $k$-th coordinates of $SAx$ and $S\bar{z}$. 

Recall that $S$ is a $1/2$-subspace embedding matrix for $A$ when conditioned on $\mathcal{E}$, we have for any $x \in \R^d$ that $\norm{SA x}_p \leq \frac{3}{2}\norm{Ax}_p \leq 2R$, which implies that  
\begin{align*}
    \max_{\norm{Ax}_p \leq   R} \Abs{\sum_{k=1}^{N} \sigma_k \Abs{a'^{\top}_{k} x - \bar{z}'_{k}}^p}^\ell \leq
    \max_{\norm{SA x}_p \leq 2R} \Abs{\sum_{k=1}^{N} \sigma_k \Abs{a'^{\top}_{k} x - \bar{z}'_{k}}^p}^\ell
\end{align*}

Now, we verify that $SA$ has small Lewis weights and $S\bar{z}$ conforms to $(SA, \eps, 2R)$. First, recall that the Lewis weights of $SA$ are within $[\frac{1}{2\beta},\frac{3}{2\beta}]$, where $\frac{1}{\beta} = \Theta(\frac{d \log n \log \kappa^{\OL}(A) }{N})$. Hence, $W=\log n \log \kappa^{\OL}(A)$ and $N =  \Omega(d\beta \log n\log \kappa^{\OL}(A)) = \Omega(\frac{d}{\eps^2} W \log^2 d \log n \log \frac{1}{\delta})$ as desired. Second, the coordinates $\abs{S \bar{z}_k}^p = \frac{|\bar{z}_k|^p}{\beta w_k(A)} \leq \frac{R^p}{\eps^p \beta} \leq 2(\frac{R}{\eps})^p w_{k}(SA) \leq (\frac{2 R}{\eps})^p w_{k}(SA)$. It then follows from the assumption of the lemma that
\[
\E_\sigma \left[ \left. M' \right\vert S\right] \leq \left(\eps 2^p R^p \right)^\ell \delta.
\]
Rescaling $\eps = \eps/2^{p+1}$ and taking expectation over $S$ while conditioned on $\mathcal{E}$, we have that
\[
\E_{S,\sigma} \left[ \left. M' \right\vert \mathcal{E}\right] \leq \left(\frac{\eps R^p}{2} \right)^\ell \delta.
\]
It then follows that
\[
\E_S \left[ \left. M \right\vert \mathcal{E}\right] \leq \left( \eps R^p \right)^\ell \delta
\]
and, by Markov's inequality,
\[
    \Pr \left \{ \left. \max_{\norm{Ax}_p\leq R} \Abs{\norm{SA x-S \bar{z}}_p^p - \norm{Ax - \bar{z}}_p^p } \geq \eps R^p \right\vert \mathcal{E} \right \} 
    \leq \Pr\left\{ \left. M \geq (\eps R^p)^\ell \right\vert \mathcal{E} \right\}
\leq \delta.
\]
Rescaling $\delta = \delta/2$ for a union bound completes the proof.
\end{proof}


Finally, Lemma~\ref{lem:translation-preserve} is now immediate by combining Lemmas~\ref{lem:uniform-moment-bound} and~\ref{lem:Ax-z-preserve}. 
Combining the results of Lemmas~\ref{lem:3.6}, \ref{lem:3.7} and \ref{lem:Ax-z-preserve} and following the proof in \cite[Lemma 3.9]{MMWY} gives the following guarantee.

\begin{lemma}\label{lem:Ax-z-optimal-approximation}
    Consider the same setting in Lemma~\ref{lem:constant-factor-approximation} and let $z = b - Ax_c$. Suppose that $S$ is a sampling matrix according to the online Lewis weights of $A$ with oversampling factor $\beta = \Theta(\frac{\log^2 d}{\eps^2} \log n \log\frac{1}{\delta})$ and $\bar{x} = \arg \min_{x \in \R^d} \norm{S Ax - S z}_p$. 
    It holds with probability at least $0.99-\cO(\delta)$ that
    \[
        \norm{A\bar{x} - z}_p^p \leq \left(1 + \cO(\eps)\right)\min_{x\in\R^d} \norm{Ax-z}_p^p.
    \]
\end{lemma}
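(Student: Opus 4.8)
The plan is to show that the sketched minimizer $\bar{x}$ is near-optimal by a telescoping comparison against the true minimizer $x^\ast := \arg\min_{x}\norm{Ax-z}_p$, shuttling each objective between the original and sketched domains through the three structural lemmas already in hand. Write $R_{\opt} = \norm{Ax^\ast - z}_p$. Since $x_c$ is a constant-factor approximation (Lemma~\ref{lem:constant-factor-approximation}), testing $x=0$ gives $R_{\opt}\leq\norm{z}_p = R$, while $\norm{z}_p = \norm{Ax_c-b}_p = \cO(\min_x\norm{Ax-b}_p) = \cO(R_{\opt})$; hence $R$ and $R_{\opt}$ agree up to a constant and it suffices to prove $\norm{A\bar x - z}_p^p \leq R_{\opt}^p + \cO(\eps)R^p$.

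First I would assemble the master chain. By Lemma~\ref{lem:3.6} at $\bar x$, $\norm{A\bar x - z}_p^p \leq \norm{A\bar x - \bar z}_p^p + \norm{z-\bar z}_p^p + \cO(\eps)R^p$, and Lemma~\ref{lem:Ax-z-preserve} transfers the first term to the sketched domain as $\norm{A\bar x - \bar z}_p^p \leq \norm{SA\bar x - S\bar z}_p^p + \eps R^p$. Applying the sketched decomposition of Lemma~\ref{lem:3.7} at both $\bar x$ and $x^\ast$ together with the optimality $\norm{SA\bar x - Sz}_p \leq \norm{SAx^\ast - Sz}_p$, the two copies of $\norm{Sz - S\bar z}_p^p$ cancel, leaving $\norm{SA\bar x - S\bar z}_p^p \leq \norm{SAx^\ast - S\bar z}_p^p + \cO(\eps)R^p$. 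Transferring back by Lemma~\ref{lem:Ax-z-preserve} at $x^\ast$ and then Lemma~\ref{lem:3.6} at $x^\ast$ yields $\norm{SAx^\ast - S\bar z}_p^p \leq \norm{Ax^\ast - \bar z}_p^p + \eps R^p = R_{\opt}^p - \norm{z-\bar z}_p^p + \cO(\eps)R^p$. Chaining, the two copies of $\norm{z-\bar z}_p^p$ now also cancel, giving $\norm{A\bar x - z}_p^p \leq R_{\opt}^p + \cO(\eps)R^p$. The decisive feature is that the uncontrolled heavy-coordinate masses $\norm{z-\bar z}_p^p$ and $\norm{Sz-S\bar z}_p^p$ each enter with both signs and vanish, so I never have to estimate them directly.

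The main obstacle I anticipate is that Lemmas~\ref{lem:3.6}, \ref{lem:3.7} and~\ref{lem:Ax-z-preserve} quantify only over the ball $\{x:\norm{Ax}_p\leq R\}$, whereas $\bar x$ is an \emph{unconstrained} minimizer. To legitimise applying them at $\bar x$ I would first certify $\norm{A\bar x}_p = \cO(R)$: optimality against $x=0$ gives $\norm{SA\bar x - Sz}_p \leq \norm{Sz}_p = \cO(R)$ (the estimate $\norm{Sz}_p^p = \cO(R^p)$ being part of Lemma~\ref{lem:3.7}), whence $\norm{SA\bar x}_p = \cO(R)$, and the subspace-embedding property of $S$ (Lemma~\ref{lem:lewis_weights_sampling}, with online Lewis weights) converts this to $\norm{A\bar x}_p = \cO(R)$; the same bound for $x^\ast$ follows from $\norm{Ax^\ast}_p \leq R_{\opt} + \norm{z}_p = \cO(R)$. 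One then reruns the chain with radius $R$ replaced by a constant multiple $R' = cR$, which is harmless since every error term is $\cO(\eps)R'^p = \cO(\eps)R^p$ and the oversampling parameter $\beta$ already carries the constant slack. Finally I would combine the probabilities: conditioning on $x_c$ being good (Lemma~\ref{lem:constant-factor-approximation}, probability $1-\delta$) fixes $z$, $R$ and $\bar z$ so that Lemma~\ref{lem:3.6} is deterministic, while Lemma~\ref{lem:3.7} (probability $0.995$) and Lemma~\ref{lem:Ax-z-preserve} (probability $1-\delta$, via Lemma~\ref{lem:translation-preserve}) hold simultaneously after a union bound, for overall success probability at least $0.99 - \cO(\delta)$.
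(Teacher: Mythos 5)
Your proposal is correct and follows essentially the same route as the paper, which proves this lemma by combining Lemmas~\ref{lem:3.6}, \ref{lem:3.7} and \ref{lem:Ax-z-preserve} along the lines of Lemma~3.9 of~\cite{MMWY}: the telescoping chain with cancellation of the $\norm{z-\bar z}_p^p$ and $\norm{Sz-S\bar z}_p^p$ terms, the certification that $\norm{A\bar x}_p=\cO(R)$ via optimality against $x=0$ and the subspace-embedding property, and the probability accounting all match the intended argument. Your write-up simply makes explicit the steps the paper delegates to the cited offline proof.
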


We are now ready to prove Theorem~\ref{thm:p-compression-OAR}.

\begin{proof}\textbf{of Theorem~\ref{thm:p-compression-OAR}}
Recall that we can write $\tilde{A}_1 = S_1 A$ for a sampling matrix $S_1$ with respect to the online Lewis weights of oversampling parameter $\beta_1 = \Theta(\eps^{-(2+p)} d\log\frac{1}{\eps\delta} )$.
It follows from Lemmas~\ref{lem:constant-factor-approximation} and~\ref{lem:Ax-z-optimal-approximation} that with probability at least $0.99-\cO(\delta)$,
\[
\norm{S_1 A \hat{x} - S_1 z}_p^p \leq \left(1 + \cO(\eps)\right)\min_{x\in\R^d} \norm{S_1 A x - S_1 z}_p^p,
\]
where $z = b - A x_c$. 
Invoking an analogous argument of Lemma~\ref{lem:Ax-z-optimal-approximation} (following the identical approach in Section 3.5 of \cite{MMWY}), we can obtain that, with probability at least $0.995-\cO(\delta)$,
\[
\norm{A \hat{x} - z}_p^p \leq \left(1 + \cO(\eps)\right)\min_{x\in\R^d} \norm{A x - z}_p^p.
\]
It then follows immediately that with probability at least $0.98-\cO(\delta)$,
\[
\norm{A \tilde{x} - b}_p^p \leq \left(1 + \cO(\eps)\right)\min_{x\in\R^d} \norm{A x - b}_p^p
\]

For the results above to go through, $S_1$ should have oversampling parameter $\beta_1 = \Theta(\frac{d}{\eps^{2+p}} \log\frac{1}{\eps\delta})$, resulting in 
\[
    N = \cO\left(\beta_1 \sum_{i=1}^n w_i^{\OL}(A)\right) = \cO\left(\frac{d^2}{\eps^{2+p}} \log\frac{1}{\eps\delta} \log n\log \kappa^{\OL}(A) \right)
\]
rows of $S_1A$ with probability at least $1-\delta$. Also, $S_3$ should have an oversampling parameter 
\[
    \beta_3 = \Theta\left( \frac{\log^2 d}{\eps^2}\log\frac{1}{\delta}\log N \right) = \cO\left( \frac{\log^2 d}{\eps^2}\log\frac{1}{\delta}\left(\log\frac{d}{\eps} + \log\log\frac{1}{\delta}\right) \right),
\]
resulting in
\begin{align*}
    m &= \cO\left(\beta_3 \sum_{i=1}^n w_i^{\OL}(SA)\right) \\
      &= \cO\left(\frac{d\log^2 d}{\eps^2} \log^2 N \log \frac{n\kappa^{\OL}(A)}{\beta_3\delta} \log\frac{1}{\delta} \right) \\
      &= \cO\left(\frac{d\log^2 d}{\eps^2} \left( \log\frac{d}{\eps} + \log\log\frac{1}{\delta} \right)^2 \log \frac{n\kappa^{\OL}(A)}{\delta} \log\frac{1}{\delta} \right) 
\end{align*}
rows of $S_3 S_1 A$ with probability at least $1-\delta$. Here we upper bound $\kappa^{\OL}(SA)$ by Lemma~\ref{lem:sum-sub-online-LW}.

The total number of queried labels is dominated by $m$. Rescaling $\eps$ and $\delta$ gives the claimed result.
\end{proof}

\subsection{Time Complexity for \texorpdfstring{$p=2$}{p=2}}

\begin{lemma}\label{lem:time-lewis}
With probability at least $1-\delta$, the running time of Algorithm~\ref{alg:online-active-regression} over $n$ iterations is $\cO(\nnz(A)\log \frac{n}{\delta} + \frac{d^3}{\eps^4} \log \frac{\norm{A}_2}{\sigma} \log\frac{1}{\eps \delta} (\log \frac{n}{\delta}+d))$.
\end{lemma}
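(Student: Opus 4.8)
The plan is to bound the total runtime by a per-iteration accounting that separates the work done at \emph{every} one of the $n$ iterations from the work done only at the far fewer iterations where a row is actually sampled. Two implementation points make the stated bound achievable and should be stated up front: (a) each of $\inv{G}_i$, $F_i=J_i\tilde{A}_i$, $H_i=F_i\inv{G}_i$ and the vectors $\tilde{A}_i^\top\tilde{b}_i$ is maintained by rank-one updates rather than recomputed from scratch; and (b) the intermediate solutions $x_c^{(t)},\hat{x}_c^{(t)},\bar{x}'^{(t)},\tilde{x}^{(t)}$ are refreshed \emph{lazily}, i.e.\ only when the underlying sampled matrix changes, so that the $\cO(d)$ solution-vector additions in the main loop cost $\cO(Nd)$ rather than $\cO(nd)$. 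The one-time initialization (inverting $(\tilde A^{(d)})^\top\tilde A^{(d)}$ and the \Call{Reg}{} calls on the first $d$ rows) costs $\cO(d^3)$ and is dominated.

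First I would account for the work incurred at \emph{every} iteration, which consists solely of the leverage-score estimates $\tilde{w}_t=\norm{H^{(t-1)}a_t}_2^2$ and $\tilde{w}_{1,t}=\norm{H_1 a_t}_2^2$. Since the JL matrix has only $\cO(\log(n/\delta))$ rows (Definition~\ref{def:JL matrix}), each $H_i$ is a $\cO(\log(n/\delta))\times d$ matrix, so each matrix--vector product costs $\cO(\nnz(a_t)\log(n/\delta))$ (each nonzero of $a_t$ reads one length-$\cO(\log(n/\delta))$ column of $H_i$); summing over $t$ yields the first term $\cO(\nnz(A)\log(n/\delta))$.

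Next I would bound the work at sampled iterations. A row sampled into $\tilde{A}_1$ triggers a Sherman--Morrison update of $\inv{G}_1$ in $\cO(d^2)$ time, a rank-one update of $F_1$, an update of $H_1$, and then the analogous but strictly rarer updates for $\tilde{A}_2$, $\tilde{A}_3$ and the solution vectors. The number of such rows is $N=\cO(\beta_1\sum_i w_i^{\OL}(A))$ with probability $1-\delta$, by a Chernoff bound together with the constant-factor approximation of the computed weights (condition (i)) and the sum bound $\sum_i w_i^{\OL}(A)=\cO(d\log(\norm{A}_2/\sigma))$ from Lemma~\ref{lem:sum-approx-online-2LW}; the deeper levels $\tilde{A}_2,\tilde{A}_3$ receive asymptotically fewer rows and so never dominate. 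With $\beta_1=\Theta((d\log(1/\eps)+\log(1/\delta))/\eps^4)$ this gives $N=\cO(\frac{d^2}{\eps^4}\log\frac{1}{\eps\delta}\log\frac{\norm{A}_2}{\sigma})$, so the $\cO(d^2)$ Sherman--Morrison work contributes $\cO(Nd^2)=\cO(\frac{d^4}{\eps^4}\log\frac{1}{\eps\delta}\log\frac{\norm{A}_2}{\sigma})$, matching the ``$+d$'' term.

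The crux, and the one step to carry out carefully, is showing $H_i$ can be refreshed in $\cO(d\log(n/\delta))$ rather than the naive $\cO(d^2\log(n/\delta))$ of recomputing $F_i\inv{G}_i$. Writing $\inv{G}^{(t)}=\inv{G}^{(t-1)}-c\,uu^\top$ with $u=\inv{G}^{(t-1)}a_t$ (already formed for Sherman--Morrison) and $F^{(t)}=F^{(t-1)}+j_{\mathrm{new}}a_t^\top/\sqrt{p_t}$, the product $H^{(t)}=F^{(t)}\inv{G}^{(t)}$ expands into $H^{(t-1)}$ plus four rank-one matrices of size $\cO(\log(n/\delta))\times d$, each formable in $\cO(d\log(n/\delta))$ time by reusing $u$, $F^{(t-1)}u$ and $j_{\mathrm{new}}$. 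Over the $N$ sampled rows this costs $\cO(Nd\log(n/\delta))=\cO(\frac{d^3}{\eps^4}\log\frac{1}{\eps\delta}\log\frac{\norm{A}_2}{\sigma}\log\frac{n}{\delta})$, the remaining term. Summing the three contributions and taking a union bound over the high-probability events (the bound on $N$ and the correctness of the JL-based leverage estimates) gives the claim.
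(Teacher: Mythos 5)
Your proposal is correct and takes essentially the same route as the paper's proof (which follows Lemma 3.8 of \cite{JPW}): charge $\cO(\nnz(A)\log\frac{n}{\delta})$ for the per-row leverage-score estimates, bound the number of \algoname{Update} calls by the row count of $S_1$, namely $\cO(\beta_1\sum_i w_i^{\OL}(A))$ with $\sum_i w_i^{\OL}(A)=\cO(d\log(\norm{A}_2/\sigma))$ from Lemma~\ref{lem:sum-approx-online-2LW}, and charge $\cO(d^2+d\log\frac{n}{\delta})$ per call. The only substantive difference is that you spell out the rank-one maintenance of $H^{(t)}=F^{(t)}\inv{G}^{(t)}$ in $\cO(d\log\frac{n}{\delta})$ time, a detail the paper asserts without elaboration by deferring to \cite{JPW}.
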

\begin{proof}
We analyze the time complexity following Lemma 3.8 in \cite{JPW}. Note that total runtime is dominated by 
calls to $\Call{Update}{}$. The approximate Lewis weights are calculated by $\tilde{w}_t = \norm{H^{(t-1)}a_t}_2^2$, which takes $\cO(\nnz(A) \log \frac{n}{\delta})$ time over $n$ iterations. Observe that the runtime of each call to $\Call{Update}{}$ is dominated by the time calculating $F^{(t)}$ and $H^{(t)}$, which takes $\cO(d\log \frac{n}{\delta} + d^2)$ time. Calls to $\Call{Update}{}$ only happen when there is a new row $a_t$ is sampled and the number of samples is dominated by the maximum of the number of rows of $S$ and that of $S_1$, which with probability at least $1-\delta$ are $\cO(d\log d)$ and $\cO(\frac{d^2}{\eps^4} \log\frac{1}{\eps \delta} \log n \log \frac{\norm{A}_2}{\sigma})$, respectively. Hence, the total running time is $\cO(\nnz (A)\log \frac{n}{\delta} + \frac{d^4}{\eps^4} \log\frac{1}{\eps \delta} \log n \log \frac{\norm{A}_2}{\sigma} + \frac{d^3}{\eps^{4}} \log \frac{n}{\delta} \log\frac{1}{\eps \delta} \log n  \log \frac{\norm{A}_2}{\sigma})$.
\end{proof}

\section{Optimal dependence on \texorpdfstring{$\eps$}{1}} \label{sec:optimal_eps}

The query complexity in Theorem~\ref{thm:p-compression-OAR} has a quadratic dependence on $1/\eps$. In this section, we shall improve it to $1/\eps$, which is the optimal for active regression~\cite{MMWY} and is thus optimal for online active regression. Again, we follow the idea in~\cite{MMWY}, narrowing the region of $x$ in Lemma~\ref{lem:translation-preserve} to $\{x: \norm{Ax}_p \leq \sqrt{\gamma}R\}$, where $0<\gamma<1$. This will guarantee that $\norm{SAx-S\bar{z}}_p$ is close to $\norm{Ax-\bar{z}}_p$ for all $x$ near $x^\ast = \operatorname{argmin}_x \norm{Ax-\bar{z}}_p$, which is enough for approximately solving the original regression problem $\min_x \norm{Ax-b}_p$.

To prove the new version of Lemma~\ref{lem:translation-preserve},  we first need a new version of Lemma~\ref{lem:uniform-moment-bound}, which can be seen as an online version of \cite[Lemma 3.23]{MMWY} with an $\ell$-th moment bound.

\begin{lemma}
\label{lem:uniform-moment-bound-opt}
Let $p\in (1,2]$, $R > 0$ and $0 < \gamma < 1$. Suppose that $A \in \R^{n \times d}$ with Lewis weights bounded by $\cO(\frac{d W }{n})$, where $W$ satisfies that $n = \Omega(\frac{\gamma d}{\eps^2}W \log^2 d \log n \log\frac{1}{\delta})$, and $\bar z\in \R^n$ conforms to $(A,\eps,R)$. Let 
\[
    \Lambda = \max_{\norm{Ax}_p \leq \sqrt{\gamma}R} \Abs{\sum_{k=1}^{n} \sigma_{k} \left( \Abs{ a_k^{\top}x - \bar{z}_k }^p - \Abs{\bar{z}_k}^p \right)},
\] 
where $\sigma_{k}$'s are independent Rademacher variables, then it holds for $\ell = \log(1/\delta)$ that
\[
\E_{\sigma} \left[ \Lambda^\ell \right] \leq (\eps R^p)^{\ell} \delta.
\]
\end{lemma}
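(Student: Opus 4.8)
The plan is to mirror the proof of Lemma~\ref{lem:uniform-moment-bound} almost verbatim, the single structural change being that I invoke the \emph{narrowed-region} chaining bound of \cite[Lemma 3.23]{MMWY} in place of \cite[Lemma 3.8]{MMWY}. The subtraction of $\abs{\bar z_k}^p$ inside the sum is exactly what makes this work: since $\abs{a_k^\top x-\bar z_k}^p-\abs{\bar z_k}^p$ vanishes at $x=0$, the Rademacher process is anchored at the origin, so Dudley's integral controls $\E_\sigma\Lambda$ with no base-point term $\sum_k\sigma_k\abs{\bar z_k}^p$ left to account for. Crucially, the increments of the process are untouched by this subtraction (the $\abs{\bar z_k}^p$ terms cancel in any difference $x$ versus $x'$), so the metric-entropy estimate is identical to that of \cite{MMWY}.

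First I would follow the Dudley's integral computation of \cite[Lemma 3.23]{MMWY}, making the same modification as in Lemma~\ref{lem:uniform-moment-bound}: the upper bound on the Lewis weights is loosened from $\cO(d/n)$ to $\cO(dW/n)$, which enters linearly and rescales the integral by $\sqrt{W}$. The narrowed constraint $\norm{Ax}_p\leq\sqrt{\gamma}R$ shrinks the index set, and since the radius enters the chaining bound linearly this produces an extra factor of $\sqrt{\gamma}$. Together these give
\[
\E_\sigma \Lambda \leq C'\left(\frac{\gamma d W}{n}\log^2 d\log n\right)^{1/2} R^p,
\]
and the tail-bound form of Dudley's integral then yields
\[
\Pr\left\{\Lambda \geq C\sqrt{\frac{\gamma d}{n}}\left(\log d\sqrt{W\log n}+z\right)R^p\right\}\leq 2\exp(-z^2)
\]
for absolute constants $C,C'>0$.

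The remainder is identical to Lemma~\ref{lem:uniform-moment-bound}. Setting $L=\log d\sqrt{W\log n}$, the variable $\Lambda/(CR^p\sqrt{\gamma d/n})-L$ has a subgaussian tail, so Proposition~\ref{prop:subgaussian} gives $\E_\sigma(\Lambda/(CR^p\sqrt{\gamma d/n}))^\ell\leq(L+K\sqrt{\ell})^\ell$. Hence
\[
\E_\sigma \Lambda^\ell \leq \left(\frac{L+K\sqrt{\ell}}{\sqrt{\eps^2 n/(\gamma d)}}\right)^\ell(\eps R^p)^\ell \leq (\eps R^p)^\ell\delta,
\]
where the last inequality uses $\ell=\log\frac{1}{\delta}$ together with the hypothesis $n=\Omega(\frac{\gamma d}{\eps^2}W\log^2 d\log n\log\frac{1}{\delta})$, which guarantees $n>\frac{e^2\gamma d}{\eps^2}(L+K\sqrt{\ell})^2$.

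I expect the main obstacle to be the chaining step: one must re-examine the covering-number estimate for the narrowed set $\{x:\norm{Ax}_p\leq\sqrt{\gamma}R\}$ in the relevant weighted-$\ell_2$ metric and verify that the radius $\sqrt{\gamma}R$ enters linearly, so that precisely a factor of $\sqrt{\gamma}$ (and not, say, $\gamma$ or $\gamma^{p/2}$) surfaces in $\E_\sigma\Lambda$; this is what matches the $\gamma$-dependence of the stated hypothesis on $n$ and ultimately delivers the optimal $1/\eps$ query complexity. Everything else --- the $W$ substitution and the subgaussian-to-moment conversion via Proposition~\ref{prop:subgaussian} --- is routine and follows the already-established template.
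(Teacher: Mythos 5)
Your proposal matches the paper's proof essentially verbatim: both invoke \cite[Lemma 3.23]{MMWY} with the Lewis weight bound relaxed to $\cO(dW/n)$ exactly as in Lemma~\ref{lem:uniform-moment-bound}, obtain $\E_\sigma\Lambda \leq C'(\frac{\gamma d}{n}W\log^2 d\log n)^{1/2}R^p$ with the corresponding subgaussian tail, and finish with the same Proposition~\ref{prop:subgaussian} moment conversion under the hypothesis on $n$. (Your normalization $\sqrt{\gamma d/n}$ in the tail bound is in fact the one consistent with the expectation bound; the paper writes $\frac{\gamma d}{n}$ there, which appears to be a typo.)
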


\begin{proof}
First, note that our $\bar{z}$ has the same definition as $b \in \R^n$ in \cite[Lemma 3.23]{MMWY}. Compared with \cite[Lemma 3.23]{MMWY}, the different condition is that the upper bound of the Lewis weights is $\cO(dW/n)$ and $W$ can depend on $A$. This modification is dealt with in the same manner as in the proof of our Lemma~\ref{lem:uniform-moment-bound}. 

Now it follows from the proof of \cite[Lemma 3.23]{MMWY}, with the modifications above, that
\[
    \E_{\sigma}\Lambda \leq C'\left(\frac{\gamma d}{n} W \log^2 d \log n\right)^{\frac{1}{2}}R^p
\]
and
\[
    \Pr\left\{\Lambda \geq C\frac{\gamma d}{n}\left(\log d \sqrt{W\log n} + z\right)R^p \right\} \leq 2\exp{(-z^2)}.
\]
Next, from our assumption that $n = \Omega(\frac{\gamma d}{\eps^2}W \log^2 d \log n \log\frac{1}{\delta})$, we can use the same approach of Lemma~\ref{lem:uniform-moment-bound} to obtain that
\[
\E_{\sigma} \left[ \Lambda^\ell \right] \leq (\eps R^p)^{\ell} \delta.
\]
\end{proof}

\begin{lemma}\label{lem:gamma-eps-guarantee}
Suppose there exists $\ell\geq 1$ such that whenever a matrix $A = a_{1} \circ a_{2} \circ \dots \circ a_{n} \in \R^{n\times d}$ has Lewis weights uniformly bounded by $\cO(d W/n)$, where $W$ satisfies that $n = \Omega(\frac{d}{\eps^2} (W \log^2 d \log n \log \frac{1}{\delta} ))$, 
it holds for all $R > 0$ and $\bar{z} \in \R^n$ conforming to $(A,\eps,R)$ that
\[
    \E_{\sigma}\left[ \left(\max_{\norm{Ax}_p \leq \sqrt{\gamma} R} \Abs{\sum_{k=1}^{n} \sigma_{k} \left(\Abs{ a_k^{\top}x - \bar{z}_k }^p - \Abs{\bar{z}_k}^p \right) } \right)^{\ell} \right] \leq (\eps R^p)^{\ell} \delta.
\] 
Then, let $A\in\R^{n\times d}$, $\bar{z} \in \R^n$ be as defined in Lemma~\ref{lem:3.6} and $S$ be the rescaled sampling matrix with respect to the online Lewis weights of $A$ and oversampling parameter $\beta = \Theta(\frac{\gamma}{\eps^2} \log^2 d \log n \log \frac{1}{\delta})$ where $0< \gamma <1$. It holds that
\[
    \max_{\norm{Ax}_p \leq \sqrt{\gamma} R} \Abs{\Norm{Ax - \bar{z}}_p^p -\Norm{SAx - S\bar{z}}_p^p - \left(\Norm{\bar{z}}_p^p - \Norm{S\bar{z}}_p^p\right)} = \cO(\eps) R^p  
\]
with probability at least $1 - \delta$.
\end{lemma}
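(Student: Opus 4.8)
The plan is to follow the proof of Lemma~\ref{lem:Ax-z-preserve} almost verbatim, the only genuinely new feature being that we now work with the \emph{centered} objective. Write $g_k(x) = \abs{a_k^\top x - \bar z_k}^p - \abs{\bar z_k}^p$, so that the quantity to control is $\max_{\norm{Ax}_p\le\sqrt\gamma R}\abs{f(x) - \hat f_S(x)}$, where $f(x)=\sum_k g_k(x) = \norm{Ax-\bar z}_p^p - \norm{\bar z}_p^p$ and $\hat f_S(x)=\sum_k \frac{(\mathbbm 1_S)_k}{p_k}g_k(x) = \norm{SAx-S\bar z}_p^p - \norm{S\bar z}_p^p$. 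The rescaled sampling estimator is unbiased, $\E_S \hat f_S(x)=f(x)$, and any coordinate with $p_i=1$ (i.e.\ $\beta w_i>1$) contributes equally to $f$ and $\hat f_S$, so such coordinates cancel in the difference; hence I would restrict attention to the coordinates with $\beta w_i\le1$. Set $M = (\max_{\norm{Ax}_p\le\sqrt\gamma R}\abs{f(x)-\hat f_S(x)})^\ell$ with $\ell=\log(1/\delta)$.

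First I would reduce to a Rademacher process exactly as before. Since taking the $\ell$-th power of a maximum is convex and $f=\E_{S'}\hat f_{S'}$, Jensen's inequality gives $\E_S M\le \E_{S,S'}(\max_x\abs{\hat f_S(x)-\hat f_{S'}(x)})^\ell$ with $S'$ an independent copy of $S$; the standard symmetrization trick then yields $\E_S M\le 2^\ell\,\E_{S,\sigma}M'$, where $M' = (\max_{\norm{Ax}_p\le\sqrt\gamma R}\abs{\sum_k \frac{(\mathbbm 1_S)_k}{p_k}\sigma_k g_k(x)})^\ell$ and the $\sigma_k$ are independent Rademacher signs. The centering $-\abs{\bar z_k}^p$ is carried through intact, and after the rescaling $a_k\mapsto a_k p_k^{-1/p}$ it is exactly the $-\abs{\bar z_k'}^p$ term appearing in the hypothesis when applied to $SA$.

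Next I would condition on the event $\mathcal E$ that (a) $S$ is a $1/2$-subspace embedding for $A$, (b) the Lewis weights of the rows of $SA$ lie in $[\frac1{2\beta},\frac3{2\beta}]$ by Lemma~\ref{lem:compression_lewis_weights_appendix}, and (c) $SA$ has $N=\Theta(\beta\sum_i w_i^{\OL}(A))$ rows; as in Lemma~\ref{lem:Ax-z-preserve}, $\Pr(\mathcal E)\ge 1-\delta$. Fixing $S\in\mathcal E$, the subspace-embedding property turns the constraint into $\norm{SAx}_p\le\frac32\sqrt\gamma R\le 2\sqrt\gamma R$, so the inner maximum over $\norm{Ax}_p\le\sqrt\gamma R$ is dominated by the maximum over $\norm{SAx}_p\le\sqrt\gamma\cdot 2R$. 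I would then verify the two structural hypotheses for the pair $(SA,S\bar z)$: the rows of $SA$ have Lewis weights $\cO(1/\beta)=\cO(dW/N)$ with $W=\log n\log\kappa^{\OL}(A)$, and $\abs{(S\bar z)_k}^p=\abs{\bar z_k}^p/(\beta w_k(A))\le (R/\eps)^p/\beta\le (2R/\eps)^p w_k(SA)$, so that $S\bar z$ conforms to $(SA,\eps,2R)$. Applying the assumed moment bound to $SA$ with radius $2R$ gives $\E_\sigma[M'\mid S]\le(2^p\eps R^p)^\ell\delta$; rescaling $\eps\gets\eps/2^{p+1}$, taking expectation over $S\mid\mathcal E$, and re-incorporating the factor $2^\ell$ yields $\E_S[M\mid\mathcal E]\le(\eps R^p)^\ell\delta$, after which Markov's inequality on $M$ (conditioned on $\mathcal E$) and a union bound with $\Pr(\mathcal E^c)\le\delta$ complete the proof (after rescaling $\delta$).

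The main obstacle is the parameter bookkeeping in the last step, and in particular making the $\gamma$ factor do its job. With $\beta=\Theta(\frac{\gamma}{\eps^2}\log^2 d\log n\log\frac1\delta)$ we obtain $N=\Theta(\frac{\gamma d}{\eps^2}W\log^2 d\log n\log\frac1\delta)$, which is $\gamma$ times smaller than the sample count in Lemma~\ref{lem:Ax-z-preserve}; the crucial point is that this is \emph{exactly} the number of rows for which the narrowed-region moment bound of Lemma~\ref{lem:uniform-moment-bound-opt} (the instance of the assumed hypothesis that we invoke) remains valid, since its requirement on the number of rows carries the matching $\gamma$. Checking this alignment, namely that the smaller radius $\sqrt\gamma R$ together with the centering permits the smaller $\beta$ while still controlling the deviation by $\eps R^p$, is where the improvement to the optimal $1/\eps$ dependence is realized, and it is the step that requires the most care.
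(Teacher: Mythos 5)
Your proposal is correct and follows essentially the same route as the paper's proof: symmetrization of the centered objective into a Rademacher process, conditioning on the event that $S$ is a $1/2$-subspace embedding with flattened Lewis weights and $N=\Theta(\beta\sum_i w_i^{\OL}(A))$ rows, verifying that $S\bar z$ conforms to $(SA,\eps,2R)$, invoking the assumed moment bound at radius $2R$, and finishing with rescaling, Markov, and a union bound. Your closing remark about the $\gamma$ in $\beta$ matching the $\gamma$ in the row-count requirement of Lemma~\ref{lem:uniform-moment-bound-opt} is exactly the alignment the paper relies on (and is in fact stated more carefully than in the lemma's own hypothesis, which omits the $\gamma$ factor).
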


\begin{proof}
The proof closely follows the framework in the proof of Lemma~\ref{lem:Ax-z-preserve}. First, as in the proof of Lemma~\ref{lem:Ax-z-preserve}, we only consider the case where 
$\beta w_i \leq 1$. Next, let
\[
M= \left( \max_{\Norm{Ax}_p \leq \sqrt{\gamma} R} \Abs{ \Norm{SAx - S\bar{z}}_p^p - \Norm{Ax-\bar{z}}_p^p + \Norm{\bar{z}}_p^p - \Norm{S\bar{z}}_p^p } \right)^\ell.
\] 
By the symmetrization trick, we have that
\[
\E_{S} M \leq 2^\ell \E_{S,\sigma} M', \text{ where } M' = \left( \max_{\norm{Ax}_p \leq \sqrt{\gamma} R} \Abs{\sum_{k=1}^{n} \frac{(\mathbbm{1}_{S})_{k}}{p_{k}} \sigma_k \left( \Abs{a_{k}^{\top} x - \bar{z}_{k}}^p - \Abs{\bar{z}_k}^p \right)} \right)^{\ell}.
\]

Let $\mathcal{E}$ be the same event on $S$ as defined in the proof of Lemma~\ref{lem:Ax-z-preserve}, then $\Pr(\mathcal{E})\geq 1-\delta$ and, when $\mathcal{E}$ happens, we have (i) $S$ is a $1/2$-subspace embedding matrix for $A$ (ii) the Lewis weights of the rows from $SA$ are within $[\frac{1}{2\beta},\frac{3}{2\beta}]$, and (iii) $SA$ has $N =\Theta(\beta\sum_i w_i^{\OL}(A))$ rows. 

Fix $S\in\mathcal{E}$. Consider the conditional expectation
\[
    \E_\sigma \left[ \left. M' \right\vert S \right] = \E_{\sigma}\left[ \left. \max_{\norm{Ax}_p \leq \sqrt{\gamma} R} \Abs{\sum_{k=1}^{N} \sigma_k \left( \Abs{a'^{\top}_{k} x - \bar{z}'_{k}}^p - \Abs{\bar{z}'_{k}}^p \right) } ^\ell \right\vert S \right],
\]
where $a_k'^{\top}x$ and $\bar{z}'_{k}$ are the $k$-th coordinates of $SAx$ and $S\bar{z}$. 

Recall that $S$ is a $1/2$-subspace embedding matrix for $A$ when conditioned on $\mathcal{E}$, we have for any $x \in \R^d$ that $\norm{SA x}_p \leq 3/2\norm{Ax}_p \leq   2 \sqrt{\gamma} R$, which implies that  
\begin{align*}
    \max_{\norm{Ax}_p \leq \sqrt{\gamma}  R} \Abs{\sum_{k=1}^{N} \sigma_k \left(\Abs{a'^{\top}_{k} x - \bar{z}'_{k}}^p - \Abs{\bar{z}'_k}^p \right) }^\ell
    \leq \max_{\norm{SA x}_p \leq 2  \sqrt{\gamma} R} \Abs{\sum_{k=1}^{N} \sigma_k \left( \Abs{a'^{\top}_{k} x - \bar{z}'_{k}}^p - \Abs{\bar{z}'_k}^p \right) }^\ell
\end{align*}

Similarly to the proof of Lemma~\ref{lem:Ax-z-preserve}, we can verify that $SA$ has small Lewis weights and $S\bar{z}$ conforms to $(SA, \eps, 2R)$.
It then follows from the assumption of the lemma that
\[
\E_\sigma \left[ \left. M' \right\vert S \right] \leq \left(\eps \cdot 2^{p} R^p \right)^\ell \delta.
\]
Rescaling $\eps = \eps/2^{p+1}$, we can obtain, in the identical manner as in the proof of Lemma~\ref{lem:Ax-z-preserve}, that
\[
\E_{S}\left[ \left. M \right\vert \mathcal{E} \right] \leq \left(\eps R^p \right)^\ell \delta
\]
and by Markov's inequality that
\begin{multline*}
    \Pr \left \{ \left. \max_{\norm{Ax}_p \leq \sqrt{\gamma} R} \Abs{\Norm{Ax - \bar{z}}_p^p -\Norm{SAx - S\bar{z}}_p^p - \left(\Norm{\bar{z}}_p^p - \Norm{S\bar{z}}_p^p\right)} \geq \eps R^p \right\vert \mathcal{E} \right \}  \\
    \leq \Pr\left\{ \left. M \geq (\eps R^p)^\ell \right\vert \mathcal{E} \right\}
\leq \delta.
\end{multline*}
Rescaling $\delta = \delta/2$ for a union bound completes the proof.
\end{proof}

The following lemma is the new version of Lemma~\ref{lem:translation-preserve}.

\begin{lemma}
\label{lem:main-gamma}
Let $A\in\R^{n\times d}$ and $\bar{z}$ be as defined in Lemma~\ref{lem:3.6}. Let $S$ be the rescaled sampling matrix with respect to $\{ p_{i} \}_{(i)}$ such that $p_{i}=\min \{\beta\tilde{w}^{\OL}_{i}(A), 1 \}$ and $\beta = \Theta(\frac{\gamma}{\eps^2} \log^2 d \log n \log \frac{1}{\delta})$. It holds that
\[
\Pr \left \{ \max_{\norm{Ax}_p \leq \sqrt{\gamma} R} \Abs{\Norm{SA x-S \bar{z}}_p^p - \Norm{Ax - \bar{z}}_p^p + \Norm{\bar{z}}_p^p - \Norm{S\bar{z}}_p^p } \geq \eps R^p \right \} \leq \delta.
\]
\end{lemma}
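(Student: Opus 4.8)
The plan is to obtain Lemma~\ref{lem:main-gamma} as an immediate consequence of the two preparatory results already established in this section, exactly mirroring the way Lemma~\ref{lem:translation-preserve} was derived by combining Lemma~\ref{lem:uniform-moment-bound} with Lemma~\ref{lem:Ax-z-preserve}. Here the two ingredients are the $\ell$-th moment bound of Lemma~\ref{lem:uniform-moment-bound-opt} and the symmetrization-plus-concentration argument of Lemma~\ref{lem:gamma-eps-guarantee}. Since all the chaining (Dudley-integral) work and the symmetrization/Markov machinery have already been carried out inside those two lemmas, no new analytic estimate is needed; the task reduces to checking that the hypotheses line up and reconciling the parameters.

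First I would verify that the hypothesis of Lemma~\ref{lem:gamma-eps-guarantee} is met. That hypothesis asks for the existence of some $\ell\geq 1$ so that every matrix with Lewis weights bounded by $\cO(dW/n)$ and $n = \Omega(\frac{d}{\eps^2}W\log^2 d\log n\log\frac{1}{\delta})$ satisfies the stated moment bound on the centered Rademacher process over $\{x:\norm{Ax}_p\leq\sqrt{\gamma}R\}$. Lemma~\ref{lem:uniform-moment-bound-opt} supplies precisely this bound for the choice $\ell = \log(1/\delta)$. The one point to flag is that Lemma~\ref{lem:uniform-moment-bound-opt} only requires the weaker sample-size condition $n = \Omega(\frac{\gamma d}{\eps^2}W\log^2 d\log n\log\frac{1}{\delta})$; since $0<\gamma<1$, the condition imposed in the hypothesis of Lemma~\ref{lem:gamma-eps-guarantee} is strictly stronger and hence implies it, so the moment bound is guaranteed to be available whenever that hypothesis is invoked.

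With the hypothesis confirmed, I would apply Lemma~\ref{lem:gamma-eps-guarantee} to the matrix $A$ and the vector $\bar{z}$ of Lemma~\ref{lem:3.6}, taking $S$ to be the rescaled sampling matrix with oversampling parameter $\beta = \Theta(\frac{\gamma}{\eps^2}\log^2 d\log n\log\frac{1}{\delta})$, which is exactly the parameter in the statement of Lemma~\ref{lem:main-gamma}. Its conclusion yields, with probability at least $1-\delta$,
\[
\max_{\norm{Ax}_p\leq\sqrt{\gamma}R}\Abs{\Norm{Ax-\bar{z}}_p^p - \Norm{SAx-S\bar{z}}_p^p - \bigl(\Norm{\bar{z}}_p^p - \Norm{S\bar{z}}_p^p\bigr)} = \cO(\eps)R^p.
\]
The quantity inside this absolute value differs from the one in Lemma~\ref{lem:main-gamma} only by an overall sign, so the two absolute values coincide; absorbing the hidden constant by rescaling $\eps$ then converts the $\cO(\eps)R^p$ bound into $\eps R^p$, which is precisely the claimed tail bound.

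The only items requiring care, and the closest thing to an obstacle, are pure bookkeeping: matching the oversampling parameter $\beta$, tracking the $\gamma$ factor through the two sample-size conditions, and rescaling $\eps$ (and, if convenient, $\delta$) to swallow absolute constants. I expect no genuinely new difficulty to surface, since the substantive estimates have already been dispatched in Lemmas~\ref{lem:uniform-moment-bound-opt} and~\ref{lem:gamma-eps-guarantee}.
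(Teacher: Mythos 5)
Your proposal is correct and matches the paper's proof exactly: the paper also obtains Lemma~\ref{lem:main-gamma} by directly combining Lemma~\ref{lem:uniform-moment-bound-opt} (which supplies the $\ell$-th moment hypothesis with $\ell=\log(1/\delta)$) with Lemma~\ref{lem:gamma-eps-guarantee}. Your additional bookkeeping—checking that the sample-size condition in Lemma~\ref{lem:gamma-eps-guarantee} implies the weaker one in Lemma~\ref{lem:uniform-moment-bound-opt} since $\gamma<1$, and rescaling $\eps$ to absorb the constant in $\cO(\eps)R^p$—is exactly the verification the paper leaves implicit.
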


\begin{proof}
Combining Lemmas~\ref{lem:uniform-moment-bound-opt} and \ref{lem:gamma-eps-guarantee} immediately gives the result.
\end{proof}

\begin{lemma}\label{lem:gamma-assumption-eps-approx}
Let $A \in \R^{n \times d}$, $b\in \R^n$ and $0< \gamma <1$. Let $S$ be as defined in Lemma~\ref{lem:main-gamma}. Let $x_c$ be the constant factor approximation obtained by Algorithm~\ref{alg:p-simple-OAR} or ~\ref{alg:online-active-regression} and $z = b - Ax_c$. Suppose that $\tilde{x} = \arg\min_{x\in \R^d} \norm{SA x - Sz}_p$ and $\norm{A \tilde{x} - Ax^*}_p \leq \sqrt{\gamma} R$. It holds that
\[
    \Norm{A\tilde{x} - z}_p \leq (1+\cO(\eps)) \min_{x\in\R^d} \Norm{Ax - z}_p
\]
with probability at least $0.99 - \cO(\delta)$.
\end{lemma}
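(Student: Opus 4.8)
The plan is to follow the offline template of~\cite{MMWY} but to carry it out entirely in a $\sqrt\gamma R$-neighbourhood of $x^\ast := \arg\min_x \norm{Ax - \bar z}_p$, exploiting the hypothesis $\norm{A\tilde x - Ax^\ast}_p \le \sqrt\gamma R$ to place $\tilde x$ in that neighbourhood. Write $R^\ast := \min_x \norm{Ax-z}_p$. Since $z = b - Ax_c$ and $x_c$ is a constant-factor approximation (Lemma~\ref{lem:constant-factor-approximation}), the change of variables $x \mapsto x + x_c$ gives $R^\ast = \min_x\norm{Ax-b}_p$, whence $R^\ast \le R = \norm{z}_p \le C R^\ast$ for an absolute constant $C$; in particular every additive $\cO(\eps)R^p$ error is $\cO(\eps)(R^\ast)^p$. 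The three ingredients are: (Step A) that $x^\ast$ is itself a $(1+\cO(\eps))$-approximate minimiser of $\norm{Ax-z}_p$; (Step B) a \emph{re-centred} version of Lemma~\ref{lem:main-gamma} holding uniformly on $\{x : \norm{A(x-x^\ast)}_p \le \sqrt\gamma R\}$; and (Step C) the optimality of $\tilde x$ for the sketched objective.

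For Step A I would apply Lemma~\ref{lem:3.6} to both $x^\ast$ and $x_z^\ast := \arg\min_x\norm{Ax-z}_p$, which lie in an $\cO(R)$-ball (indeed $\norm{Ax_z^\ast}_p \le R^\ast + R \le 2R$ and $\norm{Ax^\ast}_p \le \norm{\bar z}_p + \norm{Ax^\ast - \bar z}_p \le 2R$). Lemma~\ref{lem:3.6} gives $\norm{Ax-z}_p^p = \norm{Ax-\bar z}_p^p + \norm{z-\bar z}_p^p \pm \cO(\eps)R^p$ on this ball, so the $z$-objective and the $\bar z$-objective differ only by the $x$-independent constant $\norm{z-\bar z}_p^p$ up to $\cO(\eps)R^p$ slack. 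Since $x^\ast$ minimises the $\bar z$-objective, $\norm{Ax^\ast-\bar z}_p^p \le \norm{Ax_z^\ast - \bar z}_p^p$, and unwinding the two applications of Lemma~\ref{lem:3.6} yields $\norm{Ax^\ast - z}_p^p \le (R^\ast)^p + \cO(\eps)R^p = (1+\cO(\eps))(R^\ast)^p$.

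Step B is the crux, and the main obstacle. Lemma~\ref{lem:main-gamma} is centred at the origin, whereas $\tilde x$ and $x^\ast$ both sit near $x^\ast$, which need not be near $0$. I would re-centre by setting $v = x - x^\ast$ and $\bar z^\ast := \bar z - Ax^\ast$, so that $Ax - \bar z = Av - \bar z^\ast$ and $SAx - S\bar z = SAv - S\bar z^\ast$. The key technical point is that $\bar z^\ast$ still conforms to $(A,\eps,\cO(R))$: using the standard Lewis-weight sensitivity bound $\abs{a_i^\top x^\ast} \le w_i(A)^{1/p}\norm{Ax^\ast}_p \le (w_i^{\OL}(A))^{1/p}\cdot\cO(R)$ (the last step by monotonicity, Lemma~\ref{lem:monotonicity}, and $\norm{Ax^\ast}_p=\cO(R)$) together with the conformity of $\bar z$, one gets $\abs{\bar z_i^\ast}^p \le \cO((R/\eps)^p)\,w_i^{\OL}(A)$. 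Since Lemma~\ref{lem:uniform-moment-bound-opt} and its consequence Lemma~\ref{lem:gamma-eps-guarantee} hold for \emph{every} conforming vector, they may be invoked with $\bar z^\ast$ in place of $\bar z$, producing with probability $1-\delta$ an $x$-independent constant $C_2$ (depending only on $x^\ast,\bar z,S$) with
\[
\norm{SAx - S\bar z}_p^p = \norm{Ax - \bar z}_p^p + C_2 \pm \cO(\eps)R^p \qquad \text{whenever } \norm{A(x-x^\ast)}_p \le \sqrt\gamma R.
\]
This now applies to both $x=x^\ast$ (where $v=0$) and $x=\tilde x$ (by hypothesis); verifying the conformity of the optimal residual and checking that these points remain in the $\cO(R)$-ball required by Lemmas~\ref{lem:3.6} and~\ref{lem:3.7} (note $\norm{A\tilde x}_p \le \sqrt\gamma R + 2R = \cO(R)$) is the delicate part.

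Finally I would stitch the estimates into a single $x$-independent shift. For $x \in \{x^\ast,\tilde x\}$, Lemma~\ref{lem:3.7} gives $\norm{SAx-Sz}_p^p = \norm{SAx-S\bar z}_p^p + \norm{Sz-S\bar z}_p^p \pm \cO(\eps)R^p$ and Lemma~\ref{lem:3.6} gives $\norm{Ax-\bar z}_p^p = \norm{Ax-z}_p^p - \norm{z-\bar z}_p^p \pm \cO(\eps)R^p$; combining these with the displayed identity of Step B collapses all $x$-independent terms into one constant $C$, so that $\norm{SAx-Sz}_p^p = \norm{Ax-z}_p^p + C \pm \cO(\eps)R^p$ for $x\in\{x^\ast,\tilde x\}$. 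Since $\tilde x = \arg\min_x \norm{SAx - Sz}_p$, we have $\norm{SA\tilde x - Sz}_p^p \le \norm{SAx^\ast - Sz}_p^p$; applying the last identity to both sides cancels $C$ and yields $\norm{A\tilde x - z}_p^p \le \norm{Ax^\ast - z}_p^p + \cO(\eps)R^p$. Combining with Step A and $R=\Theta(R^\ast)$ gives $\norm{A\tilde x - z}_p^p \le (1+\cO(\eps))(R^\ast)^p$, and taking $p$-th roots gives the claim. The stated probability $0.99-\cO(\delta)$ follows from a union bound over the failure events of Lemmas~\ref{lem:constant-factor-approximation}, \ref{lem:3.6}, \ref{lem:3.7} and the re-centred Lemma~\ref{lem:main-gamma}, after rescaling $\eps$ to absorb the constant factors above.
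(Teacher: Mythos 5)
Your proposal is correct and follows essentially the same route as the paper: re-centre at $x^\ast$ via $x'=x-x^\ast$, use the optimality of $\tilde{x}$ for the sketched objective to reduce to bounding $\norm{Ax'-z'}_p^p-\norm{SAx'-Sz'}_p^p+\norm{Sz'}_p^p-\norm{z'}_p^p$, and control this by the telescoping decomposition into the three terms handled by Lemma~\ref{lem:3.6}, Lemma~\ref{lem:3.7} and the $\gamma$-restricted main lemma (Lemma~\ref{lem:main-gamma}). The only deviations are cosmetic: the paper takes $x^\ast=\arg\min_x\norm{Ax-z}_p$ (so no analogue of your Step~A is needed) and applies the three lemmas to the shifted residual $z'=z-Ax^\ast$ with its own truncation $\bar{z}'$, whereas you keep the original truncation $\bar{z}$ and re-verify conformity of $\bar{z}-Ax^\ast$ via the Lewis-weight sensitivity bound --- both of which are legitimate.
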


\begin{proof}
Let $x' = x -x^*$ and $z' = z + Ax^*$. We have that
\begin{align*}
    \norm{A\tilde{x}-z}_p^p - \norm{Ax^*-z}_p^p &= \norm{A\tilde{x}-z}_p^p - \norm{SA\tilde{x} - Sz}_p^p + \norm{SA\tilde{x} - Sz}_p^p - \norm{SAx^* - Sz}_p^p\\
    &\qquad \qquad + \norm{SAx^* - Sz}_p^p - \norm{Ax^* - z}_p^p\\
    &\leq \norm{A\tilde{x}-z}_p^p - \norm{SA\tilde{x} - Sz}_p^p + \norm{SAx^* - Sz}_p^p - \norm{Ax^* - z}_p^p\\
    &= \norm{Ax' - z'}_p^p - \norm{SAx' - Sz'}_p^p + \norm{Sz'}_p^p - \norm{z'}_p^p.
\end{align*}
Here, it holds that $\norm{x'}_p \leq \sqrt{\gamma}R$ and $\norm{z'}_p \leq \norm{2z}_p + \norm{Ax^* - z}_p \leq C' R$. Then,
\begin{align*}
&\quad\, \Norm{Ax' - z'}_p^p - \Norm{SAx' - Sz'}_p^p + \Norm{Sz'}_p^p - \Norm{z'}_p^p \\
&= \Norm{Ax' - z'}_p^p - \Norm{SAx' - Sz'}_p^p + \Norm{S\bar{z}'}_p^p + \Norm{Sz' - S\bar{z}'}_p^p - \Norm{\bar{z}'}_p^p - \Norm{z' - \bar{z}'}_p^p\\
&= \Norm{Ax' - z'}_p^p - \Norm{Ax' - \bar{z}'}_p^p - \Norm{z' - \bar{z}'}_p^p \\
&\qquad - \left( \Norm{SAx' - Sz'}_p^p - \Norm{SAx' - S\bar{z}'}_p^p - \Norm{Sz' - S\bar{z}'}_p^p \right) \\
&\qquad - \left( \Norm{SA x'-S \bar{z}'}_p^p - \Norm{Ax' - \bar{z}'}_p^p + \Norm{\bar{z}'}_p^p - \Norm{S\bar{z}'}_p^p \right) \\
&= \cO(\eps)R^p
\end{align*}
with probability at least $0.99 - \cO(\delta)$, where the last line follows from Lemmas~\ref{lem:3.6}, \ref{lem:3.7} and \ref{lem:gamma-assumption-eps-approx} (with $x=x'$ and $z=z'$).
It then follows that
\[
    \Norm{A\tilde{x} - z}_p \leq (1+\cO(\eps)) \min_{x\in\R^d} \Norm{Ax - z}_p
\]
with probability at least $0.99-\cO(\delta)$.
\end{proof}

Now, we explain why it suffices to consider only $x$ satisfying that $\norm{Ax}_p \leq \sqrt{\gamma}R$. First, we prove all good approximate solutions are near to the optimal solution.

\begin{lemma}
\label{lem:gamma-correctness}
Let $A \in \R^{n\times d}$, $z \in \R^n$ and $0<\gamma<1$. We assume that $\norm{z}_p \leq R$ where $R = \min_{x\in \R^d} \norm{Ax - z}_p$. Let $x^* = \arg\min_{x\in \R^d} \norm{Ax - z}_p$. If $x\in\R^d$ satisfies $\norm{Ax-z}_p \leq (1+c\gamma) R$, we have that $\norm{Ax^*-Ax}_p \leq \sqrt{\gamma} R$, where $c\in (0,1]$ is an absolute constant.
\end{lemma}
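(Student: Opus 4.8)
The plan is to exploit the $2$-uniform convexity of the $\ell_p$ norm for $p\in(1,2]$, which is exactly the regime in which near-optimal solutions cluster around the optimum (and which degenerates at $p=1$, explaining the restriction to $p>1$). First I would unpack the two hypotheses: since $x=0$ is feasible, $R=\min_{x}\norm{Ax-z}_p\le\norm{z}_p$, so the assumption $\norm{z}_p\le R$ forces $\norm{z}_p=R$ and hence, by uniqueness of the minimizer for $p>1$, $x^\ast=0$. Thus the optimal residual is $r^\ast:=Ax^\ast-z=-z$ with $\norm{r^\ast}_p=R$. Writing $r:=Ax-z$ for the residual of the given near-optimal point, we have $\norm{r}_p\le(1+c\gamma)R$ and, crucially, $r-r^\ast=A(x-x^\ast)=Ax$, which is precisely the quantity $\norm{Ax^\ast-Ax}_p$ that we must bound.

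The key structural input is that $\psi(u):=\tfrac12\norm{u}_p^2$ is $(p-1)$-strongly convex with respect to $\norm{\cdot}_p$ when $p\in(1,2]$ (a classical consequence of Clarkson's/Hanner's inequalities), combined with the first-order optimality of $x^\ast$. Assuming $R>0$ (the case $R=0$ being trivial), minimality of $x^\ast$ for $x\mapsto\norm{Ax-z}_p$ gives $A^\top\phi(r^\ast)=0$, where $\phi(u)_i=\abs{u_i}^{p-1}\operatorname{sign}(u_i)$ is the $\ell_p$ duality map. Since $\nabla\psi(r^\ast)=\norm{r^\ast}_p^{2-p}\phi(r^\ast)$ and $r-r^\ast=A(x-x^\ast)$ lies in the column space of $A$, the linear term in the strong-convexity inequality vanishes:
\[
\langle\nabla\psi(r^\ast),\,r-r^\ast\rangle=\norm{r^\ast}_p^{2-p}\bigl(A^\top\phi(r^\ast)\bigr)^\top(x-x^\ast)=0.
\]
Strong convexity then reads $\tfrac12\norm{r}_p^2\ge\tfrac12\norm{r^\ast}_p^2+\tfrac{p-1}{2}\norm{r-r^\ast}_p^2$, which rearranges to
\[
\norm{Ax^\ast-Ax}_p^2=\norm{r-r^\ast}_p^2\le\frac{1}{p-1}\bigl(\norm{r}_p^2-\norm{r^\ast}_p^2\bigr)\le\frac{(1+c\gamma)^2-1}{p-1}\,R^2.
\]

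To close the constant, I would use $c\gamma\le 1$ to bound $(1+c\gamma)^2-1=2c\gamma+c^2\gamma^2\le 3c\gamma$, so the right-hand side is at most $\tfrac{3c}{p-1}\gamma R^2$; taking $c:=(p-1)/3\in(0,1/3]$ yields $\norm{Ax^\ast-Ax}_p^2\le\gamma R^2$, as required. The main obstacle is conceptual rather than computational: one must invoke $2$-uniform convexity of $\ell_p$ in the sharp $(p-1)$-strongly-convex form (convexity of $u\mapsto\sum_i\abs{u_i}^p$ alone will not do, as it has no scale-free modulus for $p<2$), and one must be candid that the admissible $c$ degrades like $p-1$ as $p\to1^+$, so that ``absolute constant'' here means a constant depending only on the fixed exponent $p$. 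A secondary point to verify is that the first-order/orthogonality step is valid when $r^\ast$ has vanishing coordinates, which is harmless since $t\mapsto\abs{t}^{p-1}\operatorname{sign}(t)$ is continuous through $0$ for $p>1$.
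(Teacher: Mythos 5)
Your proof is correct for $p\in(1,2]$ (the regime in which the paper actually invokes this lemma, and the only regime in which the statement can hold --- for $p=1$ near-minimizers of an $\ell_1$ regression need not cluster), but it takes a genuinely different route from the paper's. The paper reduces to \cite[Theorem 3.19]{MMWY}, which proves the statement for matrices with uniformly bounded Lewis weights: it splits each row $a_i$ into $k_i=\lceil w_i(A)/(d/n)\rceil$ rescaled copies (and splits $z$ accordingly), verifies that the resulting matrix $A'$ has Lewis weights at most $d/n$ while $\norm{A'x-z'}_p=\norm{Ax-z}_p$, and then quotes the offline result, obtaining $\norm{Ax^*-Ax}_p=\cO(\sqrt{\gamma}R)$. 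Your argument bypasses Lewis weights entirely: first-order optimality of $x^*$ makes the dual element $\phi(r^*)$ orthogonal to the column space of $A$, so the linear term in the $(p-1)$-strong convexity of $\tfrac{1}{2}\norm{\cdot}_p^2$ (the Ball--Carlen--Lieb form of $2$-uniform convexity) vanishes, and the excess $\norm{r}_p^2-\norm{r^*}_p^2\leq((1+c\gamma)^2-1)R^2\leq 3c\gamma R^2$ directly controls $\norm{A(x-x^*)}_p^2$; taking $c=(p-1)/3$ gives the bound with no hidden constant. What you buy is a short, self-contained proof with an explicit constant and a transparent explanation of why the result degenerates at $p=1$; what you pay is that your $c$ scales like $p-1$, so it is an absolute constant only for fixed $p$ --- which is consistent with how the paper uses the lemma, and the paper's own route inherits whatever $p$-dependence is hidden in the constants of \cite{MMWY}. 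Two minor remarks: your deduction that the hypotheses force $\norm{z}_p=R$ and hence $Ax^*=0$ is a correct reading of the statement as written but is never actually needed in your argument, and your caveats about the case $R=0$ and about vanishing coordinates of $r^*$ are exactly the right ones to raise.
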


\begin{proof}
The same statement is proved for matrices with bounded Lewis weights in \cite[Theorem 3.19]{MMWY}. Now we prove for a general matrix $A \in \R^{n \times d}$. Let $w_i(A)$ be the Lewis weight of row $a_i$ and $k_i = \lceil\frac{w_i(A)}{d/n}\rceil$. We replace each row $a_i$ by $k_i$ copies of $\frac{a_i}{k_i^{1/p}}$, obtaining a new matrix $A'$. Then we have
\[
    \frac{a_i^{\top}}{k_i^{1/p}} \left(\sum_{i=1}   ^{n} k_i \cdot (\frac{w_i}{k_i})^{1-\frac{2}{p}} \frac{a_i}{k_i^{1/p}} \frac{a_i^{\top}}{k_i^{1/p}} \right)^{\frac{p}{2}} \frac{a_i}{k_i^{1/p}} = \frac{a_i^{\top}}{k_i^{1/p}} \left(\sum_{i=1}^{n} w_i^{1-\frac{2}{p}} a_i a_i^{\top}\right)^{\frac{p}{2}} \frac{a_i}{k_i^{1/p}} = \frac{w_i}{k_i}.
\]
Therefore, the Lewis weight $w_i(A')$ is bounded by $\frac{d}{n}$. It is also clear that $\norm{Ax}_p = \norm{A'x}_p$. We also split every entry of $z$ into $k_i$ copies, obtaining $z'$. Thus we have $\norm{Ax - z}_p = \norm{A'x - z'}_p$. Note that $\arg\min_{x \in \R^d} \norm{Ax - z}_p = \arg\min_{x \in \R^d} \norm{A'x - z'}_p$. Hence we can use~\cite[Theorem 3.19]{MMWY} to get $\norm{Ax^* - Ax}_p = \norm{A'x^* - A'x}_p \leq \cO(\sqrt{\gamma} R)$. This completes the proof.
\end{proof}

\begin{theorem}\label{thm:p-before-bounding}
Let $A \in \R^{n\times d}$, $b\in\R^n$. Suppose that $S$ is a rescaled sampling matrix according to $w_i^{\OL}(A)$ with oversampling factor $\beta = \Theta(\frac{\log^2 d}{\eps} \log n \log \frac{1}{\delta})$ and $S'$ is a rescale dsampling matrix according to $w_i^{\OL}(A)$ with oversampling factor $\beta' = \Theta(\log d)$. Let $x_c = \arg \min_{x\in \R^d} \norm{S'Ax - S'b}_p$, $z = b -Ax_c$, $\hat{x} = \arg \min_{x\in \R^d} \norm{SA x - Sz}_p$ and $\tilde{x} = x_c + \hat{x}$. It holds that
\[
   \norm{A\tilde{x}-z}_p \leq (1+\eps) \min_{x\in\R^d} \norm{Ax-z}_p
\]
with probability at least $0.99-\delta$ and the query complexity is
\[
\cO\left(\frac{d}{\eps} \log^2 d \log^2 n \log \kappa^{\OL}(A) \log \frac{1}{\delta} \right)
\]
for $1<p<2$ and
\[
\cO\left(\frac{d}{\eps} \log^2 d \log^2 n \log \frac{\norm{A}_2}{\sigma} \log \frac{1}{\delta} \right)
\]
for $p=2$.
\end{theorem}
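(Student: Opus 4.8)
The plan is to follow the two-stage scheme of~\cite{MMWY}: use $S'$ to produce a constant-factor approximation $x_c$ and then exploit the narrowed-region guarantees of the preceding lemmas to boost $\hat x$ to a $(1+\eps)$-approximation. First I would invoke Lemma~\ref{lem:constant-factor-approximation} (applied to $S'$, whose oversampling parameter is $\beta'=\Theta(\log d)$) to conclude that, with constant probability, $x_c$ is a constant-factor approximation, so that $R:=\norm{z}_p=\norm{b-Ax_c}_p$ satisfies $\OPT\le R\le C\cdot\OPT$ for $\OPT:=\min_x\norm{Ax-z}_p$; in particular $R=\Theta(\OPT)$. Since $A\tilde x-b=A(x_c+\hat x)-b=A\hat x-z$, it suffices to prove $\norm{A\hat x-z}_p\le(1+\eps)\OPT$. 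I would then fix $\gamma=\Theta(\eps)$, with the hidden constant chosen large enough (in terms of $C$, $p$ and the constant $c$ of Lemma~\ref{lem:gamma-correctness}), so that the stated oversampling parameter becomes $\beta=\Theta(\frac{\gamma}{\eps^2}\log^2 d\log n\log\frac1\delta)=\Theta(\frac1\eps\log^2 d\log n\log\frac1\delta)$, matching the regime of Lemma~\ref{lem:main-gamma}.

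The crux is to verify the hypothesis $\norm{A\hat x-Ax^*}_p\le\sqrt{\gamma}R$ of Lemma~\ref{lem:gamma-assumption-eps-approx}, where $x^*=\arg\min_x\norm{Ax-z}_p$, and I expect this to be the main obstacle. It is a chicken-and-egg situation: Lemma~\ref{lem:main-gamma} only controls $\norm{SAx-Sz}_p$ on the ball $\{x:\norm{A(x-x^*)}_p\le\sqrt\gamma R\}$, so a priori the minimizer $\hat x$ of the sampled objective could lie outside it. I would resolve this with a convexity-and-contradiction argument. Suppose $\norm{A\hat x-Ax^*}_p>\sqrt\gamma R$ and let $y$ be the point on the segment $[x^*,\hat x]$ with $\norm{A(y-x^*)}_p=\sqrt\gamma R$ exactly. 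By convexity of $x\mapsto\norm{SAx-Sz}_p$ and optimality of $\hat x$, we get $\norm{SAy-Sz}_p\le\norm{SAx^*-Sz}_p$. Both $y$ and $x^*$ lie in the ball, so the estimate underlying Lemma~\ref{lem:gamma-assumption-eps-approx} (obtained by recentring at $x^*$ and applying Lemma~\ref{lem:main-gamma}) shows that $\norm{Ax-z}_p^p-\norm{SAx-Sz}_p^p$ agrees with a single point-independent quantity up to an additive $\cO(\eps)R^p$ for both $x=y$ and $x=x^*$. Subtracting and using the convexity inequality yields
\[
\norm{Ay-z}_p^p\le\norm{Ax^*-z}_p^p+\cO(\eps)R^p=\OPT^p+\cO(\eps)R^p\le(1+c\gamma)\OPT^p,
\]
the last step using $R=\Theta(\OPT)$ and $\gamma=\Theta(\eps)$ with the constant chosen so that $\cO(\eps)R^p\le c\gamma\,\OPT^p$. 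Lemma~\ref{lem:gamma-correctness} then forces $\norm{A(y-x^*)}_p<\sqrt\gamma R$ (the constant slack above makes the bound strictly below $\sqrt\gamma R$), contradicting the choice of $y$. Hence $\norm{A\hat x-Ax^*}_p\le\sqrt\gamma R$.

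With the closeness established, Lemma~\ref{lem:gamma-assumption-eps-approx} applies verbatim (with $\tilde x=\hat x$) and gives $\norm{A\hat x-z}_p\le(1+\cO(\eps))\OPT$ with probability at least $0.99-\cO(\delta)$; rescaling $\eps$ and $\delta$ by constants yields the claimed $(1+\eps)$ bound with probability $0.99-\delta$ (the failure events being those of Lemmas~\ref{lem:constant-factor-approximation},~\ref{lem:3.6},~\ref{lem:3.7} and~\ref{lem:main-gamma}). Finally, for the query complexity I would bound the number of rows of $S$ by $\cO(\beta\sum_i w_i^{\OL}(A))$ via Lemma~\ref{lem:lewis_weights_sampling}; substituting $\beta=\Theta(\frac1\eps\log^2 d\log n\log\frac1\delta)$ together with $\sum_i w_i^{\OL}(A)=\cO(d\log n\log\kappa^{\OL}(A))$ from Lemma~\ref{lem:sum_of_online_lewis_weights} for $1<p<2$, and $\sum_i w_i^{\OL}(A)=\cO(d\log(\norm A_2/\sigma))$ from Lemma~\ref{lem:sum-approx-online-2LW} for $p=2$, gives the two stated bounds; the $\cO(\beta'\sum_i w_i^{\OL}(A))$ queries spent on $x_c$ are dominated. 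A subtle point I would flag throughout is the scale reconciliation between $\norm z_p$ and $\OPT$ when invoking Lemma~\ref{lem:gamma-correctness} (whose hypothesis is phrased with $R=\OPT$): since $R=\Theta(\OPT)$, the constant gap is absorbed into the constant defining $\gamma=\Theta(\eps)$ and into the $\cO(\eps)$ slack, which is precisely what lets the contradiction be made strict.
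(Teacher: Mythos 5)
Your proposal is correct in substance but takes a genuinely different route from the paper. The paper does not localize the sampled minimizer by a convexity argument; instead it bootstraps: it first notes that with the budget $K/\eps$ the \emph{non}-narrowed guarantee (Lemma~\ref{lem:Ax-z-optimal-approximation}, run at effective accuracy $\sqrt{\eps}$) already yields a $(1+c\sqrt{\eps})$-approximation, invokes Lemma~\ref{lem:gamma-correctness} to place $\hat{x}$ in a ball of radius $\sqrt{\gamma_0}R$ with $\gamma_0=c\sqrt{\eps}$, then applies the narrowed lemma with that $\gamma_0$ to improve the accuracy to $\eps_1=c^{1/2}\eps^{3/4}$, and iterates the pair (accuracy $\to$ localization $\to$ better accuracy) with $\eps_i^2=\eps_{i-1}\eps$ for $i=\log\log(1/\eps)$ rounds, all supported by the \emph{same} sampling matrix because $\gamma_{i}/\eps_{i+1}^2=1/\eps$ is invariant; this costs a $\log\log(1/\eps)$ factor in the failure probability that is rescaled away at the end. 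Your single-shot alternative --- fix $\gamma=\Theta(\eps)$, take the boundary point $y$ on the segment $[x^*,\hat{x}]$, use convexity plus optimality of $\hat{x}$ for the sampled objective, and exploit that the offset $\norm{z-\bar z}_p^p+\norm{\bar z}_p^p-\norm{S\bar z}_p^p-\norm{Sz-S\bar z}_p^p$ appearing in the chain Lemma~\ref{lem:3.6}/\ref{lem:3.7}/\ref{lem:main-gamma} is point-independent and cancels in the two-point comparison --- is a valid localization and, if anything, cleaner: it avoids the $\log\log(1/\eps)$ loss in $\delta$ and needs only one invocation of each lemma. The places where you must be careful are exactly the ones you flag: the constant $C_2$ in $\gamma=C_2\eps$ must be chosen after the constants of Lemma~\ref{lem:constant-factor-approximation} ($R\le C\,\OPT$), of the $\cO(\eps)R^p$ error terms, and of the $\cO(\sqrt{\gamma}R)$ conclusion in the proof of Lemma~\ref{lem:gamma-correctness}, so that the contradiction is strict and there is no circularity (there is none, since those constants do not depend on $\gamma$, only the oversampling parameter does); and the recentred residual $z'=z-Ax^*$ with its own truncation $\bar z'$ must be used so that both $y-x^*$ and $0$ lie in the ball $\{x':\norm{Ax'}_p\le\sqrt{\gamma}R\}$ over which Lemma~\ref{lem:main-gamma} is uniform. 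The query-complexity accounting is identical in both arguments.
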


\begin{proof}
Let $c\in (0,1]$ be the same absolute constant in Lemma~\ref{lem:gamma-correctness} and 
\begin{equation*}
K = 
    \begin{cases}
        \Theta(\frac{d}{c^2} \log^2 d \log^2 n \log \kappa^{\OL}(A) \log \frac{1}{\delta}), \quad 1<p<2\\
        \Theta(\frac{d}{c^2} \log^2 d \log^2 n \log \frac{\norm{A}_2}{\sigma} \log \frac{1}{\delta}), \quad p=2.
    \end{cases}
\end{equation*}

When the query complexity is $K\eps$, it follows from Lemma~\ref{lem:Ax-z-optimal-approximation} that $\norm{A\tilde{x} - b}_p \leq (1 + c\sqrt{\eps}) R$ with probability at least $0.99 - \delta$. Let $x^* = \arg \min_{x\in\R^d} \norm{Ax - z}_p$ and $\gamma = c\sqrt{\eps}$, then $\norm{A\bar{x} - Ax^*}_p \leq \sqrt{\gamma} R$ by Lemma~\ref{lem:gamma-correctness}. 

Let $\eps_1$ be such that $K/\eps = \gamma K/\eps_1^2$, thus $\eps_1 = c^{1/2}\eps^{3/4}$.
By Lemma~\ref{lem:gamma-assumption-eps-approx}, we have $\norm{A \tilde{x} - b}_p \leq (1+c\eps_1) R$ with probability at least $1-\cO(\delta)$. Note that here the failure probability is not $0.01+\cO(\delta)$ because we have already assumed that $\norm{Sz}_p^p = \cO(\norm{z}_p^p)$ with probability at least $0.995$ 
and $\norm{z}_p^p \leq R^p$ with probability at least $0.995$.

Using Lemma~\ref{lem:gamma-correctness}, we can get that $\norm{A\tilde{x} -Ax^*}_p \leq \sqrt{\eps_1}R$. Now, taking $\gamma = \eps_1$ in Lemma~\ref{lem:gamma-eps-guarantee}, we see that,  with probability at least $1-\cO(\delta)$,
$\norm{A \tilde{x} - b}_p \leq (1+\eps_2) R$ for $\eps_2$ such that $\eps_1 K/\eps_2^2 = K/\eps$, i.e, $\eps_2 = c^{1/4}\eps^{7/8}$. Repeating this process, we can obtain that 
$\norm{A \tilde{x} - b}_p \leq (1+\eps_i) R$ with probability at least $1-\cO(i\cdot\delta)$, where $\eps_i^2 = \eps_{i-1}\eps$. We can solve that $\eps_i = c^{\frac{1}{2^i}}\eps^{1-\frac{1}{2^{i+1}}}$. Letting $i = \log \log (1/\eps)$ yields $\eps_i \leq 2\eps$, that is, $\norm{A \tilde{x} - b}_p \leq (1+2\eps) R$ with probability at least $1-\cO(\delta\log\log(1/\eps))$. Rescaling $\eps = \eps/2$ and $\delta = \Theta(\delta/\log \log(1/\eps))$ completes the proof.
\end{proof}

\begin{theorem}[Main results]
\label{thm:general-p-opt}
Let $A \in \R^{n\times d}$ and $b \in \R^{n}$. 

Algorithm~\ref{alg:p-simple-OAR} modified as in Theorem~\ref{thm:p-compression-OAR-boost} outputs a solution $\tilde{x}$ which satisfies that
\begin{equation}
\Norm{A\tilde{x} - b}_p \leq (1+\eps) \min_{x\in\R^d}  \Norm{Ax-b}_p
\end{equation}
with probability at least $1-\delta$ and makes
\[
\cO \left(\frac{d}{\eps}\log^2 d \log^2 \frac{d}{\delta\eps} \cdot \log \frac{n\kappa^{\OL}(A)}{\delta} \log^2 \frac{1}{\delta} \right)
\]
queries overall in total.

Furthermore, with probability at least $1-\delta$, it uses $\cO(md)$ words of space in total.

Algorithm~\ref{alg:online-active-regression} modified by the sampling scheme in Theorem~\ref{thm:p-compression-OAR-boost} makes 
\[
m = \cO\left(\frac{d}{\eps} \log^2 d \log^2 \frac{d}{\delta\eps} \cdot \log\left(n \frac{\norm{A}_2}{\sigma\delta}\right) \log^2 \frac{1}{\delta} \right)
\]
queries in total and maintains for each $T=d+1,\dots,n$ a solution $\tilde{x}^{(T)}$ which satisfies that
\[
\Norm{A^{(T)}\tilde{x}^{(T)} - b^{(T)}}_2 \leq (1+\eps) \min_{x\in\R^d} \Norm{A^{(T)} x - b^{(T)}}_2.
\]

Furthermore, with probability at least $1-\delta$, it uses $\cO(md)$ words of space in total.
\end{theorem}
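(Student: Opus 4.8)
The plan is to assemble the theorem from the optimal-$\eps$ accuracy engine of Theorem~\ref{thm:p-before-bounding} together with the two-level sampling framework underlying Algorithms~\ref{alg:p-simple-OAR} and~\ref{alg:online-active-regression}, and then to drive the failure probability down to $\delta$ using the boosting recipe of Theorem~\ref{thm:p-compression-OAR-boost}. Theorem~\ref{thm:p-before-bounding} already delivers, for a single online-Lewis-weight sample with oversampling parameter linear in $1/\eps$, a $(1+\eps)$-approximate solution with constant success probability; its proof encapsulates the $\gamma$-narrowing and bootstrapping of Lemmas~\ref{lem:main-gamma}, \ref{lem:gamma-correctness} and~\ref{lem:gamma-assumption-eps-approx} that buy the optimal $\eps$ dependence. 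The remaining work is therefore to invoke it at both recursion levels, to boost, and to verify the space bound and, for $p=2$, the per-prefix guarantees.

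For accuracy I would run the argument in two nested applications. At the outer level $S_1$ reduces $\min_x\norm{Ax-z}_p$ to $\min_x\norm{S_1Ax-S_1z}_p$; since $w_i^{\OL}(A)\ge w_i(A)$, Theorem~\ref{thm:p-before-bounding} shows the minimizer of the sampled problem is $(1+\cO(\eps))$-optimal for $A$. Because $S_1A$ still has $N_1=\poly(d/\eps)$ rows, I would then apply Theorem~\ref{thm:p-before-bounding} a second time, now with $S_1A$ in place of $A$ and $S_3$ in place of $S$, to solve the reduced problem with the $\cO(d/\eps)$-row sample $S_3S_1A$. Composing the two $(1+\cO(\eps))$ factors together with the constant-factor approximations $x_c,\hat x_c$ yields $\norm{A\tilde x-b}_p\le(1+\cO(\eps))\min_x\norm{Ax-b}_p$, and rescaling $\eps$ by a constant gives~\eqref{eqn:ell_p_guarantee}.

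For the query count, the labels actually requested are exactly those sampled by $S$, $S_2S_1$ and $S_3S_1$, dominated by the rows of $S_3S_1A$. Here the two-level structure is essential: the inner sample operates on $S_1A$, so the ``$\log n$'' factor of the one-shot bound in Theorem~\ref{thm:p-before-bounding} is replaced by $\log N_1=\cO(\log(d/\eps))$, while $\log\kappa^{\OL}(S_1A)$ is controlled by Lemma~\ref{lem:sum-sub-online-LW} at the cost of a $\log(n\kappa^{\OL}(A)/\delta)$ term; bounding $\sum_i w_i^{\OL}$ through Lemma~\ref{lem:sum_of_online_lewis_weights} (resp.\ Lemma~\ref{lem:sum-approx-online-2LW} for $p=2$) then produces the stated $m$. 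The boosting of Theorem~\ref{thm:p-compression-OAR-boost}---$\log(1/\delta)$ independent copies for $x_c,\hat x_c$, trimming for $\norm{S_3S_1z}_p$, and a Markov bound with the rescaling $\eps\mapsto\eps\delta$ for $\norm{S_1z}_p$---raises the success probability to $1-\delta$ and accounts for the extra $\log^2(1/\delta)$ and the $\log(d/(\delta\eps))$ appearing inside the polylogarithmic factors.

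The space bound is $\cO(md)$ by storing only $\tilde A,\tilde A_2,\tilde A_3$ (each with $\cO(m)$ rows) explicitly and maintaining the online Lewis weights of $A$ and of $\tilde A_1$ through the compression scheme of Theorem~\ref{thm:compression-matrix}, which costs $\cO(d^2\poly(\log(n/\delta)))=\cO(md)$ words; for $p=2$ one instead maintains the rank-one-updated inverses $\inv{G}$ and a constant-factor JL matrix, again within $\cO(md)$ words. The main obstacle I anticipate is the $p=2$ claim that the guarantee holds for every prefix $T$ simultaneously: the sampling matrices in Algorithm~\ref{alg:online-active-regression} do not have independent rows, so one must invoke the dependent-row variant of the main lemma (deferred to the $p=2$ appendix) and ensure the bootstrapping of Theorem~\ref{thm:p-before-bounding} survives a union bound over $T$ while the online leverage scores, inverses and solution $\tilde x^{(T)}$ are updated incrementally. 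Lemma~\ref{lem:spc-approx}, which already holds uniformly over $t\in\{d+1,\dots,n\}$, is the lever I would use to make this uniform-in-$T$ argument go through.
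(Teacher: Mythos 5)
Your proposal is correct and follows essentially the same route as the paper: invoke Theorem~\ref{thm:p-before-bounding} as the optimal-$\eps$ engine inside the two-level sampling framework, let the $S_3S_1$ sample dominate the query count with $\log n$ replaced by $\log N_1=\cO(\log(d/(\eps\delta)))$ and $\kappa^{\OL}(S_1A)$ controlled via Lemma~\ref{lem:sum-sub-online-LW}, and boost to $1-\delta$ as in Theorem~\ref{thm:p-compression-OAR-boost}. The only slight divergence is that the paper handles the outer transfer (from the $S_1$-sampled problem back to $A$) not by a second application of Theorem~\ref{thm:p-before-bounding} but by the $\beta_1=\Theta(d/(\delta^{2+p}\eps^{2+p}))$-oversampled reduction of Theorem~\ref{thm:p-compression-OAR}, which is what lets an \emph{approximate} minimizer of the $S_1$-problem transfer back; this is a presentational difference rather than a gap.
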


\begin{proof}
we shall only prove for the case $1<p<2$ below, as the case $p=2$  follows the same approach with a different sum of online Lewis weights. Using the boosting procedure explained prior to  Theorem~\ref{thm:p-compression-OAR-boost}, we can obtain a constant factor approximation with probability at least $1-\delta$. 

The proof is similar to that of Theorem~\ref{thm:p-compression-OAR}.
We can write $\tilde{A}_1 = S_1 A$ for a sampling matrix $S_1$ with respect to the online Lewis weights.
It follows from Theorem~\ref{thm:p-before-bounding} that with probability at least $0.995-\delta$,
\[
\norm{S_1 A \hat{x} - S_1 z}_p^p \leq (1 + \eps)\min_{x\in\R^d} \norm{S_1 A x - S_1 z}_p^p,
\]
where $z = b - A x_c$. 
Following the same approach in the proof of Theorem~\ref{thm:p-compression-OAR}, we can obtain that,
with probability at least $0.995 - \cO(\delta)$,
\[
    \norm{A \tilde{x} - b}_p^p \leq \left(1 + \cO(\eps)\right)\min_{x\in\R^d} \norm{A x - b}_p^p.
\]

For the results above to go through, $S_1$ should have oversampling parameter $\beta_1 = \Theta(\frac{d}{\delta^{2+p} \eps^{2+p}} \log\frac{1}{\eps\delta})$, resulting in 
\[
    N = \cO\left(\beta_1 \sum_{i=1}^n w_i^{\OL}(A)\right) = \cO\left(\frac{d^2}{\delta^{2+p} \eps^{2+p}} \log\frac{1}{\eps\delta} \log n\log \kappa^{\OL}(A) \right)
\]
rows of $S_1A$ with probability at least $1-\delta$. The $\frac{1}{\delta^{2+p}}$ term in $\beta_1$ is for bounding $\norm{S_1 z}_p^p = \cO(\norm{z}_p^p)$ with probability at least $1-\delta$. Also, $S_3$ should have an oversampling parameter 
\[
    \beta_3 = \Theta\left( \frac{\log^2 d}{\eps}\log\frac{1}{\delta}\log N \right) = \cO\left( \frac{\log^2 d}{\eps} \log\frac{d}{\eps \delta} \log\frac{1}{\delta} \right),
\]
resulting in
\[
    m = \cO\left(\beta_3 \sum_{i=1}^n w_i^{\OL}(SA)\right) 
      = \cO\left(\frac{d\log^2 d}{\eps} \log^2 \frac{d}{\eps \delta} \log \frac{n\kappa^{\OL}(A)}{\delta} \log\frac{1}{\delta} \right) 
\]
rows of $S_3 S_1 A$ with probability at least $1-\delta$. Here we upper bound $\kappa^{\OL}(SA)$ by Lemma~\ref{lem:sum-sub-online-LW}. 
Now the constant failure probability $0.005$ is from bounding $\norm{S_3(S_1z - S_1A \hat{x}_c)}_p^p = \cO(\norm{z - S_1 A \hat{x}_c}_p^p)$. In order to obtain $1-\delta$ success probability, we use the same boosting method for Theorem~\ref{thm:p-compression-OAR-boost} and sample $\log \frac{1}{\delta}$ independent copies of $S_3$, which results in 
\[
m = \cO\left(\frac{d\log^2 d}{\eps} \log^2 \frac{d}{\eps \delta} \log \frac{n\kappa^{\OL}(A)}{\delta} \log^2 \frac{1}{\delta} \right).
\]

The total number of queried labels is dominated by $m$. Rescaling $\eps$ and $\delta$ gives the claimed result.
\end{proof}

\section{Experiments}
In this section, we provide empirical results on online active $\ell_p$ regression with $p=1$, $p=1.5$ and $p=2$. We compare our methods with online uniform sampling, the offline active regression algorithms~\cite{MMWY,CD21,PPP21} and the thresholding algorithm in \cite{RJZ17}. The quantity we compare is the relative error, defined as $(err - err_{\opt})/err_{\opt}$, where $err = \|A\tilde{x}-b\|_p$ is the error of the algorithm's output $\tilde{x}$ and $err_{\opt} = \min_x\|Ax-b\|_p$ is the minimum error of the $\ell_p$ regression. 
All algorithms are prescribed with a budget for querying the labels and we explain the online uniform sampling algorithm, thresholding algorithm and the adaptation of online and offline active regression algorithms to the budget-constrained setting below.
\begin{itemize}
    \item Online Uniform Sampling: In the $t$-th round, we sample the new data point $[a_t,b_t]$ with probability $B_t/(n-t)$, where $B_t$ is the remaining budget. 
    \item Regression via Thresholding: We use Algorithm 1.b in \cite{RJZ17}, which is for $\ell_2$ regression only, and assign the weights $\xi_{i}=1$ for all $i \in [n]$. 
    \item Online Active Regression: We sample each data point with probability proportional to $\tilde{w}_t$, where $\tilde{w}_t$ is the approximate online Lewis weight calculated with the compression technique for $p=1$ and $p=1.5$. 
    \item Offline Active Regression: For $p=1$, the algorithms in~\cite{CD21,PPP21} are under the budget setting and no modification is needed. For $p=1.5$ and $p=2$, we use the offline algorithm in~\cite{MMWY} which has the optimized dependence on $\eps$. However, the algorithm is recursive so we can not control the exact budget we use. We treat the error with budget in $[x, x+100]$ as the error with budget $x+100$. For both $p=1.5$ and $p=2$, the offline algorithm~\cite{MMWY} involves parallel sampling. Since it expects to sample $\cO(d)$ data points for a constant-factor approximation, we allocate a budget of size $d$ to the constant-factor approximation and allocate the remaining budget to the regression on residuals.
\end{itemize}

We perform experiments on both synthetic and real-world data sets to demonstrate the efficacy of our approaches.
\begin{itemize}
    \item 
Synthetic Data: We generate the synthetic data as follows. Each row of $A\in\R^{n\times d}$ is a random Gaussian vector, i.e., $a_i\sim\mathcal{N}(0,I_d)$. The label is generated as $b=Ax^*+\xi$ where $x^*$ is the ground truth vector and  $\xi$ is the  Gaussian noise vector, i.e., $\xi\sim \mathcal{N}(0,1)$. To make the rows of $A$ have nonuniform Lewis weights, we enlarge $d$ data points by a factor of $n^{\frac{1}{p}}$. We choose $n=10000$ and $d=100$.
    \item Real-world Data: We evaluate our algorithm on a real-world dataset, the gas sensor data~\cite{vergara2012chemical,rodriguez2014calibration} from the UCI Machine Learning Repository\footnote{\url{https://archive.ics.uci.edu/ml/datasets/Gas+Sensor+Array+Drift+Dataset+at+Different+Concentrations}}. The dataset contains 13910 measurements of chemical gas characterized by 128 features and their concentration levels.
\end{itemize}

We vary the budget sizes between 800 and 1400 (8\%--14\% of the data size) for the synthetic data and between $1600$ and $2500$ (12\%--18\% of the data size) for the real-world data. We present our results in budget-versus-error plots in Figures~\ref{fig:l1}--\ref{fig:l2}. All experiments are repeated 20 times. The mean relative errors and standard deviations are reported in the plots. All our experiments are conducted in MATLAB on a Macbook Pro with an i5 2.9Hz CPU and 8GB of memory. 

\subsection{Experiment Results}
Below we present the experiment results for the online active $\ell_p$ regression, $p=1,1.5,2$. The results are plotted in Figures~\ref{fig:l1}, \ref{fig:l1.5}, \ref{fig:l2}, respectively.

\begin{figure}[t]
\centering 
\subfloat[Synthetic data]{%
  \includegraphics[clip,trim=0cm 0.3cm 0cm 0cm, scale=0.9]{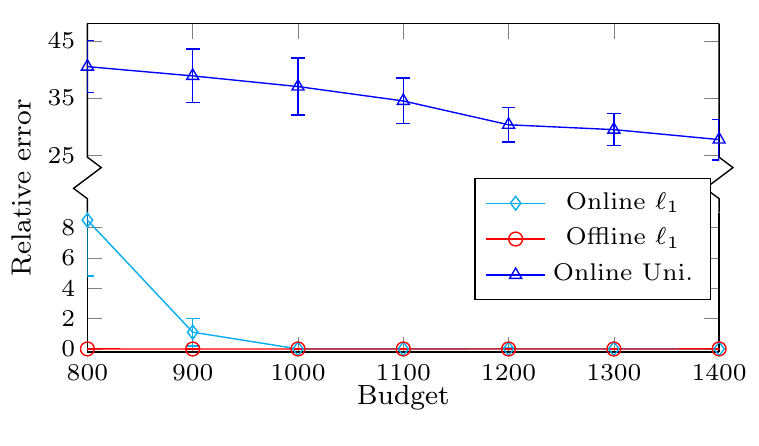}%
}
\subfloat[Gas sensor data]{%
   \includegraphics[clip,trim=0cm 0.3cm 0cm 0cm, scale=0.9]{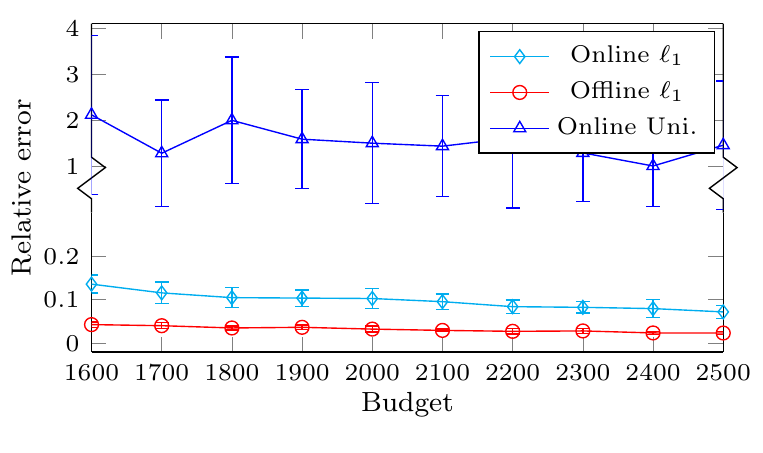}
 }
\caption{Performance of algorithms for online $\ell_1$ active regression on both synthetic data and Gas sensor data.}\label{fig:l1}
\end{figure}

\begin{figure}[t]
    \centering
\subfloat[Synthetic data]{
\includegraphics[clip,trim=0cm 0.3cm 0cm 0cm,scale=0.9]{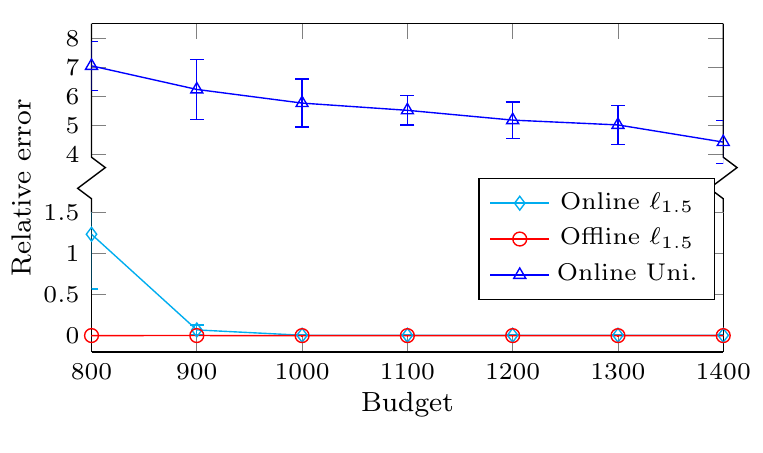}
}
\subfloat[Gas Sensor data]{
\includegraphics[clip,trim=0cm 0.3cm 0cm 0cm,scale=0.9]{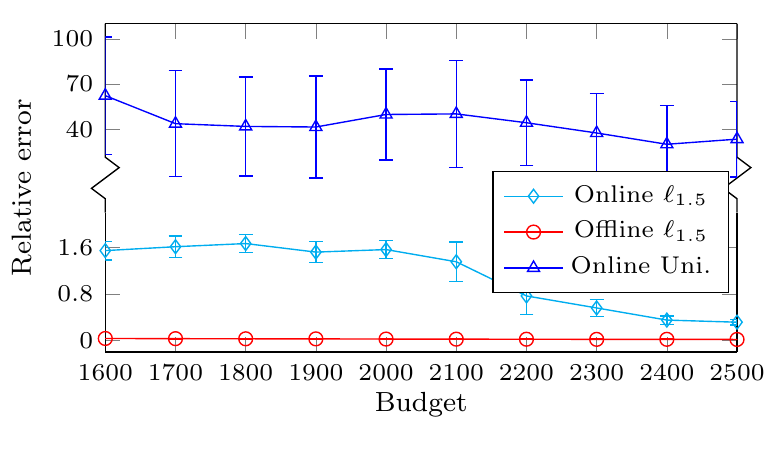}
}
\caption{Performance of algorithms for online $\ell_{1.5}$ active regression on both synthetic data and Gas Sensor data.}\label{fig:l1.5}
\end{figure}

\begin{figure}[t]
\centering 
\subfloat[Synthetic data]{%
    \includegraphics[clip, trim=0 0.3cm 0 0,scale=0.9]{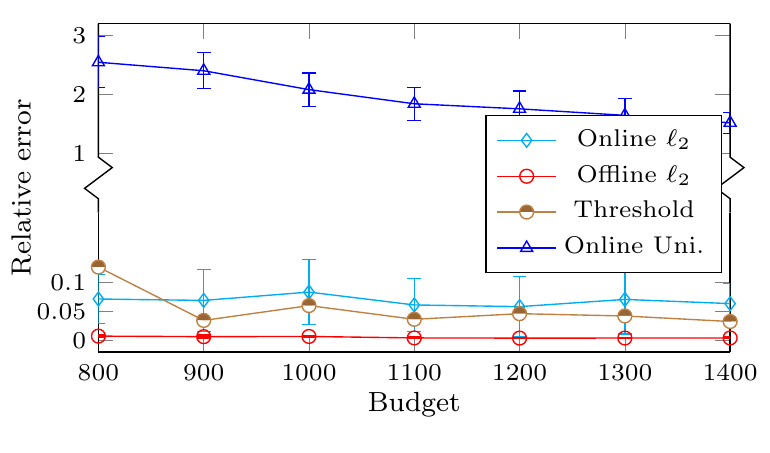}
}
\subfloat[Gas Sensor data]{%
   \includegraphics[clip, trim=0 0.3cm 0 0,scale=0.9]{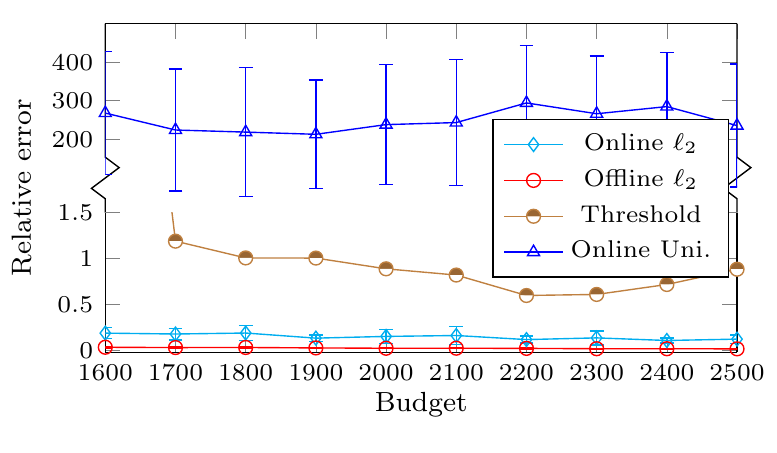}
}
    \caption{Performance of algorithms for online $\ell_2$ active regression on both synthetic data and Gas Sensor data.}\label{fig:l2}
\end{figure}

\begin{itemize}
\item $p=1$:
For the synthetic data, we see that the online regression algorithm achieves a relative error comparable to that of the offline regression algorithm when the budget is at least $1000$ and always significantly outperforms the online uniform sampling algorithm. 
For the real-world data, the online regression algorithm's performance is again significantly better than the online uniform sampling algorithm and comparable to that of the offline active regression algorithm.

\item $p=1.5$:
The online $\ell_{1.5}$ regression algorithm significantly outperforms the online uniform sampling on both data sets. It achieves a relative error comparable to that of the offline active regression algorithm on the real-world data and is only slightly worse than the offline algorithm when the budget size is at least $2300$ (14.3\% of the data size).

\item $p=2$:
The online $\ell_2$ regression algorithm significantly outperforms the online uniform sampling on both data sets and performs much better than the thresholding algorithm on real-world data. It achieves a relative error comparable to that of the offline active regression algorithm on the synthetic data and is only slightly worse than the offline algorithm on real-world data.
\end{itemize}

\section{Conclusion}
We provably show an online active regression algorithm which uses sublinear space for the $\ell_p$-norm, $p\in [1,2]$. Our experiments demonstrate the superiority of the algorithm over online uniform sampling on both synthetic and real-world data and comparable performance with the offline active regression algorithm.

\newpage
\appendix

\section{Some Facts of Lewis Weights}

We assume that $p\in [1,2]$ in this section, thus $1-2/p \leq 0$. Also we assume that $A\in \R^{n\times d}$ has full column rank in the statements of the lemmata and theorems, which is not a true restriction because Lewis weights are invariant under invertible linear transformation of $A$, i.e., $w_i(A) = w_i(AT)$ for any invertible $T$. When $A$ does not have full column rank, we can find $T$ such that $AT = \begin{pmatrix}A' & 0\end{pmatrix}$, where $A'$ has full column rank and $0$ is a zero matrix. Then $(A^\top W^{1-2/p}A)^\dagger = T\begin{pmatrix} ((A')^\top W^{1-2/p}A')^{-1} & \\ & 0\end{pmatrix} (T^\top)^{-1}$ and all the proofs in this section will go through by acting on $A'$ and $((A')^\top W^{1-2/p}A')^{-1}$.

\begin{lemma}\label{lem:core_matrix_approx}
Given $A\in\R^{n\times d}$ with $\ell_p$ Lewis weights $w_i,i\in[n]$, let $S$ be the rescaled sampling matrix with respect to $p_1,\dots,p_n$ satisfying that $\min\{\beta w_i,1\} \leq p_i \leq 1$, where $\beta = \Omega(\eps^{-2}\log (d/\delta))$. With probability at least $1-\delta$, it holds that
\[
(1-\eps)\sum_{i=1}^{n} w_i^{1-\frac{2}{p}} a_ia_i^\top \preceq \sum_{i=1}^{n}\frac{(\mathbbm{1}_S)_i}{p_i} w_i^{1-\frac{2}{p}} a_ia_i^\top\preceq(1+\eps) \sum_{i=1}^{n} w_i^{1-\frac{2}{p}} a_ia_i^\top.
\]
\end{lemma}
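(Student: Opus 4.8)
The plan is to whiten by the ``core matrix'' $M=\sum_{i=1}^n w_i^{1-2/p}a_ia_i^\top = A^\top W^{1-2/p}A$ and reduce the claimed two-sided Loewner bound to a single operator-norm concentration statement. Since $A$ has full column rank (as assumed throughout this section), $M$ is invertible; set $v_i = M^{-1/2}a_i$. The defining equation of the Lewis weights gives $a_i^\top M^{-1}a_i = w_i^{2/p}$, hence $\norm{v_i}_2^2 = w_i^{2/p}$, and consequently $w_i^{1-2/p}\norm{v_i}_2^2 = w_i$. Writing $\tilde M = \sum_i \frac{(\mathbbm{1}_S)_i}{p_i}w_i^{1-2/p}a_ia_i^\top$ for the middle matrix in the statement and conjugating the desired inequality by $M^{-1/2}$, it is equivalent to show that $\Norm{\sum_i X_i - I}_2\le\eps$ with probability at least $1-\delta$, where $X_i = \frac{(\mathbbm{1}_S)_i}{p_i}w_i^{1-2/p}v_iv_i^\top$. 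These are independent random PSD matrices with $\E X_i = w_i^{1-2/p}v_iv_i^\top$ and $\sum_i\E X_i = M^{-1/2}MM^{-1/2}=I$, so the sum is correctly centred.

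First I would pass to the centred summands $Y_i = X_i - \E X_i$, so that $\sum_i X_i - I = \sum_i Y_i$. This step is what cleanly disposes of the rows with $p_i=1$: for such rows $(\mathbbm{1}_S)_i\equiv 1$, so $X_i$ is deterministic and $Y_i = 0$ contributes nothing. For the remaining genuinely random rows we have $p_i\ge\beta w_i$, hence $w_i/p_i\le 1/\beta$ and also $w_i\le 1/\beta$. When such a row is sampled, $\norm{X_i}_2 = w_i/p_i\le 1/\beta$, while $\norm{\E X_i}_2 = w_i\le 1/\beta$; therefore $\norm{Y_i}_2\le 1/\beta =: L$ for every $i$.

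Next I would bound the matrix variance, attention restricted to the random rows since $Y_i=0$ on the deterministic ones. A direct computation using $\norm{v_i}_2^2=w_i^{2/p}$ gives $X_i^2 = \frac{(\mathbbm{1}_S)_i}{p_i^2}w_i^{2-2/p}v_iv_i^\top$ on a random row, whence $\E X_i^2 = \frac{w_i}{p_i}\,w_i^{1-2/p}v_iv_i^\top \preceq \frac1\beta\E X_i$. Since $\E Y_i^2\preceq\E X_i^2$, summing over the random rows and using $\sum_i\E X_i = I$ yields $\norm{\sum_i\E Y_i^2}_2 \le \frac1\beta =: \sigma^2$. With these two bounds in hand, the matrix Bernstein inequality gives
\[
\Pr\Bigl\{\Norm{\textstyle\sum_i Y_i}_2 \ge \eps\Bigr\} \le 2d\exp\!\Bigl(\frac{-\eps^2/2}{\sigma^2 + L\eps/3}\Bigr) \le 2d\exp\!\Bigl(-\tfrac{3}{8}\eps^2\beta\Bigr),
\]
using $\eps\le 1$. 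Choosing $\beta = \Omega(\eps^{-2}\log(d/\delta))$ makes the right-hand side at most $\delta$. Finally, $\norm{\sum_i Y_i}_2\le\eps$ means $-\eps I\preceq M^{-1/2}\tilde M M^{-1/2}-I\preceq \eps I$, and conjugating back by $M^{1/2}$ (a congruence, hence order-preserving) recovers the stated $(1-\eps)M\preceq \tilde M\preceq(1+\eps)M$.

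The main obstacle is precisely the presence of rows with $p_i=1$, whose individual samples $X_i$ can have operator norm as large as $w_i$ (up to $1$), which would ruin any uniform bound $L$ needed for a direct matrix Chernoff argument applied to $\sum_i X_i$. Centring \emph{before} invoking the concentration inequality is the key device that removes these rows from the error term entirely, so that all quantitative work rests on the single inequality $w_i/p_i\le 1/\beta$ that the oversampling condition $p_i\ge\min\{\beta w_i,1\}$ buys. The only nonroutine verification is then the variance computation, which hinges on the Lewis-weight identity $\norm{v_i}_2^2 = w_i^{2/p}$.
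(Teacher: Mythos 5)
Your proposal is correct and follows essentially the same route as the paper's proof: both pass to the isotropic position $A^\top W^{1-2/p}A = I$ (your conjugation by $M^{-1/2}$), discard the deterministic rows with $p_i=1$, center the summands, bound the uniform norm and the matrix variance by $1/\beta$ using the Lewis-weight identity $a_i^\top M^{-1}a_i = w_i^{2/p}$, and conclude by matrix Bernstein/Chernoff. Your treatment of the $p_i=1$ rows via centering is a slightly cleaner phrasing of the paper's ``restrict the sum to the genuinely random rows,'' but the argument is the same.
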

\begin{proof}
We prove the lemma by a matrix Chernoff bound. Without loss of generality, we assume that $p_i \leq 1/\beta$ for all $i$, otherwise we can restrict the sum to the $i$'s such that $p_i\leq 1/\beta$. We further assume that $A^\top W^{1-\frac{2}{p}} A = I_d$, where $W = \diag\{w_1,\dots,w_n\}$. Let $X_i=\frac{(\mathbbm{1}_S)_i}{p_i}\cdot w_i^{1-\frac{2}{p}} a_ia_i^\top - w_i^{1-\frac{2}{p}} a_ia_i^\top$, then $\E{X_i}=0$. By the definition of Lewis weights, we have $w_i^{\frac{2}{p}} = a_i^\top (A^\top W^{1-\frac{2}{p}} A)^{-1} a_i$. Hence, we have $\norm{a_i}_2^2 = w_i^\frac{2}{p}$. Next, $\norm{X_i}_2 \leq \frac{w_{i}^{1-\frac{2}{p}}}{p_i} \norm{a_i}_2^2 = \frac{1}{\beta} w_{i}^{-\frac{2}{p}} \norm{a_i}_2^2 =\frac{1}{\beta}$ and
\[\Norm{\E \left( \sum_{i=1}^{n} X_iX_i^\top \right)}_2 = \Norm{\E \left(\sum_{i=1}^{n} (1-\frac{1}{p_i}) w_i^{2(1-\frac{2}{p})} \Norm{a_i}_2^2 a_ia_i^\top \right)}_2 \leq \Norm{\sum_{i=1}^{n} w_i^{1-\frac{2}{p}} \cdot \frac{a_ia_i^\top}{\beta}}_2
= \frac{1}{\beta}.\] 
Applying the matrix Chernoff inequality, we have
\[
\Pr\left\{\Norm{\sum_{i=1}^{n} X_i X_i^\top}_2 \geq \eps\right\} \leq 2d\exp{\left(\frac{-\eps^2}{\frac{1}{\beta} + \frac{\eps}{3\beta}} \right)} 
= 2d\exp{\left(-\Omega(\beta\eps^2)\right)}\\
\leq \delta. 
\]
\end{proof}

\begin{lemma}\label{lem:good_Lewis_initial}
Suppose that $A\in \R^{n\times d}$ and $\overline{w_1},\dots,\overline{w_n}$ are the Lewis weights of $A$. Let $w_1,\dots,w_n$ be weights such that
\[
\alpha w_i^{2/p} \leq a_i^\top \left(\sum_i w_i^{1-2/p} a_i a_i^\top\right)^{-1} a_i \leq \beta w_i^{2/p},\quad \forall i=1,\dots,n,
\]
then $\alpha w_i \leq \overline{w_i} \leq \beta w_i$ for all $i$.
\end{lemma}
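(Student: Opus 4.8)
The plan is to use the fixed-point characterization of Lewis weights together with an extremal-ratio argument, which sidesteps any need for convergence of the Lewis-weight iteration~\eqref{eq:lw_iter}. Writing $\tau_i(u) = a_i^\top(\sum_j u_j^{1-2/p} a_j a_j^\top)^{-1} a_i$ for a positive weight vector $u$ (using the full-rank reduction at the start of this section so that the inner matrix is invertible), Definition~\ref{def:lewis} says precisely that the true Lewis weights are a fixed point, $\overline{w_i}^{2/p} = \tau_i(\overline{w})$ for all $i$, while the hypothesis reads $\alpha w_i^{2/p} \le \tau_i(w) \le \beta w_i^{2/p}$. First I would record two structural properties of $\tau_i$. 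It is positively homogeneous of degree $2/p-1$: scaling $u \mapsto cu$ pulls $c^{1-2/p}$ out of the inner matrix, hence $c^{2/p-1}$ after inversion, so $\tau_i(cu) = c^{2/p-1}\tau_i(u)$. Crucially, $\tau_i$ is also monotone nondecreasing in $u$ when $p\le 2$: if $u \le v$ coordinatewise then, because $1-2/p\le 0$, we have $u_j^{1-2/p}\ge v_j^{1-2/p}$, so $\sum_j u_j^{1-2/p} a_j a_j^\top \succeq \sum_j v_j^{1-2/p} a_j a_j^\top$, and since matrix inversion is operator-antitone this reverses to $\tau_i(u)\le \tau_i(v)$.

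For the upper bound, let $M = \max_i \overline{w_i}/w_i$, attained at some index $i_0$, so that $\overline{w_{i_0}} = M w_{i_0}$ and $\overline{w}\le M w$ coordinatewise. Applying monotonicity and then homogeneity gives $\tau_{i_0}(\overline{w}) \le \tau_{i_0}(Mw) = M^{2/p-1}\tau_{i_0}(w)$, and combining with the fixed-point identity at $i_0$ and the upper hypothesis yields
\[
M^{2/p} w_{i_0}^{2/p} = \overline{w_{i_0}}^{2/p} = \tau_{i_0}(\overline{w}) \le M^{2/p-1}\tau_{i_0}(w) \le M^{2/p-1}\beta\, w_{i_0}^{2/p}.
\]
Cancelling the positive factor $M^{2/p-1} w_{i_0}^{2/p}$ leaves $M \le \beta$, i.e.\ $\overline{w_i}\le \beta w_i$ for every $i$.

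The lower bound is entirely symmetric: taking $m = \min_i \overline{w_i}/w_i$ attained at $i_1$, monotonicity and homogeneity give $\tau_{i_1}(\overline{w}) \ge m^{2/p-1}\tau_{i_1}(w) \ge m^{2/p-1}\alpha\, w_{i_1}^{2/p}$, while the fixed-point identity gives $\tau_{i_1}(\overline{w}) = m^{2/p} w_{i_1}^{2/p}$; cancelling $m^{2/p-1}w_{i_1}^{2/p}>0$ yields $m \ge \alpha$, so $\overline{w_i}\ge \alpha w_i$ for all $i$. I expect the only real obstacle to be bookkeeping the sign of $1-2/p$: the whole argument depends on $p\le 2$ making $\tau_i$ monotone in the correct direction (equivalently, on the exponent $2/p-1$ being nonnegative so the final cancellations preserve the inequality), and one must confirm the inner matrices stay positive definite, which is exactly what the full-rank reduction at the head of this appendix guarantees. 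For the degenerate case $p=2$ the map $\tau_i$ is constant in $u$ (the weights drop out), so $\overline{w_i}=\tau_i(w)$ and the claim is immediate from the hypothesis.
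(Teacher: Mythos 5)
Your proposal is correct and is essentially the paper's own argument: the paper sets $\gamma=\sup\{c>0: w_i\ge c\,\overline{w_i}\ \forall i\}$ (which is exactly $1/M$ in your notation), uses the same monotonicity of $u\mapsto a_i^\top(\sum_j u_j^{1-2/p}a_ja_j^\top)^{-1}a_i$ coming from $1-2/p\le 0$ together with degree-$(2/p-1)$ homogeneity, and derives the self-referential inequality $\gamma^{2/p}\ge \gamma^{2/p-1}/\beta$, whence $\gamma\ge 1/\beta$. Your variant of evaluating at the index attaining the extremal ratio, and your handling of the degenerate case $p=2$, are cosmetic differences only.
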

\begin{proof}
Let $\gamma = \sup \{c > 0 : w_i\geq c \overline{w_i}\text{ for all }i\}$. It then holds for all $i$ that
\begin{align*}
w_i^{2/p} &\geq \frac{1}{\beta} a_i^\top \left(\sum_i w_i^{1-2/p} a_i a_i^\top\right)^{-1} a_i  \\
&\geq \frac{1}{\beta} a_i^\top \left(\sum_i (\gamma \overline{w_i})^{1-2/p} a_i a_i^\top\right)^{-1} a_i \\
&= \frac{\gamma^{2/p-1}}{\beta}a_i^\top \left(\sum_i \overline{w_i}^{1-2/p} a_i a_i^\top\right)^{-1} a_i \\
&= \frac{\gamma^{2/p-1}}{\beta} \overline{w_i}^{2/p}.
\end{align*}
This implies that 
\[
\gamma^{2/p} \geq \frac{\gamma^{2/p-1}}{\beta},
\]
and thus
\[
\gamma \geq \frac{1}{\beta},
\]
that is, $w_i\geq \overline{w_i}/\beta$ for all $i$. Similarly one can show that $w_i\leq \overline{w_i}/\alpha$.
\end{proof}

Combining Lemmas \ref{lem:core_matrix_approx} and \ref{lem:good_Lewis_initial} leads to the following lemma.
\begin{lemma}\label{lem:compression_lewis_weights_appendix}
Let $A_i\in \R^{n_i\times d}$ $(i=1,\dots,r)$, $B\in \R^{k\times d}$ and $M = A_1 \circ A_2 \circ \cdots \circ A_r \circ B$. For each $i\in [r]$, let $S_i\in \R^{m_i\times n_i}$ be the rescaled sampling matrix with respect to $p_{i,1},\dots,p_{i,n_i}$ with $\min\{\beta w_j(A_i),1\} \leq p_{i,j}\leq 1$ for each $j\in [n_i]$, where $\beta = \Omega(\eps^{-2}\log (d/\delta))$. Let $M' = S_1 A_1 \circ \cdots \circ S_r A_r \circ B$. The following statements hold with probability at least $1-\delta$.
\begin{enumerate}
\item For each $i\in [r]$ and each $j \in [m_i]$, it holds that
\[
(1-\eps)\frac{w_{n_1+\cdots+n_{i-1}+s_i(j)}(M)}{p_{i,s_i(j)}} \leq w_{m_1+\cdots+m_{i-1}+j}(M') \leq (1+\eps) \frac{w_{n_1+\cdots+n_{i-1}+s_i(j)}(M)}{p_{i,s_i(j)}},
\]
where $s_i(j)\in [n_i]$ is the row index such that $(S_i)_{j,s(i,j)}\neq 0$.
\item 
For each $j=1,\dots,k$, it holds that
\[
(1-\eps)w_{n_1+\cdots+n_r+j}(M) \leq w_{m_1+\cdots+m_r+j}(M') \leq (1+\eps) w_{n_1+\cdots+n_r+j}(M).
\]
\end{enumerate}
\end{lemma}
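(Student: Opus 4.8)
The plan is to collapse the block-wise sampling into a \emph{single} rescaled sampling of $M$, preserve the Lewis quadratic form $G_M := M^\top W_M^{1-2/p} M$ via Lemma~\ref{lem:core_matrix_approx}, and then recover the Lewis weights of $M'$ through the near-fixed-point criterion of Lemma~\ref{lem:good_Lewis_initial}. The two nontrivial inputs are monotonicity (to certify that the local sampling probabilities do not undersample relative to $M$) and the exact cancellation of the rescaling factors.

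First I would perform the \textbf{monotonicity reduction}. Lewis weights are invariant under row permutation, and $M$ is obtained from each block $A_i$ by appending all the remaining rows of $A_1,\dots,A_r,B$; iterating Lemma~\ref{lem:monotonicity} then gives $w_s(A_i)\geq w_{n_1+\cdots+n_{i-1}+s}(M)$ for every $s$. Consequently the per-block probabilities satisfy $p_{i,s}\geq\min\{\beta w_s(A_i),1\}\geq \min\{\beta w_{n_1+\cdots+n_{i-1}+s}(M),1\}$, and the rows of $B$ are trivially kept with probability $1$. I would therefore regard the entire process as one rescaled sampling matrix acting on $M$ with probabilities $q_v\in[\min\{\beta w_v(M),1\},1]$ for each row $v$ of $M$, which is exactly the hypothesis Lemma~\ref{lem:core_matrix_approx} requires.

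Next comes the \textbf{spectral preservation}. To each surviving row $u=q_v^{-1/p}v$ of $M'$ I assign the candidate weight $\hat{w}_u:=w_v(M)/q_v$. Using $uu^\top=q_v^{-2/p}vv^\top$, a one-line exponent count shows $\hat{w}_u^{1-2/p}uu^\top=q_v^{-1}w_v(M)^{1-2/p}vv^\top$, so the candidate core matrix $\hat{G}:=\sum_u\hat{w}_u^{1-2/p}uu^\top$ equals $\sum_v\frac{(\mathbbm{1})_v}{q_v}w_v(M)^{1-2/p}vv^\top$; Lemma~\ref{lem:core_matrix_approx} applied to $M$ with the probabilities $q_v$ then yields $(1-\eps)G_M\preceq\hat{G}\preceq(1+\eps)G_M$ with probability at least $1-\delta$. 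For the final step I would pass from candidate to true weights: invoking the Lewis fixed point for $M$, namely $v^\top G_M^{-1}v=w_v(M)^{2/p}$, together with the sandwich $\frac{1}{1+\eps}G_M^{-1}\preceq\hat{G}^{-1}\preceq\frac{1}{1-\eps}G_M^{-1}$, the rescaling factors cancel exactly and
\[
\frac{u^\top\hat{G}^{-1}u}{\hat{w}_u^{2/p}}=\frac{v^\top\hat{G}^{-1}v}{w_v(M)^{2/p}}\in\left[\tfrac{1}{1+\eps},\tfrac{1}{1-\eps}\right].
\]
Thus the candidate weights meet the hypothesis of Lemma~\ref{lem:good_Lewis_initial} for $M'$ with $\alpha=1/(1+\eps)$, $\beta=1/(1-\eps)$, giving $\frac{1}{1+\eps}\hat{w}_u\leq w_u(M')\leq\frac{1}{1-\eps}\hat{w}_u$. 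Specializing to a sampled row of $A_i$, where $\hat{w}_u=w_{n_1+\cdots+n_{i-1}+s_i(j)}(M)/p_{i,s_i(j)}$, yields statement~1, and to a row of $B$, where $q_v=1$ and $\hat{w}_u=w_{n_1+\cdots+n_r+j}(M)$, yields statement~2, after rescaling $\eps$ by a constant to turn $\frac{1}{1\pm\eps}$ into $(1\pm\eps)$.

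The main obstacle is the first step: reconciling the local sampling rule (each $A_i$ sampled by its \emph{own} Lewis weights) with the \emph{global} weights of $M$. This rests entirely on the monotonicity of Lewis weights for $p\leq2$ and on their permutation invariance, which together guarantee the per-block probabilities never fall below $\min\{\beta w_v(M),1\}$; this is precisely the condition that licenses Lemma~\ref{lem:core_matrix_approx}. Everything afterward is the standard ``preserve the core matrix, then apply the approximate-fixed-point criterion'' pattern, with the exact cancellation of the $q_v$ factors in the last display being the routine verification that makes the candidate weights self-consistent.
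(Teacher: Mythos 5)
Your proposal is correct and follows essentially the same route as the paper's proof: both use monotonicity of Lewis weights to certify that the per-block probabilities dominate $\min\{\beta w_v(M),1\}$, apply Lemma~\ref{lem:core_matrix_approx} to show the rescaled candidate core matrix (your $\hat{G}$, the paper's $L$) spectrally approximates $M^\top W_M^{1-2/p}M$, and then invoke Lemma~\ref{lem:good_Lewis_initial} with the candidate weights $w_v(M)/q_v$ to conclude. The exponent cancellation you verify explicitly is the same computation the paper carries out when checking that $\{w'_j\}$ are good weights for $M'$.
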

\begin{proof}
Define partial sums $\mu_i = m_1 + \cdots + m_i$ with $\mu_0 = 0$ and $\nu_i = n_1 + \cdots + n_i$ with $\nu_0 = 0$. For each $j\in [\mu_r + k]$,
\[
w'_j = \begin{cases}
        w_{\nu_{i-1}+s_i(j)}(M)/p_{i,s_i(j)}, & \mu_{i-1} < j\leq \mu_i; \\
        w_{j - \mu_r + \nu_r}(M), & j\geq \mu_r.
      \end{cases}
\]
and
\[
L = \sum_{i=1}^r\sum_{j=1}^{m_i} \frac{(S_iA_i)_j (S_iA_i)_j^\top}{(w'_{\mu_{i-1} + j})^{p/2-1}} + \sum_{j=1}^{k} \frac{b_i b_i^\top}{(w'_{\mu_r + j})^{p/2-1}}.
\]
Then we have
\[
L = \sum_{i=1}^{r}\sum_{j=1}^{m_i} \frac{(A_{i})_{s_i(j)} (A_{i})_{s_i(j)}^\top}{p_{i,s_i(j)} (w_{s_i(j)}(A_i))^{p/2-1}} + \sum_{j=1}^{k} \frac{b_i b_i^\top}{(w_{\nu_r + j}(M))^{p/2-1}}.
\]
Let $W_M = \diag\{w_1(M),\dots,w_{\nu_r + k}(M)\}$. Let $p_i = 1$ for $i = \nu_r + 1,\dots,\nu_r + k$. Also note that $p_{i,j} \geq \min\{\beta w_{\nu_{i-1}+j}(M),1\}$ since $w_j(A_i)\geq w_{\nu_{i-1}+j}(M)$. 
It follows from Lemma~\ref{lem:core_matrix_approx} that
\[
(1-\eps)(M^\top W_M^{1-2/p} M)\preceq  L \preceq (1+\eps)(M^\top W_M^{1-2/p} M),
\]
with probability at least $1-\delta$.

Next we verify that $\{w_j'\}_j$ are good weights for $M'$. When $\mu_{i-1} < j\leq \mu_i$,
\begin{multline*}
(w'_j)^{2/p} = \frac{ (w_{\nu_{i-1}+s_i(j)}(M))^{2/p} }{ p_{i,s_i(j)}^{2/p} } = \frac{(A_i)_{s_i(j)} (M^\top W_M^{1-2/p} M)^{-1} (A_i)_{s_i(j)}^\top}{p_{i,s(j)}^{2/p}} \\
= \frac{1}{1\pm \eps}\cdot\frac{(A_i)_{s_i(j)} L^{-1} (A_i)_{s_i(j)}^\top}{p_{i,s_i(j)}^{2/p}} = \frac{1}{1\pm \eps} (S_iA_i)_j L^{-1} (S_iA_i)_j^\top,
\end{multline*}
where $(S_iA_i)_j$ denotes the $j$-th row of $S_iA_i$. Similarly, one can show that for $j > \mu_r$, 
\[
(w'_{\mu_r + j})^{2/p} = \frac{1}{1\pm\eps} b_{j - \mu_r} L^{-1} b_{j - \mu_r}^\top.
\]
The result follows from Lemma~\ref{lem:good_Lewis_initial}.
\end{proof}

\subsection{Online \texorpdfstring{$\ell_p$}{lp} Lewis Weights}\label{sec:online_lewis_weights}
The goal of this section is to show Lemma~\ref{lem:sum_of_online_lewis_weights}, which states that the sum of the online $\ell_p$ Lewis weights of a matrix $A\in\R^{n\times d}$ is upper bounded by $O(d\log n \log \kappa^{\OL}(A))$ for $p\in [1,2)$. This is a generalization of \cite[Lemma 5.15]{sliding_window} from $p=1$ and we follow the same approach in~\cite{sliding_window}.

\begin{lemma} \label{lem:uniform}
If the leverage scores of $A$ are at most $C>0$, then the $\ell_p$ Lewis weights of $A$ are at most $C$ for $p\in[1,2]$.
\end{lemma}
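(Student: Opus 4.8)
The plan is to control the single largest Lewis weight directly through a Loewner-order estimate, exploiting that the exponent $1-2/p$ is nonpositive when $p\in[1,2]$. Recall that the leverage scores are precisely the $\ell_2$ Lewis weights, so the hypothesis reads $\ell_i := a_i^\top(A^\top A)^{-1}a_i \le C$ for all $i$. As throughout this appendix, I would assume $A$ has full column rank, so that both $A^\top A$ and $A^\top W^{1-2/p}A$ are invertible; the rank-deficient case reduces to this one exactly as explained at the start of the section.

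Let $t^\ast = \max_j w_j(A)$ and write $W = \diag(w_1,\dots,w_n)$. The key step is that, since $w_j \le t^\ast$ for every $j$ and $x\mapsto x^{1-2/p}$ is nonincreasing on $(0,\infty)$ (because $1-2/p\le 0$), each diagonal entry obeys $w_j^{1-2/p}\ge (t^\ast)^{1-2/p}$, whence $W^{1-2/p}\succeq (t^\ast)^{1-2/p} I$ and therefore $A^\top W^{1-2/p}A \succeq (t^\ast)^{1-2/p} A^\top A$. Taking inverses reverses the Loewner order, yielding $(A^\top W^{1-2/p}A)^{-1}\preceq (t^\ast)^{2/p-1}(A^\top A)^{-1}$. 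Substituting this into the defining identity $w_i^{2/p} = a_i^\top (A^\top W^{1-2/p}A)^{-1}a_i$ gives, for every $i$,
\[
w_i^{2/p} \le (t^\ast)^{2/p-1}\,a_i^\top (A^\top A)^{-1}a_i = (t^\ast)^{2/p-1}\ell_i \le (t^\ast)^{2/p-1} C.
\]

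To finish, I would take the maximum over $i$ on the left-hand side. Since $x\mapsto x^{2/p}$ is increasing, $\max_i w_i^{2/p} = (t^\ast)^{2/p}$, so the displayed bound becomes $(t^\ast)^{2/p}\le (t^\ast)^{2/p-1}C$. Dividing by $(t^\ast)^{2/p-1}>0$ (the maximal weight is positive whenever $A\ne 0$, and the claim is vacuous otherwise) gives $t^\ast\le C$, i.e.\ $w_i(A)\le C$ for all $i$, as claimed.

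The argument is short once set up correctly; the one place demanding care is the chain of Loewner inequalities under the nonpositive exponent, namely getting the direction right both when raising $w_j\le t^\ast$ to the power $1-2/p$ and when inverting the positive definite matrices. An essentially equivalent route, if one prefers not to invoke the (a priori unknown) maximizer, is to run the fixed-point iteration $t_{k+1} = t_k^{1-p/2}C^{p/2}$ starting from $t_0=1$, which is valid since all Lewis weights lie in $[0,1]$; on the logarithmic scale this map is a contraction with fixed point $C$, and the same Loewner estimate shows $w_i\le t_k$ for every $k$, so $w_i\le \lim_k t_k = C$.
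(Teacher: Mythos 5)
Your argument is correct, and it reaches the conclusion by a genuinely different (and somewhat more economical) route than the paper. The paper proves the bound by induction along the Lewis-weight iteration~\eqref{eq:lw_iter}: starting from $W^{(0)}=I_n$ it shows $W^{(j)}\preceq C^{1-(1-p/2)^j}I_n$ for every $j$, and then appeals to the convergence of that iteration (a result of \cite{CP15}) to pass to the limit $C$. You instead work directly with the fixed-point identity satisfied by the true Lewis weights: setting $t^\ast=\max_j w_j(A)$, the Loewner comparison $A^\top W^{1-2/p}A\succeq (t^\ast)^{1-2/p}A^\top A$ (valid because $1-2/p\le 0$) yields the self-referential inequality $(t^\ast)^{2/p}\le (t^\ast)^{2/p-1}C$, from which $t^\ast\le C$ follows by a single division. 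The core matrix estimate --- raising a diagonal bound to the nonpositive power $1-2/p$ and then inverting --- is the same in both proofs; what your version buys is that it only needs the existence and uniqueness of the Lewis weights (so that the defining identity holds), not the convergence of the specific iteration from $W^{(0)}=I_n$. Your parenthetical alternative (the scalar recursion $t_{k+1}=t_k^{1-p/2}C^{p/2}$ with $t_0=1$) is essentially the paper's induction transplanted from the iterates $W^{(j)}$ to the true weights, so it is closer in spirit to the original. The only points demanding care, which you handle adequately, are the degenerate cases: zero rows (which contribute nothing to $A^\top W^{1-2/p}A$ and can be discarded), the positivity of $t^\ast$ before dividing, and the endpoint $p=2$ where $2/p-1=0$ and the inequality reads $t^\ast\le C$ directly.
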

\begin{proof}
This is the generalization of Lemma 5.12 in~\cite{sliding_window} and we follow the same proof approach.

By the assumption, we have $a_i^\top(A^\top A)^{-1}a_i\leq C$ for $i\in[n]$. We prove by induction that for iteration $j$ in the Lewis weight iteration, we have $W^{(j)}\preceq C^{1-(1-p/2)^j}I_n$.

For the base case $j=1$, we have $W_{i,i}^{(j)}=(a_i^\top(A^\top A)^{-1}a_i)^{p/2}\leq C^{p/2}$. Thus $W^{(1)}\preceq C^{p/2}I_n$ as desired. 

For iteration $j$, by the induction hypothesis, we have $W^{(j-1)}\preceq C^{1-(1-p/2)^{j-1}}I_n$, which implies that $(W^{(j-1)})^{1-2/p}\succeq C^{(1-(1-p/2)^{j-1})(1-2/p)}I_n$ since $1 - 2/p \leq 0$. Thus,
\[ 
    A^\top(W^{(j-1)})^{1-2/p}A \succeq C^{(1-(1-p/2)^{j-1})(1-2/p)}A^\top A, 
\]
and
\[ 
    (A^\top(W^{(j-1)})^{1-2/p}A)^{-1}\preceq C^{(1-(1-p/2)^{j-1})(2/p-1)}(A^\top A)^{-1}. 
\]
It then follows from~\eqref{eq:lw_iter} that
\begin{align*}
    (W_{i,i}^{(j)})^{2/p}= a_i^\top(A^\top (W^{(j-1)})^{1-2/p}A)^{-1}a_i &\leq C^{(1-(1-p/2)^{j-1})(2/p-1)}a_i^\top(A^\top A)^{-1}a_i\\
    &\leq C^{(1-(1-p/2)^{j-1})(2/p-1)+1}.
\end{align*}
Notice that $((1-(1-p/2)^{j-1})(2/p-1)+1)p/2=1-(1-p/2)^j$, we have obtained that $W_{i,i}^{(j)}\leq C^{1-(1-p/2)^j}$ for all $i$, i.e., $W^{(j)}\preceq C^{1-(1-p/2)^j}I_n$. The induction step is established.

The claim follows the convergence of Lewis weight iteration~\cite{CP15}.
\end{proof}

\begin{lemma} \label{lem:split}
Given $A=[a_1, \dots, a_n]^\top\in\R^{n\times d}$, let $B\in\R^{(n+1)\times d}=[a_1, \dots, a_{j-1}, b_j, a_{j+1}, \dots, a_n, b_{n+1}]^\top$ where $b_j = (1-\gamma)^{1/p}a_j$ and $b_{n+1}=\gamma^{1/p}a_j$ for some $\gamma\in[0,1]$ and $j\in[n]$. Then we have $w_i(A)=w_i(B)$ for $i\neq j, n+1$, $w_j(B)=(1-\gamma)w_j(A)$ and $w_{n+1}(B)=\gamma w_j(A)$.
\end{lemma}

\begin{proof}
Without loss of generality, we suppose $j=n$. Let $W\in\R^{n\times n}$ be the diagonal Lewis weight matrix of $A$, i.e., $W_{i,i}=w_i(A)$. Let $\overline{W}^{(n+1)\times (n+1)}$ be a diagonal matrix where $\overline{W}_{i,i}=w_i(A)$ for $i=1,\dots,n-1$, $\overline{W}_{n,n}=(1-\gamma)w_n(A)$ and $\overline{W}_{n+1,n+1}=\gamma w_n(A)$. According to the uniqueness of Lewis weights, it suffices to show that $\tau_i(\overline{W}^{1/2-1/p}B)=\overline{W}_{i,i}$ for $i\in[n+1]$.

Notice that the first $n-1$ rows of $\overline{W}^{1/2-1/p}B$ are the same as those of $\overline{W}^{1/2-1/p}A$. The last two rows of $\overline{W}^{1/2-1/p}B$ are $w_n(A)^{1/2-1/p}(1-\gamma)^{1/2-1/p}(1-\gamma)^{1/p}a_n=w_n(A)^{1/2-1/p}(1-\gamma)^{1/2}a_n$ and $w_n(A)^{1/2-1/p}\gamma^{1/2}a_n$, respectively. Thus we have $\|W^{1/2-1/p}Ay\|^2_2=\|\overline{W}^{1/2-1/p}By\|^2_2$ for any vector $y$, which indicates that the leverage scores of the first $n-1$ rows of $W^{1/2-1/p}A$ are the same as those of $\overline{W}^{1/2-1/p}B$, i.e., $\tau_i(\overline{W}^{1/2-1/p}B)=W_{i,i}=\overline{W}_{i,i}$ for $1\leq i\leq n-1$.

For the last two rows, we have $\tau_n(\overline{W}^{1/2-1/p}B)=(1-\gamma)\tau_n(W^{1/2-1/p}A)=\overline{W}_{n,n}$ and $\tau_{n+1}(\overline{W}^{1/2-1/p}B)=\gamma\cdot\tau_n(W^{1/2-1/p}A)=\overline{W}_{n+1,n+1}$. Thus we have  $\tau_i(\overline{W}^{1/2-1/p}B)=\overline{W}_{i,i}$ for all $i\in[n+1]$.
\end{proof}

\begin{corollary}
For any matrix $A\in\R^{n\times d}$. Let $B\in\R^{n\times d}$ have the same rows but with the $j$-th row reweighted by a factor $\alpha\in[0,1]$. Then for all $i\neq j$, $w_i(B)\geq w_i(A)$.
\end{corollary}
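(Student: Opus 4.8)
The plan is to combine the splitting lemma (Lemma~\ref{lem:split}) with the monotonicity of Lewis weights (Lemma~\ref{lem:monotonicity}). The key observation is that reweighting the $j$-th row by $\alpha$ can be realized as exactly one half of a row-split, so that $B$ coincides with a split version of $A$ up to a single appended row; monotonicity then closes the gap.

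First I would set $\gamma = 1 - \alpha^p$. Since $\alpha\in[0,1]$ and $p\geq 1$, we have $\alpha^p\in[0,1]$ and hence $\gamma\in[0,1]$, so Lemma~\ref{lem:split} applies with this $\gamma$ and index $j$. Writing $C\in\R^{(n+1)\times d}$ for the matrix produced by that lemma, the $j$-th row of $C$ is $(1-\gamma)^{1/p}a_j = \alpha a_j$, which is precisely the $j$-th row of $B$, while all the other original rows are untouched; the only difference is the extra row $b_{n+1}=\gamma^{1/p}a_j = (1-\alpha^p)^{1/p}a_j$ appended at position $n+1$. In other words, $C = B\circ b_{n+1}^\top$.

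Next I would read off from Lemma~\ref{lem:split} that $w_i(C) = w_i(A)$ for every $i\neq j$ with $i\leq n$, since the split lemma only alters the weights at positions $j$ and $n+1$. Then, applying the monotonicity lemma (Lemma~\ref{lem:monotonicity}) to the pair $B$ and $C = B\circ b_{n+1}^\top$, appending a row can only decrease each Lewis weight, so $w_i(B)\geq w_i(C)$ for all $i\in[n]$. Chaining these two facts yields $w_i(B)\geq w_i(C) = w_i(A)$ for every $i\neq j$, which is the claim.

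There is essentially no obstacle beyond bookkeeping: the only points requiring care are verifying that $\gamma=1-\alpha^p$ lands in $[0,1]$ (needed for Lemma~\ref{lem:split}) and correctly identifying the appended row so that $C$ is literally $B$ with one extra row, which licenses the direct invocation of monotonicity. The degenerate cases $\alpha=0$ and $\alpha=1$ are automatically handled, since then the appended row is a zero row, contributing zero Lewis weight, or else $B=A$ and the inequality is trivial.
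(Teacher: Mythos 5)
Your proof is correct and is essentially identical to the paper's: both set $\gamma=1-\alpha^p$, realize $B$ as the truncation of the split matrix from Lemma~\ref{lem:split} (your $C$ is the paper's $\bar B$), and then chain $w_i(B)\geq w_i(\bar B)=w_i(A)$ via Lemma~\ref{lem:monotonicity}. Your additional checks (that $\gamma\in[0,1]$ and the identification $C=B\circ b_{n+1}^\top$) are sound bookkeeping that the paper leaves implicit.
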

\begin{proof}
Let $\gamma=1-\alpha^p$ and $\bar{B}\in\R^{(n+1)\times d}=[a_1, \dots, a_{j-1}, (1-\gamma)^{1/p}a_j, a_{j+1}, \dots, a_n, \gamma^{1/p} a_j]^\top$. By Lemma \ref{lem:split}, we have $w_i(\bar{B})=w_i(A)$ for $i\neq j$. Then by Lemma \ref{lem:monotonicity} we have $w_i(B)\geq w_i(\bar{B})=w_i(A)$.
\end{proof}

As mentioned at the beginning of the section, we follow the approach in~\cite{sliding_window} to upper bound the sum of the online Lewis weights. It makes a critical use of an upper bound on the sum of online $\lambda$-leverage scores of a matrix but does not provide a proof for the case of small $\lambda$.  We reproduce the upper bound and provide a proof for completeness below. 

\begin{definition}[Online Ridge Leverage Scores]\label{def:ridge}
Let $A \in \R^{n\times d}$ and $\lambda \geq 0$ be a regularization parameter. The online ridge leverage score of row $a_i\in \R^{d}$ is defined to be $\tau^{\OL}_{i}(A;\lambda) = \min \{ a_i^{\top} ((A^{(i-1)})^{\top}A^{(i-1)} + \lambda I_d)^{-1} a_i, 1 \} $.
\end{definition}

The next lemma upper bounds the sum of online ridge leverage scores, which was stated as {\cite[Lemma 2.2]{sliding_window}}.
\begin{lemma}[Bound on Sum of Online Ridge Leverage Scores]
\label{lem:sum-online-ridge-leverage-score}
Suppose that $A\in \R^{n\times d}$ and $\sigma^\ast = \min_{i\in [n]} \sigma_{\min}(A^{(i)})$, where $\sigma_{\min}$ denotes the smallest singular value of a matrix. Let $\lambda > 0$. It holds that
\[
\sum_{i=1}^n \tau_i^{\OL}(A;\lambda) = \begin{cases}
    \cO(d\log(\norm{A}_2^2/\lambda)), & \lambda \geq (\sigma^\ast)^2;\\
    \cO(d\log \kappa^{\OL}(A)), & \lambda < (\sigma^\ast)^2.
\end{cases}
\]
\end{lemma}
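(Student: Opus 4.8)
The plan is to run a log-determinant potential argument, which is the standard device for summing online (ridge) leverage scores, and to handle the two regimes of $\lambda$ by choosing \emph{where} the telescoping sum starts. Set $M_i = (A^{(i)})^\top A^{(i)} + \lambda I_d$, so that $M_i = M_{i-1} + a_i a_i^\top$ and $M_0 = \lambda I_d$; note $M_{i-1}$ is exactly the matrix appearing in $\tau^{\OL}_i(A;\lambda)$. By the matrix determinant lemma, $1 + a_i^\top M_{i-1}^{-1} a_i = \det(M_i)/\det(M_{i-1})$, and the elementary inequality $\min\{x,1\}\le 2\ln(1+x)$ for $x\ge 0$ then gives $\tau^{\OL}_i(A;\lambda) \le 2\bigl(\ln\det M_i - \ln\det M_{i-1}\bigr)$. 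Consequently, summing over any consecutive block $i = a+1,\dots,b$ telescopes to $2\ln\bigl(\det M_b / \det M_a\bigr)$, reducing everything to bounding a determinant ratio.

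For the regime $\lambda \ge (\sigma^\ast)^2$ I would telescope over all of $i=1,\dots,n$ starting from the base $M_0 = \lambda I_d$. Since $\det M_0 = \lambda^d$ and $\det M_n = \prod_{j=1}^d(\sigma_j(A)^2 + \lambda) \le (\norm{A}_2^2 + \lambda)^d$, the total is at most $2d\ln\bigl(1 + \norm{A}_2^2/\lambda\bigr) = \cO\bigl(d\log(\norm{A}_2^2/\lambda)\bigr)$ in the relevant range $\lambda \le \norm{A}_2^2$. This case is essentially immediate from the telescoping.

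The main obstacle is the regime $\lambda < (\sigma^\ast)^2$, where this same telescoping fails: the ratio $\det M_n / \det M_0$ scales like $(\norm{A}_2^2/\lambda)^d$, which is unbounded as $\lambda\to 0$, and this is exactly the gap left unproven by~\cite{sliding_window}. The fix is to avoid starting from $M_0$. I would bound the first $d$ terms trivially by $\sum_{i\le d}\tau^{\OL}_i(A;\lambda)\le d$, and telescope only the block $i=d+1,\dots,n$ from the base $M_d$. The point is that once $d$ rows have been read the Gram matrix is already well-conditioned: every eigenvalue of $M_d$ is at least $\sigma_{\min}(A^{(d)})^2 \ge (\sigma^\ast)^2$ (because $\sigma^\ast = \min_i \sigma_{\min}(A^{(i)}) \le \sigma_{\min}(A^{(d)})$), so $\det M_d \ge (\sigma^\ast)^{2d}$, while $\det M_n \le (\norm{A}_2^2 + \lambda)^d \le (2\norm{A}_2^2)^d$ since $\lambda < (\sigma^\ast)^2 \le \norm{A}_2^2$. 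The telescoped sum is therefore at most $2d\ln\bigl(2\norm{A}_2^2/(\sigma^\ast)^2\bigr) = \cO\bigl(d\log(\norm{A}_2/\sigma^\ast)\bigr)$.

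Finally, to convert this into the claimed $\cO(d\log\kappa^{\OL}(A))$ I would use $\kappa^{\OL}(A) = \norm{A}_2\,\max_{S}\norm{A_S^\dagger}_2 \ge \norm{A}_2/\sigma^\ast$, obtained by taking $S$ to be the prefix $[i^\ast]$ that attains $\sigma^\ast$, so that $\norm{A_{[i^\ast]}^\dagger}_2 = 1/\sigma^\ast$. Hence $\log(\norm{A}_2/\sigma^\ast) \le \log\kappa^{\OL}(A)$, and adding back the $\le d$ contribution of the first $d$ rows yields the stated bound. The only genuinely delicate point is the choice of starting index $d$ together with the eigenvalue lower bound on $\det M_d$; everything else is routine determinant bookkeeping.
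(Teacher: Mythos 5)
Your proof is correct, but it takes a genuinely different route from the paper's for the hard regime $\lambda < (\sigma^\ast)^2$. The paper handles that case by a per-row comparison: it shows $\frac{1}{8}\tau_{i+1}^{\OL}(A;\lambda) \le a_{i+1}^\top\bigl((A^{(i+1)})^\top A^{(i+1)} + (\sigma^\ast)^2 I_d\bigr)^{-1}a_{i+1}$, proved via a variational characterization of leverage scores (the minimizer $\mu$ of $\norm{x}_2^2$ subject to $x^\top A = a_i^\top$ has $\mu_i\in[0,1]$) together with SVD manipulations, and then reduces the sum to the already-known bound at regularization level $(\sigma^\ast)^2$. You instead run a single log-determinant potential $M_i=(A^{(i)})^\top A^{(i)}+\lambda I_d$ with $\tau_i^{\OL}(A;\lambda)\le 2\ln\bigl(\det M_i/\det M_{i-1}\bigr)$, and the one nontrivial idea --- paying $\le d$ for the first $d$ rows and telescoping only from the base $M_d\succeq (\sigma^\ast)^2 I_d$, so that $\det M_d\ge(\sigma^\ast)^{2d}$ replaces the useless base $\lambda^d$ --- is sound: in the regime $\lambda<(\sigma^\ast)^2$ with $\lambda>0$ one automatically has $\sigma^\ast>0$, hence $A^{(d)}$ is nonsingular and the eigenvalue lower bound on $M_d$ holds, and the final conversion $\norm{A}_2/\sigma^\ast\le\kappa^{\OL}(A)$ via the prefix attaining $\sigma^\ast$ is exactly the step the paper also uses implicitly. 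Your argument buys a self-contained, more elementary proof of both cases at once (the paper imports the $\lambda\ge(\sigma^\ast)^2$ case as a black box from prior work, which is the same determinant telescoping you redo), at the cost of only bounding the sum; the paper's factor-$8$ lemma is a slightly stronger term-by-term statement relating online ridge leverage scores at different regularization levels, which could be reused elsewhere. The additive $+d$ from the first block is absorbed into $\cO(d\log\kappa^{\OL}(A))$ under the usual convention that $\kappa^{\OL}(A)$ is bounded away from $1$, the same big-$\cO$ convention the paper's own bound relies on.
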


The exact case of $\lambda \geq (\sigma^\ast)^2$ was proved in~\cite{CMP}. The claim of the case $\lambda < (\sigma^\ast)^2$ appeared in~\cite{sliding_window} without an explicit proof. We shall supply a proof below for completeness. First we need the following observation of the leverage scores.

\begin{proposition}\label{prop:leverage-score}
For $A\in \R^{n\times d}$, let $\tau_i$ be the leverage score of row $a_i$. Then $\tau_i = \min_{x\in \R^d}  \norm{x}_2^2$ subject to $x^{\top} A = a_{i}^{\top}$. Suppose $\mu\in \R^d$ is the minimizer, then $\mu_{i} \in [0,1]$. 
\end{proposition}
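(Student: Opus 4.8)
The plan is to recognize the constrained minimization as a standard minimum-norm (pseudoinverse) problem and then to show that both its optimal value and the relevant coordinate of its minimizer coincide \emph{exactly} with the leverage score $\tau_i = a_i^\top (A^\top A)^{-1} a_i$. Note first that here the optimization variable ranges over $\R^n$ (so that $x^\top A$ and $a_i^\top$ have matching dimensions), and that the linear constraint $A^\top x = a_i$ is consistent: since $a_i$ is literally the $i$-th row of $A$, we have $a_i = A^\top e_i$, so $x = e_i$ is feasible. Because $A$ has full column rank throughout this appendix, $A^\top$ has full row rank and the feasible set is a nonempty affine subspace of dimension $n-d$, which makes the minimization nondegenerate.

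First I would solve the problem by Lagrange multipliers. Writing the Lagrangian $\norm{x}_2^2 - 2\lambda^\top (A^\top x - a_i)$ with $\lambda \in \R^d$ and setting the gradient in $x$ to zero gives $x = A\lambda$; substituting into the constraint yields $A^\top A \lambda = a_i$, hence $\lambda = (A^\top A)^{-1} a_i$ and the minimizer $\mu = A(A^\top A)^{-1} a_i$. Equivalently, $\mu = P e_i$, where $P = A(A^\top A)^{-1} A^\top$ is the orthogonal projection onto the column space of $A$. A direct computation then gives the optimal value $\norm{\mu}_2^2 = a_i^\top (A^\top A)^{-1}(A^\top A)(A^\top A)^{-1} a_i = a_i^\top (A^\top A)^{-1} a_i = \tau_i$, which establishes the first claim.

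For the coordinate bound I would compute $\mu_i = e_i^\top \mu = e_i^\top A (A^\top A)^{-1} a_i = a_i^\top (A^\top A)^{-1} a_i = \tau_i$, using $e_i^\top A = a_i^\top$; thus $\mu_i$ equals the leverage score itself, which is also the diagonal entry $P_{ii}$. Since $P$ is a symmetric idempotent (orthogonal projection), $P_{ii} = e_i^\top P e_i = \norm{P e_i}_2^2 \geq 0$, and $P_{ii} = e_i^\top P e_i \leq \norm{e_i}_2 \norm{P e_i}_2 \leq \norm{e_i}_2^2 = 1$ because a projection is a contraction. Hence $\mu_i = P_{ii} \in [0,1]$, as required.

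The computation is routine; the only point to get right is the identification of the minimizer with the projection $P e_i$ and the resulting identity $\mu_i = \tau_i = P_{ii}$, since this is precisely what converts the desired coordinate bound into the elementary fact that diagonal entries of an orthogonal projection lie in $[0,1]$. In the rank-deficient case the same argument goes through verbatim after replacing $(A^\top A)^{-1}$ by the pseudoinverse $(A^\top A)^\dagger$, as discussed at the start of this appendix.
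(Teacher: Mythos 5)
Your proof is correct, and it takes a genuinely different (and in fact sharper) route than the paper's. The paper argues via the SVD $A=U\Sigma V^\top$: the constraint $x^\top A=a_i^\top$ forces $x^\top U=e_i^\top U$, giving the lower bound $\norm{x}_2^2\geq \norm{e_i^\top U}_2^2=\tau_i$, a minimizer is then exhibited by an orthogonal change of basis, and the coordinate bound is proved in two pieces: $\abs{\mu_i}\leq \norm{\mu}_2=\sqrt{\tau_i}\leq 1$ for the magnitude, and a contradiction argument for the sign (if $\mu_i<0$, rescaling the remaining coordinates by $1/(1-\mu_i)<1$ produces a strictly shorter feasible point). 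You instead write the minimum-norm solution in closed form, $\mu=A(A^\top A)^{-1}a_i=Pe_i$ with $P$ the orthogonal projection onto the column space of $A$, verify $\norm{\mu}_2^2=a_i^\top(A^\top A)^{-1}a_i=\tau_i$, and observe that $\mu_i=e_i^\top Pe_i=P_{ii}=\tau_i$ exactly, so the claim reduces to the elementary fact that diagonal entries of an orthogonal projection lie in $[0,1]$. This is shorter, avoids the sign case analysis, and yields the strictly stronger conclusion $\mu_i=\tau_i$ rather than merely $\mu_i\in[0,1]$; it does rely on uniqueness of the minimum-norm solution of a consistent linear system, which you correctly invoke, and on full column rank, which the appendix assumes (and you note the pseudoinverse fix otherwise). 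You also correctly repair the dimension slip in the statement ($x$ must range over $\R^n$, not $\R^d$, for $x^\top A=a_i^\top$ to typecheck), just as the paper's own proof implicitly does.
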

\begin{proof}
Suppose that $A = U \Sigma V^{\top}$ is the singular value decomposition of $A$, where $U\in \R^{n\times d}$ has orthonormal columns and $\Sigma, V\in \R^{d\times d}$ are invertible. Then $\tau_{i} = a_{i}^{\top}(A^{\top} A)^{-1} a_{i} = \norm{e_{i}^{\top} U}_2^2$, where $e_{i}$
is the $i$-th canonical basis vector. Suppose that $x^\top A = a_i^\top = e_i^\top A$. Multiplying both sides by $V\Sigma^{-1}$ leads to $x^\top U = e_i^\top U$ and thus $\norm{x}_2^2 = \norm{x}_2^2 \norm{U}_2^2 \geq \norm{x^\top U}_2^2 = \tau_{i}$.
Next, we show there exists a vector $\mu \in \R^{n}$ satisfying  $\mu^{\top}A = a_{i}^{\top}$ and $\norm{\mu}_2^2 = \tau_{i}$. 

The matrix $U \in \R^{n\times d}$ can be transformed to
$U' = \begin{pmatrix}
I_d\\
0
\end{pmatrix}$ through an orthogonal transformation $E$, that is, $U' = EU$.
Hence, we can find a vector $y\in \R^{n}$ such that $e_{i}^{\top} U = y^{\top} U' = y^{\top} E U$ and $y$ only has nonzero entries in the first $d$ coordinates. Hence, let $\mu = y^{\top} E$ and we get $\mu^\top A = (\mu^\top U)\Sigma V^\top = (e_i^\top U)\Sigma V^\top = a_i^\top$ and $\norm{\mu}_2^2 = \norm{y}_2^2=\norm{e_{i}^{\top} U}_2^2 = \tau_i$.

Without loss of generality, we assume $i=n$. We decompose $\mu$ as $\mu^\top = (\nu^\top \  \mu_{n})$, where $\nu\in\R^{n-1}$ is the first $n-1$ coordinates of $\mu$. Since $\tau_i\in [0, 1]$, we know that $|\mu_{n}| \leq 1$. Next we show that $\mu_n\geq 0$ by contradiction. Suppose that $\mu_n < 0$. Observe that $\mu^{\top} A = (\nu^{\top} \  \mu_{n}) 
\begin{pmatrix}
A^{(n-1)}\\
a_{n}^{\top}
\end{pmatrix}
= \nu^{\top} A^{(n-1)} + \mu_{n} a_{n}^{\top} = a_{n}^{\top}$, whence it follows that $a_n^{\top} = \frac{\nu^{\top} A^{(n-1)}}{1-\mu_{n}}$. Let $x = (\frac{\nu^{\top}}{1-\mu_{n}}\ 0)$, then $x^\top A =  a_{n}^{\top}$ while $\norm{x}_2 = \Norm{\frac{\nu^{\top}}{1-\mu_{n}}}_2 < \norm{\nu^{\top}}_2 \leq \norm{\mu}_2$, contradicting the minimality of $\norm{\mu}_2$.  Therefore, we conclude that $\mu_{n}\geq 0$. 
\end{proof}

Now we are ready to prove Lemma~\ref{lem:sum-online-ridge-leverage-score}.
\begin{proof}\textbf{of Lemma~\ref{lem:sum-online-ridge-leverage-score} }
The case $\lambda \geq (\sigma^{\ast})^2$ is exactly \cite[Theorem 2.2]{CP15}, which actually holds for all $\lambda > 0$. In the remainder of the proof, we assume that $\lambda < (\sigma^{\ast})^2$.

We claim that
\begin{equation}\label{eqn:aux_claim}
\frac{1}{8}\tau_{i+1}^{\OL}(A, \lambda) \leq a_{i+1}^{\top} (A^{(i+1)\top} A^{(i+1)} + \sigma^{*2} I_d )^{-1} a_{i+1}.
\end{equation}
Assuming that the claim is true for now, we will have
\begin{align*}
\sum_{i=1}^{n} \tau_{i}^{\OL}(A, \lambda) &\leq 8\sum_i a_{i}^{\top} (A^{(i)\top} A^{(i)} + \sigma^{*2} I_d )^{-1} a_{i} \\
&\leq 8\sum_i a_{i}^{\top} (A^{(i-1)\top} A^{(i-1)} + \sigma^{*2} I_d )^{-1} a_{i} \\
&= 8\sum_i \tau_{i}^{\OL}(A; \sigma^{*2}) \\
&= \cO\left(d \log \frac{\norm{A}_2}{\sigma^{*2}} \right) \\
&= \cO\left(d \log \kappa^{\OL}(A) \right)
\end{align*}
as desired. For the inequalities above, the first one follows from the claim~\eqref{eqn:aux_claim}, the second one the fact that $A^{(i)\top} A^{(i)} \preceq A^{(i)\top} A^{(i)} + a_{i+1} a_{i+1}^{\top} = A^{(i+1)\top} A^{(i+1)}$. In the rest of the proof, we prove the claim~\eqref{eqn:aux_claim}.

Suppose that the singular value decomposition of $A^{(i)}$ and $A^{(i+1)}$ are $U^{(i)}\Sigma^{(i)} V^{(i)\top}$ and $U^{(i+1)} \Sigma^{(i+1)} V^{(i+1)\top}$ respectively. Let the singular values of $A^{(i)}$ and $A^{(i+1)}$ be $\sigma_1^{(i)},\ldots, \sigma_d^{(i)}$ and $\sigma_1^{(i+1)},\ldots, \sigma_d^{(i+1)}$ both with descending order. Let $e_{i+1}^{\top}$
be the $(i+1)$-st canonical basis vector and $\tau$ be the leverage score of row $a_{i+1}$ in $A^{(i+1)}$. By the definition of leverage scores, we have $\tau= \norm{e_{i+1} U^{(i+1)}}_2$ . According to Proposition~\ref{prop:leverage-score}, there exists $(\mu,w)\in\R^i\times [0,1]$ such that $(\mu^{\top}\ w)
\begin{pmatrix}
A^{(i)}\\
a_{i+1}^{\top}
\end{pmatrix} = a_{i+1}^{\top}$ and $\norm{\mu}_2^2 + w^2 =\tau$. 
Hence, 
\begin{align*}
\tau_{i+1}^{\OL}(A, \lambda) = a_{i+1}^{\top}(A^{(i)\top}A^{(i)} + \lambda I_d)^{-1} a_{i+1} &= \frac{1}{(1-w)^2} \mu^{\top} A^{(i)} (A^{(i)\top} A^{(i)} + \lambda I_d)^{-1} A^{(i)\top} \mu \\
&= \frac{1}{(1-w)^2} \mu^{\top} U^{(i)} 
\begin{pmatrix}
\frac{\sigma_{1}^{(i)2}}{\sigma_{1}^{(i)2} + \lambda} & & \\
& \ddots &\\
&  & \frac{\sigma_{d}^{(i)2}}{\sigma_{d}^{(i)2} + \lambda}
\end{pmatrix} 
U^{(i)\top} \mu \\
&\leq  \frac{1}{(1-\sqrt{\tau})^2} \mu^{\top} U^{(i)} U^{(i)\top} \mu\\
&= \frac{\tau}{(1-\sqrt{\tau})^2},
\end{align*}
and similarly,
\begin{align*}
a_{i+1}^{\top}(A^{(i+1)\top}A^{(i+1)} + (\sigma^\ast)^2 I_d)^{-1} a_{i+1}                   &= e_{i+1}^{\top} U^{(i+1)} \begin{pmatrix}
\frac{\sigma_{1}^{(i+1)2}}{\sigma_{1}^{(i+1)2} + \sigma^{*2}} & & \\
& \ddots &\\
&  & \frac{\sigma_{d}^{(i+1)2}}{\sigma_{d}^{(i+1)2} + \sigma^{*2}}
\end{pmatrix}
U^{(i+1)\top} e_{i+1}\\
& \geq \frac{1}{2} \tau.
\end{align*}

For notational convenience, let $R$ denote the right-hand side of~\eqref{eqn:aux_claim}. When $\tau \geq \frac{1}{4}$, we have $\tau_{i+1}^{\OL}(A; \lambda)\leq 1 \leq 8R$, where $\tau_{i+1}^{\OL}(A; \lambda) \leq 1$ follows from the Definition~\ref{def:ridge}. When $0 \leq \tau \leq \frac{1}{4}$, it holds that $\tau_{i+1}^{\OL}(A; \lambda) \leq 4\tau \leq 8R$. Hence, it always holds that $\tau_{i+1}^{\OL}(A, \lambda)\leq 8R$ when $0 \leq \lambda \leq (\sigma^{\ast})^2$, establishing the claim.
\end{proof}

At last, we present our proof of Lemma~\ref{lem:sum_of_online_lewis_weights}, following the first part in the proof of Lemma 5.15 in \cite{sliding_window} but with a different argument in the second part.
\begin{proof}\textbf{of Lemma~\ref{lem:sum_of_online_lewis_weights} }
We follow the proof idea of Lemma 5.15 in \cite{sliding_window}. Suppose that $\lambda > 0$. Let $B_0=\lambda I_d$, $B=\underbrace{B_0\circ\dots\circ B_0}_{n \textrm{ times}}$ and $X\triangleq B\circ A$. Let $T$ be the upper bound of the sum of online leverage scores of $A$ with regularization parameter $\lambda$. Following the proof of Lemma 5.15 of \cite{sliding_window}, we have $\sum_{i=1}^n w_i^{\textrm{OL}}(X) = \cO(T\log n) = \mathcal{O}(d\log n \log \kappa^{\OL}(A))$ by Lemma~\ref{lem:sum-online-ridge-leverage-score}.

Now, let $W_A$ be the Lewis weight matrix of $A$ and $L = A^\top W_A^{1-2/p} A$. Let $\sigma = \lambda_{\min}(L)$, the smallest eigenvalue of $L$, and $\rho = \min_i (L^{-1})_{ii}$, the smallest diagonal element of $L_i^{-1}$. Choose $\lambda \leq \left(\frac{\sigma}{n}\right)^{1/p}\rho^{(2-p)/(2p)}$,  $\mu=(\frac{n\lambda^2}{\sigma})^{p/(2-p)}$, $U_X=\mu I_{nd}$ and $W_X=\begin{bmatrix}U_X & \\ & W_A\end{bmatrix}$. We claim that 
\begin{align}
\frac{1}{2} \mu^{2/p}\leq B_j^\top\left(A^\top W_A^{1-2/p}A+B^\top U_X^{1-2/p}B\right)^{-1}B_j, 
\label{eqn:online_lewis_weight_aux1} \\
\frac{1}{2} (w_i(A))^{2/p}\leq a_i^\top\left(A^\top W_A^{1-2/p}A+B^\top U_X^{1-2/p}B\right)^{-1}a_i  \label{eqn:online_lewis_weight_aux2}
\end{align}
for all $j\in[nd]$ and all $i\in[n]$. Observe that $B^\top U_X^{1-2/p}B = n\lambda^2\mu^{1-2/p}I_d \leq \sigma I_d \preceq L$. Thus, 
\[
a_i^\top(L+n\lambda^2\mu^{1-2/p}I_d)^{-1}a_i\geq \frac{1}{2}a_i^\top L^{-1}a_i=\frac{1}{2} (w_i(A))^{2/p},
\]
establishing~\eqref{eqn:online_lewis_weight_aux2}. Similarly, since $B_j = \lambda e_i$ for some $i$,
\[
B_j^\top(L+n\lambda^2\mu^{1-2/p}I_d)^{-1}B_j\geq \frac{1}{2}\lambda^2(L^{-1})_{i,i} \geq \frac{1}{2}\lambda^2 \rho \geq \frac{1}{2}\mu^{2/p},
\]
establishing~\eqref{eqn:online_lewis_weight_aux1}.
It then follows from Lemma \ref{lem:good_Lewis_initial} that $w_i(A) \leq 2w_{nd+i}(X)$. Applying the argument above to the $n$ submatrices which consist of the first $i$ rows of $A$ for each $i=1,\dots, n$, we see that we can choose $\lambda$ to be sufficiently small such that $w_i^{\textrm{OL}}(A) \leq 2w_{nd+i}^{\textrm{OL}}(X)$ for all $i$. Therefore,  $\sum_i w_i^{\textrm{OL}}(A) = \mathcal{O}(d\log n \log \kappa^{\OL}(A))$.
\end{proof}


\subsection{Proof of Lemma~\ref{lem:sum-sub-online-LW}}\label{sec:proof-sum-sub-online-LW}
First, we note the following facts. For any two matrices $A$ and $B$, $\norm{AB}_2 \leq \norm{A}_2 \norm{B}_2$, and when $A$ has full row rank and $B\neq 0$, $\sigma_{\min}(AB)\geq \sigma_{\min}(A) \sigma_{\min}(B)$, where $\sigma_{\min}(\cdot)$ denotes the smallest nonzero singular value of a matrix.

It is clear that $S$, which is a rescaled sampling matrix, has full row rank. By the definition of the online condition number, 
\begin{align*}
\kappa^{\OL}(SA) = \norm{SA}_2 \max_{i} \frac{1}{\sigma_{\min}(SA^{(i)})} &\leq \norm{S}_2 \norm{A}_2 \max_i 
\frac{1}{\sigma_{\min}(SA^{(i)})_{i}}\\
&\leq \norm{S}_2 \norm{A}_2 \max_i \frac{1}{\sigma_{\min}(S) \sigma_{\min}(A)_{i}} \\
&= \frac{\sigma_{\max}(S)}{\sigma_{\min}(S)} \kappa^{\OL}(A).
\end{align*}

Now, observe that $\sigma_{\max}(S) = \max_i p_i^{-1/p} = (\min_i p_i)^{-1/p}$ and $\sigma_{\min}(S) = \min_i p_i^{-1/p} = (\max_i p_i)^{-1/p}$, where $\min\{\beta w_i^{\OL}(A), 1\} \leq p_i\leq 1$. It is clear that $\sigma_{\min}(S)\geq 1$. For the upper bound of $\sigma_{\max}(S)$, note that a row $i$ with $w_i^{\OL}(A)\leq \delta/n$ will be sampled with probability 
\[
1 - \left(1 - \frac{\delta}{n}\right)^{n} \leq \delta.
\]
Hence, with probability at least $1-\delta$, none of the rows $i$ with $w_i^{\OL}(A)\leq \delta/n$ is sampled and so $\min_i p_i \geq \beta\delta/n$ and $\sigma_{\max}(S)\leq (n/(\delta\beta))^{1/p}$. Therefore, we conclude that with probability at least $1-\delta$,
\[
\kappa^{\OL}(SA) \leq \left(\frac{n}{\beta\delta}\right)^{1/p} \kappa^{\OL}(A).
\]

\section{Modifications for \texorpdfstring{$p=2$}{Lg}} \label{sec:p=2 appendix}
When $p=2$, the sampling matrices in Algorithm~\ref{alg:online-active-regression} do not have independent rows since the online leverage scores are calculated with respect to sampled rows instead of all the rows that have been revealed. As a result, we cannot use a Bernstein bound, which is exactly where the proof of \cite[Theorem 4.1]{MMWY} needs to be modified.
Comparing with \cite[Lemma 3.27]{MMWY}, we analyze the sampling process via a martingale because the rows sampled are not independent. Therefore, we shall use Freedman's inequality instead of Bernstein's inequality. This approach was used by \cite{CMP} for online $\ell_2$-regression.

\begin{lemma}\label{lem:modification}
Let $\tilde{w_i}$ be the approximate online Lewis weights of $A$ obtained by Algorithm~\ref{alg:online-active-regression}. Consider the same setting of $A$, $b$ and $\bar{z}$ as Lemma~\ref{lem:3.6}. Let $S$ be the rescaled sampling matrix with respect to $p_i = \min \{ \beta \tilde{w}_i, 1 \}$ and $\beta = \Omega( \frac{1}{\eps^4} (d\log \frac{1}{\eps} + \log\frac{1}{\delta}))$. With probability at least $1-\delta$, it holds that $\Abs{\norm{SA x-S\bar{z}}_2^2- \norm{Ax-\bar{z}}_2^2}= \cO(\eps)R^2$ for all $x\in\R^d$ with $\norm{Ax}_2 \leq R$.
\end{lemma}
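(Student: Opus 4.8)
The plan is to follow the structure of the independent-row analysis (the $p=2$ analogue of Lemma~\ref{lem:Ax-z-preserve}, i.e.\ \cite[Lemma 3.27]{MMWY}), but to replace every appeal to a Bernstein-type bound with \emph{Freedman's martingale inequality}, since the rows retained by Algorithm~\ref{alg:online-active-regression} are not independent: the probability $p_k=\min\{\beta\tilde w_k,1\}$ is computed from the already-sampled sketch $\tilde A^{(k-1)}$ and is therefore measurable with respect to the $\sigma$-algebra $\mathcal F_{k-1}$ generated by the first $k-1$ sampling coins. In particular $\E[(\mathbbm 1_S)_k/p_k-1\mid\mathcal F_{k-1}]=0$, so any sum $\sum_k\bigl((\mathbbm 1_S)_k/p_k-1\bigr)c_k$ with $\mathcal F_{k-1}$-measurable coefficients $c_k$ is a martingale. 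Throughout I would discard the indices with $p_k=1$ (they contribute $0$ to the error), and use condition~\ref{cod:1}, namely $\tilde w_k\geq(1-\eps)w_k^{\OL}(A)$, to write $p_k\gtrsim\beta w_k^{\OL}(A)$ on the surviving indices; I would also invoke the conformity bound $\abs{\bar z_k}^2\le(R/\eps)^2 w_k^{\OL}(A)$ coming from the definition of $\mathcal B$ in Lemma~\ref{lem:3.6}.

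\textbf{First} I would expand the squared norms,
\[
\Norm{SAx-S\bar z}_2^2-\Norm{Ax-\bar z}_2^2=\sum_k\Bigl(\tfrac{(\mathbbm 1_S)_k}{p_k}-1\Bigr)(a_k^\top x-\bar z_k)^2,
\]
and split $(a_k^\top x-\bar z_k)^2=(a_k^\top x)^2-2\bar z_k(a_k^\top x)+\bar z_k^2$ into a term quadratic in $x$, a term linear in $x$, and a constant term, bounding each by $\cO(\eps)R^2$ separately. The quadratic term equals $x^\top\bigl((SA)^\top SA-A^\top A\bigr)x$ and is controlled directly by the online spectral approximation of Lemma~\ref{lem:spc-approx} (whose proof already runs the matrix-martingale/matrix-Freedman argument), giving $\abs{x^\top(\cdots)x}\le\eps\norm{Ax}_2^2\le\eps R^2$ for all $x$ simultaneously.

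\textbf{Next} I would treat the constant term $\sum_k(\tfrac{(\mathbbm 1_S)_k}{p_k}-1)\bar z_k^2$ as a scalar martingale and apply Freedman: the increments are bounded by $\bar z_k^2/p_k=\cO(R^2/(\eps^2\beta))$ and the predictable quadratic variation by $\sum_k\bar z_k^4/p_k\le\tfrac{(R/\eps)^2}{\Omega(\beta)}\norm{\bar z}_2^2=\cO(R^4/(\eps^2\beta))$, so with deviation $t=\eps R^2$ Freedman gives failure probability $\exp(-\Omega(\eps^4\beta))\le\delta$ once $\beta=\Omega(\eps^{-4}\log(1/\delta))$. For the linear term $-2\sum_k(\tfrac{(\mathbbm 1_S)_k}{p_k}-1)\bar z_k(a_k^\top x)$, which \emph{does} depend on $x$, I would fix $x$ and again invoke Freedman, using $\abs{a_k^\top x}\le\sqrt{w_k(A)}\,\norm{Ax}_2\le\sqrt{w_k^{\OL}(A)}\,R$ (monotonicity, Lemma~\ref{lem:monotonicity}) to get increment $\cO(R^2/(\eps\beta))$ and quadratic variation $\cO(R^4/(\eps^2\beta))$, hence exponent $\Omega(\eps^4\beta)$. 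I would then union bound over an $\eps$-net $\cN$ of the ellipsoid $\{Ax:\norm{Ax}_2\le R\}$, of size $\abs{\cN}=\exp(\cO(d\log(1/\eps)))$; this is exactly why $\beta$ must carry the factor $\eps^{-4}(d\log(1/\eps)+\log(1/\delta))$, so that $\eps^4\beta$ dominates $\log(\abs{\cN}/\delta)$. Because the linear term is a \emph{linear} functional of $Ax$, passing from the net to the whole ellipsoid costs only a constant factor via the standard estimate $\sup\le(1-\eta)^{-1}\max_{\cN}$, so no chaining/Lipschitz step is needed. A final union bound over the three terms finishes the argument.

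The \textbf{main obstacle} is the non-independence of the sampled rows: the clean i.i.d.\ symmetrization and Bernstein computation of the offline analysis is unavailable, and one must instead verify that $p_k$ is predictable and bound both the per-step increments and the predictable quadratic variation tightly enough that Freedman delivers the stated $\beta$. Pinning these two martingale quantities down is what forces the $\eps^{-4}$ dependence, whereas the $d\log(1/\eps)$ factor is dictated by the union bound over $\cN$ in the linear term. I do not expect the discretization or the quadratic (spectral) term to cause trouble, as the former is dispatched by linearity and the latter is black-boxed through Lemma~\ref{lem:spc-approx}.
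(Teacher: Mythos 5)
Your proposal is correct, and it identifies the same central difficulty and the same remedy as the paper: the sampling probabilities $p_k$ are only $\mathcal F_{k-1}$-measurable, so the Bernstein bound of the offline analysis must be replaced by Freedman's inequality for the resulting martingale, with increment and predictable-variance bounds of order $R^2/(\eps^2\beta)$ and $R^4/(\eps^2\beta)$ forcing the exponent $\Omega(\eps^4\beta)$ and hence the $\eps^{-4}(d\log\tfrac1\eps+\log\tfrac1\delta)$ choice of $\beta$. Where you genuinely diverge is in how uniformity over $\{\norm{Ax}_2\le R\}$ is obtained. The paper keeps the full difference $\norm{SAx-S\bar z}_2^2-\norm{Ax-\bar z}_2^2$ as a single martingale for fixed $x$, applies Freedman with failure probability $(\eps/3)^d\delta$, and then passes from the $\eps$-net to the whole ellipsoid by a fairly delicate triangle-inequality expansion of the quadratic form, in which the cross terms $\langle SAx-SAy,\,SAy-S\bar z\rangle$ are controlled via the subspace-embedding guarantee of Lemma~\ref{lem:spc-approx}. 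You instead split $(a_k^\top x-\bar z_k)^2$ into its quadratic, linear and constant parts: the quadratic part is absorbed wholesale into the spectral approximation $(1\pm\eps)A^\top A$ of Lemma~\ref{lem:spc-approx} (uniformly in $x$, no net needed), the constant part needs no net at all, and only the part linear in $Ax$ requires a net, for which the elementary $\sup\le(1-\eps)^{-1}\max_{\cN}$ bound for linear functionals on a symmetric convex body suffices. This buys a cleaner discretization step at the cost of three separate Freedman applications; the net cardinality $(3/\eps)^d$ and the resulting parameter $\beta$ come out identical either way. The only point to make explicit in a full write-up is that you must condition on the event $\tilde w_k=\Theta(w_k^{\OL}(A))$ (and intersect with the spectral event) before running the martingale bounds, but the paper's own proof elides the same point, so this is not a gap relative to the reference.
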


\begin{proof}
We prove the lemma by the matrix Freedman inequality. Let $Y_i=\norm{(SA)_{[i]} x - (S\bar{z})_{[i]}}_2^2-\norm{A_{[i]}x-\bar{z}_{[i]}}_2^2$, where $M_{[i]}$ denotes the first $i$ rows of $M$. Also let $Y_0=0$ and $X_i = Y_i-Y_{i-1}$. We claim that $\abs{X_i}$ is uniformly bounded. First, observe that
\[
    \abs{X_i} = \Abs{\Norm{\frac{(\mathbbm{1}_S)_{i}}{\sqrt{p_i}}(a_ix-\bar{z}_i)}_2^2-\norm{a_ix-\bar{z}_i}_2^2} \leq\frac{1}{p_i}\cdot\norm{a_ix-\bar{z}_i}_2^2.
\]
If $i\in\mathcal{B}$, we have that $\bar{z}_i=0$ and it follows from Cauchy-Schwarz inequality that $\norm{a_ix}_2^2\leq w_i^{\OL}(A)  \norm{Ax}_2^2 \leq w^{\OL}_i(A) R^2$. Otherwise, $\norm{a_ix-\bar{z}_i}_2^2 \leq (\frac{1}{\epsilon}+1)^2 w_i^{\OL}(A) R^2$. Since $p_i=\min(\beta w^{\OL}_i(A),1)$, we have $\abs{X_i}\leq\frac{4}{\beta\epsilon^2} R^2$, proving the claim.

For brevity of notation, we denote $\E(\cdot|Y_i,\dots,Y_1)$ by $\E_{i-1}(\cdot)$. Then
\begin{align*}
    \E_{i-1}X_i^2&=\E\left(\Norm{\frac{(\mathbbm{1}_S)_i}{\sqrt{p_i}}(a_ix-\bar{z}_i)}_2^2-\norm{a_ix-\bar{z}_i}_2^2\right)^2\\
    &=\E\left(\frac{(\mathbbm{1}_S)_i}{p_i}-1\right)^2\norm{a_ix-\bar{z}_i}_2^4\\
    &=\left(\frac{1}{p_i}-1\right)\norm{a_ix-\bar{z}_i}_2^4\\
    &\leq\frac{w^{\OL}_i(A)}{p_i} \left(\frac{1}{\epsilon}+1\right)^2 R^2\norm{a_ix-\bar{z}_i}_2^2\\
    &\leq\frac{4}{\beta\epsilon^2} R^2 \norm{a_ix-\bar{z}_i}_2^2.
\end{align*}

Therefore, $\sum_{i=1}^{n}\E_{i-1}X_i^2\leq\frac{4}{\beta\epsilon^2}R^2 \sum_{i=1}^{n}\norm{a_ix-\bar{z}_i}_2^2$.
Since $\norm{Ax}_2^2 \leq R^2$ and $\norm{\bar z}_2^2=\cO(R^2)$, we obtain that  $\sum_{i=1}^{n}\E_{i-1}X_i^2\leq\frac{1}{\beta\epsilon^2}\cO(R^4)$.

It follows from the matrix Freedman inequality \cite{Tropp} and $\beta \geq \frac{2}{\eps^4}(d\log\frac{3}{\eps}+\log\frac{1}{\delta})$ that
\[
  \Pr(\Abs{Y_n}\geq C\eps R^2) \leq\exp\left(\frac{-C^2 \eps^2 R^4}{\frac{1}{\beta\eps^2}\cO(R^4)+\frac{\cO(R^4)}{3\beta\eps}}\right) \leq\exp\left(\frac{-\beta\epsilon^4}{2}\right) \leq \left(\frac{\eps}{3}\right)^d\delta
\]
for a constant $C$ large enough. This implies that for a fixed $x\in\R^d$ such that $\norm{Ax}_2 \le R$,
\[
\Abs{\norm{SA x - S\bar{z}}_2^2 - \norm{Ax - \bar{z}}_2^2} \leq \cO(\eps) R^2
\]
with probability at least $1-(\frac{\eps}{3})^d \delta$.

Next, we make a net argument. Assume, without loss of generality, that $R=1$. Consider the $\epsilon$-net $\mathcal{N}$ of the ellipsoid $\mathbf{B} = \{x\in\R^d: \norm{Ax}_2 \leq 1\}$ endowed with distance $d(x,y) = \norm{A(x-y)}_2$. We can choose $\mathcal{N}$ with at most $(\frac{3}{\epsilon})^d$ points. After applying a union bound over the net, we have that $\Abs{\norm{SA x - S \bar{z}}_2^2 - \norm{Ax - \bar{z}}_2^2} = \cO(\eps)$ holds for all $x\in \mathcal{N}$ simultaneously with probability at least $1 - \delta$. 

For any $x\in \mathbf{B}$, there exists $y\in\mathcal{N}$ such that $\norm{Ax-Ay}_2\leq\epsilon$. According to Lemma~\ref{lem:spc-approx}, when $\beta = \Omega(\log \frac{d}{\delta})$, we have that $\frac{1}{2}\norm{Ax}_2^2 \leq \norm{SAx}_2^2 \leq \frac{3}{2}\norm{Ax}_2^2$ for all $x$ simultaneously with probability at least $1-\delta$, so it holds that $\norm{SA x - SA y}_2 \leq \sqrt{\frac{3}{2}}\eps$. Hence, by the triangle inequality we get
\begin{align*}
    &\quad\ \Abs{\Norm{SA x- S \bar{z}}_2^2 - \Norm{Ax - \bar{z}}_2^2} \\
    &=\Abs{\Norm{SAx - SAy + SAy - S\bar{z}}_2^2 + \Norm{Ax-\bar{z}}_2^2} \\
    & \leq \Abs{\Norm{SAx - SAy}_2^2 + \Norm{SAy - S\bar{z}}_2^2 + 2\langle SAx - SAy, SAy - S\bar{z} \rangle - \Norm{Ax-\bar{z}}_2^2}\\
    & \leq \Abs{\Norm{SAy - S\bar{z}}_2^2 - \Norm{Ax - Ay + Ay - \bar{z}}_2^2} + \Norm{SAx - SAy}_2^2 + 2\Norm{SAx - SAy}_2 \Norm{SAy - S\bar{z}}_2 \\
    & \leq \Abs{\Norm{SA y - S \bar{z}}_2^2 - \Norm{A y - \bar{z}}_2^2} + \Norm{SA x - SA y}_2^2 + \Norm{A x - A y}_2^2 \\
    & \qquad \qquad + 2 \Norm{SAx- SAy}_2 \Norm{SAy-S\bar{z}}_2 + 2\Norm{Ax- Ay}_2 \Norm{Ay-\bar{z}}_2\\
    &\leq \cO(\epsilon) + \frac{3}{2}\eps^2 + \eps^2 + 2\sqrt{\frac{3}{2}}\eps (\Norm{Ay - \bar{z}}_2 + \sqrt{\eps}) + 2\eps \cdot \Norm{Ay - \bar{z}}_2\\
    &\leq \cO(\eps) + 2\eps \cdot (2\sqrt{6} + \sqrt{\eps} + 2) \\
    &= \cO(\eps)
\end{align*}
for all $x$ such that $\norm{Ax}_2\leq 1$, where we have used that $\norm{SAy - S\bar{z}}_2^2 \leq \norm{Ay - \bar{z}}_2^2 + \eps \leq (\norm{Ay - \bar{z}}_2 + \sqrt\eps)^2$ and $\norm{Ay - \bar{z}}_2 \leq \norm{Ay}_2 + \norm{\bar{z}}_2 \leq 2$.
\end{proof}

\bibliographystyle{plain}
\bibliography{reference}

\end{document}